\def\eqref#1{equation~\ref{#1}}
\def\1{\bm{1}}
\DeclareMathAlphabet{\mathsfit}{\encodingdefault}{\sfdefault}{m}{sl}
\SetMathAlphabet{\mathsfit}{bold}{\encodingdefault}{\sfdefault}{bx}{n}
\title{Revenue Maximization Under Sequential Price Competition Via The Estimation Of $s$-Concave Demand Functions}
\author{Daniele Bracale \thanks{Reference author},  Moulinath Banerjee \& Yuekai Sun \\
Department of Statistics\\
University of Michigan\\
Ann Arbor, MI, USA \\
\texttt{\{dbracale,moulib,yuekai\}@umich.edu} \\
\And
Cong Shi \\
Department of Management \\
University of Miami \\
Miami, USA \\
\texttt{congshi@bus.miami.edu}
}
\newcounter{boxcounter}
\renewcommand{\theboxcounter}{\arabic{boxcounter}} 
\definecolor{myGreen}{rgb}{0, .6, .0}
\definecolor{myorange}{RGB}{251, 122, 56}
\definecolor{goodblue}{HTML}{0071bc}
\definecolor{softblue}{rgb}{0.85, 0.95, 1}
\renewcommand{\eqref}[1]{(\ref{#1})}
\newtheorem{theorem}{Theorem}[section]
\newtheorem{assumption}[theorem]{Assumption}
\newtheorem{corollary}[theorem]{Corollary}
\newtheorem{definition}[theorem]{Definition}
\newtheorem{lemma}[theorem]{Lemma}
\newtheorem{proposition}[theorem]{Proposition}
\newtheorem{remark}[theorem]{Remark}
\begin{document}

\maketitle

\begin{abstract}
We consider price competition among multiple sellers over a selling horizon of $T$ periods. In each period, sellers simultaneously offer their prices (which are made public) and subsequently observe their respective demand (not made public). The demand function of each seller depends on all sellers' prices through a private, unknown, and nonlinear relationship. We propose a dynamic pricing policy that uses semi-parametric least-squares estimation and show that when the sellers employ our policy, their prices converge at a rate of $O(T^{-1/7})$ to the Nash equilibrium prices that sellers would reach if they were fully informed. Each seller incurs a regret of $O(T^{5/7})$ relative to a dynamic benchmark policy. A theoretical contribution of our work is proving the existence of equilibrium under shape-constrained demand functions via the concept of $s$-concavity and establishing regret bounds of our proposed policy. Technically, we also establish new concentration results for the least squares estimator under shape constraints. Our findings offer significant insights into dynamic competition-aware pricing and contribute to the broader study of non-parametric learning in strategic decision-making.
\end{abstract}

\section{Introduction}\label{sec:Intro}

Pricing plays a central role in competitive markets, where firms continuously adjust prices in response to demand fluctuations and rival strategies. A major challenge in competition-aware pricing lies in inferring rivals’ pricing behavior from limited observations \citep{li2024lego}. Firms cannot easily estimate price sensitivity through controlled experiments since competitors do not coordinate to hold prices constant while one firm tests different price values. Although existing sequential pricing algorithms yield low regret and converge toward a Nash Equilibrium (NE), they often rely on a linear demand framework  \citep{kirman1975learning,li2024lego}, or nonlinear approaches restricted to a fixed parametric family \citep{goyal2023learning}, limiting applicability. Because nonlinear demand better reflects reality \citep{gallego2006price, wan2022nonlinear, wan2023conditions}, we adopt a flexible semiparametric model with unknown parametric and nonparametric components.

Over $T$ periods, $N$ sellers set prices simultaneously; each seller’s demand depends on both their own and rivals’ prices, an effect amplified beyond linearity. \emph{Sellers observe competitors’ prices but not competitors’ realized demand}. For analysis, we assume that all sellers use the same algorithm -- a realistic simplification: for instance, \cite{PricingServiceAI2025} describes a pricing service used by numerous local hotels in Colorado. These hotels, while independently operated, utilize the same class of algorithmic tools provided by the platform to set their prices. Importantly, although they observe competitors’ prices, they do not share underlying demand information with one another, as we assume in this work.

We design a tuning-free pricing policy purely based on shape constrained estimation that (i) maximizes each seller’s revenue and (ii) guarantees convergence to the NE, the pricing configuration that would arise under full information (iii) attains sublinear regret against a dynamic benchmark, defined as the worst-case gap between a seller’s average revenue under our proposed pricing policy (where neither model parameters nor competitors' demands are known) and an optimal policy in hindsight (which assumes fixed competitor prices and full knowledge of the demand model).

\paragraph{The Role of Shape Constraints in This Work.} In this paper, we assume that the mean demand for a seller $i\in [N]=\{1,2,\dots, N\}$, given the price vector $(p_i, \mathbf{p}_{-i})$ (where $\mathbf{p}_{-i}$ denots the competitors' prices), follows a single index model of the form $\mathbb{E}[y_i|p_i, \mathbf{p}_{-i}]=\psi_i(-\beta_i p_i + \langle \boldsymbol{\gamma}_i,\mathbf{p}_{-i}\rangle)$, where the demand link function $\psi_i$ is assumed to be \emph{both monotone and $s$-concave} ($\langle \cdot, \cdot \rangle$ denotes the euclidean inner product) and $\beta_i>0$, which generalizes the linear demand setting in \cite{li2024lego}, where $\psi_i$ is linear (that is monotone and log-concave, i.e. $s$-concave with $s=0$). Below, we summarize the motivation and relevance of these shape constraints, whose importance will become further evident throughout the paper.

Monotonicity aligns with economic intuition, where the standard form of demand is decreasing in the seller’s own price \citep{li2024lego, friesz2012competitive, kirman1975learning}, and consequently, since $\beta_i>0$, the functions $\psi_i$ must be increasing to preserve decreasing demand with respect to $p_i$. Indeed, in this case, $\partial_{p_i} \mathbb{E}[y_i \mid p_i, \mathbf{p}_{-i}]<0$.

The $s$-concavity assumption of $\psi_i$ also arises naturally as a generalization of the commonly used log-concavity (recovered when $s=0$), and plays a crucial role in guaranteeing convergence analysis towards equilibrium (see \cref{sec:main_assump}). We emphasize that $s$-concavity, being a higher-order smoothness constraint than monotonicity, drives the convergence rate results.

\section{Related Literature and Contributions}\label{sec:Related-Literature}

\paragraph{Sequential Price Competition with Demand Learning.}

Classic price competition with \emph{known} demand goes back to \cite{cournot1838recherches, bertrand1883review}, with later variants including multinomial logit \citep{gallego2006price, aksoy2013price, gallego2014multiproduct}, fixed-point methods for mixed logit \citep{morrow2011fixed}, and multi-epoch competition \citep{gallego2014dynamic,federgruen2015multi, chen2021duopoly}. However, over the last two decades, \emph{demand learning} for competitive pricing has advanced substantially.

In this paper, we study sequential price competition with demand learning. Early work \citep{kirman1975learning} analyzed symmetric duopolies with linear demand. \cite{li2024lego} achieved optimal $\sqrt{T}$ regret for asymmetric linear demand, $\lambda_i(\mathbf p)=\alpha_i-\beta_i p_i+\sum_{j\ne i}\gamma_{ij}p_j$, with unknown sensitivities. However, nonlinear demand is more realistic and widely used \citep{gallego2006price, wan2022nonlinear, wan2023conditions}. For instance, \cite{gallego2006price} study $\lambda_i(\mathbf p)= a_i(p_i) / \big(\sum_j a_j(p_j)+\kappa\big)$ for \emph{known} increasing $a_i$ and $\kappa\in(0,1]$. In contrast, we analyze an \emph{unknown} monotone single-index model
\begin{equation}\label{eq:SIM}
\textstyle \lambda_i(\mathbf p)
=\psi_i(-\beta_i p_i+\sum_{j\ne i}\gamma_{ij}p_j)
=\psi_i\!\left(\langle\boldsymbol\theta_i,\mathbf p\rangle\right),
\end{equation}
where $\boldsymbol\theta_i$ has $-\beta_i$ in coordinate $i$ and $\gamma_{ij}$ elsewhere, and both $\boldsymbol\theta_i$ and $\psi_i$ are unknown. This strictly generalizes \cite{li2024lego}: setting $\psi_i(u)=u+\alpha_i$ recovers their linear model.

\paragraph{Monotone Single-Index Models.}\label{sec:MSIM}
By \cref{eq:SIM}, we have $N$ single-index models
\(
\mathbb{E}(y_i\mid \mathbf{p})=\psi_i(\langle\boldsymbol{\theta}_i,\mathbf{p}\rangle),
\ i\in\mathcal{N}=\{1,2,\dots, N\},
\)
with unknown $\boldsymbol{\theta}_i\in\mathbb{R}^N$ and unknown links $\psi_i$. We assume $\psi_i$ is nondecreasing, which makes demand $\lambda_i$ nonincreasing in its own price $p_i$, a standard assumption in the economics literature \citep{birge2024interfere, li2024lego}. Monotonicity also permits fully data-driven, tuning-free nonparametric estimators \citep{balabdaoui2019least}. The monotone SIM is identifiable under mild conditions: $\|\boldsymbol{\theta}_i\|_2=1$, and $\mathbf{p}$ has a strictly positive density on its domain (see \citet[Prop.~5.1]{balabdaoui2019least}). \citet{balabdaoui2019least} also propose a joint estimator $(\widehat{\boldsymbol{\theta}}_{i,n},\widehat{\psi}_{i,n})$ with $L^2$ error of order $n^{-1/3}$ for $\widehat{\psi}_i(\langle\widehat{\boldsymbol{\theta}}_i,\cdot\rangle)$, and study the normalized linear estimator
\begin{equation}\label{eq:linear_estimator}
\textstyle \widetilde{\boldsymbol{\theta}}_{i,n}=\tfrac{\widehat{\boldsymbol{\theta}}_{i,n}}{\|\widehat{\boldsymbol{\theta}}_{i,n}\|_2},\quad\widehat{\boldsymbol{\theta}}_{i,n}\in
\arg\min_{\boldsymbol{\theta}\in\mathbb{R}^N}
\sum_{t=1}^n\bigl(y_i^{(t)}-\langle\boldsymbol{\theta},\mathbf{p}^{(t)}-\bar{\mathbf{p}}\rangle\bigr)^2,\quad
\bar{\mathbf{p}}=\tfrac{1}{n}\sum_{t=1}^n \mathbf{p}^{(t)},
\end{equation}
which remains consistent under elliptically symmetric $\mathbf{p}^{(t)}$ \citep{brillinger2012generalized}. In this work, we also assume that $\mathbf{p}^{(t)}$ follows an elliptically symmetric distribution during the exploration phase. However, for our purpose, an $L^2$ convergence alone for $\widehat{\psi}_i(\langle\widehat{\boldsymbol{\theta}}_i,\cdot\rangle)$ is insufficient to establish a regret bound, but a uniform (supremum-norm) rate suffices and is derived in this paper.

\paragraph{Connection with NE, virtual valuation, and \ensuremath{s}-concavity.}\label{sec:related:NE-s-conc}
We propose a sequential price competition online algorithm that provides sublinear regret. The first step to achieve this goal is to establish the existence and uniqueness of a Nash Equilibrium $\mathbf{p^*}$, which is the fixed point of an operator $\boldsymbol{\Gamma}$ (see \cref{sec:main_assump} for more details). A necessary condition for its existence is that each seller’s \emph{virtual valuation}
\begin{equation}\label{def:virtual_valu}
\varphi_i(u)=u+\nicefrac{\psi_i(u)}{\psi_i'(u)}
\end{equation}
is increasing with derivative bounded below: $\varphi_i'(u)\ge c_i$ for some $c_i>0$ for all $i \in \mathcal{N}$. This mirrors the $N=1$ literature (monopolistic setting) such as \cite{fan2024policy,chen2018robust,cole2014sample,golrezaei2019dynamic,javanmard2017perishability,javanmard2019dynamic}, which often assume log-concavity of $\psi_i$ and $1-\psi_i$ (implying $c_i\ge1$).

Our key observation \emph{links the NE and shape constraints} as follows: $\varphi_i'(u)\ge c_i$ iff $\psi_i$ is $(c_i-1)$-concave (\cref{prop:s-concavity-iff}; see the section below for definition of $s$-concavity), with log-concavity recovered as a special case when $c_i = 1$. This connection allows us to estimate $\psi_i$ via nonparametric least squares under $s$-concavity -- yielding a fully data-driven, tuning-free estimation procedure.

\paragraph{\ensuremath{s}-Concavity.}\label{sec:s-concavity}
A technical contribution of this work is the study of the uniform rate of convergence of the least-squares estimator of a unidimensional $s$-concave regression function. As defined in \cite{han2016approximation}, a unidimensional function $\psi:\mathcal{U} \rightarrow (0,\infty)$, $\mathcal{U} \subset \mathbb{R}$, is said to be $s$-concave for some $s \in \mathbb{R}$, and we write $\psi \in \mathcal{F}_{s}(\mathcal{U})$ if $\psi\left((1-\lambda) u_0+\lambda u_1\right) \geq M_s\left(\psi\left(u_0\right), \psi\left(u_1\right) ; \lambda\right)$, for all $u_0, u_1 \in \mathcal{U}$ and $\lambda \in (0,1)$, where
$$
M_s(y_0, y_1 ; \lambda) \triangleq \begin{cases}((1-\lambda) y_0^s+\lambda y_1^s)^{1 / s}, & s \neq 0, y_0, y_1 > 0 \\ 
0, & s<0, y_0=y_1=0\\
y_0^{1-\lambda} y_1^{\lambda}, & s=0.\end{cases}
$$
\noindent
This notion generalizes concavity ($s=1$) and log-concavity (which holds for $s=0$, in the sense that $\lim_{s\rightarrow 0}M_s(y_0,y_1;\lambda)=M_0(y_0,y_1;\lambda)$). These classes are nested since
$\mathcal{F}_{s}(\mathcal{U}) \subset \mathcal{F}_{0}(\mathcal{U}) \subset \mathcal{F}_{r}(\mathcal{U})$, if $-\infty<r<0<s<\infty$. The class of log-concave densities has been extensively studied: see  \cite{bobkov2011concentration, dumbgen2009maximum, cule2010theoretical, borzadaran2011log, bagnoli2006log}; while \cite{han2016approximation, doss2016global, chandrasekaran2009sampling, koenker2010quasi} deal with general $s$-concavity. It is easy to see that such functions $\psi$ have the form $\psi = (\phi_{+})^{1/s}$ for some concave function $\phi$  if $s>0$, where $x_{+}=\max \{0,x\}$, $\psi = e^\phi$ for some concave function $\phi$  if $s=0$, and $\psi = (\phi_{+})^{1/s}$ for some convex function $\phi$ if $s<0$. Then, $\psi$ has the following representation: $\psi = h_s \circ \phi$ where $\phi:\mathcal{U}\rightarrow\mathbb{R}$ is concave and
\begin{equation}\label{eq:def-d_s-and-inverse}
  h_s(x)= d_s^{-1}(x) = \begin{cases}e^x, & s=0, x \in \mathbb{R},\\ (-x)^{1 / s}, & s<0, x < 0 ,
  \\ x^{1 / s}, & s>0, x > 0 ,\end{cases} \qquad d_s(y)= \begin{cases}\log (y), & s=0, y>0, \\ -y^s, & s<0, y>0,\\ y^s, & s>0, y>0.\end{cases}
\end{equation}


Different from previous results, we study the convergence rate for the more general class $\mathcal{F}_h$ (see \cref{app:NPLS} for details) that contains functions of the form $\psi = h \circ \phi: \mathcal{U} \to \mathbb{R}$, where $h:\mathbb{R}\rightarrow (0,\infty)$ is a \emph{known} increasing function in $\mathrm{C}^2$ and $\phi:\mathcal{U} \rightarrow \mathbb{R}$ is an \emph{unknown} concave function. Specifically, we consider i.i.d. observations $(U_i,Y_i)\sim (U,Y)$ for $i=1,2,\dots,n$ where $Y_1,\dots,Y_n$ are noisy representations of the mean function $\psi_0(u)=\mathbb{E}\left(Y|U=u\right) = h(\phi_0(u))$ and $U_1 \leq U_2 \leq \dots \leq U_n$ are contained in $\mathcal{U}$. We study the uniform convergence rate of $\widehat{\psi}_n  = h \circ \widehat{\phi}_n$, where $\widehat{\phi}_n$ is the LSE
$$
\textstyle  \widehat{\phi}_n \in \operatorname{argmin}_{\phi \text{ concave}} \, \sum_{i=1}^n\left(Y_i- h \circ \phi \left(U_i\right)\right)^2.
$$
The uniform convergence of $\widehat{\psi}_n$ to $\psi_0$ plays a crucial role in establishing the convergence of our proposed pricing strategy to the Nash Equilibrium, as we will see later in the paper.

\subsection{Summary of Key Contributions}
We now provide a summary of our key contributions. 

{\bf A novel semiparametric pricing policy for nonlinear mean demand.} We extend the standard approach to estimating the mean demand function of a firm \(\lambda_i(\cdot)\) by introducing a monotone single index model
$\lambda_i(\mathbf{p}) \;=\; \psi_i(\langle\boldsymbol{\theta}_i,\mathbf{p}\rangle)$, where $\psi_i$ is increasing and $s_i$-concave for some $s_i>-1$, providing substantially more flexibility than previous parametric models \citep{li2024lego, kachani2007modeling, gallego2006price}.

{\bf $s$-concave mean demand functions.} 
In many existing works, necessary conditions for the existence of the NE are derived by assuming that the virtual valuation function defined in \cref{def:virtual_valu} is increasing, often by invoking log-concavity of the mean demand function. We show how all such assumptions can be cast under the more general framework of $s$-concavity. This reformulation allows the development of a fully data-driven, tuning-parameter-free algorithm using shape constraints.

{\bf Regret upper bound and convergence to equilibrium.} We establish an upper bound on the total expected regret and analyze the convergence to the NE for a general exploration length of order 
$\tau \propto T^{\xi}$ for $\xi \in (0,1)$ (\cref{thm:main_convergence}). Our results reveal the existence of an optimal choice of $\xi$ that minimizes the total expected regret for each seller, leading to a regret of order $\widetilde{\mathcal{O}}(N^{3/2}T^{5/7})$, where $\widetilde{\mathcal{O}}$ excludes log factors (\cref{cor:main_convergence}). Moreover, we show that by the end of the selling horizon, the joint prices set by sellers converge to Nash equilibrium prices at a rate of $\widetilde{\mathcal{O}}(N^{3/4}T^{-1/7})$.

{\bf Concentration inequality for $s$-concave regression functions in the supremum norm.} Our work involves establishing a concentration inequality for the nonparametric LSE, under the supremum norm, for a large class of shape constraints that includes $s$-concavity (\cref{app:NPLS}). As a minor contribution, we derive a concentration inequality for the parametric component $\boldsymbol{\theta}_i$ of the monotone single index model (\cref{prop:concentration_ineq_theta}), while previous results show convergence in probability or distribution \citep{balabdaoui2019least}.

\paragraph{General Notation.} We use $\|\cdot\|_1,\|\cdot\|_2$ (or $\| \cdot \|$),$\|\cdot\|_\infty$ for the $L^1$, Euclidean, and sup norms, respectively, and 
$\langle \mathbf{u},\mathbf{v}\rangle=\mathbf{u}^\top\mathbf{v}$ for the inner product. The unit sphere in $\mathbb{R}^N$ is $\mathbb{S}_{N-1}=\{\mathbf{x}\in\mathbb{R}^N:\|\mathbf{x}\|_2=1\}$. We write $\tilde O(\cdot)$ to suppress logarithmic factors and use $\lesssim$ to hide absolute constants. For functions, $\mathrm{C}^m(\Omega)$ denotes $m$-times continuously differentiable $f:\Omega\to\mathbb{R}$. 
A Lipschitz function $f$ has constant $L_f>0$ with respect to prespecified norms. $f$ is a contraction if $L_f<1$. If $f\in \mathrm{C}^2(\Omega),\ \Omega\subseteq\mathbb{R}$, it is $\mu$-strongly convex if $f''(x)\geq\mu>0$. Finally, following \citet[Pag.~86]{delmas2024elliptically}, a random vector 
$Z\sim \mathscr{E}(\mathbf{m},\Lambda,g)$ has elliptically symmetric distribution with location $\mathbf{m}$, 
scale matrix $\Lambda\succ 0$ (positive definite), and density generator $g:[0,\infty)\to[0,\infty)$, if $f_Z(\mathbf{z}) \propto g\left((\mathbf{z}-\mathbf{m})^\top \Lambda^{-1}(\mathbf{z}-\mathbf{m})\right)$.

\section{Problem Formulation}
We adopt a problem setup similar to that of \cite{li2024lego}. We consider $N$ sellers, each selling a single type of product with unlimited inventories over a selling horizon of $T$ periods. We use $t \in$ $\mathcal{T}\triangleq \{1,2, \ldots, T\}$ to index time periods and $i \in \mathcal{N}\triangleq \{1,2, \ldots, N\}$ to index sellers. At the beginning of each period, each seller \emph{simultaneously} selects their price. For seller $i, p_i^{(t)} \in \mathcal{P}_i\triangleq [\underline{p_i}, \bar{p}_i]$, denotes the price that seller $i$ offers in period $t$, with price bounds $\underline{p_i}< \bar{p}_i$ and $\underline{p_i}, \bar{p}_i \in[0,+\infty)$. Let $\mathbf{p}_{-i}^{(t)}\triangleq  (p_j^{(t)})_{j \in \mathcal{N} \backslash\{i\}}$ denote the competitor prices at time $t$, $\mathbf{p}^{(t)}\triangleq (p_j^{(t)})_{j \in \mathcal{N}}$ denote the joint prices' vector, and let $\mathcal{P}\triangleq \prod_{i \in \mathcal{N}}[\underline{p_i}, \bar{p}_i]$. \emph{The vector $\mathbf{p}^{(t)}$ is made public at time $t$, then observed by all the sellers}. A common knowledge is also the set 
\begin{equation}\label{eq:upsilon}
\textstyle \mathcal{U} \triangleq [-p_{\max},p_{\max}],\quad p_{\max} =\left(\sum_{i\in\mathcal{N}}\bar{p}_i^2\right)^{1/2}.
\end{equation}
\emph{The demand $y_i^{(t)}$ of seller $i$ in period $t$ is observed by seller $i$ and is kept private, i.e., not shared among competitors}. The individual demand $y_i^{(t)}$ depends on the offered prices of all sellers, $\mathbf{p}^{(t)}\in \mathcal{P}$, and follows a nonlinear model:
\begin{align}\label{model_demand}
\textstyle  y_i^{(t)}= \lambda_i(\mathbf{p}^{(t)}) +\varepsilon_i^{(t)},\quad \lambda_i(\mathbf{p}^{(t)})\triangleq \psi_i( -\beta_i p_i^{(t)}+\langle\boldsymbol{\gamma}_i, \mathbf{p}_{-i}^{(t)}\rangle) = \psi_i(\langle\boldsymbol{\theta}_i,\mathbf{p}^{(t)}\rangle),\quad t \in \mathcal{T},
\end{align}
\noindent
where $\{\varepsilon_i^{(t)}\}_{t \in \mathcal{T}}$ are (zero mean) $\sigma_i$-sub-gaussian demand noises following independent and identical distributions ($\varepsilon_i^{(t)}$ and $\varepsilon_j^{(t)}$ can be correlated with $i \neq j$, $i,j \in \mathcal{N}$), and $\boldsymbol{\theta}_i$ is an unknown vector of dimension $N$ with $i$-th entry equal to $-\beta_i$ and the remaining entries being the values of $\boldsymbol{\gamma}_{i}\in \mathbb{R}^{N-1}$,  ordered. The parameter vector $\boldsymbol{\gamma}_i$ measures how seller $i$ 's demand is affected by competitor prices. We assume that the parameter space is such that the average demand $\lambda_i$ is non-negative and $\partial_{p_i}\lambda_i <0$ among all values of $\{\boldsymbol{\theta}_i,\psi_i\}_{i \in \mathcal{N}}$; a similar assumption is found in \cite{birge2024interfere, li2024lego}. The above conditions hold if $\psi_i,\psi_i'>0$ and $\beta_i>0$, which are explicitly assumed in \cref{ass:parameter space}. 

\begin{assumption}\label{ass:parameter space}
For every $i \in \mathcal{N}$, $\psi_i:\mathcal{U} \to [\underline{B}_{\psi_i},\bar{B}_{\psi_i}]$, where: $\psi_i \in \mathrm{C}^2(\mathcal{U})$ is unknown; $0<\underline{B}_{\psi_i} < \bar{B}_{\psi_i} <\infty$ are known; $0<\underline{B}_{\psi_i'} \leq \psi_i' \leq \bar{B}_{\psi_i'}$ and $|\psi_i''| \leq B_{\psi_i''}$, where $\underline{B}_{\psi_i'},\bar{B}_{\psi_i'},B_{\psi_i''}>0$ are not necessarily known. We also assume
$$
\textstyle \beta_i \geq \underline{\beta}_i,\quad \|\boldsymbol{\theta}_i\|_2^2=\beta_i^2 + \|\boldsymbol{\gamma}_i\|_2^2=1, \quad i.e.  \quad \boldsymbol{\theta}_i 
\in \mathbb{S}_{N-1},
$$
where $\underline{\beta}_i\in(0,1]$ is an unknown constant.
\end{assumption}
\noindent
The constants $\underline{B}_{\psi_i},\bar{B}_{\psi_i}$ are known by the seller $i$ (they are used to compute the optimization problem in \eqref{eq:estim_psi}), however, firms can estimate them easily from historical sales data and operational capacity limits; moreover loose bounds suffice and do not affect the rates in \cref{thm:main_convergence}. The condition $\|\boldsymbol{\theta}_i\|_2=1$ guarantees the differentiability of the monotone single index model \citep{chmielewski1981elliptically, balabdaoui2019least}. Note that \cref{model_demand} is well defined, that is $\langle\boldsymbol{\theta}_i,\mathbf{p}^{(t)} \rangle\in \mathcal{U}$, indeed $\|\boldsymbol{\theta}_i\|_2=1 \implies |\langle\boldsymbol{\theta}_i, \mathbf{p}\rangle|\leq \|\mathbf{p}\|_2 \leq p_{\max}$, $\forall \mathbf{p} \in \mathcal{P}$. Regarding $\boldsymbol{\gamma}_i$, while many applications assume $\gamma_{i j}>0$, this restriction is not required for our algorithm or theoretical guarantees - our results remain valid when $\gamma_{i j}>0$. We allow $\gamma_{i j}$ to be negative to capture settings with negative cross-effects, such as vertically differentiated products, where raising a competitor's price may reduce my demand if consumers perceive their product as higher quality.

\paragraph{Individual Regret.} Seller $i \in \mathcal{N}$ aims to design a policy $\{p_i^{(t)}\}_{t \in \mathcal{T}}$ that maximizes their individual (cumulative) revenue
\begin{align*}
\textstyle \mathrm{R}_i(T)\triangleq \mathbb{E} \sum_{t=1}^T p_i^{(t)} y_i^{(t)} = \mathbb{E} \sum_{t=1}^T \operatorname{rev}_i(p_i^{(t)}\mid \mathbf{p}^{(t)}_{-i}),\quad 
\operatorname{rev}_i(p_i|\mathbf{p}_{-i})\triangleq p_i\psi_i(-\beta_ip_i+\langle\boldsymbol{\gamma}_i,\mathbf{p}_{-i}\rangle).
\end{align*}
\noindent
Maximizing a seller's revenue can be reframed as minimizing their regret. Each seller competes with a dynamic optimal sequence of prices in hindsight while assuming that the other sellers would not have responded differently if this sequence of prices had been offered. Under such a dynamic benchmark, the objective of each seller $i \in \mathcal{N}$ is to minimize the following regret metric in hindsight:
\begin{align}\label{def:regret}
\textstyle \operatorname{Reg}_i(T) \triangleq \left[\mathbb{E} \sum_{t=1}^T \operatorname{rev}_i(\Gamma_i(\mathbf{p}^{(t)}_{-i})\mid \mathbf{p}^{(t)}_{-i})\right]-\mathrm{R}_i(T), \text{ } \Gamma_i(\mathbf{p}_{-i}) \in \underset{p_i \in \mathcal{P}_i}{\operatorname{argmax}} \, \operatorname{rev}_i(p_i\mid \mathbf{p}_{-i}),
\end{align}
where $\Gamma_i:\mathcal{P}_{-i}\rightarrow \mathcal{P}_i$ is denoted as the $i$-th seller's \emph{Best Response Map}, and $\mathbf{p}_{-i}\triangleq  (p_j )_{j \in \mathcal{N} \backslash\{i\}}$.

\paragraph{Nash Equilibrium (NE).} A Nash equilibrium $\mathbf{p}^*=\left(p_i^*\right)_{i \in \mathcal{N}} \in \mathcal{P}$ is defined as a price vector under which unilateral deviation is not profitable for any seller. Specifically,
$$
\operatorname{rev}_i(p_i^* \mid \mathbf{p}_{-i}^*) \geq \operatorname{rev}_i(p_i \mid \mathbf{p}_{-i}^*), \quad \forall p_i \in \mathcal{P}_i, \quad \forall i \in [N],
$$
or, equivalently, $\mathbf{p}^*$ is a solution to the following fixed point equation:
\begin{equation}\label{eq:NASH}
  \mathbf{p}^{*}= \boldsymbol{\Gamma}(\mathbf{p}^{*})= (\Gamma_1(\mathbf{p}^{*}_{-1}),\dots, \Gamma_N(\mathbf{p}^{*}_{-N})), \quad \boldsymbol{\Gamma}: \mathcal{P} \rightarrow \mathcal{P},
\end{equation}
where $\boldsymbol{\Gamma}$ is called \emph{Best Response Operator}, and its components $\Gamma_i$ are defined in \eqref{def:regret}.

\subsection{Main Assumptions}\label{sec:main_assump}

Before presenting the main assumptions, we clarify why our framework requires two key properties: the best-response map must be contractive, and each seller’s revenue function must be strongly concave. These conditions ensure well-posedness of the equilibrium and enable the stability and convergence guarantees developed later.

The objective of this paper is to design an algorithm that guarantees sublinear regret for every seller, i.e., $\operatorname{Reg}_i(T)=o(T)$ for every $i \in \mathcal{N}$. Our policy consists of two phases: an exploration phase, in which each seller learns their individual best-response map $\hat{\Gamma}_i$ by consistently estimating the parameters $(\boldsymbol{\theta}_i, \psi_i)$, and an exploitation phase, in which, at each round $t$, seller $i$ sets their price according to $p_i^{(t)}=\hat{\Gamma}_i(\mathbf{p}_{-i}^{(t-1)})$, or equivalently $\mathbf{p}^{(t)}=\hat{\boldsymbol{\Gamma}}(\mathbf{p}^{(t-1)})$. The link between regret and equilibrium is straightforward: the individual regret is controlled if the iterates converge to the NE. In fact, $\operatorname{Reg}_i(T) \lesssim  T\mathbb{E}\|\mathbf{p}^{(T)}-\mathbf{p}^{*}\|_2^2$, were $\mathbf{p}^{\star}$ is a NE (see \eqref{ineq:reg_NE_conv_rate} for a detailed bound). By the triangle inequality, we have
$$
\|\mathbf{p}^{(t)}-\mathbf{p}^{\star}\|\leq\|\hat{\boldsymbol{\Gamma}}(\mathbf{p}^{(t-1)})-\boldsymbol{\Gamma}(\mathbf{p}^{(t-1)})\|+\|\boldsymbol{\Gamma}(\mathbf{p}^{(t-1)})-\boldsymbol{\Gamma}(\mathbf{p}^{\star})\|\leq \|\hat{\boldsymbol{\Gamma}}-\boldsymbol{\Gamma}\|_{\infty}+L_{\boldsymbol{\Gamma}}\|\mathbf{p}^{(t-1)}-\mathbf{p}^{\star}\|,
$$
where $L_{\boldsymbol{\Gamma}}$ is the Lipschitz constant of $\boldsymbol{\Gamma}$ and $\|\mathbf{F}\|_{\infty}=\sup_{\mathbf{p}\in\mathcal{P}}\|\mathbf{F}(\mathbf{p})\|$. Therefore, convergence to $\mathbf{p}^{\star}$ requires two ingredients: $(i)$ \textbf{Contraction}: $0<L_{\boldsymbol{\Gamma}}<1$ which ensures that deviations from the NE shrink geometrically; $(ii)$ \textbf{Consistent estimation for $\|\hat{\boldsymbol{\Gamma}}-\boldsymbol{\Gamma}\|_{\infty}$}, which follows from consistent estimation of $(\boldsymbol{\theta}_i,\psi_i)$ and strong concavity of $\operatorname{rev}_i(\cdot \mid \mathbf{p}_{-i})$ uniformly in $\mathbf{p}_{-i}$, for each seller $i \in \mathcal{N}$. Putting these together, we obtain (see \eqref{ineq:NE_contraction_and_estim_error} for a formal inequality):
\begin{equation}\label{eq:conv:informamal}
\mathbb{E}\|\mathbf{p}^{(T)}-\mathbf{p}^{\star}\| \lesssim \underbrace{L^{(T-1)}_{\boldsymbol{\Gamma}}\mathbb{E}\|\mathbf{p}^{(0)}-\mathbf{p}^{\star}\|}_{\text {shrinks under contraction }}+\underbrace{\text { estimation error }\mathbb{E}\|\boldsymbol{\Gamma}-\hat{\boldsymbol{\Gamma}}\|_{\infty}}_{\text {vanishes under strong concavity }} \quad \text{as }T \to \infty.
\end{equation}

Additionally, strong concavity ensures the existence of the NE, while the contraction property of the best response map implies the uniqueness of the NE.

\begin{assumption}\label{strong-concavity-rev}
For every $i \in \mathcal{N}$, $\operatorname{rev}_i(\cdot \mid \mathbf{p}_{-i})$ is strongly concave in $\mathcal{P}_i$, uniformly on $\mathbf{p}_{-i}\in \mathcal{P}_{-i}$\footnote{This means that there exists $\xi_i>0$ independent of $\mathbf{p}_{-i}$ such that for all $x,y\in\mathcal{P}_i$ and all
$\mathbf{p}_{-i} \in \mathcal{P}_{-i}$, $\operatorname{rev}_i(y \mid \mathbf{p}_{-i})
-
\operatorname{rev}_i(x \mid \mathbf{p}_{-i})\leq \partial_{x}\operatorname{rev}_i(x \mid \mathbf{p}_{-i})( y-x)
-\frac{\xi_i}{2} \|y-x\|^2.$}.
\end{assumption}

By twice differentiability of $\psi_i$, \cref{strong-concavity-rev} is guaranteed if $\mu_i \triangleq 2\beta_i\underline{B}_{\psi_i'} - \beta_i^2 \bar{p}_i B_{\psi_i''}>0$. Indeed, since by \cref{ass:parameter space} $|\psi_i''|\leq B_{\psi_i''}$ and $\psi_i' \geq B_{\psi'_i}$, we have
$$
\begin{aligned}
-\partial_{p}^2 \operatorname{rev}_i(p \mid \mathbf{p}_{-i}) 
\geq 2\beta_i\underline{B}_{\psi_i'} - \beta_i^2 \bar{p}_i B_{\psi_i''} := \mu_i.
\end{aligned}
$$

\cref{strong-concavity-rev} guarantees that the best-response map $\Gamma_i$ in \eqref{def:regret} is well defined for every $\mathbf{p}_{-i}$. More importantly, it guarantees the existence of the Nash equilibrium, in line with standard results in competitive games \citep{scutari2014real, li2024lego, tsekrekos2024variational}. Instances where \cref{strong-concavity-rev} is satisfied include linear demand models \citep{li2024lego}, concave demand specifications, and a class of $s_i$-concave demand functions with $s_i > - 1$. A more detailed discussion of these and further examples is provided in \cref{sec:remark_on_smoothness_assumption}.  
The next result is an immediate consequence of Theorem 3, in \cite{scutari2014real}.

\begin{lemma}[Existence of NE]\label{prof:EUNE}
Under Assumptions \eqref{ass:parameter space} and \eqref{strong-concavity-rev}, there exists a $\mathbf{p}^* \in \mathcal{P}$ satisfying the fixed point equation in \eqref{eq:NASH}.
\end{lemma}

By \cref{strong-concavity-rev}, the map $\Gamma_i(\mathbf{p}_{-i})$ can be recovered by solving the first order conditions (FOCs), $\partial_{p_i}\operatorname{rev}_i(p_i \mid \mathbf{p}_{-i})=0$, and projecting the solution onto $\mathcal{P}_i$. We now determine a shape constraint assumption that is sufficient to satisfy the FOCs. Specifically, we can show that $\Gamma_i$ can be written as (see the proof of \cref{lemma:contraction} for the derivation)
$$
\textstyle \Gamma_i(\mathbf{p}_{-i}) = \Pi_{\mathcal{P}_i}\nicefrac{g_i(\langle\boldsymbol{\gamma}_i, \mathbf{p}_{-i}\rangle)}{\beta_i},\quad g_i(u) \triangleq  u- \varphi_i^{-1}(u), \quad \varphi_i (u) \triangleq  u + \nicefrac{\psi_i(u)}{\psi_i^{\prime}(u)},
$$
provided $\varphi_i^{-1}$ exists. Here $\Pi_{\mathcal{P}_i}$ is the projection into $\mathcal{P}_i$ and $\varphi_i$ is called \emph{virtual valuation function} of firm $i$. For this purpose, we introduce \cref{ass:valuation_i}, which makes the $\varphi_i$'s invertible.

\begin{assumption}\label{ass:valuation_i}
$\forall i \in \mathcal{N}$, there exists a constant $c_i>0$, known to seller $i$, such that $\varphi_i^{\prime}\geq c_i$.
\end{assumption}
\noindent
\cref{ass:valuation_i} covers the linear demand model by \cite{li2024lego} for which $\psi_i(u)=\alpha_i+u \implies \varphi_i(u)=2u+ \alpha_i \implies \psi_i'\geq 2$ and can also be found in several works with $N=1$ (monopolistic setting); for example, Assumption 2.1 in \cite{fan2024policy}, Assumption 1 in \cite{chen2018robust}, Equation 1 in \cite{cole2014sample} and \cite{golrezaei2019dynamic,javanmard2017perishability,javanmard2019dynamic} which assume $\psi_i$ and $1-\psi_i$ to be log-concave (which specifically implies $c_{i}\geq 1$), where $\psi_i$ is a survival function.

We now present an equivalent formulation of \cref{ass:valuation_i} in terms of $s$-concavity, a condition that will allow us to estimate $\psi_i$ via shape constraints (the proof is provided in \cref{app:s-concavity-iff}). Examples of $s$-concave functions can be found in \cref{app:example_s_concave}.

\begin{proposition}\label{prop:s-concavity-iff}
For every $i \in \mathcal{N}$, $\varphi_i^{\prime}\geq c_i$ if and only if $\psi_i$ is $(c_i-1)$-concave.
\end{proposition}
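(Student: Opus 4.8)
The plan is to unpack the definition of $s$-concavity for $s = c_i - 1$ into a second-order differential inequality on $\psi_i$ and show it is literally equivalent to $\varphi_i' \geq c_i$. Since $\psi_i \in \mathrm{C}^2(\mathbb{R})$ and $\psi_i > 0$ on $\mathcal{U}$, I can work with the characterization that for a twice-differentiable positive function, $\psi_i$ being $s$-concave is equivalent to the transformed function $d_s \circ \psi_i$ (in the notation of \cref{eq:def-d_s-and-inverse}: $\log \psi_i$ if $s = 0$, $-\psi_i^s$ if $s<0$, $\psi_i^s$ if $s>0$, up to the sign convention making the range an interval) being concave, i.e. having nonpositive second derivative. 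So first I would reduce "$\psi_i$ is $(c_i-1)$-concave" to "$\phi_i \triangleq d_{c_i-1}\circ \psi_i$ is concave", handling the three cases $c_i = 1$, $c_i < 1$, $c_i > 1$ in parallel.

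Next I would compute $\phi_i''$ in closed form. Writing $s = c_i - 1$ and treating the case $s \neq 0$ first: $\phi_i = \pm \psi_i^{s}$, so $\phi_i' = \pm s \psi_i^{s-1}\psi_i'$ and $\phi_i'' = \pm s\psi_i^{s-2}\bigl((s-1)(\psi_i')^2 + \psi_i\psi_i''\bigr)$. Because $\psi_i > 0$ and $\psi_i^{s-2} > 0$, the sign analysis shows $\phi_i'' \leq 0$ (concavity) is equivalent — after tracking the sign $\pm$ which is $+$ for $s>0$ and $-$ for $s<0$, and in both cases the factor $s$ carries the matching sign so $\pm s > 0$ — to
$$
(s-1)(\psi_i')^2 + \psi_i\psi_i'' \leq 0.
$$
The $s = 0$ case gives $\phi_i = \log\psi_i$, $\phi_i'' = \bigl(\psi_i\psi_i'' - (\psi_i')^2\bigr)/\psi_i^2$, and concavity is again $\psi_i\psi_i'' - (\psi_i')^2 \leq 0$, which is exactly the $s=0$ instance of the displayed inequality. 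So uniformly in $s = c_i - 1 > -1$, $s$-concavity of $\psi_i$ is equivalent to $(c_i - 2)(\psi_i')^2 + \psi_i\psi_i'' \leq 0$.

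Finally I would connect this to $\varphi_i$. From $\varphi_i(u) = u + \psi_i(u)/\psi_i'(u)$ (using $\psi_i' \geq B_{\psi_i'} > 0$ so this is well-defined and differentiable), differentiate:
$$
\varphi_i'(u) = 1 + \frac{(\psi_i')^2 - \psi_i\psi_i''}{(\psi_i')^2} = 2 - \frac{\psi_i\psi_i''}{(\psi_i')^2}.
$$
Hence $\varphi_i' \geq c_i$ iff $2 - \psi_i\psi_i''/(\psi_i')^2 \geq c_i$ iff $(2 - c_i)(\psi_i')^2 \geq \psi_i\psi_i''$ iff $(c_i - 2)(\psi_i')^2 + \psi_i\psi_i'' \leq 0$ (dividing/multiplying by $(\psi_i')^2 > 0$ preserves the direction), which is precisely the $s$-concavity inequality derived above. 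This closes the equivalence.

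The main obstacle I anticipate is purely bookkeeping: getting the sign conventions consistent across the three regimes $s<0$, $s=0$, $s>0$ — in particular, matching the definition of $\mathcal{F}_{d,s}$ via $M_s$ given in the paper to the "$\psi = h_s \circ \phi$ with $\phi$ concave" form of \cref{eq:def-d_s-and-inverse}, and verifying that the twice-differentiable positive case really does reduce $s$-concavity to the second-derivative sign of $d_s\circ\psi_i$ (this is standard for log-concavity and for $s$-concave densities, but I should state it as a lemma or cite, e.g., \cite{han2016approximation}). There is no hard analytic content — once the transformation is pinned down, every step is a one-line derivative computation with strictly positive factors, so no subtlety about boundary behavior or degenerate cases arises given $\psi_i, \psi_i'$ are bounded away from $0$.
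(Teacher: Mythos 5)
Your proposal is correct and follows essentially the same route as the paper: the paper likewise differentiates $\varphi_i$ to get $\varphi_i' = 2 - \psi_i\psi_i''/(\psi_i')^2$ and reduces $(c_i-1)$-concavity, via concavity of $d_{c_i-1}\circ\psi_i$, to the inequality $\psi_i\psi_i'' + (c_i-2)(\psi_i')^2 \leq 0$ (its Proposition on $s$-concavity of twice-differentiable positive functions), exactly as you do. The only difference is cosmetic: the paper writes the second-derivative computation uniformly with the factor $|s|$ rather than your case-by-case $\pm s$ bookkeeping.
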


We now establish a sufficient condition under which the operator $\boldsymbol{\Gamma}=(\Gamma_i)_{i \in [N]}$ is a contraction in $\mathcal{P}$.

\begin{assumption}\label{ass:exist_uniq_NE}
$\sup_{i \in \mathcal{N}}\|g_i'\|_{\infty}\nicefrac{\|\boldsymbol{\gamma}_i\|_1}{\beta_i} <1$.
\end{assumption}
\cref{ass:exist_uniq_NE} generalizes Assumption 2 in \cite{li2024lego} (in their case $\psi_i(u)=u+\alpha_i \implies g_i(u)= (u+\alpha_i)/2 \implies g_i' = 1/2$, matching the contraction constant $L_{\boldsymbol{\Gamma}}$ in Equation 30 of their work). Similar assumptions are found in \cite{kachani2007modeling}. The proof of the following result is relegated to \cref{app:UNE}. In practice, \cref{ass:exist_uniq_NE} states that the influence of competitors' prices on seller $i$'s optimal response is sufficiently small relative to the sensitivity to its own price.

\begin{lemma}\label{lemma:contraction} Let Assumptions \ref{strong-concavity-rev}, \ref{ass:valuation_i} and \ref{ass:exist_uniq_NE} hold. Then, $\boldsymbol{\Gamma}=(\Gamma_i)_{i \in [N]}:\mathcal{P}\to \mathcal{P}$ is a contraction with contraction constant $L_{\boldsymbol{\Gamma}}:=\sup_{i \in \mathcal{N}}\|g_i'\|_{\infty}\nicefrac{\|\boldsymbol{\gamma}_i\|_1}{\beta_i}<1$. Consequently, the NE $\mathbf{p}^*$ is unique.
\end{lemma}

\begin{remark}\label{remark:ENE}
In the specific case where the $\psi_i$s are $s_i$-concave for some $s_i>-1/2$ (i.e. $c_i>1/2$, which happens for example if the $\psi_i$ are log-concave, i.e. $c_i=1$, or concave, i.e. $c_i=2$), then
$$
g_i'(u) = 1-\nicefrac{1}{\varphi_i'(\varphi_i^{-1}(u))}\geq 1-\nicefrac{1}{c_i}>-1,\quad 
g_i'(u) = 1-\nicefrac{1}{\varphi_i'(\varphi_i^{-1}(u))}< 1, \quad \Rightarrow \|g_i'\|_{\infty}<1.
$$
In this case \cref{ass:exist_uniq_NE} reduces to $\beta_i>\|\boldsymbol{\gamma}_i\|_1$, but since $\beta_i = (1-\|\boldsymbol{\gamma}_i\|_2^2)^{1/2}$, the condition becomes $\|\boldsymbol{\gamma}_i\|_1^2+\|\boldsymbol{\gamma}_i\|_2^2<1$. But since $\|\boldsymbol{\gamma}_i\|_2 \leq \|\boldsymbol{\gamma}_i\|_1$, a sufficient condition is $\|\boldsymbol{\gamma}_i\|_1<\nicefrac{1}{\sqrt{2}}$. 
\end{remark}

\section{Proposed Algorithm}
Our \cref{algo:semiparametric} consists of an initial exploration phase for parameter estimation, followed by an exploitation phase where sellers set prices based on the learned demand model. In the exploration phase, each seller $i \in \mathcal{N}$ samples prices $p_i^{(t)}$ from a distribution $\mathscr{D}_i$ for $t=1,2,\dots,\tau$. The common exploration length is $\tau \propto T^\xi$ for some $\xi \in (0,1)$, to be specified later (\cref{cor:main_convergence}).
\begin{remark}[The common exploration phase]\label{remark:common_exploration}
See \cref{sec:remark_on_common_exploration} for a clear discussion of why sellers have no incentive to alter the exploration phase.
\end{remark}
The random prices $p_i^{(t)}$ charged by different sellers in the exploration phase are not necessarily independent, and we will use $\mathscr{D}$ from now on to denote the joint distribution of the price vector in the exploration phase. Recall that at every time $t$, \emph{each firm $i$ observes their own (random) demand value $y_i^{(t)}$ while the prices $\mathbf{p}^{(t)}=(p_1^{(t)},p_2^{(t)},\dots,p_N^{(t)})^\top$ are made public}. At the end of the exploration phase, firm $i$ estimates $(\boldsymbol{\theta}_i,\psi_i)$ using data $\{(\mathbf{p}^{(t)},y_i^{(t)})\}_{t\leq \tau}$. More precisely, each firm $i$ chooses a proportion $\kappa_i$ of initial data points in the exploration phase $\mathcal{T}_i^{(1)}=\{1,2,\dots, \kappa_i \tau\}$ to estimate $\boldsymbol{\theta}_i$, and subsequent time points $\mathcal{T}^{(2)}_i=\{\kappa_i \tau +1, \dots, \tau\}$ to estimate $\psi_i$. For simplicity of notation we define $n_i^{(1)} = |\mathcal{T}_i^{(1)}| =  \tau\kappa_i$ and $n_i^{(2)} = |\mathcal{T}_i^{(2)}| =  \tau(1-\kappa_i)$. 
\begin{remark}[Need for two different phases for model estimation]\label{remark:Need of two different phases}
Owing to space constraints, we defer to \cref{sec:remark_on_need_two_phases}.
\end{remark}

\begin{remark}[Selection of $\tau$ and $\kappa_i$]
For every seller $i\in\mathcal{N}$, the parameter $\kappa_i$ can be chosen to minimize that seller’s \emph{total} expected regret. By \cref{thm:main_convergence} there is a unique optimum, denoted $\kappa_i^{\star}=\kappa_i^{\star}(N,\tau)\in (0,1)$ that is characterized by \cref{eq:optimal_kappa}.  Choosing $\kappa_i=\kappa_i^{\star}$ improves only the leading constant of the regret but leaves the convergence \emph{rate} unchanged. For the exploration horizon $\tau$ we set $\tau\propto T^{\xi}$ for some $\xi>0$.  \Cref{cor:main_convergence} argues that $\xi^{\star} = 5/7$ is the unique value that minimizes the total expected regret across all sellers. These values $\kappa_i^*$ and $\xi^{\star}$ are ``optimal" only in the sense that they aim to minimize the regret constant and the exponent of $T$ respectively within our derived upper bound; we do not claim minimax optimality of the regret rate itself.
\end{remark}

\begin{enumerate}[leftmargin=*, label=(\arabic*)]
\item\textbf{Exploration phase part 1}: $\mathcal{T}_i^{(1)}$. Each seller $i$ estimates $(-\widetilde{\beta}_i,\widetilde{\boldsymbol{\gamma}}_i) = \widetilde{\boldsymbol{\theta}}_{i} \triangleq \nicefrac{\widehat{\boldsymbol{\theta}}_{i}}{\|\widehat{\boldsymbol{\theta}}_{i}\|_2}$, where
\begin{align}\label{eq:estim_theta}
\textstyle  \widehat{\boldsymbol{\theta}}_{i}=\underset{\boldsymbol{\theta} \in \mathbb{R}^N}{\operatorname{argmin}} \left\{ \mathscr{L}_i(\boldsymbol{\theta})\triangleq \sum_{t \in \mathcal{T}_i^{(1)}}(y_i^{(t)}-\langle\boldsymbol{\theta},\mathbf{p}^{(t)}- \bar{\mathbf{p}}\rangle)^2\right\},\quad \bar{\mathbf{p}} = \nicefrac{\sum_{t \in \mathcal{T}_i^{(1)}} \mathbf{p}^{(t)}}{|\mathcal{T}_i^{(1)}|}.
\end{align}

\item\textbf{Exploration phase part 2}: $\mathcal{T}_i^{(2)}$. Each seller $i$ defines $w^{(t)}_i \triangleq  \langle\widetilde{\boldsymbol{\theta}}_{i},\textbf{p}^{(t)}\rangle$ for $t \in \mathcal{T}_i^{(2)}$ (note that $w_i^{(t)}\in \mathcal{U}$ because $\|\widetilde{\boldsymbol{\theta}}_i\|_2=1$, where $\mathcal{U}$ is defined in \eqref{eq:upsilon}) and  the estimator $\widehat{\psi}_{i,\widetilde{\boldsymbol{\theta}}_i} = h_{s_i}\circ \widehat{\phi}_{i,\widetilde{\boldsymbol{\theta}}_i}$, where $s_i = c_i-1$ (known by \cref{ass:valuation_i}), and
\begin{align}\label{eq:estim_psi}
\textstyle 
 \widehat{\phi}_{i,\widetilde{\boldsymbol{\theta}}_i} \in \operatorname{argmin}_{\phi \in \mathcal{H}_i} \sum_{t\in \mathcal{T}_i^{(2)}}(y_i^{(t)} -h_{s_i}(\phi(w^{(t)}_i)))^2,
\end{align}
where \(h_{s_i}\) is defined in \eqref{eq:def-d_s-and-inverse}. The class \(\mathcal{H}_i\) consists of all functions \(\phi: \mathcal{U} \to [\underline{B}_{\phi_i}, \bar{B}_{\phi_i}]\) that are both monotone non-decreasing and concave. The bounds \(\underline{B}_{\phi_i}\) and \(\bar{B}_{\phi_i}\) are chosen so that $[\underline{B}_{\psi_i}, \bar{B}_{\psi_i}] = h_{s_i}([\underline{B}_{\phi_i}, \bar{B}_{\phi_i}])$, and are therefore known, indeed $\underline{B}_{\psi_i}, \bar{B}_{\psi_i}$ (defined in \cref{ass:parameter space}) are known. A detailed justification of this estimator in \eqref{eq:estim_psi} is provided in \cref{sec:estimation_on_psi}.

\end{enumerate}

At the end of the exploration phase, seller $i$ obtains an estimator of their revenue function:
\begin{align}\label{eq:estimated_revenue}
\textstyle \widehat{\operatorname{rev}}_i(p_i|\mathbf{p}_{-i}^{(t-1)})\triangleq p_i\widehat{\psi}_{i,\widetilde{\boldsymbol{\theta}}_i}( -\widetilde{\beta}_i p_i+\langle\widetilde{\boldsymbol{\gamma}}_i, \mathbf{p}_{-i}^{(t-1)}\rangle),\quad \forall p_i \in \mathcal{P}_i,
\end{align}
where $\mathbf{p}_{-i}^{(t-1)}$ are the prices of all the other firms at time $t-1$, that are made public. A price $p^{(t)}_i$ is then offered by firm $i$ as a maximizer of \eqref{eq:estimated_revenue} over $\mathcal{P}_i$. We present the full procedure in \cref{algo:semiparametric} and a visual representation of the exploration-exploitation scheme in \cref{fig:algorithm} in \cref{app:figures}. A summary of the information available to each seller to run \cref{algo:semiparametric} can be found in \cref{app:known_values}.

\begin{algorithm}[h]
   \caption{SPE-BR (Semi-Parametric Estimation then Best-Response)}\label{algo:semiparametric}
\begin{algorithmic}
   \State {\bfseries Input:} the (joint) distributions of the exploration phase $\mathscr{D}$.
   \State {\bfseries Output:} Price $p_i^{(t)}$ that seller $i \in \mathcal{N}$ offers in period $t \in \mathcal{T}\triangleq \{1,2, \ldots, T\}$.
   \For{$t \leq \tau$}
        \State Sample $\mathbf{p}^{(t)} = (p_1^{(t)},p_2^{(t)},\dots, p_N^{(t)}) \sim \mathscr{D}$, and let $y_i^{(t)}$ denote seller $i$'s demand as in \eqref{model_demand}.
    \EndFor
    \State Each seller $i \in \mathcal{N}$ constructs an estimator $(\widetilde{\boldsymbol{\theta}}_i,\widehat{\psi}_{i,\widetilde{\boldsymbol{\theta}}_i}(\cdot))$ using data $\{(\mathbf{p}^{(t)}, y_i^{(t)})\}_{t=1,2, \ldots,\tau}$ as defined in \eqref{eq:estim_theta} and \eqref{eq:estim_psi} 
    \For{$t = \tau+1,\tau+2,\dots ,T$}
    \State $(1)$ Each seller $i \in \mathcal{N}$ offers a price $p_i^{(t)} = \operatorname{argmax}_{p_i \in \mathcal{P}_i} \, p_i \widehat{\psi}_{i,\widetilde{\boldsymbol{\theta}}_i}( -\widetilde{\beta}_i p_i+\langle\widetilde{\boldsymbol{\gamma}}_i, \mathbf{p}_{-i}^{(t-1)}\rangle)$.
    \State $(2)$ After all prices have been posted, each seller observes their demand $y_i^{(t)}$ as in \eqref{model_demand}.
   \EndFor
\end{algorithmic}
\end{algorithm}
\section{Regret Analysis}

We begin by analyzing, for each seller $i \in \mathcal{N}$, the convergence of $\widetilde{\boldsymbol{\theta}}_i$ and $\widehat{\psi}_{i,\widetilde{\boldsymbol{\theta}}_i}$ from the exploration phase. An informal overview is given in \cref{sec:estimation_model_param}, while formal results appear in \cref{app:formal_estimation} -- \cref{sec:estim_theta} for $\widetilde{\boldsymbol{\theta}}_i$ and \cref{sec:estimation_on_psi} for $\widehat{\psi}_{i,\widetilde{\boldsymbol{\theta}}_i}$. We then study NE convergence and the regret bound in \cref{sec:NE_convergence}.

\subsection{Estimation of the model parameters}\label{sec:estimation_model_param}

In the single-index setting, the linear estimator in \eqref{eq:estim_theta} attains a $\sqrt{n}$ rate when covariates (here, joint prices $\mathbf{p}^{(t)}$) are elliptically distributed \citep{balabdaoui2019least, brillinger2012generalized}. We impose this elliptical law during exploration to ensure consistency and, unlike prior work, we derive a concentration inequality for \eqref{eq:estim_theta} (\cref{prop:concentration_ineq_theta}), which underpins our regret bound.

\begin{assumption}\label{ass:theta_0}
$\mathscr{D} \equiv\mathscr{E}(\mathbf{m},\Lambda,g)$ with $c_{\min}\,\mathbb{I}\preccurlyeq \Lambda \preccurlyeq c_{\max}\,\mathbb{I}$ for some $0<c_{\min}<c_{\max}<\infty$, and $g$ satisfying $g(x+y)=g(x)g(y)$ for all $x,y\ge0$.
\end{assumption}

\cref{ass:theta_0} covers Gaussian and, more generally, elliptically symmetric laws with $g(x)=a^{-\gamma x}$ ($a>0$, $\gamma>0$). This yields sub-Gaussian samples, crucial for the concentration result in \cref{prop:concentration_ineq_theta}. We defer to \cref{remark:domain_P_psi_estimation} and \cref{remark:assumption_g} for a detailed motivation on \cref{ass:theta_0}.

\begin{proposition}[Informal version of \cref{prop:concentration_ineq_theta}]
Under Assumptions \ref{ass:parameter space} and \ref{ass:theta_0}, $\|\widetilde{\boldsymbol{\theta}}_i-\boldsymbol{\theta}_i\|_2 = O((\nicefrac{N\log n_i^{(1)}}{n_i^{(1)}})^{1/2})$.
\end{proposition}

\begin{theorem}[Informal version of \cref{thm:Dumbgen}]
Let Assumptions \ref{ass:parameter space}, \ref{ass:valuation_i} and \ref{ass:theta_0} hold. Then, for every compact $K \subset \mathcal{U}$ we have 
$\mathbb{E}[\sup_{u \in K} |\widehat \psi_{i,\widetilde{\boldsymbol{\theta}}_i}(u)-\psi_{i}(u)|] = O((\nicefrac{\log (n_i^{(2)})}{n_i^{(2)}})^{2/5})$.
\end{theorem}

\subsection{Convergence to Nash Equilibrium and Individual Regret Bounds.}\label{sec:NE_convergence}
We are now ready to prove our main result for a fixed value $\xi \in (0,1)$, which, we recall, represents the proportion of (common) exploration length $\tau \propto T^{\xi}$.

\begin{theorem}\label{thm:main_convergence}
Suppose that Assumptions \ref{ass:parameter space}, \ref{strong-concavity-rev}, \ref{ass:valuation_i}, \ref{ass:exist_uniq_NE} and \ref{ass:theta_0} hold. Then, \cref{algo:semiparametric} produces a policy such that, for each $i \in \mathcal{N}$:
\begin{enumerate}[leftmargin=*, label=(\arabic*)]
\item {\bf Individual sub-linear regret}: $\operatorname{Reg}_i(T) = O (T^{\xi} + T^{1-\nicefrac{2\xi}{5}}  N ^{\nicefrac{3}{2}}(\log T)^{\nicefrac{2}{5}})$.
\item {\bf Convergence to NE}: $\mathbb{E}\left[\|\mathbf{p}^{(T)}-\mathbf{p}^*\|_2^2\right] = O(N ^{\nicefrac{3}{2}}T ^{-\nicefrac{2\xi}{5}}(\log T)^{\nicefrac{2}{5}})$. More precisely, we have 
\begin{equation}\label{convergence_expected_price}
\mathbb{E}[\|\mathbf{p}^{(t)}-\mathbf{p}^*\|_2^2] \lesssim N ^{\nicefrac{3}{2}}T ^{-\nicefrac{2\xi}{5}}(\log T)^{\nicefrac{2}{5}}+L^{2(t-\tau-1)}_{\boldsymbol{\Gamma}}, \quad t \geq \tau+1.
\end{equation}
\end{enumerate}
Moreover, for each $i \in \mathcal{N}$, there exists a unique $\kappa_i^{\star} \in (0,1)$ that minimizes the seller's $i$ regret, and satisfies the implicit equation:
\begin{equation}\label{eq:optimal_kappa}
4 \mathscr{C}_i\tau^{1/10}\textstyle \kappa_i^{3/2} = 5 \mathscr{B}_i N^{1/2} (1-\kappa_i)^{7/5}.
\end{equation}
where $\mathscr{B}_i$ and $\mathscr{C}_i$ are defined in \eqref{def:B_i} and \eqref{def:C_i}, respectively.
\end{theorem}

\begin{remark}
Since $\mathscr{C}_i$ and $\mathscr{B}_i$ depend on unknown quantities, it is typically difficult for seller $i$ to compute $\kappa_i$ exactly. However, any positive choice of $\mathscr{C}_i$ and $\mathscr{B}_i$ guarantees $\kappa_i \in (0,1)$, while the regret and NE convergence rates remain unaffected, with only the leading constants changing.
\end{remark}

\begin{remark}[Optimal value of $\xi$ and comparison with monopolistic results]\label{cor:main_convergence}
By \cref{thm:main_convergence}, the value of $\xi$ that minimizes the expected regrets $\operatorname{Reg}_i(T)$ is the value that equalizes the exponents of $T$ in the regret upper bound: $T^{\xi} + T^{1-\nicefrac{2\xi}{5}}$ (ignoring logarithmic factors). This yields $\xi = 5/7$. Consequently, with this choice of $\xi$, we have  that for each $i \in \mathcal{N}$
\begin{align*}
&\operatorname{Reg}_i(T) = O (N^{\nicefrac{3}{2}}T^{\nicefrac{5}{7}}(\log T)^{\nicefrac{2}{5}}),\quad  \text{ and }\\
&
\mathbb{E}\|\mathbf{p}^{(T)}-\mathbf{p}^*\|_2 \leq ( \mathbb{E}\|\mathbf{p}^{(T)}-\mathbf{p}^*\|_2^2)^{1/2} = O(N ^{\nicefrac{3}{4}}T ^{-\nicefrac{1}{7}}(\log T)^{\nicefrac{1}{5}}).
\end{align*}
For \(N=1\) (i.e. in a monopolistic setting), our regret upper bound matches that of \cite{fan2024policy}, who estimate the link function via a kernel-based method. Their result is stated for binary outcomes \(y_i\), whereas our framework accommodates continuous \(y_i\); nevertheless, our estimation procedure can be straightforwardly adapted to the binary-response setting, yielding the same regret rate as in the continuous case. Moreover, relative to the kernel approach in \cite{fan2024policy}, our method offers the advantage that is fully data-driven and requires no bandwidth selection.
\end{remark}

\begin{remark}[Mispecifying $s_i$]
As established in \cref{remark:s_-dependece-regret}, if the sellers choose parameters $s_i' \le s_i$ satisfying $\sup_{i\in\mathcal{N}} \nicefrac{|1 - 1/(s_i'+1)|\,\|\gamma_i\|_1}{\beta_i} < 1$, then the convergence rates of the regret and the NE are preserved. This phenomenon is also confirmed empirically in \cref{se:simulations}.

\begin{remark}[Algorithmic collusion]
Our results indicate that when all sellers employ the same class of learning algorithm that we propose, the dynamics do not give rise to collusive behavior: the learning process converges to the pure Nash equilibrium, and prices do not drift upward in a coordinated way. This is beneficial from a consumer perspective, as it prevents the emergence of a price increase. However, if some sellers deviate and adopt different algorithms or learning rules, then the interaction dynamics may create conditions under which algorithmic collusion can emerge.
\end{remark}

\end{remark}
\section{Numerical Experiments}\label{se:simulations}
We evaluate the performance of \cref{algo:semiparametric} in markets with $N \in \{2,4,6\}$ sellers. A detailed description of the simulation setup is provided in \cref{app:additional_experimets}; here we present the main setup together with a discussion of the results.  For each seller $i \in \mathcal{N}=\{1,2,\dots,N\}$, the price vector $\mathbf{p}^{(t)}$ is supported on $\mathcal{P}=[0,3]^{\times N}$. For every $N$, we examine the convergence behavior as the contraction constant $L_{\boldsymbol{\Gamma}}$ varies -- specifically, when $L_{\boldsymbol{\Gamma}} \approx 0$, $L_{\boldsymbol{\Gamma}} \approx 0.5$, and $L_{\boldsymbol{\Gamma}} \approx 1$. The link functions are $0$-concave and their price sensitivities are generated such that $\boldsymbol{\theta}_i = (-\beta_i,\boldsymbol{\gamma}_i)$ has unitary $L^2$-norm and satisfies \cref{ass:exist_uniq_NE}. A fixed point exists and is unique by \cref{lemma:contraction} and is recovered using a simple root-finding algorithm. The demand noise of each firm follows a uniform distribution on $[-0.05,0.05]$. For every $T \in [100, 400, 800, 1600, 3200]$, we apply \cref{algo:semiparametric}. We independently repeat the simulation $30$ times to obtain average performances and $95 \%$ of confidence intervals. In \cref{fig:experiment_L_variation} and \cref{fig:experiment_L_variation_log_log} we summarize the results. The panels show: (i) the convergence of the estimators \(\boldsymbol{\theta}_i\) and \(\psi_i\); (ii) convergence of prices to the Nash equilibrium \(\mathbf{p}^\star\) and the expected cumulative regret plotted on log–log axes (with fitted slopes) to reveal the empirical rate. Across runs, the observed rates are consistently faster -- i.e., exhibit smaller slopes than the theoretical upper bound. Regarding the minimization problem in \eqref{eq:estim_psi} under the class of monotone increasing and $s$-concave functions, we note that designing efficient algorithms specifically tailored to $s$-concavity remains an open and largely unexplored area. Nonetheless, in this work, we leverage a fast optimization solver to approximate the solution. Additionally, in \cref{sec:diff_explor_simulation} we provide simulations under different exploration phases across the sellers and in \cref{sec:diff_s_i_simulation} simulations showing how mispecifying $s_i$ affects the regret and NE convergence rates. 

\section{Future research directions.}
There are several promising directions for future research. One is to design an algorithm allowing each seller to independently set their exploration length, which is not permitted in our setting as outlined in \cref{remark:common_exploration}.  For this purpose, a key theoretical goal is to obtain \emph{uniform} convergence for the joint estimator $(\boldsymbol{\theta}_i,\psi_i)$. While this would not change regrets or NE rates (it will only improve constants), it could yield practical gains: currently, $\boldsymbol{\theta}_i$ uses $\kappa_i\tau$ samples and $\psi_i$ the remaining $(1-\kappa_i)\tau$, whereas a joint estimator would leverage the full $\tau$. Existing results are limited: \cite{balabdaoui2019least} shows only $L^2$ convergence under monotonicity (we additionally require $s_i$-concavity), and \cite{kuchibhotla2021semiparametric} provides uniform convergence under convexity/concavity ($s_i=1$) with only $O_P$ rates. Our setting instead calls for non-asymptotic \emph{supremum-norm} concentration for the joint estimator, i.e., tail bounds as in \eqref{eq:sup_conv}. 

Another challenge is estimating the $s_i$’s themselves. While we assume them to be known, in practice, this may not be the case, and developing a theory for this harder setting remains to be explored. 

Finally, an ambitious direction is to design algorithms that bypass the classic exploration-exploitation trade-off. Here, each seller $i$ would use the entire history of public prices and private demands $\{(\mathbf{p}^{(s)},y_i^{(s)}\}_{s\leq t}$, ensuring $\mathbf{p}^{(T)} \to$ NE as $T \to \infty$. The main challenge is avoiding \emph{incomplete learning} \citep{keskin2018incomplete}, which can yield poor parameter estimates and higher regrets.  


\section*{Reproducibility Statement}
In the Supplementary Materials, we provide a \texttt{.zip} archive with all the code required to reproduce the results and figures in this paper. The archive contains three top–level folders whose names match the corresponding figure labels in the PDF. Inside each folder, we include Python notebooks with step-by-step instructions that describe how to run the experiments and regenerate the figure. Running the notebooks as instructed will recreate the plots and save them to the corresponding folders’ directory named ``PLOTS".

\section{Details of LLM usage}
We used generative AI tools when preparing the manuscript to polish our sentences and correct potential typos; we remain responsible for all opinions, findings, and conclusions or recommendations expressed in the paper.

\bibliography{iclr2026_conference}
\bibliographystyle{iclr2026_conference}

\newpage
\section*{\centering \Large Appendix of \\
Revenue Maximization Under Sequential Price Competition Via The Estimation Of $s$-Concave Demand Functions}
\appendix

\startcontents[appendix]

\section*{Appendix Contents}
\printcontents[appendix]{}{1}{}

\newpage
\section{Summary of individual information}\label{app:known_values}

In this section, we summarize the information each seller must know to implement \cref{algo:semiparametric}.

\begin{table}[h]
\caption{Values known by seller $i \in \mathcal{N}$}
\begin{center}
\begin{tabular}{ll}
\multicolumn{1}{c}{\bf Known (or observed if specified)}  &\multicolumn{1}{c}{\bf Description}
\\ \hline \\
$T$         & Time horizon \\
$y_i$  (observed)       & Demand realization of seller $i$ \\
$\mathbf{p}_{-i}$ (observed) & Competitors' prices \\
$\mathcal{P}_i = [\underline{p}_i,\bar{p}_i]$        & Price domain of seller $i$ \\
$\mathcal{U}=[-p_{\text{max}},p_{\text{max}}]$                   & Domain of $\psi_i:\mathcal{U} \to [\underline{B}_{\psi_i},\bar{B}_{\psi_i}]$ \\
$[\underline{B}_{\psi_i},\bar{B}_{\psi_i}]$                    & Range of $\psi_i:\mathcal{U} \to [\underline{B}_{\psi_i},\bar{B}_{\psi_i}]$ \\
$c_i$ (or equivalenty $s_i = c_i-1$)                   & $c_i$: lower bound of $\varphi_i'$. $s_i$: concavity parameter of $\psi_i$. \\
$\mathscr{D}$                  & Joint price distribution during exploration\\
$\tau$                  & Length of the common exploration phase
\end{tabular}
\end{center}
\end{table}

\begin{table}[h]
\caption{Values unknown by seller $i \in \mathcal{N}$}
\begin{center}
\begin{tabular}{ll}
\multicolumn{1}{c}{\bf Unknown (or unobserved if specified)}  &\multicolumn{1}{c}{\bf Description}
\\ \hline \\
$(\boldsymbol{\theta}_i, \psi_i)$ & Semi-parametric population parameter\\
$\underline{\beta}_i$ & Lower bound of the price sensitivity $\beta_i$\\
$\boldsymbol{y}_{-i}$  (unobserved)       & Demand realization of the $i$-th competitors \\
$\sigma_i$        & Variance of the $i$-th demand noise $\varepsilon_i$ \\
$\mathcal{P}_{-i} = [\underline{p}_i,\bar{p}_i]$        & Price domains of the $i$-th competitors \\
$\underline{B}_{\psi_i'},\bar{B}_{\psi_i'}$ &  Lower and upper bound of $\psi_i'$\\
$B_{\psi_i''}$ & Upper bound of $|\psi_i''|$\\
$\{[\underline{B}_{\psi_j},\bar{B}_{\psi_j}]\}_{j\neq i}$                    & Ranges of $i$-th competitors demands $\{\psi_j\}_{j \neq i}$ \\
$\{c_j\}_{j \neq i}$                   & Lower bounds of competitors $\{\varphi_i'\}_{j \neq i}$ \\
\end{tabular}
\end{center}
\end{table}

\newpage

\section{Proof of \cref{prop:s-concavity-iff}}\label{app:s-concavity-iff}
Recall the definition 
$$
\varphi_i (u) =u + \frac{\psi_i(u)}{\psi_i^{\prime}(u)}, \quad u \in \mathcal{U}.
$$
We have $\varphi_i' = 1+\frac{\psi_i'\cdot\psi_i'-\psi_i \cdot \psi_i''}{(\psi_i')^2} = 2-\frac{\psi_i \cdot \psi_i''}{(\psi_i')^2} \geq c_{i}$ iff $\psi_i \cdot \psi_i''+(s_i-1)(\psi_i')^2 \leq 0$ with $s_i = c_{i}-1$. The statement follows by \cref{prop:alpha-concave}.

\begin{lemma}\label{prop:alpha-concave}
Let $\psi$ be a positive function defined in an interval $(a,b)$ that is twice continuously differentiable. Then $\psi$ is $s$-concave iff $\psi \cdot \psi'' + (s - 1) (\psi')^2 \leq 0$ in $(a,b)$.
\end{lemma}

\begin{proof}[Proof of \cref{prop:alpha-concave}]
As for $s=0$ is a known result, we prove it for $s \neq 0$. A function $\psi$ is $s$-concave if and only if $d_s \circ \psi$ is concave, where $d_s$ is defined in \eqref{eq:def-d_s-and-inverse}. Then $\psi$ is $s$-concave if and only if
$$
\frac{d^2}{d u^2} d_{s}(\psi(u)) = \frac{d}{d u} \left[|s|\psi^{s-1}(u)\psi'(u) \right]= |s|\frac{\psi(u)\psi''(u)+(s -1)(\psi'(u))^2}{\psi^{2-s}(u)} \leq 0, \quad u \in (a,b).
$$
\end{proof}

\section{Proof of \cref{lemma:contraction}}\label{app:UNE}
We first find an analytic form of $\Gamma_i(\mathbf{p}^{*}_{-i})$. We have 
$$
\lambda_i(\mathbf{p}) = \psi_i(\boldsymbol{\theta}_i(\mathbf{p})),\quad \boldsymbol{\theta}_i(\mathbf{p})=   -\beta_i p_i+\boldsymbol{\gamma}_i^{\top} \mathbf{p}_{-i}.
$$
First note that, by strong concavity of $p_i \to \operatorname{rev}_i(p_i \mid \mathbf{p}_{-i})$ (see \cref{strong-concavity-rev}), the best response map is the projection onto $\mathcal{P}_i$ or the value $p_i^*$ that solves the first order condition. More specifically 
$$
\Gamma_i(\mathbf{p}^{*}_{-i})=\Pi_{\mathcal{P}_i}p_i^*,
$$
where $\Pi_{\mathcal{P}_i}$ is the projection into $\mathcal{P}_i$ and $p_i^*$ solves the first order condition of \eqref{eq:NASH}, that is
\begin{align}\label{rev_deriv}
\frac{\partial}{\partial p_i} \mathrm{rev}_i(p_i | \mathbf{p}_{-i})=0.
\end{align}

where we that $\mathrm{rev}_i(p_i | \mathbf{p}_{-i}) = p_i \psi_i(\boldsymbol{\theta}_i(\mathbf{p}))$. Solving \cref{rev_deriv} we have 
$$
\begin{aligned}
\psi_i(\boldsymbol{\theta}_i(\mathbf{p}^*))-\beta_ip_i^*\psi^\prime_i(\boldsymbol{\theta}_i(\mathbf{p}^*))=0 &\Leftrightarrow \beta_ip_i^* - \frac{\psi_i(\boldsymbol{\theta}_i(\mathbf{p}^*))}{\psi^\prime_i(\boldsymbol{\theta}_i(\mathbf{p}^*))}=0 \\
&\Leftrightarrow \varphi_i(\boldsymbol{\theta}_i(\mathbf{p}^*))=\boldsymbol{\gamma}_i^{\top} \mathbf{p}^*_{-i}\\
& \overset{\cref{ass:valuation_i}}{\Leftrightarrow} \boldsymbol{\theta}_i(\mathbf{p}^*)=\varphi_i^{-1}(\boldsymbol{\gamma}_i^{\top} \mathbf{p}^*_{-i})\\
&\Leftrightarrow p^*_i = \frac{\boldsymbol{\gamma}_i^{\top} \mathbf{p}^*_{-i}-\varphi_i^{-1}(\boldsymbol{\gamma}_i^{\top} \mathbf{p}^*_{-i})}{\beta_i}\\
&\Leftrightarrow p^*_i = \frac{g_i(\boldsymbol{\gamma}_i^{\top} \mathbf{p}^*_{-i})}{\beta_i}.
\end{aligned}
$$
Thus
\begin{align}\label{eq:close_form_opt_price}
\Gamma_i(\mathbf{p}^*_{-i}) =  \scalebox{1.5}{$\Pi$}_{\mathcal{P}_i}\frac{g_i(\boldsymbol{\gamma}_i^{\top} \mathbf{p}^*_{-i})}{\beta_i}.
\end{align}
We now compute an upper bound of the Lipschitz constant of the Best Response Operator $\boldsymbol{\Gamma}\triangleq \left(\Gamma_i\right)_{i \in \mathcal{N}}$ on the compact space $\mathcal{P}$. This will allow us to ensure contractiveness when this constant is strictly less than 1. For any $\mathbf{p},\mathbf{p}' \in \mathcal{P}$, we have 
$$
\begin{aligned}
\|\boldsymbol{\Gamma}\left(\mathbf{p}\right)-\boldsymbol{\Gamma}(\mathbf{p}')\|_{\infty} &= \sup_{i \in \mathcal{N}}|\Gamma_i(\mathbf{p}_{-i})-\Gamma_i(\mathbf{p}'_{-i})| \\
&\leq \sup_{i \in \mathcal{N}}\left|\frac{g_i(\boldsymbol{\gamma}_i^{\top} \mathbf{p}_{-i})-g_i(\boldsymbol{\gamma}_i^{\top} \mathbf{p}_{-i}')}{\beta_i}\right| \\
&\leq \sup_{i \in \mathcal{N}}\frac{1}{\beta_i}\|g_i'\|_{\infty}\left| \boldsymbol{\gamma}_i^{\top} \mathbf{p}_{-i}-\boldsymbol{\gamma}_i^{\top} \mathbf{p}_{-i}'\right|\\
&\leq \sup_{i \in \mathcal{N}}\frac{1}{\beta_i}\|g_i'\|_{\infty}\|\boldsymbol{\gamma}_i\|_1\|\mathbf{p}_{-i}-\mathbf{p}_{-i}'\|_{\infty} \\
&\leq\|\mathbf{p}-\mathbf{p}'\|_{\infty}\sup_{i \in \mathcal{N}}\|g_i'\|_{\infty}\frac{\|\boldsymbol{\gamma}_i\|_1}{\beta_i}\\
&= L_{\boldsymbol{\Gamma}}\|\mathbf{p}-\mathbf{p}'\|_{\infty},
\end{aligned}
$$
where
$$
\|g_i'\|_{\infty} = \sup_{v\in V}|g_i(v)|, \quad V \triangleq \varphi_i(\mathcal{U}).
$$
In the second inequality, we used the contraction property of the projection operator, and where
$$
  L_{\boldsymbol{\Gamma}} \triangleq \sup_{i \in \mathcal{N}}\|g_i'\|_{\infty}\frac{\|\boldsymbol{\gamma}_i\|_1}{\beta_i},
$$
which is strictly less than $1$ by \cref{ass:exist_uniq_NE}.

\section{Formal estimation of the model parameters in \cref{sec:estimation_model_param}}\label{app:formal_estimation}
In this section, we formally explain the estimation procedure of the model parameters $(\boldsymbol{\theta}_i, \psi_i)$, for $i \in \mathcal{N}$.

\subsection{Estimation of the Parametric Component.}\label{sec:estim_theta} 

In a single-index model, the linear estimator in \eqref{eq:estim_theta} converges to the true $\boldsymbol{\theta}_i$ at a $\sqrt{n}$ rate provided that the covariates (in this case, the joint prices $\mathbf{p}^{(t)}$) follow an elliptical distribution \citep{balabdaoui2019least, brillinger2012generalized}. We adopt the same elliptical assumption on the (joint) distribution of the prices during exploration, to ensure the consistency of the estimator in \eqref{eq:estim_theta}. In contrast to earlier work, we establish a concentration inequality for this estimator (\cref{prop:concentration_ineq_theta}), which serves as a key tool in deriving our regret upper bound.

More specifically, in \cref{ass:theta_0} we assume that $\mathscr{D}=\mathscr{E}(\mathbf{m},\Lambda, g)$, where $c_{\min} \mathbb{I}\preccurlyeq \Lambda \preccurlyeq c_{\max} \mathbb{I}$ for some $0<c_{\min}<c_{\max}<\infty$ and the density generator $g$ is such that $g(x+y)=g(x)g(y)$ for all $x,y \geq 0$, i.e.

\[
f_\mathbf{p}(\mathbf{p}) \propto \,g((\mathbf{p}-\mathbf{m})^T\Lambda^{-1}(\mathbf{p}-\mathbf{m})).
\]

\begin{remark}\label{remark:domain_P_psi_estimation}
It should be noted that from a purely theoretical standpoint, we do not restrict the domain of the price vector $\mathbf{p}$ in the exploration phase to $\mathcal{P}$, but \emph{this has essentially no implication for the application of the pricing algorithm in practice}. To ensure that the $i$'th seller's exploration prices can, realistically, never lie outside $\mathcal{P}_i = [\underline{p_i}, \bar{p}_i]$, all they need is to take  $\mathscr{D}_i$ be a normal distribution (for example) centered at the midpoint of this interval with variance $\sigma_i^2$ taken to be an adequately small fraction of the length of $\mathcal P_i$. If the sellers function independently, as is generally the case, the joint distribution is certainly elliptically symmetric. More generally, view the parameters $\Lambda$, $\mathbf{m}$, and $g$ as concentrating the mass of $\mathscr{D}$ in $\mathcal{P}$ with overwhelmingly high probability.  Theoretically, some proposed price in many thousands of time steps could lie outside $\mathcal{P}$, but realistically, the procedure entails no financial consequence for them. Letting $\mathscr{D}$ belong to an infinitely supported elliptical distribution allows the implementation of a simple consistent linear regression-based estimation to learn $\boldsymbol{\theta}_i$ which satisfies the concentration inequality proved in Proposition \ref{prop:concentration_ineq_theta} below, and is also important for certain conditioning arguments involved in estimating appropriate surrogates for $\psi_i$ in the following subsection, while compromising nothing in terms of the core conceptual underpinnings of the problem. 
\end{remark}

\begin{remark}\label{remark:assumption_g}
\cref{ass:theta_0} holds for a wide range of distributions, including Gaussian distributions and, more generally, all elliptically symmetric distributions for which $g(x) = a^{-\gamma x}$, for $a > 0$ and $\gamma > 0$. Before describing \cref{ass:theta_0}, we emphasize that selecting an appropriate distribution for the design points (here, the exploration prices) is standard in exploration–exploitation algorithms. This choice is crucial because it ensures that the learner can perform exploration design and obtain a consistent estimator. For example, both \cite{fan2024policy} and \cite{luo2024distribution} employ uniformly randomized prices precisely to guarantee consistency of their estimators. We now describe the components of \cref{ass:theta_0}:
\begin{enumerate}
\item  The requirement that exploration prices follow an elliptically symmetric distribution is used to guarantee the consistency of the estimator of $\boldsymbol{\theta}_i$ in the single-index model. In particular, this assumption is invoked in \cref{lemma:STEP III} to derive the normal equations (the same assumption can be found in \cite{balabdaoui2019least} and \cite{brillinger2012generalized}). If the exploration distribution were not elliptically symmetric, the resulting estimator could fail to be consistent.

\item As we will discuss in \cref{sec:estimation_on_psi}, the property that $g(x+y)=g(x)g(y)$ (satisfied, for example, by Gaussian distributions) supports the estimation of $\psi_i$. It ensures that $\psi_{i,\boldsymbol{\theta}}(u)$ (defined in \eqref{def:psi_theta_i}) depends on $u$ solely through the argument of $\psi_i$ rather than that of $g$ (see \cref{eq:g_alpha}). This decoupling guarantees that $\psi_{i,\boldsymbol{\theta}}$ inherits key properties from $\psi_i$, such as smoothness and adherence to the prescribed shape constraints, as highlighted in \cref{proposition:S_theta_properties}. In turn, these properties are crucial for establishing consistency of the estimator of $\psi_i$.
\end{enumerate}
\end{remark}

We denote 
$$
\Sigma = \operatorname{var}(\mathbf{p}),
$$
which is proportional to $\Lambda$. Then, there exists $0<\varsigma_{\min} <\infty$ such that $\varsigma_{\min} \preccurlyeq \Sigma$. These constants will appear in \cref{prop:concentration_ineq_theta}, which proof is provided in \cref{app:concentration_ineq_theta}.

\begin{proposition}\label{prop:concentration_ineq_theta}
Under \cref{ass:parameter space} and \cref{ass:theta_0}, there exist a positive constant $c$ (that depend solely on the variance proxy $\sigma_x$ of the sub-Gaussian random variable $\mathbf{x}^{(t)} = \mathbf{p}^{(t)}- \bar{\mathbf{p}}$) such that, for $n_i^{(1)}$ sufficiently large, with probability  at least $1- 2e^{-c \varsigma_{\min }^{2} n_i^{(1)} / 16}-\frac{2}{n_i^{(1)}}$, the estimator in \eqref{eq:estim_theta} satisfies
$$
\begin{aligned}
\|\widetilde{\boldsymbol{\theta}}_i - \boldsymbol{\theta}_i \|_2 \leq  \mathscr{B}_i \sqrt{\tfrac{N\log n_i^{(1)}}{n_i^{(1)}}}.
\end{aligned}
$$
where 
\begin{equation}\label{def:B_i}
\mathscr{B}_i :=4\sqrt{2}\frac{\sigma_x (\lambda_i\sigma_x+\sigma_{y_i})}{\lambda_i \varsigma_{\min}}.
\end{equation}
where $\lambda_i :=   \frac{\operatorname{cov}\left(\psi_i(\boldsymbol{\theta}_i^{\top}\mathbf{x}),\boldsymbol{\theta}_i^{\top}\mathbf{x}\right)}{\operatorname{var}(\boldsymbol{\theta}_i^\top \mathbf{x})}>0$, for $\mathbf{x} \sim \mathbf{x}^{(t)} = \mathbf{p}^{(t)}- \bar{\mathbf{p}}$ and $\sigma_{y_i}$ is the variance proxy of the the sub-Gaussian random variables $y_i^{(t)}$.
\end{proposition} 

\subsection{Estimation of the Link Function via s-Concavity.}\label{sec:estimation_on_psi}

This section provides a uniform convergence result of $\widehat{\psi}_{i,\boldsymbol{\theta}}$ to $\psi_i$ for any fixed $\boldsymbol{\theta} \in \mathbb{S}_{N-1}$. \emph{In this section, all the results must be considered as conditioned on $\boldsymbol{\theta} = \widetilde{\boldsymbol{\theta}}_i$ estimated using data in $\mathcal{T}_i^{(1)}$ independent of data in $\mathcal{T}_i^{(2)}$}. For simplicity of notation, we re-index $\mathcal{T}_i^{(2)}$ as $[n]=\{1,2,\dots, n\}$. To estimate $\psi_i$ we would need to know $\boldsymbol{\theta}_i$ in advance, indeed we remind that $\mathbb{E}(y^{(t)}_i \mid \mathbf{p}^{(t)})= \psi_i(\langle\boldsymbol{\theta}_i,\mathbf{p}^{(t)}\rangle)$. 
However, our knowledge is limited to an approximation $\boldsymbol{\theta}$ of $\boldsymbol{\theta}_i$, and the observable design points are $w^{(t)}_i \triangleq \langle\boldsymbol{\theta},\mathbf{p}^{(t)}\rangle$. Note that 
$$
w_i^{(t)}\in \mathcal{U}
$$
because $\|\widetilde{\boldsymbol{\theta}}_i\|_2=1$ implies $|w_i^{(t)}|\leq \|\mathbf{p}^{(t)}\|_2\leq p_{\max}$. This implies that, with data $\{(w^{(t)}_i,y^{(t)}_i)\}_{t \in [n]}$, we are only able to estimate 
\begin{align}\label{def:psi_theta_i}
\psi_{i,\boldsymbol{\theta}}(u) \triangleq  \mathbb{E}(y^{(t)}_i \mid w^{(t)}_i=u), \quad u \in \mathcal{U}.
\end{align}
We now construct an estimator of $\psi_{i,\boldsymbol{\theta}}$ and establish, in \cref{thm:Dumbgen}, a uniform convergence result over any bounded subset $K \subset \mathcal{U}$ (where $\mathcal{U}$ is defined in \eqref{eq:upsilon}). Our approach draws on techniques from \cite{mosching2020monotone, dumbgen2004consistency}, which study uniform convergence in the context of shape-constrained estimation. A key assumption in these works -- one that does not require smoothness of the unknown nonparametric function -- is that the density of the design points, $f_w$, is bounded away from zero on $\mathcal{U}$. This ensures that the design points $w^{(t)}_i \in \mathcal{U}$ are asymptotically dense within every subinterval of $\mathcal{U}$, which in turn is sufficient for establishing uniform convergence of the estimator on such intervals. In our setting, $f_w$ is supported on all of $\mathbb{R}$, and thus it is bounded away from zero on any bounded set $K$. Consequently, the uniform convergence result applies to any bounded subset $K \subset \mathcal{U}$.


Before establishing the convergence result, we must first characterize the shape constraints satisfied by the family of functions $\{\psi_{i,\boldsymbol{\theta}}(\cdot) : \boldsymbol{\theta} \in \mathbb{S}_{N-1}\}$. In \cref{proposition:S_theta_properties} (with proof deferred to \cref{app:S_theta_properties}), we derive a closed-form expression for $\psi_{i,\boldsymbol{\theta}}(\cdot)$ and show that both the monotonicity and the $s_i = c_i - 1$-concavity of the original function $\psi_i$ are preserved by $\psi_{i,\boldsymbol{\theta}}$, uniformly over $\boldsymbol{\theta} \in \mathbb{S}_{N-1}$.

\begin{proposition}\label{proposition:S_theta_properties}
Under \cref{ass:parameter space} and \cref{ass:theta_0} we have the following properties:
\begin{enumerate}[leftmargin=*, label=(\arabic*)]
\item $\psi_{i,\boldsymbol{\theta}}(u)= \int_{\mathbb{R}^{N-1}} \psi_i\left(\langle\boldsymbol{\theta}_i,\mathbf{m}_u\rangle+\boldsymbol{\theta}_i^{\top}A \boldsymbol{\alpha}\right) \tilde{g}(\boldsymbol{\alpha})d\boldsymbol{\alpha}$, $u \in \mathbb{R}$, $\boldsymbol{\theta} \in \mathbb{S}_{N-1}$, where $$\tilde{g}(\boldsymbol{\alpha})=\nicefrac{g(\|\boldsymbol{\alpha}\|_2^2)}{\int_{\mathbb{R}^{N-1}} g(\|\boldsymbol{\delta}\|_2^2) d \boldsymbol{\delta}},\quad  \mathbf{m}_u = \mathbf{m}+\Lambda \boldsymbol{\theta} \frac{\left(u-\langle\boldsymbol{\theta}, \mathbf{m}\rangle\right)}{\boldsymbol{\theta}^{\top} \Lambda \boldsymbol{\theta}},$$
and $A$ is such that $AA^{\top} = \Lambda$.
\item $\psi_{i,\boldsymbol{\theta}} \in \mathrm{C}^2(\mathcal{U})$ and $0<\underline{B}_{\psi_i} \leq \psi_{i,\boldsymbol{\theta}}\leq \bar{B}_{\psi_i}$ uniformly in $\boldsymbol{\theta}\in \mathbb{S}_{N-1}$.
\item $|\psi_i(u)-\psi_{i,\boldsymbol{\theta}}(u)|\lesssim \|\boldsymbol{\theta}-\boldsymbol{\theta}_i\|_2$ for all $u \in \mathcal{U}$ and $\boldsymbol{\theta} \in \mathbb{S}_{N-1}$.
\item $\psi_{i,\boldsymbol{\theta}}$ is monotone increasing in $\mathcal{U}$ for all $\boldsymbol{\theta}$ such that $\|\boldsymbol{\theta}-\boldsymbol{\theta}_i\|_2 <\frac{\boldsymbol{\theta}_i^{\top}\Lambda \boldsymbol{\theta}_i}{c_{\max}}$. Moreover $|\psi^{\prime}_{i,\boldsymbol{\theta}}|$ is bounded uniformly in $\boldsymbol{\theta} \in \mathbb{S}_{N-1}$ and there exists $L>0$ independent of $\boldsymbol{\theta}$, such that $|\psi'_{i,\boldsymbol{\theta}}(u)-\psi'_{i,\boldsymbol{\theta}}(v)|\leq L|u-v|$ for all $u,v \in \mathcal{U}$, $\boldsymbol{\theta} \in \mathbb{S}_{N-1}$.
\item If also \cref{ass:valuation_i} holds (i.e. $\psi_i$ is $s_i$-concave for some $s_i> -1$ as it arises from \cref{prop:s-concavity-iff}) then $\psi_{i,\boldsymbol{\theta}}$ is $s_i$-concave for any $\boldsymbol{\theta} \in \mathbb{S}_{N-1}$. 
\end{enumerate}
\end{proposition}
\noindent

By definition of $s$-concavity, from point $\textit{(5)}$ in \cref{proposition:S_theta_properties} we have that for every $\boldsymbol{\theta} \in \mathbb{S}_{N-1}$ there exists a concave function $\phi_{i,\boldsymbol{\theta}}$ that $\psi_{i,\boldsymbol{\theta}} = h_{s_i}\circ \phi_{i,\boldsymbol{\theta}}$, where $h_{s_i}$ is defined in \eqref{eq:def-d_s-and-inverse}, and $s_i=c_i-1$ is known by \cref{ass:valuation_i}. We could then estimate $\phi_{i,\boldsymbol{\theta}}(u)$ using LSE under concavity restriction, that is
\begin{equation}\label{eq:likelihood_F}
  \widehat{\phi}_{i,\boldsymbol{\theta}} \in  \operatorname{argmin}_{\phi \in \mathcal{S}_i} \sum_{t \in [n]} (y_i^{(t)}-h_{s_i} \circ \phi(w_i^{(t)}))^2, \quad \mathcal{S}_i =\{\phi: \mathcal{U}\rightarrow [\underline{B}_{\phi_i},\bar{B}_{\phi_i}] \text{ concave}\},
\end{equation}
where we recall $w^{(t)}_i \triangleq \langle\boldsymbol{\theta},\mathbf{p}^{(t)}\rangle$. The interval $[\underline{B}_{\phi_i},\bar{B}_{\phi_i}]$ is such that $[\underline{B}_{\psi_i},\bar{B}_{\psi_i}]=h_{s_i}([\underline{B}_{\phi_i},\bar{B}_{\phi_i}])$. Note that $[\underline{B}_{\phi_i},\bar{B}_{\phi_i}]$ is known because $\underline{B}_{\psi_i}$ and $\bar{B}_{\psi_i}$ are known by \cref{ass:parameter space} (recll that, by point $\textit{(2)}$ in \cref{proposition:S_theta_properties}, the bounds $\underline{B}_{\psi_i},\bar{B}_{\psi_i}$ are inherited from $\psi_i$ to $\psi_{i,\boldsymbol{\theta}}$). We want to highlight that for any fixed $\boldsymbol{\theta} \in \mathbb{S}_{N-1}$, the problem (\ref{eq:likelihood_F}) is not infinite dimensional, indeed we only need to recover $n$ variables $(\phi_i(w_1^{(t)}),\phi_i(w_2^{(t)}), \dots, \phi_i(w_n^{(t)})) \in [\underline{B}_{\phi_i},\bar{B}_{\phi_i}]$. A full discussion on the optimization problem (\ref{eq:likelihood_F}) can be found in \cref{app:NPLS}.

\begin{remark}
The problem in (\ref{eq:likelihood_F}) serves purely as a theoretical intermediate step for establishing the regret upper bound in \cref{thm:main_convergence}. In practice, we do not compute $\hat{\psi}_{i,\boldsymbol{\theta}}$ for every $\boldsymbol{\theta}$. As discussed in \cref{algo:semiparametric}, we only solve (\ref{eq:likelihood_F}) once, with $\boldsymbol{\theta} = \tilde{\boldsymbol{\theta}}_i$, to obtain $\hat{\psi}_{i,\tilde{\boldsymbol{\theta}}i}$.
Moreover, by point \textit{(4)} in \cref{proposition:S_theta_properties}, the function $\psi_{i,\boldsymbol{\theta}}$ is increasing (for $\boldsymbol{\theta}$ sufficiently close to $\boldsymbol{\theta}_i$). Since $h_{s_i}$ is also increasing, it follows that $\phi_{i,\boldsymbol{\theta}}$ inherits this monotonicity. Nonetheless, we did not include this monotonicity constraint in the definition of $\mathcal{S}_i$ in (\ref{eq:likelihood_F}). This omission is justified because the theoretical convergence rate in \cref{thm:Dumbgen}, and consequently the regret upper bound in \cref{thm:main_convergence}, is driven entirely by the concavity constraint, which imposes a higher-order smoothness condition than monotonicity.
However, we do enforce monotonicity in the algorithmic implementation in \cref{algo:semiparametric}. 
\end{remark}

We are now prepared to demonstrate the convergence of $\widehat \psi_{i,\boldsymbol{\theta}}$ to $\psi_{i,\boldsymbol{\theta}}$, where
$$
\widehat \psi_{i,\boldsymbol{\theta}} \triangleq h_{s_i} \circ \widehat \phi_{i,\boldsymbol{\theta}}.
$$

\begin{theorem}\label{thm:Dumbgen}
Let Assumptions \ref{ass:parameter space}, \ref{ass:valuation_i} and \ref{ass:theta_0} hold. Then, by \cref{prop:s-concavity-iff} and \cref{proposition:S_theta_properties}, $\psi_{i,\boldsymbol{\theta}}$ is $s_i$-concave, where $s_i = c_i-1>-1$. Let $m = \pi n$, where $\pi = \int_{\mathcal{U}}f_{w}(u)du$ is the proportion of data points \(\{w_i^{(t)}\}_{t \in [n]}\) lying in the set $\mathcal{U}$ as $n\rightarrow \infty$. For every $\gamma>4$ there exists $m_0$, a constant $\mathscr{C}_i>0$ and $\widetilde{\mathscr{C}}_i>0$ depending only on constants in the assumptions such that
\begin{equation}\label{ineq:dumbgen_concentration}
\mathbb{P}\left\{\sup_{\boldsymbol{\theta} \in \mathbb{S}_{N-1}}\sup_{u \in \mathcal{U}_m} |\widehat \psi_{i,\boldsymbol{\theta}}(u)-\psi_{i,\boldsymbol{\theta}}(u)| \leq \mathscr{C}_i (\nicefrac{\log (m)}{m})^{2/5} \right\} \geq 1-\nicefrac{1}{m^{\gamma -2}}, \quad m \geq m_0,
\end{equation}
where $\mathcal{U}_m = \{u \in \mathcal{U}:\left[u \pm \delta_m\right] \subset \mathcal{U}\}$, with $\delta_m=\widetilde{\mathscr{C}}_i (\nicefrac{\log (m)}{m})^{1/5}$. More specifically, 

\begin{equation}\label{def:C_i}
\mathscr{C}_i = \max\left\{\frac{4^{3}L_i}{3}\bar{B}_{\psi_i}, \frac{4\sqrt{7}\bar{B}_{\psi_i}\gamma \sigma_i  \sqrt{\bar{B}_{\psi_i}^2+\underline{B}_{\psi_i}^2}}{\underline{B}_{\psi_i}^{2}  \left(C_{\mathscr{D}} / 8\right)^{1 / 2}}\right\},
\end{equation}

where $L_i:=\sup_{\boldsymbol{\theta}\in \mathbb{S}_{N-1}}\sup_{u,v \in \mathcal{U},u\neq v}\frac{\phi'_{i,\boldsymbol{\theta}}(u)-\phi'_{i,\boldsymbol{\theta}}(v)}{u-v}$, $C_{\mathscr{\mathscr{D}}}>0$ is such that $\inf_{\boldsymbol{\theta} \in \mathbb{S}_{N-1}} \inf_{u \in \mathcal{U}}f_{\boldsymbol{\theta}^{\top}\mathbf{p}} (u)>C_{\mathscr{\mathscr{D}}}$, and $\sigma_i$ is the sub-gaussian variance proxy of the error $\varepsilon_i$.

Moreover, for $\boldsymbol{\theta} = \widetilde{\boldsymbol{\theta}}_i$ we have 
\begin{equation}\label{eq:dumbgen_expectation}
\mathbb{E}\left[\sup_{u \in \mathcal{U}_m} |\widehat \psi_{i,\widetilde{\boldsymbol{\theta}}_i}(u)-\psi_{i}(u)|\right] = O((\nicefrac{\log (m)}{m})^{2/5}).
\end{equation}
\end{theorem}

\begin{proof}[Proof of \cref{thm:Dumbgen}]

We first prove the concentration inequality in \eqref{ineq:dumbgen_concentration}.

Let \( m = \pi n \) be the asymptotic fraction of points lying in \( \mathcal{U} \), where \( n \) is the sample size and \( \pi = \int_{\mathcal{U}} f_w(w) \,dw \). Define the empirical fraction of points in \( \mathcal{U} \) as the random quantity 
\[
\hat{\pi}_n = \frac{1}{n} \sum_{t=1}^{n} \mathbf{1} \{ w^{(t)}_i \in \mathcal{U} \}.
\]
By \cref{lemma:dumbgen_asymptotics_points}, the event 
\[
\mathscr{Q} = \left\{ \frac{\pi}{\hat{\pi}_n} > 1 - \epsilon_n \right\},
\]
holds with probability at least \( 1 - 1/2(n+2)^{\gamma} \) for some \( \gamma > 0 \), provided that \( n \) is sufficiently large and \( \epsilon_n \to 0 \) as \( n \to \infty \). Now, we apply \cref{thm:unif_convergence_d-concave}, using \cref{remark:s_concavity_uniform_theta}, with the following substitutions: 

\[
\psi_0 \leftarrow \psi_{i,\boldsymbol{\theta}}, \quad \widehat{\psi} \leftarrow \widehat{\psi}_{i,\boldsymbol{\theta}}, \quad \alpha \leftarrow 2, \quad s \leftarrow s_i, \quad h \leftarrow h_{s_i} = d_{s_i}^{-1},
\]

where $h_{s_i}$ is defined in \eqref{eq:def-d_s-and-inverse}. Intersecting $\mathscr{Q}$ with the high-probability event in \cref{thm:unif_convergence_d-concave} yields the desired result. The only remaining step is to verify that the assumptions of \cref{thm:Dumbgen} imply those of \cref{thm:unif_convergence_d-concave}.

\begin{enumerate}[leftmargin=*, label=(\arabic*)]
\item $\psi_{i,\boldsymbol{\theta}}$ is $s_i$-concave for some $s_i>-1$. According to \cref{proposition:S_theta_properties}, this condition is satisfied provided $\psi_i$ is $s_i$-concave for some $s_i > -1$, which holds by \cref{prop:s-concavity-iff} and \cref{ass:valuation_i}, with $s_i = c_i-1$.
\item \cref{Assumption_1:d-concave} holds uniformly in $\boldsymbol{\theta} \in \mathbb{S}_{N-1}$ because the design points are drawn from an i.i.d. elliptically symmetric distribution with density generator $g>0$ and then $f_{\mathbf{p}}>0$ on any bounded interval. Thus,  $\inf_{\boldsymbol{\theta} \in \mathbb{S}_{N-1}} \inf_{u \in \mathcal{U}}f_{\boldsymbol{\theta}^{\top}\mathbf{p}} (u)\geq C_{\mathscr{D}}$ for some $C_{\mathscr{D}}>0$.
\item \cref{Assumption_3:subgaussian-errors} holds because $y_i^{(t)}$ are sub-gaussian (indeed $\psi_i$ is bounded and $\varepsilon_i^{(t)}$ are sub-gaussian) and $\psi_{i,\boldsymbol{\theta}}$ is bounded uniformly in $\boldsymbol{\theta} \in \mathbb{S}_{N-1}$ by \cref{proposition:S_theta_properties}.
\item We prove that \cref{Assumption_2:d-concave} holds uniformly in $\boldsymbol{\theta}$. Recall that $\phi_{i,\boldsymbol{\theta}}=d_{s_i}\circ \psi_{i,\boldsymbol{\theta}}$ (where  $d_{s_i} \triangleq h^{-1}_{s_i}$ is defined in \eqref{eq:def-d_s-and-inverse}). Then $\psi_{i,\boldsymbol{\theta}}'(u) = d_{s_i}'(\psi_{i,\boldsymbol{\theta}}(u))\psi'_{i,\boldsymbol{\theta}}(u)$. For every $u,v \in \mathcal{U}$ and $\boldsymbol{\theta} \in \mathbb{S}_{N-1}$ we have 
\begin{align*}
|\psi_{i,\boldsymbol{\theta}}'(u)-\psi_{i,\boldsymbol{\theta}}'(v)| &= |d_{s_i}'(\psi_{i,\boldsymbol{\theta}}(u))\psi'_{i,\boldsymbol{\theta}}(u)-d_{s_i}'(\psi_{i,\boldsymbol{\theta}}(v))\psi'_{i,\boldsymbol{\theta}}(v)|\\
&\leq |d_{s_i}'(\psi_{i,\boldsymbol{\theta}}(u))-d_{s_i}'(\psi_{i,\boldsymbol{\theta}}(v))||\psi'_{i,\boldsymbol{\theta}}(u)|+|\psi'_{i,\boldsymbol{\theta}}(u)-\psi'_{i,\boldsymbol{\theta}}(v)||d_{s_i}'(\psi_{i,\boldsymbol{\theta}}(v))|.
\end{align*}
The first component $|d_{s_i}'(\psi_{i,\boldsymbol{\theta}}(u))-d_{s_i}'(\psi_{i,\boldsymbol{\theta}}(v))|\leq L_{d'}|u-v|$ by \cref{app:tecnical_lemma_1}, where $L_{d'}=\sup_x d''_{s_i}(x)$ depends only on $s_i$ and bound of $\psi_i$. The component $|\psi'_{i,\boldsymbol{\theta}}(u)|$ is bounded uniformly by \cref{proposition:S_theta_properties} point \textit{(4)}. The other component $|\psi'_{i,\boldsymbol{\theta}}(u)-\psi'_{i,\boldsymbol{\theta}}(v)|\leq L |u-v|$ for some $L>0$ independent of $\boldsymbol{\theta}$ by \cref{proposition:S_theta_properties} point \textit{(4)} and $|d_{s_i}'(\psi_{i,\boldsymbol{\theta}}(v))|$ is uniformly bounded by \cref{app:tecnical_lemma_1}. This proves that $\psi_{i,\boldsymbol{\theta}}'$ is Lipschitz uniformly in $\boldsymbol{\theta} \in \mathbb{S}_{N-1}$.
\end{enumerate}

To prove \eqref{eq:dumbgen_expectation} we apply the triangular inequality 
$$
|\widehat \psi_{i,\widetilde{\boldsymbol{\theta}}_i}(u)-\psi_{i}(u)| = |\widehat \psi_{i,\widetilde{\boldsymbol{\theta}}_i}(u)-\psi_{i,\widetilde{\boldsymbol{\theta}}_i}(u)|+ | \psi_{i,\widetilde{\boldsymbol{\theta}}_i}(u)-\psi_{i}(u)|
$$
and for the first component we apply the concentration inequality in \eqref{ineq:dumbgen_concentration} while for the second we use point \textit{(3)} of \cref{proposition:S_theta_properties}
$$
|\psi_{i,\widetilde{\boldsymbol{\theta}}_i}(u)-\psi_{i}(u)| \lesssim \| \widetilde{\boldsymbol{\theta}}_i-\boldsymbol{\theta}_i\|_2
$$
and then the concentration inequality in \cref{prop:concentration_ineq_theta}, while retaining the dominating term $O((\log(m)/m)^{2/5})$.
\end{proof}

\newpage

\section{Proof of \cref{prop:concentration_ineq_theta}}\label{app:concentration_ineq_theta}
For simplicity, we redefine $\mathcal{T}_i^{(1)}$ as $\{1,2,\dots, n\}$, $\mathbf{x}^{(t)} = \mathbf{p}^{(t)}- \bar{\mathbf{p}}$. The loss function $\mathscr{L}_i(\boldsymbol{\theta})$ is defined as
$$
\mathscr{L}_i(\boldsymbol{\theta})=\frac{1}{n} \sum_{t=1}^n\left(y_i^{(t)}-\boldsymbol{\theta}^{\top} \mathbf{x}^{(t)}\right)^{2}.
$$
Then the gradient and Hessian of $\mathscr{L}_i(\boldsymbol{\theta})$ is given by
$$
\begin{aligned}
\nabla_{\boldsymbol{\theta}} \mathscr{L}_i(\boldsymbol{\theta}) &=\frac{1}{n} \sum_{t=1}^n 2\left(\boldsymbol{\theta}^{\top} \mathbf{x}^{(t)}- y^{(t)}_i\right) \mathbf{x}^{(t)}, \\
\nabla_{\boldsymbol{\theta}}^{2} \mathscr{L}_i(\boldsymbol{\theta}) &=\nabla_{\boldsymbol{\theta}}^{2} \mathscr{L}_i=\frac{1}{n} \sum_{t=1}^n 2 \mathbf{x}^{(t)} \mathbf{x}^{(t)\top}.
\end{aligned}
$$
Let $\widehat{\boldsymbol{\theta}}_i$ be the global minimizer of $\mathscr{L}_i(\boldsymbol{\theta})$. We do a Taylor expansion of $\mathscr{L}_i(\widehat{\boldsymbol{\theta}}_{i})$ around $\lambda_i\boldsymbol{\theta}_i$, with $\lambda_i$ to be determined:
$$
\mathscr{L}_i(\widehat{\boldsymbol{\theta}}_{i})-\mathscr{L}_i\left(\lambda_i\boldsymbol{\theta}_i\right)=\left\langle\nabla \mathscr{L}_i\left(\lambda_i\boldsymbol{\theta}_i\right), \widehat{\boldsymbol{\theta}}_{i}-\lambda_i\boldsymbol{\theta}_i\right\rangle+\frac{1}{2}\left\langle\widehat{\boldsymbol{\theta}}_{i}-\lambda_i\boldsymbol{\theta}_i, \nabla_{\boldsymbol{\theta}}^{2} \mathscr{L}_i \cdot (\widehat{\boldsymbol{\theta}}_{i}-\lambda_i\boldsymbol{\theta}_i)\right\rangle.
$$
As $\widehat{\boldsymbol{\theta}}_{i}$ is the global minimizer of loss $\mathscr{L}_i(\boldsymbol{\theta})$, we have $\mathscr{L}_i(\widehat{\boldsymbol{\theta}}_{i})\leq \mathscr{L}_i\left(\lambda_i\boldsymbol{\theta}_i\right)$, that is
$$
\left\langle\nabla \mathscr{L}_i\left(\lambda_i\boldsymbol{\theta}_i\right), \widehat{\boldsymbol{\theta}}_{i}-\lambda_i\boldsymbol{\theta}_i\right\rangle+\frac{1}{2}\left\langle\widehat{\boldsymbol{\theta}}_{i}-\lambda_i\boldsymbol{\theta}_i, \nabla_{\boldsymbol{\theta}}^{2} \mathscr{L}_i \cdot (\widehat{\boldsymbol{\theta}}_{i}-\lambda_i\boldsymbol{\theta}_i)\right\rangle \leq 0,
$$
which implies
$$
\begin{aligned}
\left\langle\widehat{\boldsymbol{\theta}}_{i}-\lambda_i\boldsymbol{\theta}_i, \frac{1}{n} \sum_{t=1}^n \mathbf{x}^{(t)} \mathbf{x}^{(t) \top}(\widehat{\boldsymbol{\theta}}_{i}-\lambda_i\boldsymbol{\theta}_i)\right\rangle & \leq\left\langle\nabla_{\boldsymbol{\theta}} \mathscr{L}_i\left(\lambda_i\boldsymbol{\theta}_i\right), \lambda_i\boldsymbol{\theta}_i-\widehat{\boldsymbol{\theta}}_{i}\right\rangle\\
&\leq \left\|\nabla_{\boldsymbol{\theta}} \mathscr{L}_i\left(\lambda_i\boldsymbol{\theta}_i\right)\right\|_{2}  \|\lambda_i\boldsymbol{\theta}_i-\widehat{\boldsymbol{\theta}}_{i} \|_{2}\\
&\leq \sqrt{N}\left\|\nabla_{\boldsymbol{\theta}} \mathscr{L}_i\left(\lambda_i\boldsymbol{\theta}_i\right)\right\|_{\infty} \|\lambda_i\boldsymbol{\theta}_i-\widehat{\boldsymbol{\theta}}_{i}\|_{2}.
\end{aligned}
$$
\noindent
By \cref{lemma:STEP I}, the LHS satisfies
$$
\begin{aligned}
\left\langle\widehat{\boldsymbol{\theta}}_{i}-\lambda_i\boldsymbol{\theta}_i, \frac{1}{n} \sum_{t=1}^n \mathbf{x}^{(t)} \mathbf{x}^{(t) \top}(\widehat{\boldsymbol{\theta}}_{i}-\lambda_i\boldsymbol{\theta}_i)\right\rangle  \geq \frac{\varsigma_{\min}}{2} \|\lambda_i\boldsymbol{\theta}_i-\widehat{\boldsymbol{\theta}}_{i}\|_{2}^2,
\end{aligned}
$$
w.p. at least $1-2 e^{-c \varsigma_{\min}^{2} n / 16}$ (for $n$ sufficiently large) for certain constants $c>0$ that depends only on the variance proxy $\sigma_x$ of the sub-Gaussian random variable $\mathbf{x}^{(t)}$, and by \cref{lemma:STEP III} the RHS satisfies 
$$
\begin{aligned}
\sqrt{N}\left\|\nabla_{\boldsymbol{\theta}} \mathscr{L}_i\left(\lambda_i\boldsymbol{\theta}_i\right)\right\|_{\infty} \|\lambda_i\boldsymbol{\theta}_i-\widehat{\boldsymbol{\theta}}_{i}\|_{2} \leq  \mathscr{B}_i^{**} \sqrt{\frac{N\log n}{n}}\|\lambda_i\boldsymbol{\theta}_i-\widehat{\boldsymbol{\theta}}_{i}\|_{2},
\end{aligned}
$$
with probability at least $1-\frac{2N}{n}$ (as long as $n \geq N$), where $\mathscr{B}_i^{**}$ is a constant defined in \eqref{B**} that depends solely on $\lambda_i$ and the variance proxies of the sub-Gaussian random variables $y_i^{(t)}$ and $\mathbf{x}^{(t)}$. Putting the two inequalities together we get with probability at least $1- 2e^{-c \varsigma_{\min}^{2} n / 16}-\tfrac{2N}{n}$ that, for $n$ sufficiently large
$$
\begin{aligned}
\|\lambda_i\boldsymbol{\theta}_i-\widehat{\boldsymbol{\theta}}_{i}\|_{2} \leq \mathscr{B}_i^* \sqrt{\frac{N\log n}{n}},
\end{aligned}
$$
\noindent
where $\mathscr{B}_i^* = 2\frac{\mathscr{B}_i}{\varsigma_{\min}}$. Now we conclude. Recall that $\lambda_i>0$ where $\lambda_i$ is defined in \eqref{eq:lambda_positive}. Let 
$$
\mathcal{W}_n= \left\{\boldsymbol{\theta}:\|\lambda_i\boldsymbol{\theta}_i-\boldsymbol{\theta}\|_{2}\leq \mathscr{B}_i^* \sqrt{\frac{N\log n}{n}}\right\}.
$$
Let $n$ be large enough so that $\mathscr{B}_i^* \sqrt{\frac{N\log n}{n}} \leq \lambda_i$: this guarantees that any $\boldsymbol{\theta} \in \mathcal{W}_n$ is such that $\|\boldsymbol{\theta}\|_2 \neq 0$, which come from the fact that $\|\lambda_i\boldsymbol{\theta}_i\|_2=\lambda_i$. Then for $\widehat{\boldsymbol{\theta}}_{i}\in \mathcal{W}_n$ we can write
$$
\begin{aligned}
\|\boldsymbol{\theta}_i - \widetilde{\boldsymbol{\theta}}_{i} \|_2=\left\|\boldsymbol{\theta}_i - \frac{\widehat{\boldsymbol{\theta}}_{i} }{\|\widehat{\boldsymbol{\theta}}_{i} \|_2} \right\|_2& = \left\|\frac{ \lambda_i \boldsymbol{\theta}_i }{ \lambda_i }- \frac{\widehat{\boldsymbol{\theta}}_{i} }{\|\widehat{\boldsymbol{\theta}}_{i} \|_2}\right\|_2 \\
& \leq \left\|\frac{\lambda_i \boldsymbol{\theta}_i - \widehat{\boldsymbol{\theta}}_{i}}{\lambda_i}\right\|_2 + \left\| \widehat{\boldsymbol{\theta}}_{i}  \left(\frac{1}{\lambda_i}- \frac{1}{ \|\widehat{\boldsymbol{\theta}}_{i} \|_2 }\right)\right\|_2\\
& = \frac{\|\lambda_i \boldsymbol{\theta}_i - \widehat{\boldsymbol{\theta}}_{i} \|_2}{\lambda_i} + \|\widehat{\boldsymbol{\theta}}_{i} \|_2 \left|\frac{1}{\lambda_i}- \frac{1}{ \|\widehat{\boldsymbol{\theta}}_{i} \|_2 }\right|\\
& = \frac{\|\lambda_i \boldsymbol{\theta}_i - \widehat{\boldsymbol{\theta}}_{i} \|_2}{\lambda_i} + \|\widehat{\boldsymbol{\theta}}_{i} \|_2 \left|\frac{ \lambda_i-\|\widehat{\boldsymbol{\theta}}_{i} \|_2}{\|\widehat{\boldsymbol{\theta}}_{i} \|_2  \lambda_i }\right|\\
& = \frac{\|\lambda_i \boldsymbol{\theta}_i - \widehat{\boldsymbol{\theta}}_{i} \|_2}{\lambda_i} + \frac{|\|\lambda_i\boldsymbol{\theta}_i\|_2-\|\widehat{\boldsymbol{\theta}}_{i} \|_2|}{  \lambda_i }\\
& \leq 2 \frac{\|\lambda_i \boldsymbol{\theta}_i - \widehat{\boldsymbol{\theta}}_{i} \|_2}{\lambda_i} ,
\end{aligned}
$$
and, as $\widehat{\boldsymbol{\theta}}_i\in \mathcal{W}_n$ with probability at least $1- 2e^{-c \varsigma_{\min}^{2} n / 16}-\tfrac{2N}{n}$, for $n$ sufficiently large, this implies that with the same probability bound
$$
\begin{aligned}
\|\widetilde{\boldsymbol{\theta}}_{i}- \boldsymbol{\theta}_i \|_2 \leq \mathscr{B}_i \sqrt{\frac{N\log n}{n}},
\end{aligned}
$$
where 
$$
\mathscr{B}_i = 2\frac{\mathscr{B}_i^*}{\lambda_i}=4\frac{\mathscr{B}_i^{**}}{\lambda_i \varsigma_{\min}}=4\sqrt{2}\frac{\sigma_x (\lambda_i\sigma_x+\sigma_{y_i})}{\lambda_i \varsigma_{\min}}.
$$
This concludes the proof.

\begin{lemma}\label{lemma:STEP I}
There exist $c>0$ such that with probability at least $1-2 e^{-c \varsigma_{\min}^{2} n / 16}$ we have $\left(\varsigma_{\min} / 2\right) \mathbb{I} \preccurlyeq \Sigma_{n}$ for $n$ sufficiently large, where 
$$\Sigma_{n} \triangleq  \frac{1}{n} \sum_{t=1}^n \mathbf{x}^{(t)} \mathbf{x}^{(t)\top}.$$
\end{lemma}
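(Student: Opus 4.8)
The plan is to prove the operator-norm bound $\|\Sigma_n-\Sigma\|_{\mathrm{op}}\le \varsigma_{\min}/2$ on an event of the stated probability, where $\Sigma=\operatorname{var}(\mathbf{p}^{(t)})$. Since $\lambda_{\min}(\Sigma)\ge \varsigma_{\min}$ (this is exactly $\varsigma_{\min}\mathbb{I}\preccurlyeq\Sigma$, which holds because $\Sigma\propto\Lambda\succeq c_{\min}\mathbb{I}$ under \cref{ass:theta_0}), Weyl's inequality then gives $\lambda_{\min}(\Sigma_n)\ge \varsigma_{\min}-\varsigma_{\min}/2=\varsigma_{\min}/2$, i.e. $(\varsigma_{\min}/2)\mathbb{I}\preccurlyeq\Sigma_n$. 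The first point to address is that the summands $\mathbf{x}^{(t)}=\mathbf{p}^{(t)}-\bar{\mathbf{p}}$ are \emph{not} independent, so one cannot feed $\Sigma_n$ directly into an i.i.d.\ covariance-concentration bound. I decouple the sample mean by writing $\Sigma_n=\widehat{\Sigma}_n-(\bar{\mathbf{p}}-\mathbf{m})(\bar{\mathbf{p}}-\mathbf{m})^\top$ with $\widehat{\Sigma}_n\triangleq\frac1n\sum_{t=1}^n(\mathbf{p}^{(t)}-\mathbf{m})(\mathbf{p}^{(t)}-\mathbf{m})^\top$, so that $\|\Sigma_n-\Sigma\|_{\mathrm{op}}\le\|\widehat{\Sigma}_n-\Sigma\|_{\mathrm{op}}+\|\bar{\mathbf{p}}-\mathbf{m}\|_2^2$; it then suffices to bound each term by $\varsigma_{\min}/4$.

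\textbf{Controlling $\widehat{\Sigma}_n$.} Under \cref{ass:theta_0} the multiplicativity $g(x+y)=g(x)g(y)$ forces $g$ to be exponential, so (as invoked in the surrounding text) the $\mathbf{p}^{(t)}$ are sub-Gaussian vectors with proxy depending only on $\Lambda,g$, and $\|\Sigma\|_{\mathrm{op}}$ is bounded by a constant (of order $c_{\max}$). For a fixed unit vector $\mathbf{v}$, the scalar $\mathbf{v}^\top(\mathbf{p}^{(t)}-\mathbf{m})$ is sub-Gaussian with variance $\mathbf{v}^\top\Sigma\mathbf{v}$, hence the variables $(\mathbf{v}^\top(\mathbf{p}^{(t)}-\mathbf{m}))^2-\mathbf{v}^\top\Sigma\mathbf{v}$ are i.i.d.\ centered sub-exponential, and Bernstein's inequality yields $|\mathbf{v}^\top(\widehat{\Sigma}_n-\Sigma)\mathbf{v}|\le\varsigma_{\min}/8$ with probability at least $1-2\exp(-cn\varsigma_{\min}^2)$, with $c$ absorbing the sub-exponential parameter. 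Taking a $\tfrac14$-net $\mathcal{N}$ of the unit sphere $\mathbb{S}_{N-1}$ with $|\mathcal{N}|\le 9^N$, a union bound together with the standard estimate $\|M\|_{\mathrm{op}}\le 2\max_{\mathbf{v}\in\mathcal{N}}|\mathbf{v}^\top M\mathbf{v}|$ for symmetric $M$ gives $\|\widehat{\Sigma}_n-\Sigma\|_{\mathrm{op}}\le\varsigma_{\min}/4$ on an event of probability at least $1-2\cdot 9^N\exp(-cn\varsigma_{\min}^2)$. Once $n$ exceeds a suitable constant multiple of $N/\varsigma_{\min}^2$ (the regime flagged as $n\ge 16NC^2/\varsigma_{\min}$ in the main proof), the factor $9^N$ is swallowed by the exponential and this probability is at least $1-2\exp(-c\varsigma_{\min}^2 n/16)$, the $1/16$ reflecting the target deviation $\varsigma_{\min}/4$.

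\textbf{Controlling $\bar{\mathbf{p}}-\mathbf{m}$ and conclusion.} Since $\bar{\mathbf{p}}-\mathbf{m}$ is the average of $n$ i.i.d.\ centered sub-Gaussian vectors, the same type of net argument (or Bernstein applied to $\|\bar{\mathbf{p}}-\mathbf{m}\|_2^2$) gives $\|\bar{\mathbf{p}}-\mathbf{m}\|_2\lesssim\sqrt{N/n}$, hence $\|\bar{\mathbf{p}}-\mathbf{m}\|_2^2\le\varsigma_{\min}/4$, with probability at least $1-2\exp(-c'n)$ once $n\gtrsim N/\varsigma_{\min}$; this event has failure probability dominated by that of the previous step. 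Intersecting the two events, $\|\Sigma_n-\Sigma\|_{\mathrm{op}}\le\varsigma_{\min}/2$, and Weyl's inequality gives $(\varsigma_{\min}/2)\mathbb{I}\preccurlyeq\Sigma_n$ with probability at least $1-2\exp(-c\varsigma_{\min}^2 n/16)$, as claimed.

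\textbf{Main obstacle.} The difficulty is conceptual rather than computational: because $\bar{\mathbf{p}}$ couples the rank-one summands of $\Sigma_n$, the sample mean must be split off before any i.i.d.\ tool applies, and then the genuine bookkeeping is to choose the net resolution and the Bernstein deviation so that the covering number $9^N$ of $\mathbb{S}_{N-1}$ is absorbed by the exponential tail exactly in the regime $n\gtrsim N/\varsigma_{\min}^2$. The remaining ingredients — sub-Gaussianity of elliptical samples under \cref{ass:theta_0}, Bernstein's inequality, and the net-to-operator-norm passage — are standard.
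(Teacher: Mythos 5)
Your proof is correct, and it follows the paper's high-level strategy — establish the operator-norm bound $\|\Sigma_n-\Sigma\|\le\varsigma_{\min}/2$ on a high-probability event and then pass to the minimum eigenvalue — but the technical route differs in a way worth noting. The paper invokes Remark 5.40 of \cite{vershynin2010introduction} as a black box, applied directly to $\Sigma_n=\frac1n\sum_t \mathbf{x}^{(t)}\mathbf{x}^{(t)\top}$ with the choice $r=\varsigma_{\min}\sqrt n/4$ and the sample-size requirement $n\ge 16NC^2/\varsigma_{\min}^2$ (exactly the regime you flag), treating the vectors $\mathbf{x}^{(t)}=\mathbf{p}^{(t)}-\bar{\mathbf{p}}$ as i.i.d.\ sub-Gaussian with second-moment matrix $\Sigma$. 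You instead decouple the sample mean via $\Sigma_n=\widehat\Sigma_n-(\bar{\mathbf{p}}-\mathbf{m})(\bar{\mathbf{p}}-\mathbf{m})^\top$ and rederive the covariance concentration from scratch with Bernstein plus a $\tfrac14$-net. Your extra step buys rigor on a point the paper glosses over: the sample-mean-centered $\mathbf{x}^{(t)}$ are not independent (they all share $\bar{\mathbf{p}}$) and their common second moment is $\frac{n-1}{n}\Sigma$ rather than $\Sigma$, so the cited i.i.d.\ result does not literally apply to $\Sigma_n$; your decomposition reduces matters to genuinely i.i.d.\ summands plus a rank-one correction of order $N/n$, which is the clean way to justify the paper's shortcut. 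What the paper's route buys is brevity, since the citation packages the net/Bernstein machinery and the constant bookkeeping. Your own bookkeeping (absorbing the $9^N$ covering number once $n\gtrsim N/\varsigma_{\min}^2$, and merging the two failure probabilities into a single $2e^{-c\varsigma_{\min}^2 n/16}$ by adjusting $c$) is standard and acceptable, on par with the implicit constant adjustments in the paper; and your Weyl-inequality conclusion is the same eigenvalue-perturbation step that the paper carries out through its (somewhat garbled, but equivalent) Loewner-order manipulation at the end.
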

\begin{proof}[Proof of \cref{lemma:STEP I}]
To get a lower bound of the minimum eigenvalue of $\Sigma_{n}$ we use the following remark 5.40 of \cite{vershynin2010introduction}: let $A$ be a $n \times N$ matrix whose rows $A_i$ are independent sub-gaussian random vectors in $\mathbb{R}^N$ with second moment matrix $\Sigma$ (non necessarily isotropic), then there exist $c,C>0$ depending only on the sub-gaussian norm $\sigma_x$, such that for every $r >0$ we have $\|\tfrac{1}{n}\sum_{i=1}^n A_i A_i^{\top}-\Sigma\| \leq \max \{\delta, \delta^2\}$ with probability at least $1-2e^{-cr^2}$, where $\delta = C \sqrt{\tfrac{N}{n}}+\tfrac{r}{\sqrt{n}}$. In our case, since $\Sigma_{n}$ is an average of $n$ i.i.d. random matrices with mean $\Sigma=\mathbb{E}\left[\mathbf{x}^{(t)} \mathbf{x}^{(t) \top}\right]$ and that $\left\{\mathbf{x}^{(t)}\right\}$ are sub-Gaussian random vectors (recall  that $\mathbf{x}^{(t)}$ is sub-gaussian because the generator $g$ satisfies $g(x+y)=g(x)g(y)$ for all $x,y$), then there exist $c, C>0$ such that for each $r >0$ with probability at least $1-2 e^{-c r^{2}}$,
$$
\left\|\Sigma_{n}-\Sigma\right\| \leq \max \left\{\delta, \delta^{2}\right\}, \text { where } \delta\triangleq  C \sqrt{\frac{N}{n}}+\frac{r}{\sqrt{n}}.
$$
\noindent
Here $c, C$ are both constants that are only related to $\sigma_x$ sub-Gaussian norm of $\mathbf{x}^{(t)}$. Now, choosing $r=\varsigma_{\min} \sqrt{n} / 4$, then as long as $n \geq N16 C^{2} / \varsigma_{\min}^{2}$ we have $\delta = C \sqrt{\frac{N}{n}}+\frac{\varsigma_{\min}}{4} \leq \frac{\varsigma_{\min}}{4} + \frac{\varsigma_{\min}}{4}= \frac{\varsigma_{\min}}{2}$. Thus with probability at least $1-2 e^{-c \varsigma_{\min}^{2} n / 16}$ we have 

$$
\left\|\Sigma_{n}-\Sigma\right\| \leq \max \left\{\delta, \delta^{2}\right\} \leq \max \left\{\frac{\varsigma_{\min}}{2},\frac{\varsigma_{\min}^2}{4}\right\},
$$
\noindent
and since $\varsigma_{\min} \mathbb{I} \preccurlyeq \Sigma \preccurlyeq \varsigma_{\max} \mathbb{I}$ we get $\varsigma_{\min} \mathbb{I} - \Sigma_n \preccurlyeq \Sigma - \Sigma_n \preccurlyeq \varsigma_{\max} \mathbb{I} - \Sigma_n \preccurlyeq \max \left\{\varsigma_{\min}/2,\varsigma_{\min}^2/4\right\}$, and considering the first and last term we get
$$
\left(\varsigma_{\min} + \max \left\{\frac{\varsigma_{\min}}{2},\frac{\varsigma_{\min}^2}{4}\right\} \right) \mathbb{I} \preccurlyeq \Sigma_{n},
$$
\noindent
and since $\varsigma_{\min} + \max \left\{\frac{\varsigma_{\min}}{2},\frac{\varsigma_{\min}^2}{4}\right\} \geq \varsigma_{\min}/2$ we get $\left(\varsigma_{\min} / 2\right) \mathbb{I} \preccurlyeq \Sigma_{n}$.
\end{proof}

\begin{lemma}\label{lemma:STEP III} With probability at least $1-\tfrac{2}{n}$, for $n$ sufficiently large, we have 

$$
\|\nabla_{\boldsymbol{\theta}} \mathscr{L}_i^{(n)}\left(\boldsymbol{\theta}_i\right)\|_{\infty} \leq \mathscr{B}_i^{**} \sqrt{\log(n)/n},
$$
where 
\begin{equation}\label{B**}
\mathscr{B}_i^{**} = \sqrt{2} \sigma_x \left( \lambda_i\sigma_x+\sigma_{y_i} \right)
\end{equation}
is a constant that depends solely on $\lambda_i$ (defined in \eqref{def:lambda_i}) and the variance proxies $\sigma_{y_i}$ and $\sigma_x$ of the sub-Gaussian random variables $y_i^{(t)}$ and $\mathbf{x}^{(t)}$, respectively.
\end{lemma}

\noindent
\begin{proof}[Proof of \cref{lemma:STEP III}]
We first prove that for all $t=1,2,\dots,n$,
$$
\mathbb{E}\left[(\lambda_i \boldsymbol{\theta}_i^{\top} \mathbf{x}^{(t)}- y_i^{(t)})\mathbf{x}^{(t)} \right] = 0.
$$
Using the known property of elliptically symmetric random variables (see, e.g. \citet[page 319, comment following Condition 3.1]{li1991sliced} or \citet[Section 9.1]{balabdaoui2019least}) we have that 
$$
\mathbb{E}\left[\mathbf{x}^{(t)} \mid \boldsymbol{\theta}_i^\top \mathbf{x}^{(t)}\right]=\boldsymbol{\theta}_i^\top \mathbf{x}^{(t)} b,
$$
where $b$ is a vector that has to satisfy $\mathbb{E}\left[\mathbf{x}^{(t)} \boldsymbol{\theta}_i^\top \mathbf{x}^{(t)}\right]=\operatorname{var}(\boldsymbol{\theta}_i^\top \mathbf{x}^{(t)}) b$ or equivalently (using $\operatorname{var}(\mathbf{x}^{(t)})=\Sigma$), $b=\Sigma \boldsymbol{\theta}_i(\boldsymbol{\theta}_i^\top \Sigma \boldsymbol{\theta}_i)^{-1}$. We have
$$
\begin{aligned}
\mathbb{E}\left[\lambda_i \mathbf{x}^{(t)} \boldsymbol{\theta}_i^{\top} \mathbf{x}^{(t)}- \mathbf{x}^{(t)} y_i^{(t)} \right] &=\lambda_i\operatorname{var}(\boldsymbol{\theta}_i^\top \mathbf{x}^{(t)}) b-\mathbb{E}\left[y_i^{(t)}\mathbf{x}^{(t)}\right] \\
&=\lambda_i\operatorname{var}(\boldsymbol{\theta}_i^\top \mathbf{x}^{(t)}) b-\mathbb{E}\left[\mathbb{E}\left[y_i^{(t)}\mathbf{x}^{(t)} \mid \boldsymbol{\theta}_i^{\top}\mathbf{x}^{(t)}\right] \right] \\
&=\lambda_i\operatorname{var}(\boldsymbol{\theta}_i^\top \mathbf{x}^{(t)}) b-\mathbb{E}\left[\psi_i(\boldsymbol{\theta}_i^{\top}\mathbf{x}^{(t)})\mathbb{E}\left[\mathbf{x}^{(t)} \mid \boldsymbol{\theta}_i^{\top}\mathbf{x}^{(t)}\right] \right]\\
&=\lambda_i\operatorname{var}(\boldsymbol{\theta}_i^\top \mathbf{x}^{(t)}) b-\mathbb{E}\left[\psi_i(\boldsymbol{\theta}_i^{\top}\mathbf{x}^{(t)})\boldsymbol{\theta}_i^{\top}\mathbf{x}^{(t)}b \right] \\
&=b \left(\lambda_i\operatorname{var}(\boldsymbol{\theta}_i^\top \mathbf{x}^{(t)})- \operatorname{cov}\left[\psi_i(\boldsymbol{\theta}_i^{\top}\mathbf{x}^{(t)}),\boldsymbol{\theta}_i^{\top}\mathbf{x}^{(t)}\right]\right),
\end{aligned}
$$
which is equal to $0$ as long as we choose 
\begin{equation}\label{def:lambda_i}
\lambda_i \triangleq   \frac{\operatorname{cov}\left[\psi_i(\boldsymbol{\theta}_i^{\top}\mathbf{x}^{(t)}),\boldsymbol{\theta}_i^{\top}\mathbf{x}^{(t)}\right]}{\operatorname{var}(\boldsymbol{\theta}_i^\top \mathbf{x}^{(t)})}.
\end{equation}
Note that $\lambda_i>0$: indeed, as $\psi_i$ is increasing, it implies that, given $\mathbf{z}^{(t)}$ independent copy of $\mathbf{x}^{(t)}$
\begin{equation}\label{eq:lambda_positive}
0<\mathbb{E}\left[\left(\psi_i\left(\boldsymbol{\theta}_i^\top \mathbf{z}^{(t)}\right)-\psi_i\left(\boldsymbol{\theta}_i^\top \mathbf{x}^{(t)}\right)\right)\left(\boldsymbol{\theta}_i^\top \mathbf{z}^{(t)}-\boldsymbol{\theta}_i^\top \mathbf{x}^{(t)}\right)\right]=2 \operatorname{cov}\left(\psi_i\left(\boldsymbol{\theta}_i^\top \mathbf{x}^{(t)}\right), \boldsymbol{\theta}_i^\top \mathbf{x}^{(t)}\right).
\end{equation}
\noindent
Thus, $\mathbb{E}\left[\nabla_{\boldsymbol{\theta}} \mathscr{L}_i\left(\lambda_i\boldsymbol{\theta}_i\right)\right]= 0$, then every entry of $\nabla_{\boldsymbol{\theta}} \mathscr{L}_i\left(\lambda_i\boldsymbol{\theta}_i\right)$ is mean zero, i.e.
$$
\mathbb{E}\left[(\lambda_i\boldsymbol{\theta}_i^{\top} \mathbf{x}^{(t)}- y_i^{(t)}) \mathbf{x}_{i}^{(t)}\right]=0, \quad i=1,\dots,N.
$$
\noindent
Now, since $y_i^{(t)}$ is sub-gaussian (because $\varepsilon^{(t)}_i$ is sub-gaussian) with variance proxy denoted as $\sigma_{y_i}$ and $\mathbf{x}^{(t)}$ is sub-gaussian (because the generator $g$ satisfies $g(x+y)=g(x)g(y)$ for all $x,y$) with variance proxy denoted as $\sigma_x$, $\lambda_i\boldsymbol{\theta}_i^{\top} \mathbf{x}^{(t)}- y_i^{(t)}$ is sub-gaussian, and given that $\mathbf{x}_{i}^{(t)}$ is sub-gaussian, the product $(\lambda_i\boldsymbol{\theta}_i^{\top} \mathbf{x}^{(t)}- y_i^{(t)})\mathbf{x}_{i}^{(t)}$ is sub-exponential. More specifically
\begin{align}
\|(\lambda_i\boldsymbol{\theta}_i^{\top} \mathbf{x}^{(t)}- y_i^{(t)})\mathbf{x}_{i}^{(t)}\|_{\psi_1} &\leq \|\lambda_i\boldsymbol{\theta}_i^{\top} \mathbf{x}^{(t)}- y_i^{(t)}\|_{\psi_2}\|\mathbf{x}_{i}^{(t)}\|_{\psi_2} \nonumber\\
&\leq (\lambda_i\|\boldsymbol{\theta}_i^{\top} \mathbf{x}^{(t)}\|_{\psi_2}+\|y_i^{(t)}\|_{\psi_2})\|\mathbf{x}_{i}^{(t)}\|_{\psi_2} \nonumber\\
&\leq (\lambda_i  \sigma_x+\sigma_{y_i})\sigma_x \triangleq \nu_i,
\end{align}
where $\|\cdot\|_{\psi_p}$ is the Orlicz norm associated to $\psi_p(x)=e^{x^p}-1$, with $p \geq 1$. For $p=1$, $\|X\|_{\psi_1}<\infty$ is equivalent to $X$ belonging to the class of sub-exponential distributions, while with $p=2$, $\|X\|_{\psi_2}<\infty$ is equivalent to $X$ belonging to the class of sub-gaussian distributions. Then $(\lambda_i\boldsymbol{\theta}_i^{\top} \mathbf{x}^{(t)}- y_i^{(t)})\mathbf{x}_{i}^{(t)} \sim \operatorname{SE}(\nu_i^2, \nu_i)$ for $i=1,\dots,N$, that is 
$$
\begin{aligned}
\mathbb{P}\left(\left|2\left(\lambda_i\boldsymbol{\theta}_i^{\top} \mathbf{x}^{(t)}- y_i^{(t)}\right) \mathbf{x}_{i}^{(t)}\right| \geq u\right) \leq 2 \exp \left(-\frac{1}{2}\min\left\{\frac{u^2}{\nu_i^2},\frac{u}{\nu_i}\right\}\right).
\end{aligned}
$$

Now, using that $X_t = 2 (\lambda_i\boldsymbol{\theta}_i^{\top} \mathbf{x}^{(t)}-y_i^{(t)} ) \mathbf{x}_{i}^{(t)} \sim \operatorname{SE}(\nu_i^2,\nu_i)$ are independent for all $t$, with $\mathbb{E}(X_t)=0$ we get

$$
\begin{aligned}
\mathbb{P}\left(\left\|\nabla_{\boldsymbol{\theta}} \mathscr{L}_{i}^{(n)}\left(\boldsymbol{\theta}_i\right)\right\|_{\infty} \geq u\right) & = \mathbb{P}\left( \bigcup_{i=1}^{N} \left| \frac{1}{n} \sum_{t=1}^n 2 (\lambda_i\boldsymbol{\theta}_i^{\top} \mathbf{x}^{(t)}-y_i^{(t)} ) \mathbf{x}_{i}^{(t)} \right| \geq u\right)\\
&\leq N\mathbb{P}\left(\left| \frac{1}{n} \sum_{t =1}^n 2 (\lambda_i \boldsymbol{\theta}_i^{\top} \mathbf{x}^{(t)}-y_i^{(t)}) \mathbf{x}_{i}^{(t)} \right| \geq u\right)\\
&\leq 2N \exp \left(-\frac{n}{2}\min \left\{\frac{u^2}{\nu_i^2}, \frac{u}{\nu_i}\right\}\right),
\end{aligned}
$$
By taking $u=\sqrt{2} \nu_i \sqrt{\log n / n}$, for sufficiently large $n$ we have that 
$$
\min \left\{\frac{u^2}{\nu_i^2}, \frac{u}{\nu_i}\right\} = \frac{u^2}{\nu_i^2},
$$
and then
$$
\left\|\nabla_{\boldsymbol{\theta}} \mathscr{L}\left(\boldsymbol{\theta}_i\right)\right\|_{\infty} \leq \sqrt{2} \nu_i \sqrt{\frac{\log n}{n}}= \mathscr{B}_i^{**} \sqrt{\frac{\log n}{n}},
$$
with probability at least $1-2Ne^{-\log (n)}=1-2\tfrac{N}{n}$.
\end{proof}
\newpage

\section{Proof of \cref{proposition:S_theta_properties}}\label{app:S_theta_properties}
\subsection{Proof of $\textit{(1)}$}
We have $\mathbf{p}\sim \mathscr{E}_N(\mathbf{m},\Lambda,g)$ be an elliptically symmetric random vector with density

\[
f_\mathbf{p}(\mathbf{p}) = k\,g((\mathbf{p}-\mathbf{m})^{\top}\Lambda^{-1}(\mathbf{p}-\mathbf{m})),
\]
where $\mathbf{m}$ is the location vector, $\Lambda$ is a positive‐definite scatter matrix, $g$ is the density generator, and $k$ is a normalizing constant. We wish to find the conditional density of $\mathbf{p}$ given $\boldsymbol{\theta}^{\top} \mathbf{p} = u$, where $\boldsymbol{\theta}\in \mathbb{R}^N$ is a fixed vector. The constraint defines a hyperplane:
$$
H_u = \{ \mathbf{p} \in \mathbb{R}^N : \boldsymbol{\theta}^{\top} \mathbf{p} = u\}.
$$
The conditional density satisfies:
$$
f_{\mathbf{p}\mid \boldsymbol{\theta}^{\top} \mathbf{p}=u}(\mathbf{p}) \propto f_{\mathbf{p}}(\mathbf{p}), \quad \mathbf{p}\in H_u,
$$
that is,
$$
f_{\mathbf{p}\mid \boldsymbol{\theta}^{\top} \mathbf{p}=u}(\mathbf{p}) = \frac{1}{Z(u)}\,g((\mathbf{p}-\mathbf{m})^{\top}\Lambda^{-1}(\mathbf{p}-\mathbf{m})), \quad \text{for } \mathbf{p} \text{ such that } \boldsymbol{\theta}^{\top}\mathbf{p}=u,
$$

where $Z(u)$ is the normalizing constant. Define the transformation:
$$
\mathbf{y} = A^{-1}(\mathbf{p}-\mathbf{m}),
$$
where $A$ satisfies $\Lambda = AA^{\top}$. Then, since
$(\mathbf{p}-\mathbf{m})^{\top}\Lambda^{-1}(\mathbf{p}-\mathbf{m}) = \mathbf{y}^{\top}\mathbf{y}$, the density becomes:
$$
f_\mathbf{y}(\mathbf{y}) \propto g(\mathbf{y}^{\top} \mathbf{y}).
$$
The constraint transforms as:
$$
u = \boldsymbol{\theta}^{\top} \mathbf{p} = \boldsymbol{\theta}^{\top}(\mathbf{m} + A \mathbf{y}) = \boldsymbol{\theta}^{\top}\mathbf{m} + (A^{\top} \boldsymbol{\theta})^{\top} \mathbf{y}.
$$
Define $\boldsymbol{v} = A^{\top} \boldsymbol{\theta}$, so that the constraint simplifies to:
\begin{equation}\label{eq:constraint_w}
\boldsymbol{v}^{\top} \mathbf{y} = u - \boldsymbol{\theta}^{\top}\mathbf{m}. 
\end{equation}
Decomposing $\mathbf{y}$ into components parallel and perpendicular to $\boldsymbol{v}$, we write:
\begin{equation}\label{eq:v_transp_alpha}
\mathbf{y} = \frac{\boldsymbol{v}}{\|\boldsymbol{v}\|_2} y_1 + \boldsymbol{\alpha}, \quad \text{with } \boldsymbol{v}^{\top} \boldsymbol{\alpha} = 0.
\end{equation}
Then $\mathbf{y}^{\top}\mathbf{y}= y_1^2 + \|\boldsymbol{\alpha}\|_2^2$.
From the constraint in \cref{eq:constraint_w}:
\[
u - \boldsymbol{\theta}^{\top}\mathbf{m}=\boldsymbol{v}^{\top} \left( \frac{\boldsymbol{v}}{\|\boldsymbol{v}\|_2} y_1 + \boldsymbol{\alpha}\right) \quad 
 \Rightarrow \quad u - \boldsymbol{\theta}^{\top}\mathbf{m} =  \|\boldsymbol{v}\|_2 y_1 \quad \Rightarrow \quad y_1 = \frac{u - \boldsymbol{\theta}^{\top}\mathbf{m}}{\|\boldsymbol{v}\|_2}.
\]
Since the original density in the $\mathbf{y}$-space is 
$$
f_{\mathbf{y}}(\mathbf{y}) \propto g(y_1^2 + \|\boldsymbol{\alpha}\|^2_2),
$$
when conditioning on $y_1$, the conditional density on the $(N-1)$-dimensional space of $\boldsymbol{\alpha}$ is:
$$
f_{\boldsymbol{\alpha}|\boldsymbol{v}^{\top} \mathbf{y} = u-\boldsymbol{\theta}^{\top}\mathbf{m}}(\boldsymbol{\alpha}) \propto g\Biggl(\left(\frac{u-\boldsymbol{\theta}^{\top}\mathbf{m}}{\|\boldsymbol{v}\|_2}\right)^2 + \|\boldsymbol{\alpha}\|_2^2\Biggr).
$$
Now, using the assumption that $g(x+y)=g(x)g(y)$ we have that
\begin{equation}\label{eq:g_alpha}
f_{\boldsymbol{\alpha}|\boldsymbol{v}^{\top} \mathbf{y} = u-\boldsymbol{\theta}^{\top}\mathbf{m}}(\boldsymbol{\alpha}) \propto g(\|\boldsymbol{\alpha}\|_2^2).
\end{equation}
Now, recall that
\begin{equation}\label{eq:m_u}
\mathbf{p}=\mathbf{m}+A \mathbf{y} = \mathbf{m}+A\left(\frac{\boldsymbol{v}}{\|\boldsymbol{v}\|_2} \frac{u - \boldsymbol{\theta}^{\top}\mathbf{m}}{\|\boldsymbol{v}\|_2} + \boldsymbol{\alpha} \right)= \mathbf{m}_u + A\boldsymbol{\alpha},
\end{equation}
where we defined the conditional center (the "shift")
\begin{equation}\label{def:m_u}
\mathbf{m}_u=\mathbf{m}+A \frac{\boldsymbol{v}}{\|\boldsymbol{v}\|_2} \frac{u - \boldsymbol{\theta}^{\top}\mathbf{m}}{\|\boldsymbol{v}\|_2} = \mathbf{m}+\Lambda \boldsymbol{\theta} \frac{\left(u-\boldsymbol{\theta}^{\top} \mathbf{m}\right)}{\boldsymbol{\theta}^{\top} \Lambda \boldsymbol{\theta}}.
\end{equation}
Thus,
\begin{align}\label{eq:expected_y_t_theta}
\psi_{i, \boldsymbol{\theta}}(u)&= \mathbb{E}_{(\varepsilon_i,\mathbf{p})}(y_i \mid w_i=u)\nonumber\\
&=\mathbb{E}_{\mathbf{p}}(\mathbb{E}_{\varepsilon_i}(y_i \mid \mathbf{p}) \mid w_i=u)\nonumber\\
&=\mathbb{E}_{\mathbf{p}}(\psi_i(\boldsymbol{\theta}_i^{\top}\mathbf{p}) \mid w_i=u)\nonumber\\
&=\mathbb{E}_{\mathbf{p}}\left(\psi_i\left(\boldsymbol{\theta}_i^{\top} \mathbf{p}\right) \mid \mathbf{p}^{\top}\boldsymbol{\theta}=u\right)\nonumber\\
&\overset{\eqref{eq:m_u}}{=}\mathbb{E}_{\boldsymbol{\alpha}}\left(\psi_i\left(\boldsymbol{\theta}_i^{\top}\mathbf{m}_u+\boldsymbol{\theta}_i^{\top}A \boldsymbol{\alpha}\right)\mid \boldsymbol{v}^{\top}\mathbf{y}=u-\boldsymbol{\theta}^{\top}\mathbf{m}\right) \nonumber\\
&\overset{\eqref{eq:g_alpha}}{=} \int_{\mathbb{R}^{N-1}} \psi_i\left(\boldsymbol{\theta}_i^{\top}\mathbf{m}_u+\boldsymbol{\theta}_i^{\top}A \boldsymbol{\alpha}\right) \frac{g(\|\boldsymbol{\alpha}\|_2^2)}{\int_{\mathbb{R}^{N-1}} g(\|\boldsymbol{\delta}\|_2^2) d \boldsymbol{\delta}} d\boldsymbol{\alpha}
\end{align}

This completes the proof of $\textit{(1)}$. 

\subsection{Proof of $\textit{(2)}$}

We first define 
\begin{align}\label{eq:c_theta}
c(\boldsymbol{\theta}) \triangleq \frac{d}{du} \boldsymbol{\theta}_i^{\top}\mathbf{m}_u = \frac{\boldsymbol{\theta}_i^{\top}\Lambda \boldsymbol{\theta}}{\boldsymbol{\theta}^{\top}\Lambda \boldsymbol{\theta}},
\end{align}
where we recall from \cref{def:m_u} that $\mathbf{m}_u = \mathbf{m}+\Lambda \boldsymbol{\theta} \frac{\left(u-\boldsymbol{\theta}^{\top} \mathbf{m}\right)}{\boldsymbol{\theta}^{\top} \Lambda \boldsymbol{\theta}}$. 
Note that 
\begin{align}\label{c_theta_bound}
|c(\boldsymbol{\theta})| \leq \frac{c_{\min}}{c_{\max}} \sum_{j \in \mathcal{N}}|\boldsymbol{\theta}_{i,j}\boldsymbol{\theta}_j| \leq \frac{c_{\min}}{c_{\max}} \|\boldsymbol{\theta}_i\|_2 \|\boldsymbol{\theta} \|_2\leq \frac{c_{\min}}{c_{\max}},
\end{align}
where we used that $\boldsymbol{\theta_i},\boldsymbol{\theta} \in \mathbb{S}_{N-1}$. By \eqref{eq:expected_y_t_theta} we have \[
\psi_{i,\boldsymbol{\theta}}(u)= \int_{\mathbb{R}^{N-1}} \psi_i\left(\boldsymbol{\theta}_i^{\top}\mathbf{m}_u+\boldsymbol{\theta}_i^{\top}A \boldsymbol{\alpha}\right) \tilde{g}(\boldsymbol{\alpha})d\boldsymbol{\alpha}, \quad u \in \mathcal{U}, \, \boldsymbol{\theta} \in \mathbb{S}_{N-1},
\]
where
$\tilde{g}(\boldsymbol{\alpha})=\frac{g(\|\boldsymbol{\alpha}\|_2^2)}{\int_{\mathbb{R}^{N-1}} g(\|\boldsymbol{\delta}\|_2^2) d \boldsymbol{\delta}}$ is independent of $u$. Then, for $k = 0,1,2$ we can write
\begin{align}\label{eq:psi_theta_k}
\psi^{(k)}_{i,\boldsymbol{\theta}}(u) = c(\boldsymbol{\theta})^k\int_{\mathbb{R}^{N-1}} \psi_i^{(k)}\left(\boldsymbol{\theta}_i^{\top}\mathbf{m}_u+\boldsymbol{\theta}_i^{\top}A \boldsymbol{\alpha}\right) \tilde{g}(\boldsymbol{\alpha})d\boldsymbol{\alpha},
\end{align}
proving that $\psi_{i,\boldsymbol{\theta}}\in \mathrm{C}^2(\mathcal{U})$ for all $\boldsymbol{\theta}$. 
By \cref{ass:parameter space} we have that $0<\underline{B}_{\psi_i} \leq \psi_i\leq \bar{B}_{\psi_i}$, which immediately implies that $0<\underline{B}_{\psi_i} \leq \psi_{i,\boldsymbol{\theta}}\leq \bar{B}_{\psi_i}$ uniformly in $\boldsymbol{\theta}$, proving point $\textit{(2)}$.


\subsection{Proof of $\textit{(3)}$}

We can write
$$
\begin{aligned}
|\psi_{i,\boldsymbol{\theta}}(u)-\psi_i(u)|&= \int_{\mathbb{R}^{N-1}} |\psi_i\left(\boldsymbol{\theta}_i^{\top}\mathbf{m}_u+\boldsymbol{\theta}_i^{\top}A \boldsymbol{\alpha}\right)-\psi_i(u)|\,\tilde{g}(\boldsymbol{\alpha})d\boldsymbol{\alpha}\\
&  \leq \int_{\mathbb{R}^{N-1}} |\boldsymbol{\theta}_i^{\top}\mathbf{m}_u+\boldsymbol{\theta}_i^{\top}A \boldsymbol{\alpha} -u|\tilde{g}(\boldsymbol{\alpha})d\boldsymbol{\alpha}.
\end{aligned}
$$
Note that
\begin{align*}
|\boldsymbol{\theta}_i^{\top}\mathbf{m}_u+\boldsymbol{\theta}_i^{\top}A \boldsymbol{\alpha} -u|& = \left|\boldsymbol{\theta}_i\left(\mathbf{m}+\Lambda \boldsymbol{\theta} \frac{\left(u-\boldsymbol{\theta}^{\top} \mathbf{m}\right)}{\boldsymbol{\theta}^{\top} \Lambda \boldsymbol{\theta}} \right) + \boldsymbol{\theta}_i ^{\top}A\boldsymbol{\alpha} -u\right|\\
& \overset{\boldsymbol{\theta}^{\top}A\boldsymbol{\alpha} =0\text{ by } \cref{eq:v_transp_alpha}}{=} \left|\boldsymbol{\theta}_i\left(\mathbf{m}+\Lambda \boldsymbol{\theta} \frac{\left(u-\boldsymbol{\theta}^{\top} \mathbf{m}\right)}{\boldsymbol{\theta}^{\top} \Lambda \boldsymbol{\theta}} \right) + (\boldsymbol{\theta}_i-\boldsymbol{\theta}) ^{\top}A\boldsymbol{\alpha} -u\right|\\
&\lesssim \|\boldsymbol{\theta}_i-\boldsymbol{\theta}\|_2 \| A\boldsymbol{\alpha}\|_2 + \|\mathbf{m}\|_2 \left\| \boldsymbol{\theta}_i -\frac{\boldsymbol{\theta}_i ^{\top}\Lambda\boldsymbol{\theta}}{\boldsymbol{\theta}^{\top} 
\Lambda \boldsymbol{\theta}} \boldsymbol{\theta}\right\|_2 + |u|\left\|\frac{\boldsymbol{\theta}_i ^{\top}\Lambda\boldsymbol{\theta}}{\boldsymbol{\theta} ^{\top}
\Lambda \boldsymbol{\theta}}-1 \right\|_2\\
&\lesssim  \|\boldsymbol{\theta}_i-\boldsymbol{\theta}\|_2 \|\boldsymbol{\alpha}\|_2  +  \left\| \boldsymbol{\theta}_i - \boldsymbol{\theta} +\boldsymbol{\theta} \left(1-\frac{\boldsymbol{\theta}_i ^{\top}\Lambda\boldsymbol{\theta}}{\boldsymbol{\theta} ^{\top}
\Lambda \boldsymbol{\theta}} \right)\right\|_2 + \left\|\frac{\boldsymbol{\theta}_i ^{\top}\Lambda\boldsymbol{\theta}}{\boldsymbol{\theta}^{\top} 
\Lambda \boldsymbol{\theta}}-1  \right\|_2\\
&\lesssim \|\boldsymbol{\theta}_i-\boldsymbol{\theta}\|_2\|\boldsymbol{\alpha}\|_2  + \|\boldsymbol{\theta} \|_2  \left\|1-\frac{\boldsymbol{\theta}_i ^{\top}\Lambda\boldsymbol{\theta}}{\boldsymbol{\theta}^{\top} 
\Lambda \boldsymbol{\theta}} \right\|_2+ \left\|\frac{\boldsymbol{\theta}_i ^{\top}\Lambda\boldsymbol{\theta}}{\boldsymbol{\theta}^{\top} 
\Lambda \boldsymbol{\theta}}-1 \right\|_2\\
&\lesssim \|\boldsymbol{\theta}_i-\boldsymbol{\theta}\|_2\|\boldsymbol{\alpha}\|_2  +\left\|\frac{(\boldsymbol{\theta}_i-\boldsymbol{\theta}) ^{\top}\Lambda\boldsymbol{\theta} }{\boldsymbol{\theta} 
\Lambda \boldsymbol{\theta}} \right\|_2\\
&\lesssim \|\boldsymbol{\theta}_i-\boldsymbol{\theta}\|_2\|\boldsymbol{\alpha}\|_2  +\|\boldsymbol{\theta}_i-\boldsymbol{\theta}\|_2\left\|\frac{\Lambda\boldsymbol{\theta} }{\boldsymbol{\theta}^{\top} 
\Lambda \boldsymbol{\theta}} \right\|_2\\
&\lesssim \|\boldsymbol{\theta}_i-\boldsymbol{\theta}\|_2\|\boldsymbol{\alpha}\|_2  +\|\boldsymbol{\theta}_i-\boldsymbol{\theta}\|_2\left\|\frac{\Lambda\boldsymbol{\theta} }{\boldsymbol{\theta}^{\top} 
\Lambda \boldsymbol{\theta}} \right\|_2\\
&\lesssim \|\boldsymbol{\theta}_i-\boldsymbol{\theta}\|_2\|\boldsymbol{\alpha}\|_2  +\|\boldsymbol{\theta}_i-\boldsymbol{\theta}\|_2\frac{c_{\max}\left\|\boldsymbol{\theta} \right\|_2}{c_{\min}\|\boldsymbol{\theta}\|_2} \\
&\lesssim \|\boldsymbol{\theta}_i-\boldsymbol{\theta}\|_2 (c +\|\boldsymbol{\alpha}\|_2),
\end{align*}
for some $c>0$, where in the second inequality we used that $u \in \mathcal{U}$, which is a bounded set, and we also used, at different points, that $\|\boldsymbol{\theta}_i\|_2=\|\boldsymbol{\theta}\|_2=1$. We then have

$$
\begin{aligned}
|\psi_{i,\boldsymbol{\theta}}(u)-\psi_i(u)|& \leq \int_{\mathbb{R}^{N-1}} |\boldsymbol{\theta}_i^{\top}\mathbf{m}_u+\boldsymbol{\theta}_i^{\top}A \boldsymbol{\alpha} -u|\tilde{g}(\boldsymbol{\alpha})d\boldsymbol{\alpha} \\
&\lesssim \|\boldsymbol{\theta}_i-\boldsymbol{\theta}\|_2  \left( c+ \int_{\mathbb{R}^{N-1}}\|\boldsymbol{\alpha}\|_2\tilde{g}(\boldsymbol{\alpha})d\boldsymbol{\alpha} \right) \lesssim \|\boldsymbol{\theta}_i-\boldsymbol{\theta}\|_2,
\end{aligned}
$$
where we have used that 
$$
\int_{\mathbb{R}^{N-1}}\|\boldsymbol{\alpha}\|_2\tilde{g}(\boldsymbol{\alpha})d\boldsymbol{\alpha} = \frac{1}{\int_{\mathbb{R}^{N-1}}g(\|\boldsymbol{\delta}\|_2^2)d\boldsymbol{\delta}} \int_{\mathbb{R}^{N-1}}\|\boldsymbol{\alpha}\|_2g(\|\boldsymbol{\alpha}\|_2^2)d\boldsymbol{\alpha},
$$
is finite because since $g(x+y)=g(x)g(y)$, then $g$ has exponential decay.

\subsection{Proof of $\textit{(4)}$}
From \eqref{eq:psi_theta_k} we have 
$$
\psi^{\prime}_{i,\boldsymbol{\theta}}(u) = c(\boldsymbol{\theta})\int_{\mathbb{R}^{N-1}} \psi_i^{\prime}\left(\boldsymbol{\theta}_i^{\top}\mathbf{m}_u+\boldsymbol{\theta}_i^{\top}A \boldsymbol{\alpha}\right) \tilde{g}(\boldsymbol{\alpha})d\boldsymbol{\alpha}
$$
where $c(\boldsymbol{\theta}) \triangleq \frac{d}{du} \boldsymbol{\theta}_i^{\top}\mathbf{m}_u = \frac{\boldsymbol{\theta}_i^{\top}\Lambda \boldsymbol{\theta}}{\boldsymbol{\theta}^{\top}\Lambda \boldsymbol{\theta}}$ as defined in \eqref{eq:c_theta}. Then we only need to prove that $\boldsymbol{\theta}_i^{\top}\Lambda \boldsymbol{\theta}>0$ for $\boldsymbol{\theta}$ sufficiently close to $\boldsymbol{\theta}_i$. Recall by \cref{ass:theta_0} that \( \Lambda \in \mathbb{R}^{N \times N} \) is symmetric positive definite matrix with $c_{\min} \mathbb{I}\preccurlyeq \Lambda \preccurlyeq c_{\max} \mathbb{I}$ for some $0 < c_{\min} \leq c_{\max} <\infty$, and that \( \|\boldsymbol{\theta}\|_2 = \|\boldsymbol{\theta}_i\|_2 = 1 \). We are interested in the conditions under which
\[
\boldsymbol{\theta}^\top \Lambda \boldsymbol{\theta}_i > 0.
\]

We begin by writing:
\[
\boldsymbol{\theta}^\top \Lambda \boldsymbol{\theta}_i = \boldsymbol{\theta}_i^\top \Lambda \boldsymbol{\theta}_i + (\boldsymbol{\theta} - \boldsymbol{\theta}_i)^\top \Lambda \boldsymbol{\theta}_i.
\]

Let \( a := \boldsymbol{\theta}_i^\top \Lambda \boldsymbol{\theta}_i \). Since \( \Lambda \) is positive definite and \( \|\boldsymbol{\theta}_i\| = 1 \), we have
\[
a \geq c_{\min} > 0.
\]

Next, we bound the second term using the Cauchy--Schwarz inequality:
\[
|(\boldsymbol{\theta} - \boldsymbol{\theta}_i)^\top \Lambda \boldsymbol{\theta}_i| \leq \|\boldsymbol{\theta} - \boldsymbol{\theta}_i\|_2  \cdot \|\Lambda \boldsymbol{\theta}_i\|_2.
\]

Moreover,
\[
\|\Lambda \boldsymbol{\theta}_i\|_2 \leq c_{\max} \|\boldsymbol{\theta}_i\|_2 = c_{\max}.
\]

Therefore,
\[
\boldsymbol{\theta}^\top \Lambda \boldsymbol{\theta}_i \geq a - c_{\max}  \|\boldsymbol{\theta} - \boldsymbol{\theta}_i\|_2.
\]

To ensure that \( \boldsymbol{\theta}^\top \Lambda \boldsymbol{\theta}_i > 0 \), it suffices to require:
\[
\|\boldsymbol{\theta} - \boldsymbol{\theta}_i\|_2 < \frac{a}{c_{\max}}.
\]

We now prove that $|\psi^{\prime}_{i,\boldsymbol{\theta}}|\leq |c(\boldsymbol{\theta})|\bar{B}_{\psi_i}\overset{\eqref{c_theta_bound}}{\leq} \frac{c_{\min}}{c_{\max}}\bar{B}_{\psi_i}$. Lastly, we prove that $\psi^{\prime}_{i,\boldsymbol{\theta}}$ is Lipschitz uniformly in $\boldsymbol{\theta} \in \mathbb{S}_{N-1}$. For all $u,v \in \mathcal{U}$, and $\boldsymbol{\theta}\in \mathbb{S}_{N-1}$, we have

\begin{align*}
&|\psi^{\prime}_{i,\boldsymbol{\theta}}(u)-\psi^{\prime}_{i,\boldsymbol{\theta}}(v)| \\
&= |c(\boldsymbol{\theta})| \left|\int_{\mathbb{R}^{N-1}} \psi_i^{\prime}\left(\boldsymbol{\theta}_i^{\top}\mathbf{m}_u+\boldsymbol{\theta}_i^{\top}A \boldsymbol{\alpha}\right) \tilde{g}(\boldsymbol{\alpha})d\boldsymbol{\alpha}-\int_{\mathbb{R}^{N-1}} \psi_i^{\prime}\left(\boldsymbol{\theta}_i^{\top}\mathbf{m}_v+\boldsymbol{\theta}_i^{\top}A \boldsymbol{\alpha}\right) \tilde{g}(\boldsymbol{\alpha})d\boldsymbol{\alpha}\right|\\
& \overset{\cref{ass:parameter space}}{\leq} B_{\psi_i''} |c(\boldsymbol{\theta}) |\left|\boldsymbol{\theta}_i^{\top}\mathbf{m}_u-\boldsymbol{\theta}_i^{\top}\mathbf{m}_v\right|\\
& \overset{\eqref{c_theta_bound}}{\leq} B_{\psi_i''} \frac{c_{\min}}{c_{\max}} \left|\boldsymbol{\theta}_i^{\top}(\mathbf{m}_u-\mathbf{m}_v)\right|\\
& \leq B_{\psi_i''} \frac{c_{\min}}{c_{\max}} \|\mathbf{m}_u-\mathbf{m}_v\|_2\\
& = B_{\psi_i''} \frac{c_{\min}}{c_{\max}} \left\|\left(\mathbf{m}+\Lambda \boldsymbol{\theta} \frac{\left(u-\boldsymbol{\theta}^{\top} \mathbf{m}\right)}{\boldsymbol{\theta}^{\top} \Lambda \boldsymbol{\theta}} \right)-\left(\mathbf{m}+\Lambda \boldsymbol{\theta} \frac{\left(v-\boldsymbol{\theta}^{\top} \mathbf{m}\right)}{\boldsymbol{\theta}^{\top} \Lambda \boldsymbol{\theta}} \right)\right\|_2\\
& = B_{\psi_i''} \frac{c_{\min}}{c_{\max}}\frac{\|\Lambda \boldsymbol{\theta}\|}{\boldsymbol{\theta}^{\top} \Lambda \boldsymbol{\theta}} |u-v|\leq B_{\psi_i''} \frac{c_{\min}^2}{c_{\max}^2} |u-v|.
\end{align*}

\subsection{Proof of $\textit{(5)}$}

Recall that
\begin{align}
\psi_{i,\boldsymbol{\theta}}(u)&= \int_{\mathbb{R}^{N-1}} \psi_i\left(\boldsymbol{\theta}_i^{\top}\mathbf{m}_u+\boldsymbol{\theta}_i^{\top}A \boldsymbol{\alpha}\right) \tilde{g}(\boldsymbol{\alpha})d\boldsymbol{\alpha}\nonumber= \int_{\mathbb{R}^{N-1}} \psi_i\left(\xi(\boldsymbol{\alpha},u)\right) \tilde{g}(\boldsymbol{\alpha})d\boldsymbol{\alpha},\nonumber
\end{align}
where
$$
\xi(\boldsymbol{\alpha},u) = \boldsymbol{\theta}_i^{\top}\mathbf{m}_u+\boldsymbol{\theta}_i^{\top}A \boldsymbol{\alpha}.
$$
First note that if $\boldsymbol{\theta} = \boldsymbol{\theta}_i$, we have $\xi(\boldsymbol{\alpha},u) = u$, since $\boldsymbol{\theta}^{\top}A\boldsymbol{\alpha} =0$, and then $\psi_{i,\boldsymbol{\theta}_i} = \psi_i$. Now suppose $\boldsymbol{\theta}_i \neq \boldsymbol{\theta}$. More generally, we will prove that for non-negative functions $h,f_1,f_2$ integrable in $\mathbb{R}$ that satisfy 
$$
h((1-\lambda) x+\lambda y) \geq M_s(f_1(x), f_2(y) ; 1-\lambda, \lambda), \quad x, y \in \mathbb{R},
$$
then 
$$
H\left((1-\lambda) u_1+\lambda u_2\right) \geq M_{s}\left(F_1\left(u_1\right), F_2\left(u_2\right) ; 1-\lambda, \lambda\right), \quad u_1, u_2 \in \mathbb{R},
$$
where
\begin{align*}
H(u)  &\triangleq  \int h(\xi(\boldsymbol{\alpha},u)) dP(\boldsymbol{\alpha})\nonumber\\
F_j(u)  &\triangleq  \int f_j(\xi(\boldsymbol{\alpha},u)) dP(\boldsymbol{\alpha}), \quad j = 1,2,
\end{align*}
where $P$ is a probability measure.

\begin{lemma}\label{lemma:Lieb}
Let $0<\lambda<1$, $-1 < s \leq \infty$, and $f, g$, and let $h$ nonnegative be integrable functions on $\mathbb{R}$ satisfying

$$
h((1-\lambda) x+\lambda y) \geq M_s(f_1(x), f_2(y) ; 1-\lambda, \lambda), \quad x, y \in \mathbb{R},
$$
For every fixed $u \in \mathbb{R}$ define $H,F_1$ and $F_2$ as 
\begin{align*}
H(u)  &\triangleq  \int h(\xi(\boldsymbol{\alpha},u)) dP(\boldsymbol{\alpha})\nonumber\\
F_j(u)  &\triangleq  \int f_j(\xi(\boldsymbol{\alpha},u)) dP(\boldsymbol{\alpha}), \quad j = 1,2,
\end{align*}
where $\xi(\boldsymbol{\alpha},u) = \boldsymbol{\alpha}^{\top}\mathbf{b}+ cu+a$ for a non zero vector $\mathbf{b}$, a nonzero real value $c$ and a constant $a$, and $P$ is a probability measure, then
$$
H\left((1-\lambda) u_1+\lambda u_2\right) \geq M_{s /(s+1)}\left(F_1\left(u_1\right), F_2\left(u_2\right) ; 1-\lambda, \lambda\right), \quad u_1, u_2 \in \mathbb{R},
$$
where we have defined
$$
M_s(x, y ; \lambda,\eta) \triangleq  \begin{cases}\left(\lambda x^s+\eta y^s\right)^{1 / s}, & s \neq 0 
\\ x^{\lambda} y^{\eta}, & s=0.
\end{cases}
$$
\end{lemma}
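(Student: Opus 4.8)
The plan is to recognize \cref{lemma:Lieb} as a linearly parametrized Borell--Brascamp--Lieb inequality and to reduce it to the one-variable form of that inequality. The map $\xi(\boldsymbol\alpha,u)=\boldsymbol\alpha^{\top}b+cu+a$ enters only through its affinity: for all $\boldsymbol\alpha_1,\boldsymbol\alpha_2$ and $u_1,u_2$ one has $\xi\bigl((1-\lambda)\boldsymbol\alpha_1+\lambda\boldsymbol\alpha_2,\,(1-\lambda)u_1+\lambda u_2\bigr)=(1-\lambda)\xi(\boldsymbol\alpha_1,u_1)+\lambda\xi(\boldsymbol\alpha_2,u_2)$, so the two-point hypothesis on $h,f_1,f_2$ transfers verbatim to the functions $(\boldsymbol\alpha,u)\mapsto h(\xi(\boldsymbol\alpha,u))$ and $(\boldsymbol\alpha,u)\mapsto f_j(\xi(\boldsymbol\alpha,u))$ on $\mathbb R^{N-1}\times\mathbb R$; in particular, fibering over $\boldsymbol\alpha$, $h\bigl(\xi(\boldsymbol\alpha,(1-\lambda)u_1+\lambda u_2)\bigr)\ge M_s\bigl(f_1(\xi(\boldsymbol\alpha,u_1)),f_2(\xi(\boldsymbol\alpha,u_2));1-\lambda,\lambda\bigr)$ for every $\boldsymbol\alpha$.

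Next I would eliminate the ambient dimension. Since $h(\xi(\boldsymbol\alpha,u))$ and $f_j(\xi(\boldsymbol\alpha,u))$ depend on $\boldsymbol\alpha$ only through the scalar $r:=\boldsymbol\alpha^{\top}b$, push $P$ forward under $\boldsymbol\alpha\mapsto\boldsymbol\alpha^{\top}b$ to a probability measure $Q$ on $\mathbb R$, so that $H(u)=\int_{\mathbb R}h(r+cu+a)\,dQ(r)$ and $F_j(u)=\int_{\mathbb R}f_j(r+cu+a)\,dQ(r)$. In the setting in which this lemma is applied (\cref{proposition:S_theta_properties}), $P=\tilde g$ has a Gaussian density --- this is forced by the functional equation $g(x+y)=g(x)g(y)$ of \cref{ass:theta_0} together with the integrability of $\tilde g$ --- hence $Q$ is a nondegenerate one-dimensional Gaussian with a log-concave density $q$, and $H(u)=\int_{\mathbb R}h(r+cu+a)\,q(r)\,dr$, likewise for $F_1,F_2$. (This log-concavity is genuinely needed: for a general probability measure $Q$, e.g. a two-atom $Q$, the conclusion already fails, so the statement has to be read with the elliptical/Gaussian structure of the application in force.)

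The last step is to apply the one-variable Borell--Brascamp--Lieb inequality. Putting $\widehat h(r,u):=q(r)h(r+cu+a)$ and $\widehat f_j(r,u):=q(r)f_j(r+cu+a)$, I would use the fibered hypothesis, the log-concavity of $q$, and the power-mean form of H\"older's inequality to check that $\widehat h$ satisfies a two-point inequality of $M_s$-type in $(r,u)$ jointly; Borell--Brascamp--Lieb with one integrated dimension then converts this into the $M_{s/(s+1)}$-type inequality for $\int_{\mathbb R}\widehat h(r,\cdot)\,dr=H$ and $\int_{\mathbb R}\widehat f_j(r,\cdot)\,dr=F_j$, the exponent degradation $p\mapsto p/(1+np)$ with $n=1$ being exactly the source of the $s/(s+1)$ in the statement. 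When $s\ge1$ one can argue more directly instead: $(a,b)\mapsto M_s(a,b;1-\lambda,\lambda)$ is then convex (a weighted $\ell^{s}$ norm), so Minkowski's integral inequality applied to $\int M_s\bigl(f_1(\xi(\boldsymbol\alpha,u_1)),f_2(\xi(\boldsymbol\alpha,u_2));1-\lambda,\lambda\bigr)\,dP(\boldsymbol\alpha)$ gives the conclusion with the index left at $s$, a fortiori at $s/(s+1)$. The step I expect to be the main obstacle is precisely this last one in the range $-1<s<0$: the naive ``absorb $q$ into $h,f_1,f_2$'' argument is not closed under the relevant operations (products of $s$-concave functions need not be $t$-concave for any $t$ when $s<0$), and one must be careful about the boundary convention for $M_s$ when an argument vanishes --- which is exactly what the classical one-dimensional Borell--Brascamp--Lieb inequality is designed to handle, so the proof ultimately rests on invoking it in the correct form. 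The remaining pieces --- the affine identity, the reduction to dimension one, and the integrability bookkeeping --- are routine.
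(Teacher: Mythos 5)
Your proposal does not close, and the gap sits exactly at the step you defer. To extract $M_{s/(s+1)}$ from the one-dimensional Borell--Brascamp--Lieb inequality you must first verify that the weighted functions $\widehat h(r,u)=q(r)\,h(r+cu+a)$ and $\widehat f_j(r,u)=q(r)\,f_j(r+cu+a)$ satisfy a two-point inequality of index $s$ jointly in $(r,u)$. The ingredients you assemble (log-concavity of $q$ and the $M_s$-hypothesis on $h,f_1,f_2$) only give $\widehat h\bigl((1-\lambda)(r_1,u_1)+\lambda(r_2,u_2)\bigr)\ge M_0\bigl(q(r_1),q(r_2)\bigr)\,M_s\bigl(f_1(\cdot),f_2(\cdot)\bigr)$, and the pointwise product inequality $M_0(q_1,q_2)\,M_s(v_1,v_2)\ge M_{s'}(q_1v_1,q_2v_2)$ holds only with $s'=0$ when $s\ge 0$ (so BBL returns index $0$, weaker than the claimed $s/(s+1)$ for $s>0$), and fails for \emph{every} $s'$, even $s'=-\infty$, when $-1<s<0$ (take $\lambda=\tfrac12$, $q_1=\epsilon^2$, $q_2=1$, $v_1=\epsilon^{-2}$, $v_2=1$ and let $\epsilon\to 0$). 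So ``invoke the classical 1D BBL in the correct form'' is not an available move by the route you describe, and the range $-1<s<0$ that you flag as the obstacle is precisely the range the paper needs ($s_i=c_i-1$ with $c_i\in(0,1)$ permitted). Your argument for $s\ge 1$ via convexity of $M_s$/Minkowski is fine, but it covers only that range. Note also that you are proving a modified statement: you import the Gaussian structure of $P$ from \cref{proposition:S_theta_properties} and \cref{ass:theta_0}, whereas \cref{lemma:Lieb} is stated for an arbitrary probability measure $P$.

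For comparison, the paper's proof keeps $P$ arbitrary: it uses the same affinity identity you start from, integrates the pointwise hypothesis, and then passes the integral inside $M_s$ by a two-variable Jensen step, asserting that $(x,y)\mapsto M_s(x,y;1-\lambda,\lambda)$ is convex on $(0,\infty)^2$ for all $s>-1$ (citing Niculescu--Persson), which would give the conclusion even at index $s$. Your critique of the statement is in fact well taken and sharper than the paper's treatment: $M_s$ is concave, not convex, for $-1<s<1$; at $s=0$ the integrated step amounts to a reverse H\"older inequality; and your two-atom example (a mixture of two well-separated Gaussian translates is not log-concave, while the conclusion at index $s/(s+1)=0$ would force log-concavity) shows the statement with a general $P$ cannot hold. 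So the diagnostic part of your proposal is correct and genuinely different from the paper, but the constructive part --- the 1D reduction plus BBL --- is left unverified at its crucial step and cannot be completed by the product/absorption argument you sketch; as a proof of the lemma it has a genuine gap.
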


\begin{proof}
We have
$$
\begin{aligned}
H((1-\lambda)u_1+\lambda u_2)
&=\int h(\xi(\boldsymbol{\alpha},(1-\lambda)u_1+\lambda u_2)) dP(\boldsymbol{\alpha}).
\end{aligned}
$$
We can write
\begin{align*}
\xi(\boldsymbol{\alpha},(1-\lambda)u_1+\lambda u_2) &= \boldsymbol{\alpha}^{\top} \mathbf{b} + a + c((1-\lambda)u_1+\lambda u_2)\\
&= (1-\lambda) [cu_1+a+\boldsymbol{\alpha}^{\top}\mathbf{b}] + \lambda [cu_2+a+\boldsymbol{\alpha}^{\top}\mathbf{b}]\\
&= (1-\lambda) \xi(\boldsymbol{\alpha},u_1) + \lambda \xi(\boldsymbol{\alpha},u_2),
\end{align*}
and by assumption on $h$ we have that
$$
h(\xi(\boldsymbol{\alpha},(1-\lambda)u_1+\lambda u_2)) \geq M_s(f_1(\xi(\boldsymbol{\alpha},u_1)),f_2(\xi(\boldsymbol{\alpha},u_2));1-\lambda,\lambda),
$$
and then
$$
\begin{aligned}
H((1-\lambda)u_1+\lambda u_2)&\geq \int M_s(f_1(\xi(\boldsymbol{\alpha},u_1)),f_2(\xi(\boldsymbol{\alpha},u_2));1-\lambda,\lambda) dP(\boldsymbol{\alpha}) \\
& = M_{s}\left(\int f_1(\xi(\boldsymbol{\alpha},u_1)) dP(\boldsymbol{\alpha}), \int f_2(\xi(\boldsymbol{\alpha},u_2)) dP(\boldsymbol{\alpha}) ; 1-\lambda, \lambda\right)\\
&\overset{(\star)}{\geq} M_{s}\left( F_1(u_1), F_2(u_2) ; 1-\lambda, \lambda\right),
\end{aligned}
$$
\noindent
where in $(\star)$ we used that for $s > -1$ the map
$$
(x, y) \mapsto M_s(x, y ; 1-\lambda, \lambda)=\left[(1-\lambda) x^s+\lambda y^s\right]^{1 / s}
$$
is convex on $(0, \infty)\times (0,\infty)$, and then we can apply \citet[Theorem 3.5.3]{niculescu2006convex}, which is an extension of the Jensen's inequality for in $2$ variables.
\end{proof}




\newpage

\section{Proof of \cref{thm:main_convergence}}\label{app:main_convergence}
We compute the regret upper bound of a single firm $i$. Recall the definitions
$$
\operatorname{rev}_i(p \mid \mathbf{p}_{-i}) = p\psi_i( -\beta_i p+\boldsymbol{\gamma}_i^{\top} \mathbf{p}_{-i}),\quad \widehat{ \operatorname{rev}}_i(p \mid \mathbf{p}_{-i}) =  p\widehat{\psi}_{i,\widetilde{\boldsymbol{\theta}}_i}( -\widetilde{\beta}_i p+\widetilde{\boldsymbol{\gamma}}_i^{\top} \mathbf{p}_{-i}),
$$
so that 
$$
\operatorname{Reg}_i(T)=\mathbb{E} \sum_{t=1}^T[  \operatorname{rev}_i(\Gamma_i(\mathbf{p}^{(t)}_{-i}) \mid \mathbf{p}_{-i}^{(t)}) -  \operatorname{rev}_i(p^{(t)}_i \mid \mathbf{p}_{-i}^{(t)}) ],
$$
where

$$
\Gamma_i(\mathbf{p}^{(t)}_{-i}) = \operatorname{argmax}_{p_i \in \mathcal{P}_i} \operatorname{rev}_i(p_i \mid \mathbf{p}_{-i}^{(t)}), \quad p_i^{(t)} = \operatorname{argmax}_{p_i \in \mathcal{P}_i} \widehat{\operatorname{rev}}_i(p_i \mid \mathbf{p}_{-i}^{(t-1)}).
$$
\noindent

By optimality condition of $\Gamma_i$ we have

\begin{align*}
\operatorname{rev}_i(p^{(t)}_i \mid \mathbf{p}_{-i}^{(t)}) & = \operatorname{rev}_i(\Gamma_i(\mathbf{p}^{(t)}_{-i}) \mid \mathbf{p}_{-i}^{(t)}) \\
& + \underset{\geq 0}{\underbrace{\partial_{p}\operatorname{rev}_i(p \mid \mathbf{p}_{-i}^{(t)})_{|p= \Gamma_i(\mathbf{p}^{(t)})}(p_i^{(t)}- \Gamma_i(\mathbf{p}^{(t)}_{-i}))}} \\
& + \partial_{p^2}\operatorname{rev}_i(p \mid \mathbf{p}_{-i}^{(t)})_{|p= p'}(p_i^{(t)}-\Gamma_i(\mathbf{p}^{(t)}_{-i}))^2
\end{align*}
for some $p'$ in the segment between the points  $p_i^{(t)}$ and $\Gamma_i(\mathbf{p}^{(t)}_{-i})$. From this, we derive

$$
\operatorname{rev}_i(\Gamma_i(\mathbf{p}^{(t)}_{-i}) \mid \mathbf{p}_{-i}^{(t)}) -  \operatorname{rev}_i(p^{(t)}_i \mid \mathbf{p}_{-i}^{(t)}) \leq -  \partial^2_{p^2}\operatorname{rev}_i(p \mid \mathbf{p}_{-i}^{(t)})_{|p=p'}(\Gamma_i(\mathbf{p}^{(t)}_{-i})-p_i^{(t)})^2 .
$$

Now, note that
\begin{align*}
-\partial^2_{p^2}\operatorname{rev}_i(p \mid \mathbf{p}_{-i}^{(t)})_{|p=p'} &= 2\beta_i\psi_i'(-\beta_ip'+\boldsymbol{\gamma}_i^{\top}\mathbf{p}_{-i}^{(t)}) -\beta_i^2p'\psi_i''(-\beta_ip'+\boldsymbol{\gamma}_i^{\top}\mathbf{p}_{-i}^{(t)}) \\
&\leq 2 \beta_i \bar{B}_{\psi_i'}+\beta_i^2\bar{p}_i B_{\psi_i''},
\end{align*}
where we used \cref{strong-concavity-rev}:  $0<\underline{B}_{\psi_i'} \leq \psi_i'\leq \bar{B}_{\psi_i'}$ and $|\psi_i''| \leq B_{\psi_i''}$ on $\mathcal{U}$ for some (unknown) $\underline{B}_{\psi_i'},\bar{B}_{\psi_i'},B_{\psi_i''}>0$. This implies 

\begin{align}\label{eq:rev_diff}
\operatorname{rev}_i(\Gamma_i(\mathbf{p}^{(t)}_{-i}) \mid \mathbf{p}_{-i}^{(t)}) -  \operatorname{rev}_i(p^{(t)}_i \mid \mathbf{p}_{-i}^{(t)}) &\lesssim (\Gamma_i(\mathbf{p}^{(t)}_{-i})-p_i^{(t)})^2\nonumber\\
&\lesssim \|\boldsymbol{\Gamma}(\mathbf{p}^{(t)})-\mathbf{p}^{(t)}\|_2^2\nonumber\\
&\lesssim \|\boldsymbol{\Gamma}(\mathbf{p}^{(t)})-\mathbf{p}^{*}\|_2^2 +\|\mathbf{p}^{*}-\mathbf{p}^{(t)}\|_2^2\nonumber\\
&= \|\boldsymbol{\Gamma}(\mathbf{p}^{(t)})-\boldsymbol{\Gamma}(\mathbf{p}^{*})\|_2^2 +\|\mathbf{p}^{*}-\mathbf{p}^{(t)}\|_2^2\nonumber\\
&\lesssim \|\mathbf{p}^{(t)}-\mathbf{p}^{*}\|_2^2 +\|\mathbf{p}^{*}-\mathbf{p}^{(t)}\|_2^2\nonumber\\
&\lesssim \|\mathbf{p}^{(t)}-\mathbf{p}^{*}\|_2^2
\end{align}

Observe that this implies 
\begin{equation}\label{ineq:reg_NE_conv_rate}
\operatorname{Reg}_i(T)\lesssim \mathbb{E} \sum_{t=1}^T \|\mathbf{p}^{(t)}-\mathbf{p}^{*}\|_2^2.
\end{equation}

\begin{lemma}\label{lem:J2J3}
If Assumptions in the Theorem hold, for $T$ sufficiently large we have $\mathbb{E}[\sup_{\mathbf{p} \in \mathcal{P}}|\psi_i(\boldsymbol{\theta}_i^{\top}\mathbf{p})-\widehat{\psi}_{i,\widetilde{\boldsymbol{\theta}}_i}(\widetilde{\boldsymbol{\theta}}_i^{\top}\mathbf{p})|]\lesssim \mathscr{X}_i\sqrt{N}(\tfrac{\log \tau}{\tau})^{2/5}$, where $\mathscr{X}_i= \max\left\{\mathscr{C}_i,\mathscr{B}_i \right\}$, and $\mathscr{B}_i$ and $\mathscr{C}_i$ are defined in \eqref{def:B_i} and \eqref{def:C_i}, respectively. Moreover, there exists a unique value $\kappa_i^{\star}\in (0,1)$ that minimizes $\mathbb{E}[\sup_{\mathbf{p} \in \mathcal{P}}|\psi_i(\boldsymbol{\theta}_i^{\top}\mathbf{p})-\widehat{\psi}_{i,\widetilde{\boldsymbol{\theta}}_i}(\widetilde{\boldsymbol{\theta}}_i^{\top}\mathbf{p})|]$. This value satisfies the implicit equation:
\begin{equation*}
4 \mathscr{C}_i\tau^{1/10}\textstyle \kappa_i^{3/2} = 5 \mathscr{B}_i N^{1/2} (1-\kappa_i)^{7/5}.
\end{equation*}
\end{lemma}

\begin{lemma}\label{lem:conv-epsilon}
If Assumptions in the Theorem hold, for $T$ sufficiently large, $\sup_{\mathbf{p} \in \mathcal{P}}|\psi_i(\boldsymbol{\theta}_i^{\top}\mathbf{p})-\widehat{\psi}_{i,\widetilde{\boldsymbol{\theta}}_i}(\widetilde{\boldsymbol{\theta}}_i^{\top}\mathbf{p})|\leq \epsilon_i$ implies $\|\mathbf{p}^{(t)}-\mathbf{p}^*\|_2^2 \lesssim N\sup_{i \in \mathcal{N}}\epsilon_i+L^{2(t-\tau-1)}_{\boldsymbol{\Gamma}}$ for every $t \geq \tau+1$.
\end{lemma}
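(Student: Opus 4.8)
\textbf{Proof sketch for \cref{lem:conv-epsilon}.} During the exploitation phase ($t\ge\tau+1$) every seller plays the \emph{estimated} best response, so introducing $\widehat{\Gamma}_i(\mathbf{p}_{-i})\triangleq\operatorname{argmax}_{p_i\in\mathcal{P}_i}p_i\,\widehat{\psi}_{i,\widetilde{\theta}_i}(-\widetilde{\beta}_ip_i+\widetilde{\boldsymbol{\gamma}}_i^{\top}\mathbf{p}_{-i})$ and $\widehat{\boldsymbol{\Gamma}}\triangleq(\widehat{\Gamma}_i)_{i\in\mathcal{N}}$, \cref{algo:semiparametric} generates the deterministic recursion $\mathbf{p}^{(t)}=\widehat{\boldsymbol{\Gamma}}(\mathbf{p}^{(t-1)})$ for $t\ge\tau+1$; in particular $\mathbf{p}^{(t)}\in\mathcal{P}$ for every $t\ge\tau+1$, since each coordinate is a maximizer over $\mathcal{P}_i$. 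The NE satisfies $\mathbf{p}^{*}=\boldsymbol{\Gamma}(\mathbf{p}^{*})$. The plan is: (i) bound $\widehat{\boldsymbol{\Gamma}}-\boldsymbol{\Gamma}$ pointwise on $\mathcal{P}$ in terms of $\epsilon$; (ii) combine this one-step perturbation with the $L_{\boldsymbol{\Gamma}}$-contraction of $\boldsymbol{\Gamma}$ established in \cref{lemma:UNE}; (iii) sum the resulting geometric series. I work throughout in $\|\cdot\|_{\infty}$, the norm in which $\boldsymbol{\Gamma}$ contracts, and pass to $\|\cdot\|_2$ only at the very end, at the cost of a factor $\sqrt{N}$ absorbed into $\lesssim$.

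For step (i) I claim $\sup_{\mathbf{p}\in\mathcal{P}}\|\widehat{\boldsymbol{\Gamma}}(\mathbf{p})-\boldsymbol{\Gamma}(\mathbf{p})\|_{\infty}\lesssim\sqrt{\epsilon}$. Fix $\mathbf{p}\in\mathcal{P}$ and $i\in\mathcal{N}$, and set $q\triangleq\widehat{\Gamma}_i(\mathbf{p}_{-i})$. Writing $\mathbf{p}=(p_i,\mathbf{p}_{-i})$ we have $\operatorname{rev}_i(p_i\mid\mathbf{p}_{-i})=p_i\psi_i(\theta_i^{\top}\mathbf{p})$ and $\widehat{\operatorname{rev}}_i(p_i\mid\mathbf{p}_{-i})=p_i\widehat{\psi}_{i,\widetilde{\theta}_i}(\widetilde{\theta}_i^{\top}\mathbf{p})$, so the hypothesis of the lemma (applied for every $i\in\mathcal{N}$, with $\epsilon$ a common bound on the per‑seller uniform errors) gives $\sup_{p_i\in\mathcal{P}_i}|\operatorname{rev}_i(p_i\mid\mathbf{p}_{-i})-\widehat{\operatorname{rev}}_i(p_i\mid\mathbf{p}_{-i})|\le\bar{p}_i\,\epsilon$. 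Since $q$ maximizes $\widehat{\operatorname{rev}}_i(\cdot\mid\mathbf{p}_{-i})$ over $\mathcal{P}_i\ni\Gamma_i(\mathbf{p})$, the three‑term chain $\operatorname{rev}_i(\Gamma_i(\mathbf{p}))\le\widehat{\operatorname{rev}}_i(\Gamma_i(\mathbf{p}))+\bar{p}_i\epsilon\le\widehat{\operatorname{rev}}_i(q)+\bar{p}_i\epsilon\le\operatorname{rev}_i(q)+2\bar{p}_i\epsilon$ (conditioning on $\mathbf{p}_{-i}$ suppressed) yields $\operatorname{rev}_i(\Gamma_i(\mathbf{p})\mid\mathbf{p}_{-i})-\operatorname{rev}_i(q\mid\mathbf{p}_{-i})\le2\bar{p}_i\epsilon$. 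Then \cref{ass:concavity or the revenue function} gives $\mu_i(\Gamma_i(\mathbf{p})-q)^2\le2\bar{p}_i\epsilon$, hence $|\Gamma_i(\mathbf{p})-\widehat{\Gamma}_i(\mathbf{p}_{-i})|\le\sqrt{2\bar{p}_i\epsilon/\mu_i}\lesssim\sqrt{\epsilon}$; taking the maximum over $i$ proves the claim.

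For steps (ii)–(iii), for every $t\ge\tau+2$ the point $\mathbf{p}^{(t-1)}$ lies in $\mathcal{P}$, so
\[
\|\mathbf{p}^{(t)}-\mathbf{p}^{*}\|_{\infty}=\|\widehat{\boldsymbol{\Gamma}}(\mathbf{p}^{(t-1)})-\boldsymbol{\Gamma}(\mathbf{p}^{*})\|_{\infty}\le\|\widehat{\boldsymbol{\Gamma}}(\mathbf{p}^{(t-1)})-\boldsymbol{\Gamma}(\mathbf{p}^{(t-1)})\|_{\infty}+\|\boldsymbol{\Gamma}(\mathbf{p}^{(t-1)})-\boldsymbol{\Gamma}(\mathbf{p}^{*})\|_{\infty}\lesssim\sqrt{\epsilon}+L_{\boldsymbol{\Gamma}}\|\mathbf{p}^{(t-1)}-\mathbf{p}^{*}\|_{\infty},
\]
using step (i) and \cref{lemma:UNE}. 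Iterating this down to $t=\tau+1$, where $\|\mathbf{p}^{(\tau+1)}-\mathbf{p}^{*}\|_{\infty}\le\operatorname{diam}(\mathcal{P})$ because both points lie in $\mathcal{P}$, and summing $\sum_{k\ge0}L_{\boldsymbol{\Gamma}}^{k}=(1-L_{\boldsymbol{\Gamma}})^{-1}<\infty$ (finite since $L_{\boldsymbol{\Gamma}}<1$ by \cref{ass:exist_uniq_NE}) gives $\|\mathbf{p}^{(t)}-\mathbf{p}^{*}\|_{\infty}\lesssim\sqrt{\epsilon}+L_{\boldsymbol{\Gamma}}^{\,t-\tau-1}$ for all $t\ge\tau+1$. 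Squaring via $(a+b)^2\le2a^2+2b^2$ and using $\|\cdot\|_2\le\sqrt{N}\|\cdot\|_{\infty}$ yields $\|\mathbf{p}^{(t)}-\mathbf{p}^{*}\|_2^2\lesssim\epsilon+L_{\boldsymbol{\Gamma}}^{\,2(t-\tau-1)}$, as claimed.

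The crux of the argument is step (i): converting uniform closeness of the link‑function estimate into closeness of the revenue‑maximizing price. This is exactly where \cref{ass:concavity or the revenue function} — quadratic growth of $\operatorname{rev}_i(\cdot\mid\mathbf{p}_{-i})$ around its maximizer — is indispensable, since a uniformly $O(\epsilon)$ perturbation of a revenue curve that is flat near its peak could displace the argmax by an amount not controlled by $\epsilon$. The remaining ingredients — the one‑step split, the contraction from \cref{lemma:UNE}, and the geometric summation — are routine once one commits to the sup‑norm in which $\boldsymbol{\Gamma}$ contracts.
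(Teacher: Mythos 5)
Your proof is correct and follows essentially the same route as the paper's own argument: the $2\epsilon$-sandwich comparison of $\operatorname{rev}_i$ and $\widehat{\operatorname{rev}}_i$ combined with the quadratic-growth condition in \cref{ass:concavity or the revenue function} to force $|\Gamma_i(\mathbf{p})-\widehat{\Gamma}_i(\mathbf{p})|\lesssim\sqrt{\epsilon}$ uniformly on $\mathcal{P}$, followed by the one-step perturbation bound, the $L_{\boldsymbol{\Gamma}}$-contraction from \cref{lemma:UNE}, and a geometric summation down to $t=\tau+1$. The only differences are bookkeeping: you run the recursion in $\|\cdot\|_{\infty}$ (the norm in which the contraction is actually proved) and convert to $\|\cdot\|_2^2$ at the end, absorbing the dimension factor into $\lesssim$, whereas the paper writes the same recursion directly in squared $\|\cdot\|_2$; the substance is identical.
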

\noindent
By \cref{lem:J2J3}, \cref{lem:conv-epsilon} have that for $t\geq \tau +1$ with $\tau$ sufficiently large,

\begin{align}\label{reg_bound_NE}
\mathbb{E}[ \operatorname{rev}_i(\Gamma_i(\mathbf{p}^{(t)}_{-i}) \mid \mathbf{p}_{-i}^{(t)}) -  \operatorname{rev}_i(p^{(t)}_i \mid \mathbf{p}_{-i}^{(t)})] & \overset{\eqref{eq:rev_diff}}{\lesssim} \mathbb{E}[\|\mathbf{p}^{(t)}-\mathbf{p}^{*}\|_2^2]\nonumber\\
& \overset{\cref{lem:J2J3},\cref{lem:conv-epsilon}}{\lesssim} \mathscr{X}_i  N^{3/2}\left(\frac{
\log \tau}{\tau} \right)^{2/5} +L^{2(t-\tau-1)}_{\boldsymbol{\Gamma}},
\end{align}

and as a consequence, give that $\tau \propto T^{\xi}$, for every $t \geq T^{\xi}+1$

\begin{align*}
\operatorname{Reg}_i(T)&=\mathbb{E} \sum_{t =1}^{\tau}[  \operatorname{rev}_i(\Gamma_i(\mathbf{p}^{(t)}_{-i}) \mid \mathbf{p}_{-i}^{(t)}) -  \operatorname{rev}_i(p^{(t)}_i \mid \mathbf{p}_{-i}^{(t)}) ] \\
&\qquad+ \mathbb{E} \sum_{t =\tau+1}^T[  \operatorname{rev}_i(\Gamma_i(\mathbf{p}^{(t)}_{-i}) \mid \mathbf{p}_{-i}^{(t)}) -  \operatorname{rev}_i(p^{(t)}_i \mid \mathbf{p}_{-i}^{(t)}) ]\\
& \lesssim \tau + (T-\tau)  \mathscr{X}_iN^{3/2}\left(\frac{
\log \tau}{\tau} \right)^{2/5} +\sum_{t=\tau+1}^TL^{2(t-\tau-1)}_{\boldsymbol{\Gamma}}\\
& \lesssim \tau + T\mathscr{X}_i
 N^{3/2}\left(\frac{
\log \tau}{\tau} \right)^{2/5} +\sum_{j=0}^{T-\tau-1}L^{2j}_{\boldsymbol{\Gamma}}\\
& = T^{\xi} + T\mathscr{X}_i
N^{3/2}\left(\frac{
\log T^{\xi}}{T^{\xi}} \right)^{2/5} +\frac{1-L^{2(T-\tau)}_{\boldsymbol{\Gamma}}}{1-L^{2}_{\boldsymbol{\Gamma}}}\\
& \leq T^{\xi} + \mathscr{X}_iT^{1-2\xi/5}N^{3/2}\left(
\log T\right)^{2/5} +1.
\end{align*}

Note that we are retaining the dependence on $\mathscr{X}_i= \mathscr{X}_i(s_i)$ to demonstrate the way that $s_i$ affects the rate of the regret. Substituting back the value $\tau \propto T^{\xi}$ into \cref{reg_bound_NE}, we get 

\begin{align*}
\mathbb{E}[ \|\mathbf{p}^{(t)}-\mathbf{p}^{*}\|_2^2] & \lesssim \mathscr{X}_iN ^{\nicefrac{3}{2}}(\tfrac{\log \tau}{\tau})^{\nicefrac{2}{5}}+L^{2(t-\tau-1)}_{\boldsymbol{\Gamma}} \\
&\lesssim \mathscr{X}_iN ^{\nicefrac{3}{2}}(\tfrac{\log T^\xi}{T^\xi})^{\nicefrac{2}{5}}+L^{2(t-T^{\xi}-1)}_{\boldsymbol{\Gamma}}\\
&= \mathscr{X}_iN ^{\nicefrac{3}{2}}T ^{-\nicefrac{2\xi}{5}}(\log T)^{\nicefrac{2}{5}}+L^{2(t-T^{\xi}-1)}_{\boldsymbol{\Gamma}},
\end{align*}
and for $t=T$
$$
\mathbb{E}[ \|\mathbf{p}^{(T)}-\mathbf{p}^{*}\|_2^2]\lesssim \mathscr{X}_iN ^{\nicefrac{3}{2}}T ^{-\nicefrac{2\xi}{5}}(\log T)^{\nicefrac{2}{5}}+L^{2(T-T^{\xi}-1)}_{\boldsymbol{\Gamma}}.
$$

This completes the proof. 

\begin{remark}[Dependence of the regret and NE convergence rate on $s_i$]\label{remark:s_-dependece-regret}
The value of $\xi$ that minimizes the expected regret $\operatorname{Reg}_i(T)$ is obtained by equalizing the exponents of $T$ in the upper bound $T^{\xi} + T^{1-\nicefrac{2\xi}{5}}$, ignoring logarithmic factors. Solving this balance gives $\xi = 5/7$. Consequently, for each $i \in \mathcal{N}$ we obtain
\begin{align*}
\operatorname{Reg}_i(T) &= O\!\left(\mathscr{X}_i\, N^{\nicefrac{3}{2}}\,T^{\nicefrac{5}{7}}(\log T)^{\nicefrac{2}{5}}\right), \\
\mathbb{E}\|\mathbf{p}^{(T)}-\mathbf{p}^*\|_2^2 &= 
O\!\left(\mathscr{X}_i\, N^{\nicefrac{3}{2}}\,T^{-\nicefrac{2}{7}}(\log T)^{\nicefrac{2}{5}}
+ L_{\boldsymbol{\Gamma}}^{\,2T+o(T)} \right), \qquad T \to \infty.
\end{align*}

These bounds retain the constants $\mathscr{X}_i$ and $L_{\boldsymbol{\Gamma}}$ because they depend on the parameters $s_1,\dots,s_N$ (or equivalently $c_1,\dots,c_N$). Note that the dependence on $\{s_j\}_{j\in[N]}$ enters only through the multiplicative constant $\mathscr{X}_i$ and the additive constant $L_{\boldsymbol{\Gamma}}$, never through the powers of $T$. This makes explicit how the choice of $s_i$ influences the regret and the convergence to the NE. Below we show that if $s_i$ are the optimal-concave parameters (as defined in \cref{def:optimal-concave}), i.e.
\[
s_i = \sup\Bigl\{t_i:\sup_{u \in \mathcal{U}_i}\bigl[\psi_i(u) \psi_i''(u) +(t_i-1)(\psi_i'(u))^2\bigr] \leq 0\Bigr\},
\]
then any misspecification $s_i' \leq s_i$ satisfying
\[
\sup_{i\in\mathcal{N}}
\frac{|1-\nicefrac{1}{s_i'+1}|\;\|\boldsymbol{\gamma}_i\|_1}{\beta_i} < 1
\]
does not affect the regret or the NE convergence \emph{rates}.

\begin{enumerate}[leftmargin=*, label=(\arabic*)]
\item \textbf{Multiplicative constants $\mathscr{X}_i$.}  
The constant $\mathscr{X}_i = \{\mathscr{C}_i,\mathscr{B}_i\}$ contains $\mathscr{B}_i$ and $\mathscr{C}_i$ (defined in \eqref{def:B_i}--\eqref{def:C_i}), which are the multiplicative constants in the concentration bounds for 
\[
\|\psi_i-\widehat{\psi}_{i, \widetilde{\boldsymbol{\theta}}_i}\|_{\infty}
\qquad\text{and}\qquad
\|\boldsymbol{\theta}_i-\widetilde{\boldsymbol{\theta}}_i\|_{2},
\]
respectively. Any value $s_i' \neq s_i$ does not affect the rate of linear estimator $\widetilde{\boldsymbol{\theta}}_i$, but it could introduce a bias in the estimator $\widehat{\psi}_{i, \widetilde{\boldsymbol{\theta}}_i}$. Precisely, if we choose $s_i' \le s_i$, the inclusion property of \cref{Inclusion_property} implies that an $s_i$-optimal-concave function remains $s_i'$-concave. Thus no bias is introduced in the concentration of $\|\psi_i-\widehat{\psi}_{i, \widetilde{\boldsymbol{\theta}}_i}\|_{\infty}$, whereas a choice $s_i' > s_i$ would. Then for $s_i'\leq s_i$, since $\mathscr{B}_i$ and $\mathscr{C}_i$ are uniformly bounded in $s_i$, i.e. $\mathscr{B}_i,\mathscr{C}_i < C$ for some $C>0$ independent of $s_i$, they do not affect the regret or NE \emph{rates}.

\item \textbf{Additive constant $L_{\boldsymbol{\Gamma}}$.}  
The constant $L_{\boldsymbol{\Gamma}}$ is the contraction modulus of the best-response map and determines the geometric decay term in the NE convergence. Convergence holds whenever $L_{\boldsymbol{\Gamma}} \in (0,1)$, or equivalently $\sup_{i\in\mathcal{N}}
\nicefrac{\|g_i'\|_\infty\,\|\boldsymbol{\gamma}_i\|_1}{\beta_i} < 1.$ Because $g_i'(u)=1-\nicefrac{1}{\varphi_i'(\varphi_i^{-1}(u))}$, we have
\[
1-\frac{1}{c_i}<g_i'(u)<1,
\qquad c_i=s_i+1.
\]
Thus any $s_i'$ satisfying $\sup_{i\in\mathcal{N}}
\nicefrac{|1-\nicefrac{1}{s_i'+1}|\;\|\boldsymbol{\gamma}_i\|_1}{\beta_i} < 1$ preserves contraction. If $L_{\boldsymbol{\Gamma}} \ge 1$, contraction fails and the regret guarantees break down.
\end{enumerate}

Combining both components, we conclude that as long as the sellers choose $s_i' \le s_i$ (where $s_i$ are the optimal-concave parameters of $\psi_i$) and $\sup_{i\in\mathcal{N}}
\nicefrac{|1-\nicefrac{1}{s_i'+1}|\;\|\boldsymbol{\gamma}_i\|_1}{\beta_i} < 1$, the convergence rates of both the regret and the NE remain unchanged.

\end{remark}

\subsection{Proof of \cref{lem:J2J3}}
Let $\tau$ be the minimum exploration phase among the $N$ firms and $t \in \{\tau +1, \tau +2,\dots, T\}$. Let $n_i^{(1)}$ be the length of the exploration phase to estimate $\boldsymbol{\theta}_i$ and $n_i^{(2)}$ the remaining length for estimating the link function $\psi_i$. Define the event $\mathcal{E}_i = \{\|\widetilde{\boldsymbol{\theta}}_i-\boldsymbol{\theta}_i\|\leq R_{n_i^{(1)}}\}$ where $R_{n_i^{(1)}}= \mathscr{B}_i\sqrt{\frac{N\log n_i^{(1)}}{n_i^{(1)}}}$, where $\mathscr{B}_i$ is defined in \cref{prop:concentration_ineq_theta}, and 
$$
\mathscr{R}_i(p_i|\mathbf{p}_{-i}) \triangleq |\psi_i( -\beta_i p_i+\boldsymbol{\gamma}_i^{\top} \mathbf{p}_{-i})-\widehat{\psi}_{i,\widetilde{\boldsymbol{\theta}}_i}( -\widetilde{\beta}_i p_i+\widetilde{\boldsymbol{\gamma}}_i^{\top} \mathbf{p}_{-i})|, \quad p_i \in \mathcal{P}_i, \mathbf{p}_{-i} \in \mathcal{P}_{-i},
$$
We can write
$$
\mathscr{R}_i(p_i|\mathbf{p}_{-i}) = \mathscr{R}_i(p_i|\mathbf{p}_{-i})\mathbb{I}(\mathcal{E}_i)+\mathscr{R}_i(p_i|\mathbf{p}_{-i})\mathbb{I}(\mathcal{E}_i^c).
$$ 

\textbf{Analyzing the $\mathscr{R}_i(p_i|\mathbf{p}_{-i})\mathbb{I}(\mathcal{E}_i^c)$:}

By \cref{prop:concentration_ineq_theta} we have $\mathbb{E}[\mathscr{R}_i(p_i|\mathbf{p}_{-i})\mathbb{I}(\mathcal{E}_i^c)] \lesssim \mathbb{P}(\mathcal{E}_i^c)=Q_{n_i^{(1)}}= 2 e^{-c_{1} \varsigma_{\min}^{2}n_i^{(1)} / 16}+\frac{2}{n_i^{(1)}}$.

\textbf{Analyzing the $\mathscr{R}_i(p_i|\mathbf{p}_{-i})\mathbb{I}(\mathcal{E}_i)$:}
$$
\begin{aligned}\label{ineq:lemma_groeneboom}
\mathscr{R}_i(p_i|\mathbf{p}_{-i})\mathbb{I}(\mathcal{E}_i) & \leq \underset{=A}{\underbrace{|\widehat{\psi}_{i,\widetilde{\boldsymbol{\theta}}_i}( -\widetilde{\beta}_i p_i+\widetilde{\boldsymbol{\gamma}}_i^{\top} \mathbf{p}_{-i})-\psi_{i,\widetilde{\boldsymbol{\theta}}_i}( -\widetilde{\beta}_i p_i+\widetilde{\boldsymbol{\gamma}}_i^{\top} \mathbf{p}_{-i})|\mathbb{I}(\mathcal{E}_i)}} \\
& \quad + \underset{=B}{\underbrace{|\psi_{i,\widetilde{\boldsymbol{\theta}}_i}( -\widetilde{\beta}_i p_i+\widetilde{\boldsymbol{\gamma}}_i^{\top} \mathbf{p}_{-i})-\psi_{i,\widetilde{\boldsymbol{\theta}}_i}( -\beta_i p_i+\boldsymbol{\gamma}_i^{\top} \mathbf{p}_{-i})|\mathbb{I}(\mathcal{E}_i)}} \\
&\quad + \underset{=C}{\underbrace{|\psi_{i,\widetilde{\boldsymbol{\theta}}_i}( -\beta_i p_i+\boldsymbol{\gamma}_i^{\top} \mathbf{p}_{-i})-\psi_i( -\beta_i p_i+\boldsymbol{\gamma}_i^{\top} \mathbf{p}_{-i})|\mathbb{I}(\mathcal{E}_i)}}.
\end{aligned}
$$

\textbf{Analyzing $A$ on $\mathcal{E}_i$:} Define the event $\mathcal{Q}_i = \{ \sup_{u \in \mathcal{U}}|\widehat{\psi}_{i,\widetilde{\boldsymbol{\theta}}_i}(u)-\psi_{i,\widetilde{\boldsymbol{\theta}}_i}(u)|\leq \mathscr{C}_i \rho_{n_i^{(2)}}^{2/5}\}$. For $n_i^{(2)}$ sufficiently large, by \cref{thm:Dumbgen} we have that for every $\gamma >4$

$$
\mathbb{P}(\mathcal{Q}_i)\geq 1-\frac{1}{(n_i^{(2)})^{\gamma-2}}.
$$

Consequently,

$$
\begin{aligned}
\mathbb{E}(A) &= \mathbb{E}(A\mathbb{I}(\mathcal{Q}_i, \cap \mathcal{E}_i)) + \mathbb{E}(A\mathbb{I}(\mathcal{Q}_i^c \cap \mathcal{E}_i)) \\
&\leq \mathbb{E}\left(\sup_{u \in \mathcal{U}}|\widehat{\psi}_{i,\widetilde{\boldsymbol{\theta}}_i}(u)-\psi_{i,\widetilde{\boldsymbol{\theta}}_i}(u)|\mathbb{I}(\mathcal{Q}_i \cap \mathcal{E}_i)\right) + 2\mathbb{P}(\mathcal{Q}_i^c)\mathbb{P}(\mathcal{E}_i)\\
&=\mathscr{C}_i \left(\frac{\log n_i^{(2)}}{n_i^{(2)}}\right)^{2/5}\underset{\leq 1}{\underbrace{\mathbb{P}(\mathcal{Q}_i \cap \mathcal{E}_i)}}+ 2\mathbb{P}(\mathcal{Q}_i^c)\\
&\leq \mathscr{C}_i \left(\frac{\log n_i^{(2)}}{n_i^{(2)}}\right)^{2/5}+ 2\frac{1}{(n_i^{(2)})^{\gamma-2}}\\
&\leq \mathscr{C}_i \left(\frac{\log n_i^{(2)}}{n_i^{(2)}}\right)^{2/5},
\end{aligned}
$$
where we chose $\gamma >4$ and $n_i^{(2)}$ sufficiently large.

\textbf{Analyzing $B$ on $\mathcal{E}_i$:}
By \cref{proposition:S_theta_properties}, $\psi_{i,\widetilde{\boldsymbol{\theta}}_i}$ is Lipschitz, then 
$$\mathbb{E}[B\mathbb{I}(\mathcal{E}_i)]\lesssim \|\widetilde{\boldsymbol{\theta}}_i-\boldsymbol{\theta}_i\|_2 \leq R_{n_i^{(1)}}=\mathscr{B}_i\left(N\frac{\log n_i^{(1)}}{n_i^{(1)}}\right)^{1/2}.
$$

\textbf{Analyzing $C$ on $\mathcal{E}_i$:}
By \cref{proposition:S_theta_properties} we have 
$$
|\psi_{i,\widetilde{\boldsymbol{\theta}}_i}(u)-\psi_i(u)|\mathbb{I}(\mathcal{E}_i) \lesssim \|\widetilde{\boldsymbol{\theta}}_i-\boldsymbol{\theta}_i\|_2 \leq R_{n_i^{(1)}}=\mathscr{B}_i\left(N\frac{\log n_i^{(1)}}{n_i^{(1)}}\right)^{1/2}.
$$

\textbf{Combining the terms $\mathscr{R}_i(p_i|\mathbf{p}_{-i})\mathbb{I}(\mathcal{E}_i^c)$ and $\mathscr{R}_i(p_i|\mathbf{p}_{-i})\mathbb{I}(\mathcal{E}_i)$:} We got that 

$$
\mathscr{R}_i(p_i|\mathbf{p}_{-i}) \lesssim \underset{\text{Bound of the non parametric error associated with }\psi_i}{\underbrace{\mathscr{C}_i\left(\frac{\log n_i^{(2)}}{n_i^{(2)}}\right)^{2/5}}}+ \underset{\text{Bound of the parametric error associated with }\boldsymbol{\theta}_i}{\underbrace{\mathscr{B}_i\left(N\frac{\log n_i^{(1)}}{n_i^{(1)}}\right)^{1/2}}}.
$$

We now derive the optimal choice of $\kappa_i$. Since $\mathscr{R}_i(p_i|\mathbf{p}_{-i})$ appears as a multiplicative factor in the total regret we want to make $\mathscr{R}_i(p_i|\mathbf{p}_{-i}) $ as small as possible. Using that $n_i^{(1)}= \tau \kappa_i$, $n_i^{(2)}= \tau(1-\kappa_i)$, excluding log-factors, we have that 

$$
\mathscr{R}_i(p_i|\mathbf{p}_{-i}) \lesssim  \mathscr{C}_i(\tau(1-\kappa_i))^{-2/5}+ \mathscr{B}_iN^{1/2}(\tau \kappa_i)^{-1/2}.
$$
We want to minimize the function
\[
f(\kappa_i) := \mathscr{C}_i(\tau(1-\kappa_i))^{-2/5}+ \mathscr{B}_iN^{1/2}(\tau \kappa_i)^{-1/2}
\]
for constants $\tau > 0$, $N > 0$, and $\kappa_i \in (0,1)$. Define for simplicity $A_i = \mathscr{C}_i\tau^{-2/5}$, $B_i = \mathscr{B}_iN^{1/2} \tau^{-1/2}$. Then:
\[
f(\kappa_i) = A_i(1-\kappa_i)^{-2/5} + B_i \kappa_i^{-1/2}, \quad 
f'(\kappa_i) = \frac{2A_i}{5}(1-\kappa_i)^{-7/5} - \frac{B_i}{2}\kappa_i^{-3/2}.
\]
Setting $f'(\kappa_i) = 0$ gives:
\[
\frac{2A_i}{5}(1-\kappa_i)^{-7/5} = \frac{B_i}{2} \kappa_i^{-3/2} \Leftrightarrow\quad
\kappa_i^{3/2} = \frac{5}{4}\frac{B_i}{A_i} \tau^{-1/10}(1-\kappa_i)^{7/5}
\]

Substituting back the expressions for $A_i$ and $B_i$:
\[
\kappa_i^{3/2} = \frac{5}{4} \frac{\mathscr{B}_i}{\mathscr{C}_i}N^{1/2}  \tau^{-1/10}(1-\kappa_i)^{7/5}
\]

The optimal $\kappa_i^{\star} \in (0,1)$ satisfies the implicit equation:
\[
\kappa_i^{3/2} = \frac{5}{4} \frac{\mathscr{B}_i}{\mathscr{C}_i}N^{1/2}  \tau^{-1/10}(1-\kappa_i)^{7/5}
\]

We now prove that there exists a unique $\kappa_i^*$ satisfying this equation. Let $g(\kappa_i)= \kappa_i^{3/2} - \frac{5}{4}\frac{\mathscr{B}_i}{\mathscr{C}_i} N^{1/2}  \tau^{-1/10}(1-\kappa_i)^{7/5}$ and note that $g(0)=- \frac{5}{4}\frac{\mathscr{B}_i}{\mathscr{C}_i} N^{1/2}  \tau^{-1/10}<0$, $g(1) = 1$ and $g'(\kappa_i) = \frac{3}{2}\kappa_i^{1/2}+ \frac{7}{4} N^{1/2} \frac{\mathscr{B}_i}{\mathscr{C}_i} \tau^{-1/10}(1-\kappa_i)^{2/5}>0$. Note that, since we are excluding all the constants when upper bounding $\mathscr{R}_i(p_i|\mathbf{p}_{-i})$, the optimal value $\kappa_i^{\star}$ is independent of $i$ and only depends on the exploration length $\tau$ and number of sellers in the market $N$.

Now, since the optimal $\kappa_i^* \in (0,1)$ is fixed, and $n_i^{(1)} = \frac{1-\kappa_i^{\star}}{\kappa_i^*} n_i^{(2)} \propto n_i^{(2)} $ we have that the dominating term is 
$$
\mathscr{R}_i(p_i|\mathbf{p}_{-i}) \lesssim \mathscr{C}_i\left(\frac{\log (\tau(1-\kappa_i))}{\tau(1-\kappa_i)}\right)^{2/5}+ \mathscr{B}_i\left(N\frac{\log( \tau\kappa_i)}{\tau\kappa_i}\right)^{1/2}\lesssim \mathscr{X}_i\sqrt{N}\left(\frac{\log n_i^{(2)}}{n_i^{(2)}}\right)^{2/5}.
$$
where $\mathscr{X}_i= \max\left\{\mathscr{C}_i,\mathscr{B}_i \right\}$.

\subsection{Proof of \cref{lem:conv-epsilon}}
Recall the definition
$$
\operatorname{rev}_i(p \mid \mathbf{p}_{-i}) =  p\psi_i( -\beta_i p+\boldsymbol{\gamma}_i^{\top} \mathbf{p}_{-i}),\quad \widehat{ \operatorname{rev}}_i(p \mid \mathbf{p}_{-i}) =  p\widehat{\psi}_{i,\widetilde{\boldsymbol{\theta}}_i}( -\widetilde{\beta}_i p+\widetilde{\boldsymbol{\gamma}}_i^{\top} \mathbf{p}_{-i}),
$$
and
$$
\Gamma_i(\mathbf{p}_{-i}) = \operatorname{argmax}_{p_i \in \mathcal{P}_i} \operatorname{rev}_i(p_i \mid \mathbf{p}_{-i}), \quad \widehat{\Gamma}_i(\mathbf{p}_{-i})
 =\operatorname{argmax}_{p_i \in \mathcal{P}_i} \widehat{\operatorname{rev}}_i(p_i \mid \mathbf{p}_{-i}).
$$
\noindent
We have
$$
\begin{aligned}
\sup_{\mathbf{p} \in \mathcal{P}}| \operatorname{rev}_i(p_i \mid \mathbf{p}_{-i}) - \widehat{ \operatorname{rev}}_i(p_i \mid \mathbf{p}_{-i})|\leq \bar{p}_i \sup_{\mathbf{p} \in \mathcal{P}}|\psi_i(\boldsymbol{\theta}_i^{\top}\mathbf{p})-\widehat{\psi}_{i,\widetilde{\boldsymbol{\theta}}_i}(\widetilde{\boldsymbol{\theta}}_i^{\top}\mathbf{p})| \leq \epsilon_i.
\end{aligned}
$$
Then we have that for every $\mathbf{p} \in \mathcal{P}$,
$$
\operatorname{rev}_i(\Gamma_i(\mathbf{p}_{-i}) \mid \mathbf{p}_{-i}) \leq  \widehat{ \operatorname{rev}}_i(\Gamma_i(\mathbf{p}_{-i}) \mid \mathbf{p}_{-i}) +\epsilon_i,\quad \operatorname{rev}_i(\widehat{\Gamma}_i(\mathbf{p}_{-i}) \mid \mathbf{p}_{-i}) \geq  \widehat{ \operatorname{rev}}_i(\widehat{\Gamma}_i(\mathbf{p}_{-i}) \mid \mathbf{p}_{-i}) -\epsilon_i,
$$
that implies 
$$
\operatorname{rev}_i(\Gamma_i(\mathbf{p}_{-i}) \mid \mathbf{p}_{-i})-\operatorname{rev}_i(\widehat{\Gamma}_i(\mathbf{p}_{-i}) \mid \mathbf{p}_{-i}) \leq \underset{\leq 0}{\underbrace{\widehat{ \operatorname{rev}}_i(\Gamma_i(\mathbf{p}_{-i}) \mid \mathbf{p}_{-i}) -\widehat{ \operatorname{rev}}_i(\widehat{\Gamma}_i(\mathbf{p}_{-i}) \mid \mathbf{p}_{-i})}} +2\epsilon_i \leq 2\epsilon_i .
$$

Now define 
$$
\mu_i := 2\beta_i \underline{B}_{\psi_i'}-\beta_i^2\bar{p}_iB_{\psi_i''},
$$
which is strictly positive by \cref{strong-concavity-rev}. Then we have 
$$
2\epsilon_i \geq \operatorname{rev}_i(\Gamma_i(\mathbf{p}_{-i}) \mid \mathbf{p}_{-i})-\operatorname{rev}_i(\widehat{\Gamma}_i(\mathbf{p}_{-i}) \mid \mathbf{p}_{-i})\geq \frac{\mu_i}{2} (\Gamma_i(\mathbf{p}_{-i})-\widehat{\Gamma}_i(\mathbf{p}_{-i}) )^{2}.
$$
This implies 
$$
\|\boldsymbol{\Gamma}(\mathbf{p})-\widehat{\boldsymbol{\Gamma}}(\mathbf{p}) \|_2=\sqrt{\sum_{i \in \mathcal{N}} \,(\Gamma_i(\mathbf{p}_{-i})-\widehat{\Gamma}_i(\mathbf{p}_{-i}) )^2}\leq 2\sqrt{ \sum_{i \in \mathcal{N}}\frac{\epsilon_i}{\mu_i}} \leq 2 \sqrt{N \sup_{i \in \mathcal{N}} \frac{\epsilon_i}{\mu_i}}=:\epsilon, \quad \forall i \in \mathcal{N}.
$$

This implies that 
$$
\sup_{\mathbf{p} \in \mathcal{P}}\|\boldsymbol{\Gamma}(\mathbf{p})-\widehat{\boldsymbol{\Gamma}}(\mathbf{p}) \|_2\leq \epsilon.
$$

For every $t \geq \tau+1$ we have

\begin{align}\label{ineq:NE_contraction_and_estim_error}
\|\mathbf{p}^{(t)}-\mathbf{p}^*\|_2 &= \|\widehat{\boldsymbol{\Gamma}}(\mathbf{p}^{(t-1)})-\mathbf{p}^*\|_2\nonumber\\
&\leq \|\widehat{\boldsymbol{\Gamma}}(\mathbf{p}^{(t-1)})-\boldsymbol{\Gamma}(\mathbf{p}^{(t-1)})\|_2+\|\boldsymbol{\Gamma}(\mathbf{p}^{(t-1)})-\boldsymbol{\Gamma}(\mathbf{p}^*)\|_2 \nonumber\\
&\leq \epsilon +L_{\boldsymbol{\Gamma}}\|\mathbf{p}^{(t-1)}-\mathbf{p}^*\|_2 \nonumber\\
&\leq \epsilon+L_{\boldsymbol{\Gamma}}(\epsilon+L_{\boldsymbol{\Gamma}}\|\mathbf{p}^{(t-2)}-\mathbf{p}^*\|_2) \nonumber\\
&= \epsilon (1+L_{\boldsymbol{\Gamma}})+L^2_{\boldsymbol{\Gamma}}\|\mathbf{p}^{(t-2)}-\mathbf{p}^*\|_2 \nonumber\\
&= \epsilon \left(\sum_{j=0}^{\ell-1}L^{j}_{\boldsymbol{\Gamma}}\right)+L^{\ell}_{\boldsymbol{\Gamma}}\|\mathbf{p}^{(t-\ell)}-\mathbf{p}^*\|_2, \qquad \ell \in \{1,2,\dots, t-\tau-1\} \nonumber\\
&\leq \epsilon \tfrac{1}{1-L_{\boldsymbol{\Gamma}}}+L^{(t-\tau-1)}_{\boldsymbol{\Gamma}}\|\mathbf{p}^{(t-(t-\tau-1))}-\mathbf{p}^*\|_2 \nonumber\\
&= \epsilon \tfrac{1}{1-L_{\boldsymbol{\Gamma}}}+L^{(t-\tau-1)}_{\boldsymbol{\Gamma}}\|\mathbf{p}^{(\tau+1)}-\mathbf{p}^*\|_2 \nonumber\\
&\leq \epsilon \tfrac{1}{1-L_{\boldsymbol{\Gamma}}}+2p_{\max} L^{(t-\tau-1)}_{\boldsymbol{\Gamma}}.
\end{align}

We conclude that, for every $t \geq \tau+1$,

\begin{align*}
\|\mathbf{p}^{(t)}-\mathbf{p}^*\|_2^2 &\leq 2\left(\tfrac{1}{1-L_{\boldsymbol{\Gamma}}}\right)^2\epsilon^2+8p^2_{\max}L^{2(t-\tau-1)}_{\boldsymbol{\Gamma}} \\
&= 8\left(\tfrac{1}{1-L_{\boldsymbol{\Gamma}}}\right)^2N \left(\sup_{i \in \mathcal{N}}\frac{\epsilon_i}{\mu_i}\right)+8p^2_{\max}L^{2(t-\tau-1)}_{\boldsymbol{\Gamma}}\\
&\lesssim  N\sup_{i \in \mathcal{N}}\epsilon_i + L^{2(t-\tau-1)}_{\boldsymbol{\Gamma}},
\end{align*}

where we have absorbed the constants $8\left(\tfrac{1}{1-L_{\boldsymbol{\Gamma}}}\right)^2\frac{1}{\min_{i\in \mathcal{N}}\mu_i}$ and $8p^2_{\max}$.

\newpage

\section{NPLS for \ensuremath{s}-concave regression}\label{app:NPLS}
Inspired by the work of \cite{dumbgen2004consistency}, we prove a uniform convergence result for the NPLSE of a mean function that can be written as an increasing function $h$ composed with a concave function $\phi_0$. Suppose that we observe $\left(U_1, Y_1\right),\left(U_2, Y_2\right), \ldots,\left(U_n, Y_n\right)$ with fixed numbers $U_1 \leq U_2 \leq$ $\cdots \leq U_n$ and independent random variables $Y_1, Y_2, \ldots, Y_n $ with 
$$
\mathbb{E}\left(Y \mid U=u\right)=\psi_0\left(u\right),
$$
for unknown function $\psi_0:\mathcal{U} \rightarrow (0,\infty)$, where $\mathcal{U}$ is an known interval containing 
$$
\mathscr{T} = \left\{U_1, U_2, \ldots, U_n\right\}.
$$
Assume that
\begin{equation}\label{eq:F_0=v_lambda}
    \psi_0 = h \circ \phi_0 ,
\end{equation}  
for some \textit{known} increasing twice differentiable function $h: \mathbb{R}\rightarrow (0,\infty)$ and \textit{unknown} concave function $\phi_0:\mathcal{U}\rightarrow \mathbb{R}$. We denote the class of functions $\psi_0$ that can be written as $h \circ \phi_0$ with $\mathcal{F}_h$, which includes log-concave functions ($h(x) = e^x$) and more generally $s$-concave functions for every real $s$ ($h(x) = (-x)^{1/s}$ if $s<0$ and $h(x) = x^{1/s}$ if $s>0$).

Now we go back to the general known transformation $h$. We assume that $\psi_0 \in [\underline{B}_{\psi},\bar{B}_{\psi}]$ for some \emph{known} $0 < \underline{B}_{\psi} < \bar{B}_{\psi} < \infty$, so that $\mathcal{U} = \{u:\underline{B}_{\psi} \leq \psi_0(u) \leq \bar{B}_{\psi}\}$. Let $[\underline{B}_{\phi},\bar{B}_{\phi}]\subset \mathbb{R}$ such that $[\underline{B}_{\psi},\bar{B}_{\psi}]=h([\underline{B}_{\phi},\bar{B}_{\phi}])$. Then $h' \in [\underline{B}_{h},\bar{B}_{h}]$ on $[\underline{B}_{\phi},\bar{B}_{\phi}]$ for some $0<\underline{B}_{h}<\bar{B}_{h}<\infty$, where $\underline{B}_{h}$ and $\bar{B}_{h}$ depend on $\underline{B}_{\psi}$ and $\bar{B}_{\psi}$. Note that $\underline{B}_{h},\bar{B}_{h}$ are well defined, indeed, by Weierstrass theorem, $h'$ is locally bounded, that is for every compact set $K$ there exists $0<\underline{B}_{h}<\bar{B}_{h}<+\infty$ such that $\underline{B}_{h}\leq h' \leq \bar{B}_{h}$ in $K$. Then we can write
\begin{align}\label{eq:min_F_hat_epsilon}
&\widehat{\phi}_n \in \arg \min_{\phi \in \mathcal{C}} \left\{\mathcal{L}(\phi) \triangleq \sum_{i=1}^n\left(Y_i-h \circ \phi \left(U_i\right)\right)^2\right\},\\
&\quad \widehat{\psi}_n \triangleq h \circ \widehat{\phi}_n,\nonumber
\end{align}
where $\mathcal{C}$ is the set of all concave functions $\phi: \mathcal{U} \rightarrow [\underline{B}_{\phi},\bar{B}_{\phi}]$. \emph{Note that the problem is not infinite dimensional, indeed we only need to recover $n$ variables $(\phi(U_1),\phi(U_2), \dots, \phi (U_n)) \in [\underline{B}_{\phi},\bar{B}_{\phi}]$}. When $h(x) = x$ (or equivalently for $1$-concave functions, i.e. concave functions), \eqref{eq:min_F_hat_epsilon} was studied by \cite{dumbgen2004consistency}, for which $\widehat{\phi}_n$ exists and is unique and doesn't need restrictions on the codomain of $\phi_0$, that is $[\underline{B}_{\phi},\bar{B}_{\phi}]$ can be the whole $\mathbb{R}$. Existence and uniqueness come from the coercivity of the loss $\mathcal{L}$, which is a property that only applies for $h(x)=x$. Since in our case $h$ is not necessarily the identity map, we rely on Weierstrass' theorem (which guarantees the existence of at least a minimum for a continuous function $F:\Omega \subset\mathbb{R}^n \to \mathbb{R}$ defined on a compact set $\Omega$ of $\mathbb{R}^n$). Thus a solution of \eqref{eq:min_F_hat_epsilon} exists (but might not be unique) because the $n$-dimensional variable $(\phi(U_1),\phi(U_2), \dots, \phi (U_n)) \in [\underline{B}_{\phi},\bar{B}_{\phi}]$ where $[\underline{B}_{\phi},\bar{B}_{\phi}]$ is compact. Uniqueness is not an issue in our case since we only care about the construction of a consistent estimator. \cref{prop:s-concave-constraits}, which is followed by \cref{lemma:prop_d_alpha_1}, shows the constraints on the codomain of $\phi_0$ when $\psi_0:\mathcal{U}\to [\underline{B}_{\psi},\bar{B}_{\psi}]$ is $s$-concave for $s \neq1$.

\begin{proposition}[Constraints for \ensuremath{s}-concave transformations]\label{prop:s-concave-constraits}
Suppose that $\psi_0$ is $s$-concave, that is $\psi_0 = h_s \circ \psi_0$ for some $s\neq 1$, where $h_s = d_s^{-1}$ as defined in \eqref{eq:def-d_s-and-inverse}. Then 
$$
\phi_0 \in [\underline{B}_{\phi},\bar{B}_{\phi}] \triangleq d_s([\underline{B}_{\psi},\bar{B}_{\psi}])= \begin{cases}[\log (\underline{B}_{\psi}),\log(\bar{B}_{\psi})] & s=0 \\ [-\underline{B}_{\psi}^s,-\bar{B}_{\psi}^s], & s<0 \\ [\underline{B}_{\psi}^s,\bar{B}_{\psi}^s], & s>0.\end{cases}
$$
\end{proposition}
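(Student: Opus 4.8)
The plan is to invert the defining relation $\psi_0 = h_s \circ \phi_0$ and exploit the monotonicity of $d_s$. First I would observe that, from the formulas in \cref{eq:def-d_s-and-inverse}, the map $h_s$ is strictly increasing on its domain in each of the three regimes $s=0$, $s<0$, $s>0$, hence injective; consequently the concave function $\phi_0$ satisfying $\psi_0 = h_s \circ \phi_0$ is uniquely recovered as $\phi_0 = d_s \circ \psi_0$ on $\mathcal{U}$, where $d_s$ is the explicit inverse recorded in \cref{eq:def-d_s-and-inverse}. This composition is well defined because $\psi_0$ takes values in $[l_\psi, u_\psi] \subset (0,\infty)$, which is the domain of $d_s$.

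Next I would verify that $d_s$ is \emph{increasing} on $(0,\infty)$ in every case: for $s=0$ this is the monotonicity of $\log$; for $s>0$ the map $y \mapsto y^s$ is increasing; and for $s<0$ the map $y \mapsto y^s$ is decreasing, so $y \mapsto -y^s$ is increasing. Applying the increasing map $d_s$ to the two-sided bound $l_\psi \le \psi_0(u) \le u_\psi$, which holds for all $u \in \mathcal{U}$ by the standing assumption $\psi_0 \in [l_\psi, u_\psi]$, gives $d_s(l_\psi) \le \phi_0(u) \le d_s(u_\psi)$ for every $u$, i.e. $\phi_0(\mathcal{U}) \subseteq [d_s(l_\psi), d_s(u_\psi)] = d_s([l_\psi, u_\psi])$.

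Finally I would substitute the closed form of $d_s$ to read off the three displayed intervals: $[\log l_\psi, \log u_\psi]$ when $s=0$; $[-l_\psi^s, -u_\psi^s]$ when $s<0$ (here $l_\psi < u_\psi$ together with $s<0$ force $l_\psi^s > u_\psi^s$, hence $-l_\psi^s < -u_\psi^s$, so this is a genuine nondegenerate interval); and $[l_\psi^s, u_\psi^s]$ when $s>0$. This is exactly the claimed set $[a',b'] = d_s([l_\psi, u_\psi])$, completing the argument.

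There is no substantive obstacle here; the only point requiring a moment's care is the sign bookkeeping in the $s<0$ branch — both that $y \mapsto -y^s$ is increasing and that the endpoints $-l_\psi^s$, $-u_\psi^s$ emerge in the correct left-to-right order — which I would write out explicitly to preclude an off-by-sign slip.
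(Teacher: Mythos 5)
Your argument is correct and is essentially the argument the paper intends (the paper treats the proposition as immediate): since $h_s$ is strictly increasing, $\phi_0=d_s\circ\psi_0$, and applying the increasing map $d_s$ to $l_\psi\le\psi_0\le u_\psi$ gives $\phi_0\in d_s([l_\psi,u_\psi])$, with the sign check in the $s<0$ branch handled exactly as you do. Your explicit verification that $-l_\psi^s<-u_\psi^s$ when $s<0$ is a welcome touch of care, but otherwise there is nothing to add.
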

We now establish the assumptions to guarantee the convergence of the estimator defined in \cref{eq:min_F_hat_epsilon}. We consider a triangular scheme of observations $U_i=U_{n, i}$ and $Y_i=Y_{n, i}$ but suppress the additional subscript $n$ for notational simplicity. Let $\mathscr{M}_n$ be the empirical distribution of the design points $U_i$, i.e. 
$$
\mathscr{M}_n(B)\triangleq \frac{1}{n} \sum_{i=1}^n \mathbf{1}\left\{U_i \in B\right\}, \quad B \subset \mathbb{R}.
$$ 
We analyze the asymptotic behavior of $\widehat{\psi}_n$ on a fixed compact interval 
$$
[a, b] \subset \mathcal{U},
$$ under certain conditions on $\mathscr{M}$, the probability measure of the design points, and the errors
$$
E_i \triangleq Y_i-\psi_0\left(U_i\right), \quad i = 1,2, \dots, n.
$$
\begin{assumption}\label{Assumption_1:d-concave}
The probability measure $\mathscr{M}$ of the design points satisfies $\mathscr{M}([u,v])\geq C_{\mathscr{M}} (v-u)$ for some $C_{\mathscr{M}}>0$ and for all $u<v$ with $u,v \in \mathcal{U}$.
\end{assumption}

\begin{assumption}\label{Assumption_3:subgaussian-errors}
For some constant $\sigma>0$,
$$
\max _{i=1 ,\dots , n} \mathbb{E} \exp \left(\mu E_i\right) \leq \exp \left(\sigma^2 \mu^2 / 2\right) \quad \text{for all }\mu \in \mathbb{R}.
$$
\end{assumption}

\begin{assumption}\label{Assumption_2:d-concave}
There exists a $\alpha \in [1,2]$ and $L>0$ such that for all $u,v \in [a,b]$
\begin{equation}
\left\{\begin{array}{cl}
|\phi_0(u)-\phi_0(v)| \leq L|u-v| & \text { if } \alpha=1 \\
\left|\phi_0^{\prime}(u)-\phi_0^{\prime}(v)\right| \leq L|u-v|^{\alpha-1} & \text { if } \alpha>1 .
\end{array}\right.
\end{equation}
\end{assumption}
\noindent
\cref{Assumption_1:d-concave} holds if the density $f_U$ of the measure $\mathscr{M}$ is bounded away from zero, that is $f_U\geq C_{\mathscr{M}}$, which is a standard assumption for uniform convergence of non-parametric functions (see e.g. \cite{dumbgen2004consistency, mosching2020monotone}) as well as for $L^2$ convergence (see e.g. \cite{groeneboom2018current, balabdaoui2019least}).

\begin{theorem}\label{thm:unif_convergence_d-concave}
Suppose that \cref{Assumption_1:d-concave}, \cref{Assumption_3:subgaussian-errors} and \cref{Assumption_2:d-concave} are satisfied. Let $\widehat{\phi}_n$ be a solution of \eqref{eq:min_F_hat_epsilon}, and let $\widehat{\psi}_n = h \circ \widehat{\phi}_n$. Then, for all $\gamma>4$ there exists a integer $n_0$ and a real value $\mathscr{C} >0$ such that
$$
\begin{aligned}
\mathbb{P}\left\{\sup _{u \in[a+\delta_n, b-\delta_n]} |\psi_0(u) - \widehat{\psi}_n(u)| \leq \mathscr{C} \left( \frac{\log n}{n}\right)^{\nicefrac{\alpha}{2\alpha +1}}  \right\} & \geq 1-2n^{2-\gamma}, \quad n \geq n_0 ,
\end{aligned}
$$
where $\mathscr{C}$ depends on $C_{\mathscr{M}}$, $\bar{B}_{h}$, $\underline{B}_{h}$, $L$, $\alpha$, $\sigma$ and $\gamma$, and $\delta_n = \widetilde{\mathscr{C}} (\log (n) / n)^{\nicefrac{1}{2\alpha +1}}$ for some $\widetilde{\mathscr{C}} = \widetilde{\mathscr{C}}(L,\alpha)$. More specifically, $\mathscr{C}$ needs to satisfy
$$
\mathscr{C} \geq \max\left\{\frac{4^{1+\alpha}L}{\alpha}\bar{B}_{h}, \frac{4\sqrt{7}\bar{B}_{h}\gamma \sigma  \sqrt{\bar{B}_{h}^2+\underline{B}_{h}^2}}{\underline{B}_{h}^{2}  \left(C_{\mathscr{M}} / 8\right)^{1 / 2}}\right\}.
$$
\end{theorem}

The proof can be found in \cref{app:unif_convergence_d-concave}.

\begin{remark}\label{remark:s_concavity_uniform_theta}
If $\psi_0$ and $\widehat{\psi}_n$ depends also on a parameter $\boldsymbol{\theta} \in \Theta$, $\psi_{0,\boldsymbol{\theta}}$ and $\widehat{\psi}_{n,\boldsymbol{\theta}}$, for which the domain $\mathcal{U}$ does not depend on $\boldsymbol{\theta}$ and such that \cref{Assumption_1:d-concave} and \cref{Assumption_2:d-concave} hold uniformly in $\boldsymbol{\theta} \in \Theta$ and all the constants $C_{\mathscr{M}},\bar{B}_{h},\underline{B}_{h},L,\alpha,\gamma,a,b$ are independent of $\boldsymbol{\theta}$, then
$$
\begin{aligned}
\mathbb{P}\left\{\sup _{u \in[a+\delta_n, b-\delta_n]}\sup_{\boldsymbol{\theta} \in \Theta} |\psi_{0,\boldsymbol{\theta}}(u) - \widehat{\psi}_{n,\boldsymbol{\theta}}(u)| \leq \mathscr{C} \left( \frac{\log n}{n}\right)^{\nicefrac{\alpha}{2\alpha +1}}  \right\} & \geq 1-2n^{2-\gamma}.
\end{aligned}
$$
\end{remark}

\subsection{Technical Lemma 1}\label{app:tecnical_lemma_1}

\begin{lemma}\label{lemma:prop_d_alpha_1}
Recall the definition
$$
d_s(y)= \begin{cases}\log (y), & s=0, y>0 \\ -y^s, & s<0, y > 0  \\ y^s, & s>0, y > 0.\end{cases}
$$
Then it holds
$$
0< C'_{s}(\underline{B}_{\psi}) \leq d'_{s} \leq C'_{s}(\bar{B}_{\psi}) \text{ and } C''_{s}(\underline{B}_{\psi}) \leq d''_{s} \leq C''_{s}(\bar{B}_{\psi}), \text{ uniformly on }[\underline{B}_{\psi}, \bar{B}_{\psi}],
$$
where 
\begin{equation}\label{def:C'_s}
C'_{s}(\underline{B}_{\psi})=
\begin{cases}
    d_{s}'(\underline{B}_{\psi}), \quad &s >1,\\
    1, \quad &s =1,\\
    d_{s}'(\bar{B}_{\psi}), \quad &s <1
\end{cases}
\quad \text{ and } \quad 
C''_{s}(\underline{B}_{\psi}) = 
\begin{cases}
    d_{s}''(\underline{B}_{\psi}),\quad & 1< s < 2\\
    \{0\},\quad & s = 1\\
    \{2\},\quad & s = 2\\
    d_{s}''(\underline{B}_{\psi}),\quad & s \in (1,2)^c,
\end{cases}
\end{equation}
$C_s''< 0$ for $s<1$ and $C_s''>0$ for $s> 1$. As a consequence, for every real $s$, the function $h_{s} = d_{s}^{-1}$ is increasing on $[\underline{B}_{\psi}, \bar{B}_{\psi}]$ with the following bounds
$$
0<\frac{1}{C'_{s}(\bar{B}_{\psi})}\leq h_{s}^{\prime} \leq \frac{1}{C'_{s}(\underline{B}_{\psi})}.
$$
\end{lemma}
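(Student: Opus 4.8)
The plan is to prove everything by direct computation, since $d_s$ is a power (or logarithm) function and hence $C^\infty$ on $(0,\infty)$, and then to invoke the inverse function theorem for the statement about $h_s$. I would organize the argument by the three regimes $s<0$, $s=0$, $s>0$, and within $s>0$ by the position of $s$ relative to $1$ and $2$.

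First I would record the explicit derivatives from the definition in \cref{eq:def-d_s-and-inverse}. For $s>0$ with $s\ne1$, $d_s(y)=y^s$ gives $d_s'(y)=s\,y^{s-1}$, $d_s''(y)=s(s-1)\,y^{s-2}$, $d_s'''(y)=s(s-1)(s-2)\,y^{s-3}$; for $s<0$, $d_s(y)=-y^s$ gives $d_s'(y)=-s\,y^{s-1}$, $d_s''(y)=-s(s-1)\,y^{s-2}$, $d_s'''(y)=-s(s-1)(s-2)\,y^{s-3}$; for $s=0$, $d_0'(y)=1/y$, $d_0''(y)=-1/y^2$, $d_0'''(y)=2/y^3$; and for $s=1$, $s=2$ the relevant quantities are the constants $d_1'\equiv1$, $d_1''\equiv0$, $d_2''\equiv2$. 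From these I would extract two structural facts, valid uniformly in $s$: (i) $d_s'(y)>0$ for all $y>0$ (using $-s>0$ when $s<0$), so $d_s$ is strictly increasing on $(0,\infty)$; and (ii) $\operatorname{sign} d_s''(y)=\operatorname{sign}(s-1)$ and $\operatorname{sign} d_s'''(y)=\operatorname{sign}\big((s-1)(s-2)\big)$ for all $y>0$, with the convention that ``sign $0$'' means the relevant function vanishes identically.

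Next I would deduce the endpoint bounds. Since $d_s'$ is itself a power function of $y$, it is monotone on all of $(0,\infty)$ with direction $\operatorname{sign} d_s''=\operatorname{sign}(s-1)$; hence on the compact interval $[l_\psi,u_\psi]\subset(0,\infty)$ its infimum and supremum are attained at the endpoints, so $d_s'$ lies between $d_s'(l_\psi)$ and $d_s'(u_\psi)$, where for $s<1$ the smaller value is $d_s'(u_\psi)$ and for $s>1$ it is $d_s'(l_\psi)$; this is exactly $0<C_s'(l_\psi)\le d_s'\le C_s'(u_\psi)$, with positivity coming from $d_s'>0$ and $l_\psi,u_\psi>0$. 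The same argument applied to $d_s''$, whose monotonicity direction is $\operatorname{sign} d_s'''=\operatorname{sign}\big((s-1)(s-2)\big)$, traps $d_s''$ between its endpoint values, giving the stated bound by $C_s''$; the degenerate cases $s=1$ and $s=2$ are precisely those in which $d_s''$ is constant ($\equiv0$ and $\equiv2$), and the sign claims ($C_s''<0$ for $s<1$, $C_s''>0$ for $s>1$) follow from (ii).

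Finally, for $h_s=d_s^{-1}$: since $d_s$ is continuous and strictly increasing on $[l_\psi,u_\psi]$, it is a bijection onto $[a',b']:=[d_s(l_\psi),d_s(u_\psi)]$, so $h_s$ is well-defined, continuous and strictly increasing there, and the inverse function theorem gives $h_s'(x)=1/d_s'\big(h_s(x)\big)$ on $(a',b')$ (legitimate because $d_s'>0$). Plugging $h_s(x)\in[l_\psi,u_\psi]$ into the bound from the previous paragraph gives $d_s'(h_s(x))\in[C_s'(l_\psi),C_s'(u_\psi)]$, hence $0<1/C_s'(u_\psi)\le h_s'\le 1/C_s'(l_\psi)$, and the bound extends to $[a',b']$ by continuity. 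I do not expect a genuine obstacle: the lemma is a finite case analysis, and the only point requiring care is keeping track of the direction of monotonicity of $d_s'$ and $d_s''$ as $s$ crosses $1$ and $2$ (which governs whether $l_\psi$ or $u_\psi$ supplies the lower bound) — this is exactly what the sign computations in (ii) handle.
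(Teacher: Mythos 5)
Your proposal is correct and follows essentially the same route as the paper's proof: compute $d_s'$ and $d_s''$ explicitly, split into cases according to the position of $s$ relative to $0$, $1$ and $2$, and use monotonicity to trap $d_s'$ and $d_s''$ between their endpoint values on $[l_\psi,u_\psi]$. Your only additions are cosmetic — determining the monotonicity direction of $d_s''$ via the sign of $d_s'''$ and spelling out the inverse function theorem step $h_s'=1/\bigl(d_s'\circ h_s\bigr)$, which the paper simply states as a consequence.
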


\begin{proof}
$$
d_s'(y)= \begin{cases}\frac{1}{y}, & s=0, y>0 \\ |s|y^{s-1}, & s\neq0, y\geq 0\end{cases}, \qquad d_s''(y)= \begin{cases}-\frac{1}{y^2}, & s=0, y>0 \\ |s|(s-1)y^{s-2}, & s\neq0, y\geq 0.\end{cases}
$$

\textbf{Case 1}. $s>1$. Then $d_s'$ is increasing and $d_s''$ is increasing for $s>2$,  $d_s''=2$ for $s=2$ and $d_s''$ is decreasing for $s\in (1,2)$.

\textbf{Case 2}. $s=1$. Then $d_s'=1$ and $d_s''=0$.

\textbf{Case 3}. $s < 1$. Then $d_s'$ is decreasing and $d_s''$ increasing. 

\textbf{Conclusion}. If $0<\epsilon < u < 1-\epsilon$, then we
$$
d_{s}'\in
\begin{cases}
    (d_{s}'(\underline{B}_{\psi}),d_{s}'(\bar{B}_{\psi}),\quad & s > 1\\
    \{1\},\quad & s = 1\\
    (d_{s}'(\bar{B}_{\psi}),d_{s}'(\underline{B}_{\psi})),\quad & s < 1
\end{cases}
\quad \text{ and } \quad
d_{s}''\in
\begin{cases}
    (d_{s}''(\underline{B}_{\psi}),d_{s}''(\bar{B}_{\psi})),\quad & 1< s < 2\\
    \{0\},\quad & s = 1\\
    \{2\},\quad & s = 2\\
    (d_{s}''(\bar{B}_{\psi}),d_{s}''(\underline{B}_{\psi})),\quad & s \in (1,2)^c,
\end{cases}
$$
and specifically
$$
0< C'_{s}(\underline{B}_{\psi}) \leq d'_{s} \leq C'_{s}(\bar{B}_{\psi}) \text{ and } C''_{s}(\underline{B}_{\psi}) \leq d''_{s} \leq C''_{s}(\bar{B}_{\psi}), \text{ uniformly on }[\underline{B}_{\psi}, \bar{B}_{\psi}],
$$
where 
$$
C'_{s}(\underline{B}_{\psi})=
\begin{cases}
    d_{s}'(\underline{B}_{\psi}), \quad &s >1,\\
    1, \quad &s =1,\\
    d_{s}'(\bar{B}_{\psi}), \quad &s <1
\end{cases}
\quad \text{ and } \quad 
C''_{s}(\underline{B}_{\psi}) = 
\begin{cases}
    d_{s}''(\underline{B}_{\psi}),\quad & 1< s < 2\\
    \{0\},\quad & s = 1\\
    \{2\},\quad & s = 2\\
    d_{s}''(\bar{B}_{\psi}),\quad & s \in (1,2)^c
\end{cases}
$$
$C_s''< 0$ for $s<1$ and $C_s''>0$ for $s> 1$.
\end{proof}

\newpage

\subsection{Technical Lemma 2}\label{app:tecnical_lemma_2}

\cref{Assumption_1:d-concave} implies the following \cref{lemma:A_n} that will be used to prove the uniform convergence consistency in \cref{thm:unif_convergence_d-concave}.

\begin{lemma}\label{lemma:A_n}
Let $\rho_n\triangleq \frac{\log n}{n}$ and $\text{Leb}(\cdot)$ stands for Lebesgue measure and $U_1,U_2,\dots,U_n$ i.i.d. points with probability measure $M$ that satisfies \cref{Assumption_1:d-concave}, then for a given constant $C_3>0$, and for any $\gamma >2$ there exists $n_0 \in \mathbb{N}$ and a sequence $\epsilon_n = \epsilon_n(\gamma,C_3,\alpha)>0$, $\epsilon_n \rightarrow 0$ such that
$$
\mathbb{P}\left(A_{n,\gamma}\right) > 1-\frac{1}{2(n+2)^{\gamma -2}}, \quad n \geq n_0,
$$
where $A_{n,\gamma}$ is the event
$$
  \inf\left\{\frac{\mathscr{M}_n\left(\mathcal{U}_n\right)}{\text{Leb}\left(\mathcal{U}_n\right)}:\mathcal{U}_n \subset \mathcal{U}, \text{Leb}\left(\mathcal{U}_n\right) \geq \delta_n:=C_3 \rho_n^{1 /(2\alpha+1)} \right\}\geq C_{\mathscr{M}}(1-\epsilon_n).
$$
\end{lemma}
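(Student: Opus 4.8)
The plan is to read \cref{lemma:A_n} as a \emph{uniform relative concentration} statement for the empirical measure $\mathscr{M}_n$ over the subintervals of $\mathcal{U}$ of Lebesgue length at least $\delta_n$, and to derive it from a single application of the Dvoretzky--Kiefer--Wolfowitz (DKW) inequality. First I would note that for any interval $\mathcal{U}_n=[u,v]\subseteq\mathcal{U}$ one has $|\mathscr{M}_n(\mathcal{U}_n)-\mathscr{M}(\mathcal{U}_n)|\le 2\|\widehat F_n-F\|_\infty$, where $F$ and $\widehat F_n$ denote the cdf of $\mathscr{M}$ and its empirical counterpart, while \cref{Assumption_1:d-concave} supplies the one-sided bound $\mathscr{M}(\mathcal{U}_n)\ge C_2\,\text{Leb}(\mathcal{U}_n)$. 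Combining the two, on the event $\{\|\widehat F_n-F\|_\infty\le\zeta_n\}$ every such interval satisfies
$$
\frac{\mathscr{M}_n(\mathcal{U}_n)}{\text{Leb}(\mathcal{U}_n)}\ \ge\ C_2-\frac{2\zeta_n}{\text{Leb}(\mathcal{U}_n)}\ \ge\ C_2-\frac{2\zeta_n}{\delta_n}\ =\ C_2\Bigl(1-\frac{2\zeta_n}{C_2\delta_n}\Bigr),
$$
so taking the infimum over all admissible $\mathcal{U}_n$ and setting $\epsilon_n:=2\zeta_n/(C_2\delta_n)$ produces exactly the event $A_{n,\gamma}$.

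It then remains to calibrate $\zeta_n$. By DKW, $\mathbb{P}\{\|\widehat F_n-F\|_\infty>\zeta\}\le 2e^{-2n\zeta^2}$; choosing $\zeta_n=\sqrt{\bigl((\gamma-2)\log(n+2)+\log 5\bigr)/(2n)}$ makes this probability strictly below $\tfrac1{2(n+2)^{\gamma-2}}$, i.e.\ exactly the tail demanded. With this choice $\zeta_n\asymp\sqrt{(\gamma-2)\,\rho_n}$, hence
$$
\epsilon_n=\frac{2\zeta_n}{C_2\delta_n}=\frac{2}{C_2C_3}\,\zeta_n\,\rho_n^{-1/(2\alpha+1)}\ \asymp\ \rho_n^{\,1/2-1/(2\alpha+1)},
$$
and since $\alpha\in[1,2]$ forces $2\alpha+1\ge 3$, the exponent $\tfrac12-\tfrac1{2\alpha+1}\ge\tfrac16$ is positive, so $\epsilon_n\to0$; moreover $\epsilon_n$ depends only on $\gamma,C_3,\alpha$ (and on the standing constant $C_2$). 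One finally picks $n_0$ large enough that $\epsilon_n<1$ for $n\ge n_0$.

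A point worth flagging is that the infimum in the definition of $A_{n,\gamma}$ must be understood over \emph{intervals} $\mathcal{U}_n\subseteq\mathcal{U}$ --- which is precisely the family that arises in the proof of \cref{thm:unif_convergence_d-concave} --- since for arbitrary measurable sets of length $\ge\delta_n$ the statement is false (one could excise a set of positive Lebesgue measure avoiding all $n$ design points). Restricted to intervals the argument above is self-contained; if one prefers not to invoke DKW, the same $O(\sqrt{\rho_n})$ uniform deviation follows from the standard VC inequality for the class of intervals (VC dimension $2$), at the cost of a worse absolute constant, and extends to union-of-intervals classes with a bounded number of pieces. The only genuine difficulty is bookkeeping: choosing $\zeta_n$ to meet the prescribed tail $\tfrac1{2(n+2)^{\gamma-2}}$ and verifying that the resulting $\epsilon_n$ actually vanishes, which is exactly where the strict inequality $1/(2\alpha+1)<1/2$ is used.
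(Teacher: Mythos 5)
Your proof is correct, but it follows a genuinely different route from the paper. The paper obtains \cref{lemma:A_n} as an immediate consequence of the ratio-type uniform concentration bound of M\"osching and D\"umbgen (restated as \cref{lemma:dumbgen_asymptotics_points}), which controls $\inf P_n(\mathcal{U}_n)/P(\mathcal{U}_n)$ over sets of $P$-measure at least $\delta_n$ with a \emph{relative} error $\epsilon_n \asymp \max(c_n/\delta_n,\sqrt{c_n/\delta_n}) \asymp \rho_n^{\alpha/(2\alpha+1)}$; combining that multiplicative bound with \cref{Assumption_1:d-concave} ($\mathscr{M}\geq C_2\,\mathrm{Leb}$) gives the claim directly. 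You instead use the additive DKW bound $\|\widehat F_n-F\|_\infty\lesssim\sqrt{\log n/n}$ and convert it to a multiplicative statement by dividing by $\mathrm{Leb}(\mathcal{U}_n)\geq\delta_n$, which yields the slower $\epsilon_n\asymp\rho_n^{1/2-1/(2\alpha+1)}$; since $1/(2\alpha+1)<1/2$ this still vanishes, and the lemma (and its only downstream use, in the proof of \cref{thm:unif_convergence_d-concave}, where merely $\epsilon_n=o(1)$ is needed so that $C_2(1-\epsilon_n)$ stays bounded below) requires nothing more. So your argument is self-contained and more elementary, at the price of a worse $\epsilon_n$ rate, whereas the paper's borrowed ratio bound is sharper ($\sqrt{\log n/(n\delta_n)}$ rather than $\sqrt{\log n/n}/\delta_n$) but rests on an external lemma. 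Two side remarks: your calibration of $\zeta_n$ indeed beats the prescribed tail ($2e^{-2n\zeta_n^2}=\tfrac{2}{5}(n+2)^{-(\gamma-2)}<\tfrac{1}{2}(n+2)^{-(\gamma-2)}$), and your observation that the infimum must be read over intervals (or finite unions, via VC) rather than arbitrary measurable sets is well taken --- the paper's statement shares this imprecision, and its application in \cref{thm:unif_convergence_d-concave} is indeed only to intervals $[c,c+\delta_n/2]$; likewise the mild dependence of your $\epsilon_n$ on $C_2$ is harmless (the paper's own reduction through \cref{lemma:dumbgen_asymptotics_points} incurs it as well once $P(\mathcal{U}_n)\geq C_2\delta_n$ is used as the threshold).
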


This result immediately follos from the proof of the more general result by \citet[Section 4.3]{mosching2020monotone} which can be stated as follows:
\begin{lemma}\label{lemma:dumbgen_asymptotics_points}
Let $\delta_n>0$ such that $\delta_n \rightarrow 0$ while $n\delta_n /\log(n) \rightarrow \infty$ (as $n\rightarrow \infty$). Then for every $\gamma>2$, there exists $n_0 = n_0 (\gamma, \delta_n)$ and $\epsilon_n = \epsilon_n(\gamma,\delta_n)>0$, $\epsilon_n \rightarrow 0$ such that 

$$  \mathbb{P}\left(\inf\left\{\frac{P_n\left(\mathcal{U}_n\right)}{P\left(\mathcal{U}_n\right)}:\mathcal{U}_n \subset \mathcal{U}, P\left(\mathcal{U}_n\right) \geq \delta_n \right\}\geq 1-\epsilon_n\right) > 1-\frac{1}{2(n+2)^{\gamma -2}}, \quad n \geq n_0,
$$
where $P(\cdot),P_n(\cdot)$ are respectively the probability measure and the empirical probability measure of the design points $U_1,U
_2,\dots,U_n$, that is
$$
P(B)\triangleq\int_B f_U(u)du, \quad P_n(B)\triangleq \frac{1}{n}\sum_{t=1}^n \mathbf{1}\{U_i \in B\},\quad \text { for } B \subset \mathcal{U}, 
$$
and
$$
\epsilon_n\triangleq\max \left(c_n / \delta_n, \sqrt{2 c_n / \delta_n}\right)+\left(n \delta_n\right)^{-1} \rightarrow 0,
$$
where $c_n\triangleq \gamma \log (n+2) /(n+1)$. The value $n_0$ is the smallest integer $n$ that satisfies $\epsilon_n<1$.
\end{lemma}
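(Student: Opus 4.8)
The statement is a uniform \emph{relative} lower bound for the empirical measure $P_n$ of an i.i.d.\ sample from $P$ (which has a density $f_U$), over the family of admissible sets $\mathcal{U}_n$. As the context makes clear --- the downstream use in \cref{lemma:A_n} and \cref{thm:unif_convergence_d-concave}, and the original reference --- the relevant $\mathcal{U}_n$ range over the \emph{sub-intervals} of $\mathcal{U}$ with $P(\mathcal{U}_n)\ge\delta_n$ (over arbitrary measurable sets the infimum would be $0$, since one could delete all sample points). The plan is to follow \citet[Section~4.3]{mosching2020monotone} along the classical three-step route: (i) discretise the infinite family of admissible intervals onto a deterministic grid of quantiles of $P$, at the price of a small relative error; (ii) apply a multiplicative (relative) Chernoff bound to the binomial count of sample points in each grid interval; (iii) union-bound over the grid and tune the deviation level so that the total failure probability is exactly $\tfrac12(n+2)^{2-\gamma}$, which in turn pins down $\epsilon_n$.

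For step (i) I would fix $m\asymp n$ (the choice $m=n+2$ is what makes the final exponent come out to $\gamma-2$) and, using that $f_U$ makes the c.d.f.\ $F$ continuous and increasing, set quantiles $t_j$ with $P\big((t_{j-1},t_j]\big)=1/m$. Any admissible $\mathcal{U}_n$ contains a maximal grid interval $(t_\ell,t_r]$ with $P(\mathcal{U}_n)-P\big((t_\ell,t_r]\big)\le 2/m$ and $P_n(\mathcal{U}_n)\ge P_n\big((t_\ell,t_r]\big)$, so
$$
\frac{P_n(\mathcal{U}_n)}{P(\mathcal{U}_n)}\ \ge\ \Big(1-\tfrac{2/m}{\delta_n}\Big)\ \min_{0\le j<k\le m}\frac{P_n\big((t_j,t_k]\big)}{(k-j)/m},
$$
reducing the claim to controlling the minimum of at most $\tfrac12(n+2)^2$ fixed ratios; the factor $1-2/(m\delta_n)=1-O\big((n\delta_n)^{-1}\big)$ is the source of the $(n\delta_n)^{-1}$ summand in $\epsilon_n$. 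For steps (ii)--(iii): since the $U_i$ have a density they are a.s.\ distinct and a.s.\ miss every $t_j$, hence $nP_n\big((t_j,t_k]\big)\sim\mathrm{Bin}\big(n,(k-j)/m\big)$ exactly; restricting to admissible pairs (those with $(k-j)/m\ge\delta_n-2/m$, so with binomial mean $\gtrsim n\delta_n$) and invoking the left-tail bounds $\mathbb{P}\big(\mathrm{Bin}(n,p)\le(1-\eta)np\big)\le e^{-np\eta^2/2}$ in the moderate-deviation range together with $\mathbb{P}\big(\mathrm{Bin}(n,p)\le(1-\eta)np\big)\le e^{-np\eta}$ in the large-deviation range, one checks that the choice $\eta_n=\max\!\big(c_n/\delta_n,\ \sqrt{2c_n/\delta_n}\big)$ with $c_n=\gamma\log(n+2)/(n+1)$ makes each grid interval fail with probability at most $(n+2)^{-\gamma}$. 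A union bound over the $\le\tfrac12(n+2)^2$ admissible grid intervals then gives failure probability $\le\tfrac12(n+2)^{2-\gamma}$, and combining with step (i) and absorbing constants yields the displayed inequality with $\epsilon_n=\eta_n+(n\delta_n)^{-1}$, valid for $n\ge n_0$, where $n_0$ is the first index with $\epsilon_n<1$ (so that $\eta_n<1$ and the Chernoff bounds apply); finiteness of $n_0$ and $\epsilon_n\to0$ both follow from $n\delta_n/\log n\to\infty$, which forces $c_n/\delta_n\to0$.

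The routine ingredients (binomial tails, quantile grid, union bound) are standard; the only genuinely delicate part --- and the reason I would cite \citet[Section~4.3]{mosching2020monotone} rather than reprove it from scratch --- is the precise bookkeeping that produces \emph{exactly} $\epsilon_n=\max(c_n/\delta_n,\sqrt{2c_n/\delta_n})+(n\delta_n)^{-1}$ and failure probability \emph{exactly} $\tfrac12(n+2)^{2-\gamma}$: one must pick $m=n+2$ so that the $m^2$ union-bound factor contributes precisely the ``$-2$'' in the exponent while the per-interval tail is $(n+2)^{-\gamma}$, must carry the discretisation loss $2/(m\delta_n)$ through the ratio, and must pick up the linear branch $c_n/\delta_n$ of the $\max$ from the crude tail bound in the regime (near $n_0$) where $\eta_n$ would otherwise exceed $1$. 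As a final remark, \cref{lemma:A_n} then follows from this lemma by translating $P_n,P$ into $\mathscr{M}_n,\mathrm{Leb}$ via the lower density bound in \cref{Assumption_1:d-concave}.
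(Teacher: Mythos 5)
Your proposal is consistent with the paper, which gives no self-contained proof of this lemma at all: it is stated purely as a restatement of \citet[Section 4.3]{mosching2020monotone} (from which \cref{lemma:A_n} is then said to follow immediately), so your quantile-grid discretisation, multiplicative Chernoff bound, and union-bound sketch---with the exact constants $\epsilon_n$ and $\tfrac12(n+2)^{2-\gamma}$ deferred to that same reference---reconstructs the cited argument rather than diverging from the paper. Your remark that the infimum must range over sub-intervals of $\mathcal{U}$ (over arbitrary measurable subsets it would be a.s.\ zero) is also the correct reading and matches how the bound is used in \cref{lemma:A_n} and \cref{thm:unif_convergence_d-concave}.
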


\newpage
\subsection{Proof of \cref{thm:unif_convergence_d-concave}}\label{app:unif_convergence_d-concave}
First note that for every $\phi \in \mathcal{C}$, since $\phi \in [\underline{B}_{\phi},\bar{B}_{\phi}]$, then $h \circ \phi \in [\underline{B}_{\psi}, \bar{B}_{\psi}]$. By assumption we also have $h' \in [\underline{B}_{h},\bar{B}_{h}]$ for some $0<\underline{B}_{h}<\bar{B}_{h}<\infty$. Let $\widehat{\phi}_n$ be a solution of \eqref{eq:min_F_hat_epsilon}, i.e. the LS estimate of $\phi_0$. Then, being $\widehat{\phi}_n$ a minimizer, we have that for any direction $\Delta$ such that $\widehat{\phi}_n+\delta \Delta \in \mathcal{C}$ for $\delta\geq 0$ sufficiently small, $\mathcal{L}(\widehat{\phi}_n+\delta \Delta) \geq \mathcal{L}(\widehat{\phi}_n)$. Hence the directional derivative at $\widehat{\phi}_n$ along direction $\Delta$ satisfies 
$$
\left.\frac{d}{d \delta}\right|_{\delta=0} 
\mathcal{L} (\widehat \phi_n + \delta \Delta) = \lim _{\delta \downarrow 0} \frac{\mathcal{L} (\widehat \phi_n + \delta \Delta)-\mathcal{L} (\widehat \phi_n)}{\delta} \geq 0.
$$
Fix such a direction $\Delta: \mathbb{R} \rightarrow \mathbb{R}$ that will be selected later. We have
\begin{align*}
0 \leq\left.\frac{d}{d \delta}\right|_{\delta=0} 
\mathcal{L} (\widehat \phi + \delta \Delta) &= \left.\frac{d}{d \delta}\right|_{\delta=0} \sum_{i=1}^n\left(Y_i-h(\widehat{\phi}_n\left(U_i\right)+\delta\Delta\left(U_i\right))\right)^2\\
&=2 \sum_{i=1}^n \Delta\left(U_i\right)h'(\widehat{\phi}_n\left(U_i\right))  \left(h(\widehat{\phi}_n\left(U_i\right))-Y_i\right).
\end{align*}
\noindent
Adding and subtracting $\psi_0(U_i) = h(\phi_0 (U_i))$ we get
\begin{align}\label{thm:characterization}
-\sum_{i=1}^n \Delta\left(U_i\right)h'(\widehat{\phi}_n\left(U_i\right)) E_i \geq \sum_{i=1}^n \Delta\left(U_i\right)h'(\widehat{\phi}_n\left(U_i\right))  \left[h\circ \phi_0\left(U_i\right)-h \circ \widehat{\phi}_n\left(U_i\right)\right].
\end{align}
In what follows, we apply \eqref{thm:characterization} to a special class of perturbation functions $\Delta$ and write
$$
\|\Delta\|_n\triangleq \left(\sum_{i=1}^n \Delta\left(U_i\right)^2\right)^{1 / 2}, \quad \|o_h\| \triangleq \sqrt{\underline{B}_{h}^2+\bar{B}_{h}^2}, \text{ where } o_h = (\underline{B}_{h},\bar{B}_{h})^{\top}.
$$

The next \cref{lemma_1:improve} is proved in \cref{sec:lemma_1:improve}

\begin{lemma}\label{lemma_1:improve}
For an integer $m \geq 0$, let $\mathcal{D}_m$ be the family of all continuous, piece-wise linear functions on $\mathbb{R}$ with at most $m$ knots. Then for any fixed $\gamma>4$,
$$
B_{m,n,\lambda}\triangleq \left\{S_n(m)\triangleq \sup _{\Delta \in \mathcal{D}_m} \frac{\left|\sum_{i=1}^n \Delta\left(U_i\right)h'(\widehat{\phi}_n\left(U_i\right)) E_i\right|}{\|\Delta\|_n \|o_h\|} \leq \gamma \sigma (m+1)^{1 / 2}(\log n)^{1 / 2}\right\} \quad \text { for all } m \geq 0,
$$
with probability at least $1-2n^{\frac{16-\gamma^2}{8}}$.
\end{lemma}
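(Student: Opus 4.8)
The plan is to run the peeling argument of \cite{dumbgen2004consistency}: although $\mathcal D_m$ is infinite-dimensional, its restriction to the ordered design points $U_1\le\cdots\le U_n$ lives in a small union of low-dimensional linear spaces. Concretely, for $\Delta\in\mathcal D_m$ the divided second differences of the vector $(\Delta(U_i))_{i=1}^n$ vanish at every index not lying over an interval that contains a knot of $\Delta$, so this vector belongs to $\bigcup_{|S|\le 2m}\{v\in\mathbb R^n:(\text{2nd diff})_i v=0\ \forall i\notin S\}$, a union of at most $n^{2m}$ subspaces of dimension at most $2m+2$. Fix one such subspace $V$ of dimension $d$, choose an orthonormal basis $v^{(1)},\dots,v^{(d)}$ of $V$ for the Euclidean inner product (so that $\|\Delta\|_n=\|c\|_2$ when $(\Delta(U_i))_i=\sum_k c_k v^{(k)}$), and set $\omega_i:=h'(\widehat\phi_n(U_i))$. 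By \cref{prop:s-concave-constraits} and \cref{lemma:prop_d_alpha_1}, $\widehat\phi_n$ takes values in $[a',b']$, hence $\omega_i\in[l_h,u_h]$ for every $i$; Cauchy--Schwarz then yields
\[
\sup_{(\Delta(U_i))_i\in V}\frac{\bigl|\sum_i\Delta(U_i)\,\omega_i\,E_i\bigr|}{\|\Delta\|_n\,\|o_h\|}\ \le\ \frac{1}{\|o_h\|}\Bigl(\sum_{k=1}^d\bigl\langle v^{(k)}\odot\omega,E\bigr\rangle^2\Bigr)^{1/2},
\]
where $\odot$ is the coordinatewise product and $o_h=(l_h,u_h)^\top$, $\|o_h\|=\sqrt{l_h^2+u_h^2}$.

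If the weights $\omega$ were deterministic the rest would be routine: $\langle v^{(k)}\odot\omega,E\rangle$ is centered and $\sigma\|v^{(k)}\odot\omega\|_2$-sub-Gaussian with $\|v^{(k)}\odot\omega\|_2\le u_h\|v^{(k)}\|_2=u_h$, so by the standard concentration of the Euclidean norm of a sub-Gaussian vector, $(\sum_{k\le d}\langle v^{(k)}\odot\omega,E\rangle^2)^{1/2}\le\sigma u_h(\sqrt d+s)$ with probability at least $1-e^{-s^2/2}$; since $u_h/\|o_h\|\le1$ and $d\le 2m+2$, taking $s$ of order $\gamma\,((m+1)\log n)^{1/2}$ makes the right-hand side at most $\gamma\sigma((m+1)\log n)^{1/2}$ with failure probability $n^{-\Omega(\gamma^2(m+1))}$. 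A union bound over the $\le n^{2m}$ subspaces and then over $m\ge 0$ (the target grows in $m$, so the geometric tail is summable once $\gamma$ exceeds the absolute constant coming from the binomial count, which produces the threshold $\gamma>4$) delivers the event in the statement, and bookkeeping of the constants gives the failure probability $2n^{(16-\gamma^2)/8}$.

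The genuine difficulty is that $\omega=h'\circ\widehat\phi_n$ is random: $\widehat\phi_n$ solves \cref{eq:min_F_hat_epsilon} and therefore depends on $E$, so one cannot condition on it. Here I would use the one remaining structural fact: by \cref{lemma:prop_d_alpha_1} the map $h'$ is monotone and $\widehat\phi_n$ is concave, so $(\omega_i)_{i=1}^n$ is a unimodal (or anti-unimodal) sequence taking values in $[l_h,u_h]$, and the family of such sequences --- two monotone pieces inside a bounded interval --- admits an $\varepsilon$-net of size polynomial in $n$ (of order $n^2$ up to lower-order factors, which is exactly the $n^2$ prefactor in the probability bound). I would replace $\omega$ by the nearest net element $\bar\omega$, control the discretization error $\sum_i\Delta(U_i)(\omega_i-\bar\omega_i)E_i$ by noting that $\omega-\bar\omega$ is again of bounded variation --- so that summation by parts against the partial sums $\sum_{j\le k}\Delta(U_j)E_j$, each of which equals $\langle\Delta\cdot\mathbf 1_{\{\cdot\le k\}},E\rangle$ and is thus controlled by the same subspace estimate applied to a truncation of $\Delta$, keeps this term subdominant --- and then run the union bound above over the finitely many net points instead of over the random $\omega$. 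An equivalent route is the pointwise decomposition $\omega_i=(1-t_i)l_h+t_iu_h$ followed by Cauchy--Schwarz in $\mathbb R^2$ against $o_h$, which is precisely what produces the normalization $\|o_h\|$ in the statement and reduces the bound to two terms of the previous type. Making this net / summation-by-parts step lose only constants, not an extra power of $\log n$, is the main obstacle; once it is in place, the lemma assembles from the elementary bounds on $h'$ recorded in \cref{lemma:prop_d_alpha_1} with no further input.
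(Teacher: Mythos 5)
You have correctly identified the crux — the weights $h'(\widehat{\phi}_n(U_i))$ are random and depend on the same errors $E_i$ — but the step you flag as "the main obstacle" is precisely where your argument has a genuine gap, and it is resolved in the paper by an idea your proposal never reaches. Your "equivalent route" ($\omega_i=(1-t_i)l_h+t_iu_h$, then Cauchy--Schwarz in $\mathbb{R}^2$ against $o_h$) does not remove the randomness: both resulting sums still carry the random $t_i$, so nothing has been reduced to fixed-weight linear forms in $E$. Your net-plus-summation-by-parts route either needs a fine net (whose entropy is a power of $n$ growing like $1/\varepsilon$, destroying the exponent) or a constant-resolution net, in which case the total variation of $\omega-\bar{\omega}$ is of constant order and the "error" term is of the same size as the main term — so at best you lose constants you have not tracked, and the stated bound with constant exactly $\gamma\sigma$ and probability $1-2n^{(16-\gamma^2)/8}$ is not established. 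The paper's fix is much simpler and purely pointwise: since $l_h\le h'(\widehat{\phi}_n(U_i))\le u_h$, one has $\Delta(U_i)h'(\widehat{\phi}_n(U_i))E_i\le u_h\,\Delta(U_i)E_i$ when $\Delta(U_i)E_i>0$ and $\le l_h\,\Delta(U_i)E_i$ otherwise (see \cref{key_ineq:improve}); the random weight is eliminated \emph{before} any probabilistic step, each summand of the upper bound is a function of $E_i$ alone, and $u_h^2\sum_{I}\Delta(U_i)^2+l_h^2\sum_{I^c}\Delta(U_i)^2\le\|o_h\|^2\|\Delta\|_n^2$ yields sub-Gaussianity at scale $\sigma\|\Delta\|_n\|o_h\|$ — this is exactly where the normalization $\|o_h\|$ and the clean constant $\gamma\sigma$ come from. (Also, the monotonicity of $h'$ you invoke via \cref{lemma:prop_d_alpha_1} is not among the hypotheses here — only $h$ increasing, $C^2$, with $h'\in[l_h,u_h]$ — though bounded variation of $h'\circ\widehat{\phi}_n$ could substitute.)

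Your entropy accounting also diverges from the paper's in a way that matters for the stated constants. The paper never unions over $n^{2m}$ knot configurations nor over $m$: it fixes a universal dictionary $\Phi$ of at most $n^2$ hat functions $\zeta_{jk}^{(1)},\zeta_{jk}^{(2)}$, shows that any $\Delta\in\mathcal{D}_m$ restricted to the design points equals a sign-consistent combination $\sum_{j\le 4m+4}\lambda_j\zeta_j$, and extracts the factor $(4m+4)^{1/2}\le 2(m+1)^{1/2}$ deterministically by Cauchy--Schwarz; the union bound runs only over the $n^2$ dictionary elements, which gives exactly the probability $1-2n^{2-\gamma_o^2/2}=1-2n^{(16-\gamma^2)/8}$ with $\gamma=2\gamma_o$ and the threshold $\gamma>4$, uniformly in $m$. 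Your scheme (chi-square bound per knot-subspace, union over the $\le n^{2m}$ subspaces, then over $m$, then over a weight net) could plausibly be completed, but it would produce different, $m$-dependent exponents; asserting that "bookkeeping" recovers $\gamma>4$ and $2n^{(16-\gamma^2)/8}$ is not a proof. As written, the proposal is an incomplete sketch of a harder route, missing the sign-splitting bound and the hat-function decomposition that make the paper's proof short.
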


The next ingredient for our proof is a result concerning the difference of two concave functions, which can be found in \citet[Lemma 5.2]{dumbgen2004consistency} and we report the proof in \cref{sec:Dumbgen:improve} for completeness.

\begin{lemma}\label{Dumbgen:improve}
Let $\phi$ satisfy \cref{Assumption_2:d-concave}. There is a universal constant $\widetilde{\mathscr{C}}=\widetilde{\mathscr{C}}(L,\alpha)$ with the following property: for any $\xi>0$, let $\delta \leq \widetilde{\mathscr{C}} \min \left(b-a, \xi^{1/\alpha}\right)$. Then, for any concave function $\hat \phi$
$$
\sup _{t \in[a, b]}(\widehat \phi- \phi ) \geq \xi \quad \text{ or } \quad
\sup _{t \in[a+\delta, b-\delta]}( \phi- \widehat \phi ) \geq \xi,
$$
implies
$$
\inf _{t \in[c, c+\delta]}(\widehat \phi- \phi ) \geq \xi / 4  \quad \text{ or } \quad 
\inf _{t \in[c, c+\delta]}( \phi- \widehat \phi ) \geq \xi / 4,
$$
for some $c \in[a, b-\delta]$. More specifically we have $\widetilde{\mathscr{C}}^{\alpha} = \min\left\{ \frac{3\alpha}{4L},\frac{\alpha}{4^{1+\alpha}L}\right\} = \frac{\alpha}{4^{1+\alpha}L}$.
\end{lemma}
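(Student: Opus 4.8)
The plan is to follow the strategy of \citet[Lemma 5.2]{dumbgen2004consistency}: convert a \emph{pointwise} excess of $\widehat\phi-\phi$ or $\phi-\widehat\phi$ into one that holds \emph{uniformly} on a whole interval of length $\delta$, by exploiting (i) that $\widehat\phi$ is concave — hence lies below each of its supporting lines, lies above each of its chords, and has non-increasing one-sided derivatives — and (ii) that by \cref{Assumption_2:d-concave} the function $\phi$ is, over an interval of length $\ell\lesssim\delta$, within $\tfrac{L}{\alpha}\ell^{\alpha}$ of its affine approximation (immediate from the fundamental theorem of calculus together with the $(\alpha-1)$-Hölder bound on $\phi'$ when $\alpha>1$, and from the Lipschitz bound when $\alpha=1$). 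The whole argument is invariant under the reflection $t\mapsto a+b-t$, which preserves concavity and \cref{Assumption_2:d-concave}, so I may assume any distinguished point lies in the left half $[a,\tfrac{a+b}{2}]$ whenever convenient.

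First I would dispose of the case $\sup_{[a+\delta,b-\delta]}(\phi-\widehat\phi)\ge\xi$. Pick $t_0\in[a+\delta,b-\delta]$ with $\phi(t_0)-\widehat\phi(t_0)\ge\xi$ and let $p_0$ be a supergradient of $\widehat\phi$ at $t_0$, so $\widehat\phi(t)\le\widehat\phi(t_0)+p_0(t-t_0)$ for all $t$. Expanding $\phi$ near $t_0$ as $\phi(t_0)+s_0(t-t_0)$ up to error $\tfrac{L}{\alpha}|t-t_0|^{\alpha}$ (with $s_0$ a one-sided derivative at $t_0$, reading $s_0=0$ and the linear term as the constant $\phi(t_0)$ when $\alpha=1$), I get $\phi(t)-\widehat\phi(t)\ge\xi+(s_0-p_0)(t-t_0)-\tfrac{L}{\alpha}\delta^{\alpha}$ for $|t-t_0|\le\delta$. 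Choosing the side of $t_0$ on which $(s_0-p_0)(t-t_0)\ge0$ — a length-$\delta$ interval still inside $[a,b]$ because $t_0\in[a+\delta,b-\delta]$ — yields $\phi-\widehat\phi\ge\xi-\tfrac{L}{\alpha}\delta^{\alpha}$ there, which is $\ge\xi/4$ exactly when $\delta^{\alpha}\le\tfrac{3\alpha}{4L}\xi$; this produces the term $\tfrac{3\alpha}{4L}$ in $K^{\alpha}$.

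The substantive case is $\sup_{[a,b]}(\widehat\phi-\phi)\ge\xi$; here $\widehat\phi-\phi$ is \emph{not} concave, so the supporting-line bound points the wrong way and I must instead lower-bound $\widehat\phi$ through chords. Pick $t_0$ in the left half with $\widehat\phi(t_0)-\phi(t_0)\ge\xi$, and subtract $\phi$'s affine piece at $t_0$ to form the \emph{concave} function $\widetilde g:=\widehat\phi-\big(\phi(t_0)+s_0(\cdot-t_0)\big)$, which has $\widetilde g(t_0)\ge\xi$ and agrees with $\widehat\phi-\phi$ up to $\tfrac{L}{\alpha}|t-t_0|^{\alpha}$. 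Its superlevel set $\{\widetilde g\ge\xi/2\}$ is an interval containing $t_0$. If it reaches at least $\delta$ to the right of $t_0$, then $\widehat\phi-\phi\ge\xi/2-\tfrac{L}{\alpha}\delta^{\alpha}\ge\xi/4$ on $[t_0,t_0+\delta]\subseteq[a,b]$ and the first alternative of the conclusion holds. Otherwise $\widetilde g$ falls from $\ge\xi$ at $t_0$ to $\xi/2$ within horizontal distance $<\delta$, so by concavity its slope is $\le-\xi/(2\delta)$ from that point onward; hence $\widetilde g$ drops below $-\xi/4$ on a length-$\delta$ window located $O(\delta)$ to the right of $t_0$ — still inside $[a,b]$ because $t_0$ is in the left half — and there $\phi-\widehat\phi=-\widetilde g-O\!\big(\tfrac{L}{\alpha}\delta^{\alpha}\big)\ge\xi/4$, the second alternative. (If the superlevel set is short only on the left, concavity forces $\widetilde g$ to fall at least as fast to the right, so the right-hand analysis still applies; this dissymmetry is exactly why the hypothesis may be stated on all of $[a,b]$ for $\widehat\phi-\phi$ but must be confined to $[a+\delta,b-\delta]$ for $\phi-\widehat\phi$.)

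The main obstacle — and where the constants are earned — is this last sub-case: one must pick the multiple of $\delta$ by which to step right so that simultaneously (a) $\widetilde g\le-\xi/4$ uniformly on the resulting window, (b) the accumulated linearization error $\tfrac{L}{\alpha}(\text{distance from }t_0)^{\alpha}$ over that window stays $\le\xi/4$, and (c) the window lies in $[a,b]$. Balancing (a) against (b) forces distances of order $4\delta$ from $t_0$, hence the requirement $\tfrac{L}{\alpha}(4\delta)^{\alpha}\le\xi/4$, i.e. $\delta^{\alpha}\le\tfrac{\alpha}{4^{1+\alpha}L}\xi$, giving the second term of $K^{\alpha}$; since $\tfrac{\alpha}{4^{1+\alpha}L}\le\tfrac{3\alpha}{4L}$ always (as $4^{\alpha}\ge4$ for $\alpha\ge1$), the minimum is the second term, and taking in addition $\delta\le K(b-a)$ to keep every such window inside $[a,b]$ makes all the steps go through, yielding the stated $K$.
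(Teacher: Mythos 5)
Your proposal follows essentially the same route as the paper's proof: the same two-case split, the same affine surrogate of $\phi$ at the violating point $t_0$ (so that $\widehat\phi$ minus that affine piece is concave), the same dichotomy on whether this auxiliary concave function stays above $\xi/2$ for a length $\delta$ to the right of $t_0$ versus propagating a slope $\le -\xi/(2\delta)$ a few multiples of $\delta$ further, and the same constant bookkeeping giving $K^{\alpha}=\min\{\tfrac{3\alpha}{4L},\tfrac{\alpha}{4^{1+\alpha}L}\}=\tfrac{\alpha}{4^{1+\alpha}L}$. One small quantitative fix: in the second sub-case the drop threshold should be $-\xi/2$ rather than $-\xi/4$ (your own slope bound already gives $\widetilde g\le-\xi/2$ at distance $3\delta$ from $t_0$, exactly as in the paper), since $-\widetilde g\ge\xi/4$ minus a linearization error of up to $\xi/4$ only yields $\phi-\widehat\phi\ge 0$, whereas $-\widetilde g\ge\xi/2$ minus $\tfrac{L}{\alpha}(4\delta)^{\alpha}\le\xi/4$ delivers the claimed $\xi/4$.
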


Now, we have to show that one of our classes $\mathcal{D}_m$ does indeed contain useful perturbation functions $\Delta$. We denote with $\check{\phi}$ the unique continuous and piecewise linear function with knots in $\mathscr{T} \cap\left(U_1, U_n\right)$ such that $\check{\phi}_n=\widehat{\phi}_n$ on $\mathscr{T} = \{U_1,U_2,\dots, U_n\}$. Thus $\check{\phi}_n$ is one particular LS estimator for $\phi_0$. The following technical \cref{Dumbgen_2:improve} can be found in \citet[Lemma 3]{dumbgen2004consistency}.

\begin{lemma}\label{Dumbgen_2:improve}
For $0<u \leq b-a$ let
$$
\widetilde{\mathscr{M}}_n(u):=\min _{c \in[a, b-u]} \mathscr{M}_n[c, c+u].
$$
Suppose that $|\widehat{\phi}_n-\phi_0 | \geq \xi$ on some interval $[c, c+\delta] \subset[a, b]$ with length $\delta>0$. Then there is a function $\Delta \in \mathcal{D}_6$ such that

\begin{align}\label{conditions:improve}
\check{\phi}_n+\lambda \Delta \text { is concave for some } \lambda>0, \nonumber \\
\Delta(\phi_0-{\widehat{\phi}}_n) \geq \xi \Delta^2 \quad \text { on } \mathscr{T} ,\nonumber \\
\|\Delta\|_n^2 \geq n \widetilde{\mathscr{M}}_n(\delta / 2) / 4.
\end{align}
\end{lemma}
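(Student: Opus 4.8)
The plan is to exhibit $\Delta$ explicitly as a piecewise-linear ``plateau bump'' localized to the bad interval and then verify the three displayed requirements one at a time; the real content is the concavity requirement (first line of \cref{conditions:improve}) together with the knot budget $\Delta\in\mathcal D_6$, and it is that coupling which will be the main obstacle.

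First I would fix a sign: by continuity of $\widehat\phi_n-\phi_0$ and the hypothesis $|\widehat\phi_n-\phi_0|\ge\xi$ on $[c,c+\delta]$, either $\phi_0-\widehat\phi_n\ge\xi$ throughout $[c,c+\delta]$ or $\widehat\phi_n-\phi_0\ge\xi$ throughout $[c,c+\delta]$; the two cases are mirror images (replacing $\Delta$ by $-\Delta$ interchanges the convex and the concave kinks of the bump), so I treat the first one and build a $\Delta$ that is nonnegative and vanishes at every design point outside $[c,c+\delta]$. Concretely, let $\Delta$ equal $1$ on the middle half $[c+\delta/4,\,c+3\delta/4]$, decrease linearly to $0$ on the two quarter-length flanks, and be $0$ outside $[c,c+\delta]$. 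With this choice the second and third requirements are immediate. For the second line, on its support $\Delta\in[0,1]$ has the same sign as $\phi_0-\widehat\phi_n\ge\xi$, so $\Delta(\phi_0-\widehat\phi_n)\ge\xi\Delta\ge\xi\Delta^2$ on $\mathcal T$. For the third line, since $\Delta\ge 1/2$ on the window $[c+\delta/4,\,c+3\delta/4]$ of width $\delta/2$,
$$\|\Delta\|_n^2\ \ge\ \tfrac14\#\{i:U_i\in[c+\delta/4,\,c+3\delta/4]\}\ =\ \tfrac14\,n\,\mathscr M_n[c+\delta/4,\,c+3\delta/4]\ \ge\ \tfrac14\,n\,\widetilde{\mathscr M}_n(\delta/2),$$
which is the required lower bound.

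The main obstacle is the first line, $\check\phi_n+\lambda\Delta$ concave for some $\lambda>0$, under $\Delta\in\mathcal D_6$. The bare trapezoid has four knots: its two inner knots (the plateau corners) are concave kinks, hence harmless when added to the concave $\check\phi_n$; its two outer knots at the flank bases are convex kinks of size $\asymp\lambda/\delta$, and $\check\phi_n+\lambda\Delta$ stays concave at such a point only if $\check\phi_n$ already has a strict concave kink there to absorb it. Since $\check\phi_n$ is piecewise linear with strict concave kinks precisely at the design points where the concavity constraint is slack, I would move each flank base to a nearby design point at which $\check\phi_n$ bends, adjusting the affected flank; each move costs at most one extra knot, so $\Delta\in\mathcal D_6$, and for $\lambda$ small enough the added convex kink is then dominated by the concave kink of $\check\phi_n$. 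The genuinely delicate situation is when $\check\phi_n$ is affine on a maximal interval $[v_L,v_R]\supsetneq[c,c+\delta]$, so that its only nearby bending points are $v_L,v_R$: one then bases the bump at $v_L,v_R$ and must recover the second line on this enlarged support, using that the restriction of $\check\phi_n$ to $[v_L,v_R]$ is the least-squares line through the corresponding data, whence $\phi_0-\check\phi_n$ is concave on $[v_L,v_R]$ and therefore keeps a fixed sign -- the one matching $\Delta$ -- on a neighbourhood of $[c,c+\delta]$, with \cref{Assumption_2:d-concave} controlling how far that sign persists. Reconciling ``the convex kinks of $\Delta$ must lie on bending points of $\check\phi_n$'' (forced by the first line) with ``$\Delta$ is essentially supported on $[c,c+\delta]$'' (forced by the second line) is exactly the difficulty; the full case analysis is carried out in \citet[Lemma 3]{dumbgen2004consistency}, which I would cite for those details while the estimates above supply the second and third lines.
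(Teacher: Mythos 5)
You have correctly located where the difficulty sits, but be aware that the paper itself gives no proof of \cref{Dumbgen_2:improve}: it is invoked verbatim from \citet[Lemma 3]{dumbgen2004consistency}. Since your proposal also ends by citing that lemma "for the full case analysis," at the decisive step your route and the paper's coincide — both outsource the construction. What you add on top (the explicit trapezoid and the verification of the second and third lines of \cref{conditions:improve} for it) is correct as far as it goes, and your diagnosis that the base corners of the bump are convex kinks which can only be absorbed at strict kinks of $\check{\phi}_n$ is also right; indeed no nonzero compactly supported piecewise-linear $\Delta$ can keep $\check{\phi}_n+\lambda\Delta$ concave unless its convex kinks sit at such bending points.

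Viewed as a self-contained proof, however, there is a genuine gap: at no point do you exhibit a single $\Delta\in\mathcal D_6$ satisfying all three properties simultaneously. The trapezoid for which you verify the second and third lines violates the first line, and your repair (relocating the flank bases to bending points of $\check{\phi}_n$) enlarges the support of $\Delta$, so the second line must then be re-established at design points outside $[c,c+\delta]$, where the sign of $\phi_0-\widehat{\phi}_n$ is unknown. Your proposed rescue in the delicate case does not work as stated: concavity of $\phi_0-\check{\phi}_n$ on a maximal affine piece $[v_L,v_R]$ only bounds that difference \emph{from above} outside $[c,c+\delta]$ (a concave function exceeding $\xi$ on $[c,c+\delta]$ may become negative immediately beyond either endpoint), so no quantitative sign persistence up to $v_L,v_R$ follows; \cref{Assumption_2:d-concave} constrains $\phi_0$ only and says nothing about $\widehat{\phi}_n$ there; and the claim that the restriction of the concave LSE to a maximal affine piece is the ordinary least-squares line through the corresponding data is asserted without justification and is not something you may simply use. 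Reconciling the knot-placement constraint with the localization and the norm bound, within a budget of six knots, is exactly the content of the cited lemma, so either the citation should carry the whole statement (as the paper does) or the case analysis must actually be carried out; the intermediate position taken in your write-up proves neither.
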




Now we prove the theorem. Consider the event
$$
\mathcal{V}_n(\mathscr{C}) = \left\{\frac{\sup_{\left[a+\delta_n, b-\delta_n\right]}|\psi_0-\widehat{\psi}_n|}{\delta_n^{\alpha}} \geq \mathscr{C}\right\},
$$
for some (random) $\mathscr{C}>0$ that we aim to prove to be $O_p(1)$. Then we have $|\psi_0-\widehat{\psi}_n| = |v\circ \phi_0 -v \circ \widehat \phi_n | \leq \bar{B}_{h} |\phi_0 - \widehat \phi_h|$. It follows that 
$$
\sup _{u \in\left[a+\delta_n, b-\delta_n\right]}|\phi_0-\widehat{\phi}|(u) \geq \frac{\mathscr{C}}{\bar{B}_{h}}\delta_n^{\alpha}.
$$
From \cref{Dumbgen:improve}, replacing 
$$
\begin{cases}
\delta \leftarrow \delta_n\\
\xi \leftarrow \frac{\mathscr{C}}{\bar{B}_{h}}\delta_n^{\alpha},
\end{cases}
$$
there is a (random) interval $\left[c_n, c_n+\delta_n\right] \subset$ $[a, b]$ on which $|\phi_0-\widehat{\phi}_n| \geq \frac{\mathscr{C}}{4 \bar{B}_{h}} \delta_n^{\alpha}$, provided that 
\begin{itemize}
    \item $n$ is sufficiently large to guarantee that $\xi^{1/\alpha} \leq b-a$, so that $\min\{b-a, \widetilde{\mathscr{C}}\xi^{1/\alpha}\} = \widetilde{\mathscr{C}}\xi^{1/\alpha}$.
    \item $\frac{\mathscr{C}}{\bar{B}_{h}} \geq \widetilde{\mathscr{C}}^{-{\alpha}}$. This condition comes from the fact that $\delta \leq \widetilde{\mathscr{C}}\xi^{1/\alpha}$ that is $\delta_n \leq \widetilde{\mathscr{C}}\left(\frac{\mathscr{C}}{\bar{B}_{h}}\delta^{\alpha}\right)^{1/\alpha}$.
\end{itemize}
Because $[c_n,c_n+\delta_n]$ is a sub-interval of $\mathcal{U}$ on length $\delta_n$, \cref{lemma:A_n} (which holds by \cref{Assumption_1:d-concave}) guarantees that this interval contains at least one observation $U_i$. This implies \cref{Dumbgen_2:improve}. For any $\Delta \in \mathcal{D}_6$ define $I_{\Delta} = \{i: \Delta (U_i) <0\}$. Define also $J = \{i: \phi_0 (U_i) -\widehat{\phi}_n (U_i) >0\}$. Using that $0<h' \in [\underline{B}_{h},\bar{B}_{h}]$ with $\underline{B}_{h},\bar{B}_{h}>0$ (which comes from \cref{lemma:prop_d_alpha_1}) we have that
\begin{align*}
0<\underline{B}_{h}(\phi_0 - \widehat{\phi}_n) \leq  h(\phi_0)-h(\widehat{\phi}_n) \leq \bar{B}_{h}(\phi_0 - \widehat{\phi}_n) \quad\text{on points } U_i \text{ with } i \in J, \nonumber \\
\bar{B}_{h}(\phi_0 - \widehat{\phi}_n) \leq  h(\phi_0)-h(\widehat{\phi}_n) \leq \underline{B}_{h}(\phi_0 - \widehat{\phi}_n) \leq 0 \quad\text{on points } U_i \text{ with } i \in J^c,
\end{align*}
\noindent
and as a consequence, using that $0<h' \in [\underline{B}_{h},\bar{B}_{h}]$ we have 
\begin{align}\label{inv_Lipsh:improve}
0<\underline{B}_{h}^2(\phi_0 - \widehat{\phi}_n) \leq h'(\widehat{\phi}_n) (h(\phi_0)-h(\widehat{\phi}_n)) \leq \bar{B}_{h}^2(\phi_0 - \widehat{\phi}_n) \quad\text{on points } U_i \text{ with } i \in J, \nonumber \\
\bar{B}_{h}^2(\phi_0 - \widehat{\phi}_n) \leq h'(\widehat{\phi}_n) (h(\phi_0)-h(\widehat{\phi}_n)) \leq \underline{B}_{h}^2(\phi_0 - \widehat{\phi}_n)<0 \quad\text{on points } U_i \text{ with } i \in J^c.
\end{align}
\noindent
Recall that $o_h = (\underline{B}_{h},\bar{B}_{h})^{\top}$. By the definition of $S_n(6)$ and \cref{Dumbgen_2:improve}, there is a (random) function $\Delta \in \mathcal{D}_6$ such that
$$
\begin{aligned}
S_n(6) & \geq -\frac{\sum_{i=1}^n \Delta\left(U_i\right)h'(\widehat{\phi}_n\left(U_i\right)) E_i}{\|\Delta\|_n \|o_h\|} \\
& \stackrel{\eqref{thm:characterization}}{\geq}\frac{\sum_{i=1}^n \Delta\left(U_i\right)h'(\widehat{\phi}_n\left(U_i\right))  \left[h(\phi_0\left(U_i\right))-h(\widehat{\phi}_n\left(U_i\right))\right]}{\|\Delta\|_n \|o_h\|} \\
& = (\|\Delta\|_n \|o_h\|)^{-1}\left[\sum_{i \in I_{\Delta} \cap J} + \sum_{i \in I_{\Delta} \cap J^c} + \sum_{i \in I_{\Delta}^c \cap J} + \sum_{i \in I_{\Delta}^c \cap J^c} \right]\\
& \stackrel{\eqref{inv_Lipsh:improve}}{\geq}
(\|\Delta\|_n \|o_h\|)^{-1}\left[u_v^2 \sum_{i \in I_{\Delta} \cap J} \Delta(U_i) (\phi_0 (U_i)-\widehat{\phi}_n(U_i))  + \underline{B}_{h}^2 \sum_{i \in I_{\Delta}^c \cap J }\Delta(U_i) (\phi_0 (U_i)-\widehat{\phi}_n(U_i)) \right.+\\
& \quad +\left.\underline{B}_{h}^2 \sum_{i \in I_{\Delta} \cap J^c} \Delta(U_i) (\phi_0 (U_i)-\widehat{\phi}_n(U_i))  + \bar{B}_{h}^2 \sum_{i \in I_{\Delta}^c \cap J^c }\Delta(U_i) (\phi_0 (U_i)-\widehat{\phi}_n(U_i)) \right]\\
& \stackrel{\eqref{conditions:improve}}{\geq}
\frac{\mathscr{C}}{4 \bar{B}_{h}} \delta_n^{\alpha} (\|\Delta\|_n \|o_h\|)^{-1}\left[\bar{B}_{h}^2 \sum_{i \in I_{\Delta} \cap J} \Delta(U_i)^2 + \underline{B}_{h}^2 \sum_{i \in I_{\Delta}^c \cap J }\Delta(U_i)^2 +\right.\\
& \quad \left.+ \underline{B}_{h}^2 \sum_{i \in I_{\Delta} \cap J^c} \Delta(U_i)^2 + \bar{B}_{h}^2 \sum_{i \in I_{\Delta}^c \cap J^c }\Delta(U_i)^2 \right]\\
& \geq \frac{\mathscr{C}}{4 \bar{B}_{h}} \delta_n^{\alpha}\left\|\Delta\right\|_n \|o_h\|^{-1} \underline{B}_{h}^2 \\
& \stackrel{\eqref{conditions:improve}}{\geq}\frac{\mathscr{C}}{4 u_v} \delta_n^{\alpha}\left(n \widetilde{\mathscr{M}}_n\left(\delta_n / 2\right) / 4\right)^{1 / 2}\|o_h\|^{-1}  \underline{B}_{h}^2. 
\end{aligned}
$$
Consequently,
$$
\begin{aligned}
\mathscr{C} & \leq 4   \underline{B}_{h}^{-2} \|o_h\| \delta_n^{-{\alpha}}\left(n \widetilde{\mathscr{M}}_n\left(\delta_n / 2\right) / 4\right)^{-1 / 2} S_n(6).
\end{aligned}
$$
Now consider the event $B_{6,n,m}$ in \cref{lemma_1:improve} and the event $A_{n,\gamma}$ in \cref{lemma:A_n}. For $\gamma>4$ define
$$
G_{n,\gamma} = A_{n,\gamma} \cap B_{6,n,\gamma},
$$
which holds with probability at least $1-2n^{2-\gamma}$. In $G_{n,\gamma}$ we have
$$
\begin{aligned}
\mathscr{C} & \leq 4 \bar{B}_{h} \|o_h\| \underline{B}_{h}^{-2}  \delta_n^{-{\alpha}}\left((C_{\mathscr{M}}(1-\epsilon_n) / 8+o(1)) n \delta_n\right)^{-1 / 2}  S_n(6)\\
& = 4 \bar{B}_{h} \|o_h\| \underline{B}_{h}^{-2}  \left((C_{\mathscr{M}}(1-\epsilon_n) / 8+o(1))\right)^{-1 / 2} (\log(n))^{-1/2} S_n(6)\\
& \lesssim 4\bar{B}_{h} \|o_h\| \underline{B}_{h}^{-2}  \left(C_{\mathscr{M}} / 8\right)^{-1 / 2} (\log(n))^{-1/2} S_n(6),
\end{aligned}
$$
where we used that $o(1)$ is a positive quantity appearing in the denominator. More precisely we have that 
$$
\mathcal{V}_n(\mathscr{C}) = \left\{\frac{\sup_{\left[a+\delta_n, b-\delta_n\right]}|\psi_0-\widehat{\psi}_n|}{\delta_n^{\alpha}} \geq \mathscr{C} \right\} \subseteq \{ S_n(6) \geq \mathscr{C} (4\bar{B}_{h})^{-1} \|o_h\|^{-1} \underline{B}_{h}^{2}  \left(C_{\mathscr{M}} / 8\right)^{1 / 2}(\log n)^{1/2}\},
$$
then with probability less than $2n^{2-\gamma}$ we have $\sup_{\left[a+\delta_n, b-\delta_n\right]}|\psi_0-\widehat{\psi}_n|\geq C_{\mathscr{C}}\gamma \delta_n^{\alpha}$ provided (by \cref{lemma_1:improve})
$$
\mathscr{C} (4\bar{B}_{h})^{-1} \|o_h\|^{-1} \underline{B}_{h}^{2}  \left(C_{\mathscr{M}} / 8\right)^{1 / 2} \geq \gamma \sigma \sqrt{7},
$$
\noindent
More precisely, the condition on $\mathscr{C}$ is that $\mathscr{C} \geq \max\left\{\widetilde{\mathscr{C}}^{-\alpha}\bar{B}_{h}, \frac{\gamma \sigma \sqrt{7}}{(4\bar{B}_{h})^{-1} \|o_h\|^{-1} \underline{B}_{h}^{2}  \left(C_{\mathscr{M}} / 8\right)^{1 / 2}}\right\}$, where $\gamma>4$. Using that $o_h = (\underline{B}_{h},\bar{B}_{h})^{\top}$ and $\widetilde{\mathscr{C}}^{\alpha} = \frac{\alpha}{4^{1+\alpha}L} $ (from \cref{Dumbgen:improve}), this condition becomes 
$$
\mathscr{C} \geq \max\left\{\frac{4^{1+\alpha}L}{\alpha}\bar{B}_{h}, \frac{4\sqrt{7}\bar{B}_{h}\gamma \sigma  \sqrt{\bar{B}_{h}^2+\underline{B}_{h}^2}}{\underline{B}_{h}^{2}  \left(C_{\mathscr{M}} / 8\right)^{1 / 2}}\right\}
$$

\subsection{Proof of \cref{lemma_1:improve}}\label{sec:lemma_1:improve}
\cref{Assumption_3:subgaussian-errors} implies the following inequality
\begin{equation}\label{eq:concentration_lemma_dumbgen}
\mathbb{P}\left\{\frac{\left|\sum_{i=1}^n \zeta\left(U_i\right)h'(\widehat{\phi}_n\left(U_i\right)) E_i\right|}{\|\zeta\|_n \|o_h\|} \geq \eta\right\} \leq 2 \exp \left(-\frac{\eta^2}{2\sigma^2}\right),
\end{equation}
for any function $\zeta$ with $\|\zeta\|_n>0$ and arbitrary $\eta \geq 0$. Indeed, since $0<\underline{B}_{h} \leq h'(\widehat{\phi}_n\left(U_i\right)) \leq \bar{B}_{h}$, we have 
\begin{align}\label{key_ineq:improve}
\sum_{i=1}^n \zeta\left(U_i\right)h'(\widehat{\phi}_n\left(U_i\right)) E_i \leq \bar{B}_{h}\sum_{i\in I}\zeta\left(U_i\right)E_i
+ \underline{B}_{h}\sum_{i\in I^c}\zeta\left(U_i\right)E_i,
\end{align}
\noindent
where $I = \{i: \zeta\left(U_i\right)E_i >0\}$ and $I^c = \{i: \zeta\left(U_i\right)E_i \leq 0\}$. We get
\begin{align*}
&\mathbb{P}\left\{\frac{\sum_{i=1}^n \zeta\left(U_i\right)h'(\widehat{\phi}_n\left(U_i\right)) E_i}{\|\zeta\|_n \|o_n\|} \geq \eta\right\}\\
& \leq \mathbb{P}\left\{\bar{B}_{h}\sum_{i\in I}\zeta\left(U_i\right)E_i
+ \underline{B}_{h}\sum_{i\in I^c}\zeta\left(U_i\right)E_i \geq \eta \|\zeta\|_n \|o_h\|\right\} \\
& \leq \exp \left(-\eta \|\zeta\|_n \|o_h\|\right)\mathbb{E}\left[ \exp \left\{ \bar{B}_{h}\sum_{i\in I}\zeta\left(U_i\right)E_i
+ \underline{B}_{h}\sum_{i\in I^c}\zeta\left(U_i\right)E_i  \right\} \right]\\
& = \exp \left(-\eta \|\zeta\|_n \|o_h\|\right)\prod_{i\in I}\mathbb{E}\left[ \exp \left\{ \bar{B}_{h} \zeta\left(U_i\right)E_i \right\} \right] \prod_{i\in I^c}\mathbb{E}\left[\exp \left\{
\underline{B}_{h} \zeta\left(U_i\right)E_i  \right\} \right]\\
& \leq \exp \left(-\eta \|\zeta\|_n \|o_h\|\right)\prod_{i\in I}\left[ \exp \left\{ \bar{B}_{h}^2 \zeta^2\left(U_i\right)\sigma^2/2 \right\} \right] \prod_{i\in I^c}\left[\exp \left\{
\underline{B}_{h}^2 \zeta^2\left(U_i\right)\sigma^2/2  \right\} \right]\\
& \leq \exp \left(-\eta \|\zeta\|_n \|o_h\|\right) \exp \left\{  \|o_h\|^2\|\zeta\|_n^2\sigma^2/2 \right\}\\
& \leq \exp \left(-\eta^2 /(2\sigma^2)\right),
\end{align*}
\noindent
where we used that $-\eta x \|o_h\|+ \|o_h\|^2x^2\sigma^2/2$ is maximized at $x = \eta/(\sigma^2\|o_h\|)$. Equivalently we get
\begin{align*}
\mathbb{P}\left\{-\frac{\sum_{i=1}^n \zeta\left(U_i\right)h'(\widehat{\phi}_n\left(U_i\right)) E_i}{\|\zeta\|_n \|o_n\|} \geq \eta\right\} \leq \exp \left(-\frac{\eta^2}{2\sigma^2}\right),
\end{align*}
\noindent
and
\begin{align*}
\mathbb{P}\left\{\frac{\left|\sum_{i=1}^n \zeta\left(U_i\right)h'(\widehat{\phi}_n\left(U_i\right)) E_i\right|}{\|\zeta\|_n \|o_h\|} \geq \eta\right\} \leq 2\exp \left(-\frac{\eta^2}{2\sigma^2}\right).
\end{align*}
\noindent
For $1 \leq j \leq k \leq n$, let
$$
h_{j k}^{(1)}(u):=1\left\{u \in\left[U_j, U_k\right]\right\} \frac{u-U_j}{U_k-U_j} \quad \text { and } \quad h_{j k}^{(2)}(u):=1\left\{u \in\left[U_j, U_k\right]\right\} \frac{U_k-u}{U_k-U_j},
$$
if $U_j<U_k$. Otherwise let $\zeta_{j k}^{(1)}(u):=1\left\{u=U_k\right\}$ and $\zeta_{j k}^{(2)}(u):=0$. 
\begin{figure}
    \centering
    \includegraphics[width=0.5\linewidth]{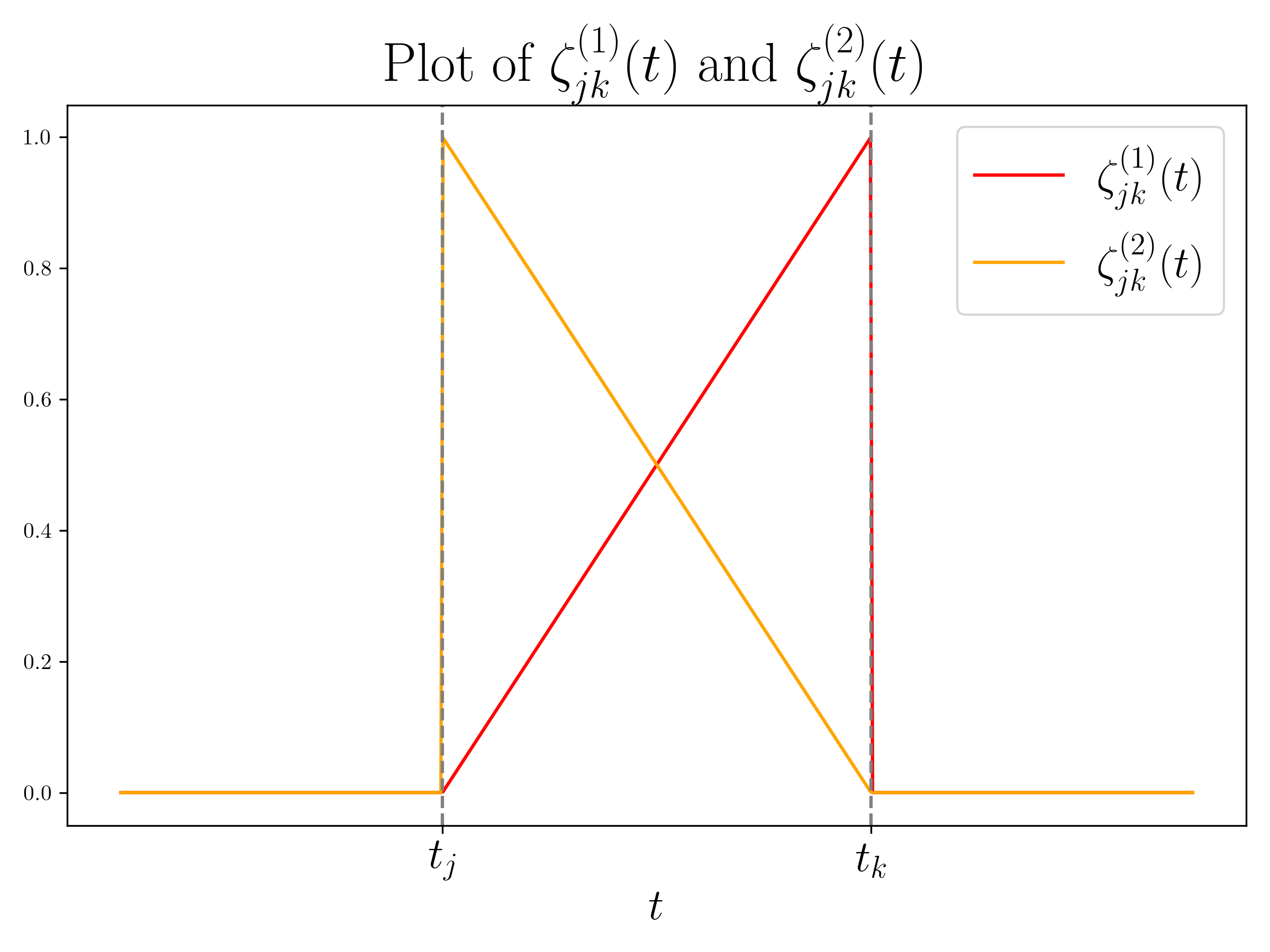}
    \caption{}
    \label{fig:h_jk}
\end{figure}

This defines a collection $\Phi$ of at most $n^2$ different nonzero functions $\zeta_{j k}^{(e)}$. Then for any fixed $\gamma_o>2$, \eqref{eq:concentration_lemma_dumbgen} implies
\begin{align*}
\mathbb{P}\left\{S_n  \leq \gamma_o \sigma (\log n)^{1 / 2} \right\} \geq 1-\frac{2}{n^{\gamma_o^2/2}},
\end{align*}
\noindent
where 
$$
S_n:=\max_{\zeta \in \Phi} \frac{\left|\sum_{i=1}^n \zeta\left(U_i\right)h'(\widehat{\phi}_n\left(U_i\right)) E_i\right|}{\|\zeta\|_n \|o_h\|}.
$$
For let $G_n(\zeta):=\|\zeta\|_n^{-1} \sum_{i=1}^n h\left(U_i\right) E_i$. Then by \eqref{eq:concentration_lemma_dumbgen},
\begin{align}\label{S_n:improve}
\mathbb{P}\left\{S_n \geq \gamma_o \sigma (\log n)^{1 / 2}\right\} & \leq \sum_{\zeta \in \Phi} \mathbb{P}\left\{\left|G_n(\zeta)\right| \geq \gamma_o \sigma  (\log n)^{1 / 2}\right\} \\
& \leq 2 n^2 \exp \left(-\gamma_o^2 \log (n) / 2\right) \nonumber \\
&= 2n^{-\frac{4-\gamma_o^2}{2}} \nonumber \\
& \rightarrow 0 \text { as } n \rightarrow \infty, \nonumber
\end{align}
\noindent
because $\gamma_o>2$. Recall that $\mathcal{D}_m$ is the family of all continuous, piecewise linear functions on $\mathbb{R}$ with at most $m$ knots. For any $\Delta \in \mathcal{D}_m$, there are $m^{\prime} \leq 2 m+2$ disjoint intervals on which $\Delta$ is either linear and non-negative, or linear and non-positive. For one such interval $B$ with $\mathscr{M}_n(B)>0$ let $\left\{U_1, \ldots, U_n\right\} \cap B=\left\{U_j, \ldots, U_k\right\}$. Then
$$
\Delta(u)=\Delta\left(U_k\right) \zeta_{j k}^{(1)}(u)+\Delta\left(U_j\right) \zeta_{j k}^{(2)}(u) \quad \text { for } u \in\left[U_j, U_k\right].
$$

\begin{figure}[h]
    \centering
    \begin{minipage}{0.45\linewidth}
        \centering
        \includegraphics[width=\linewidth]{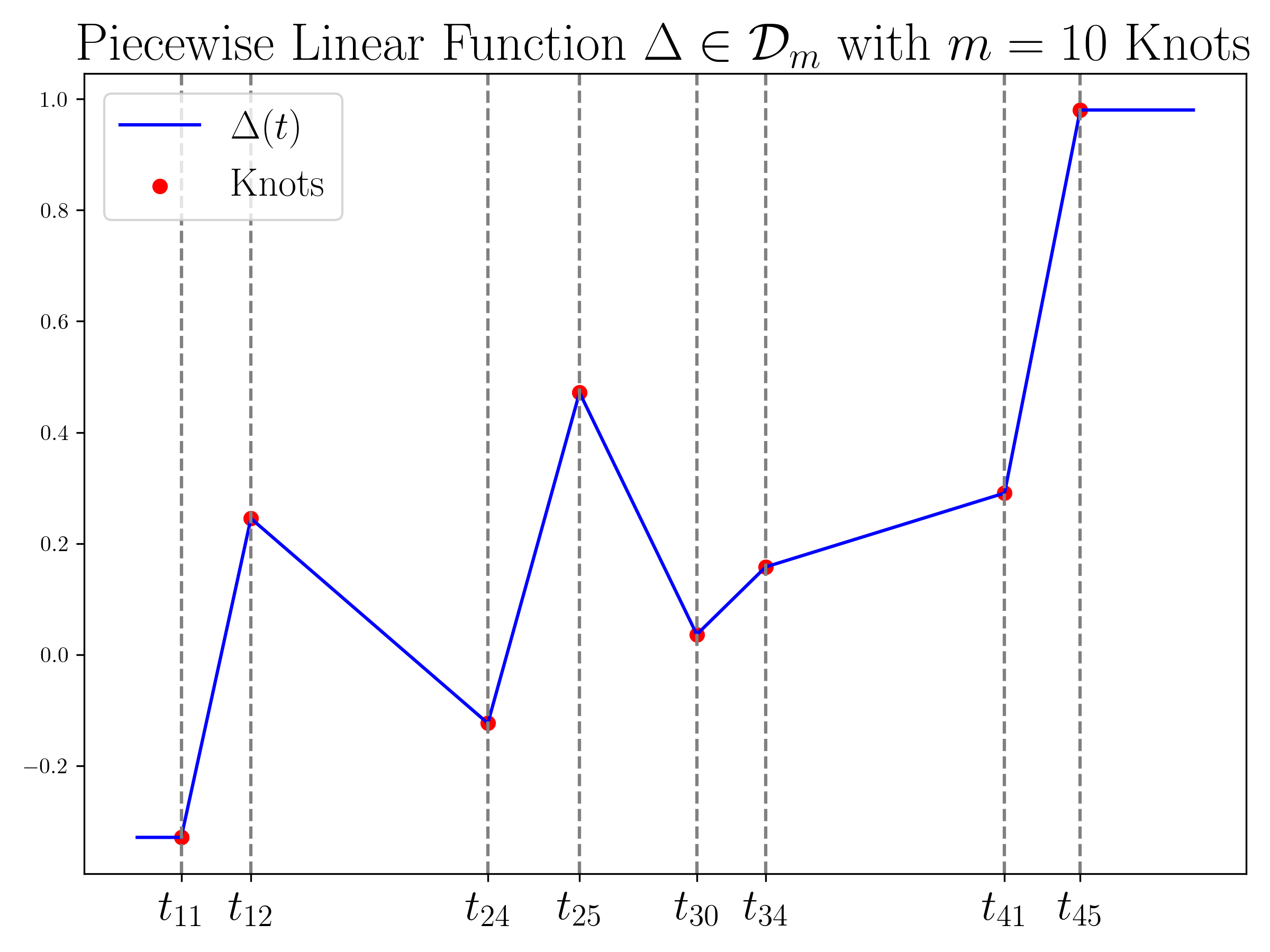}
        \caption{}
        \label{fig:delta}
    \end{minipage}
    \hfill
    \begin{minipage}{0.45\linewidth}
        \centering
        \includegraphics[width=\linewidth]{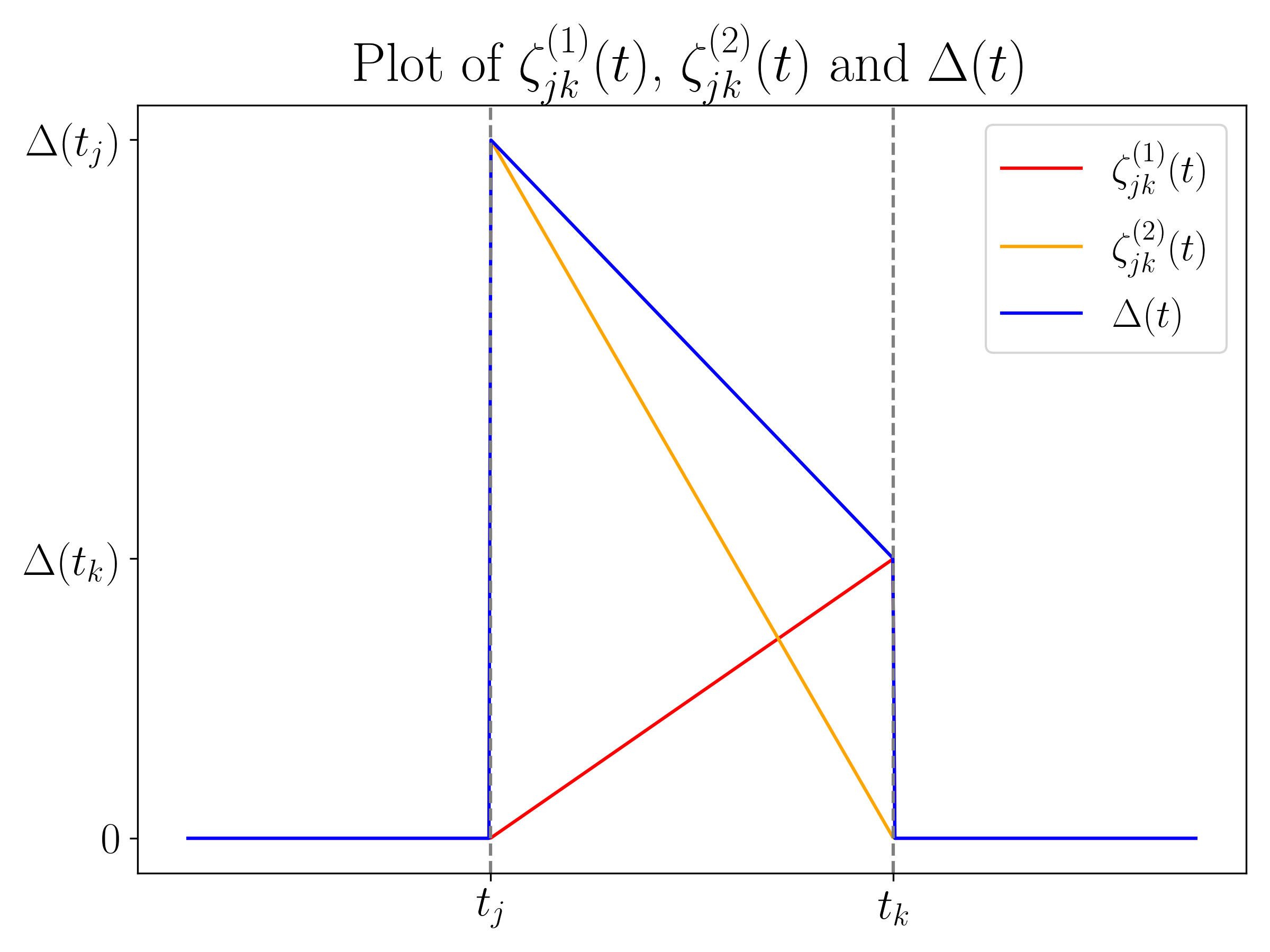}
        \caption{}
        \label{fig:deltasum}
    \end{minipage}
\end{figure}

This shows that there are real coefficients $\lambda_1, \ldots, \lambda_{4 m+4}$ and functions $\zeta_1, \ldots, \zeta_{4 m+4}$ in $\Phi$ such that $\Delta=\sum_{j=1}^{4 m+4} \lambda_j \zeta_j$ on $\left\{U_1, \ldots, U_n\right\}$, and $\lambda_j \lambda_k \zeta_j \zeta_k \geq 0$ for all pairs $(j, k)$. Consequently,
\begin{align*}
\frac{\left|\sum_{i=1}^n \Delta\left(U_i\right)h'(\widehat{\phi}_n\left(U_i\right)) E_i\right|}{\|\Delta\|_n \|o_h\|} & \leq \frac{\sum_{j=1}^{4 m+4}\left|\lambda_j\right|\left|\sum_{i=1}^n \zeta_j\left(U_i\right)h'(\widehat{\phi}_n\left(U_i\right)) E_i\right|}{\left(\sum_{j=1}^{4 m+4} \lambda_j^2\left\|\zeta_j\right\|_n^2\right)^{1 / 2} \|o_h\|} \\
& \leq \frac{\sum_{j=1}^{4 m+4}\left|\lambda_j\right|\left\|\zeta_j\right\|_n}{\left(\sum_{j=1}^{4 m+4} \lambda_j^2\left\|\zeta_j\right\|_n^2\right)^{1 / 2}} S_n \\
& \leq(4 m+4)^{1 / 2} S_n.
\end{align*}
\noindent
The last inequality is due to Cauchy-Schwarz inequality
$$
\sum_{j=1}^{4 m+4}\left|\lambda_j\right|\left\|h_j\right\|_n \times 1 = \left(\sum_{j=1}^{4 m+4} \lambda_j^2\left\|h_j\right\|_n^2\right)^{1 / 2}\left(\sum_{j=1}^{4m+4} 1^2 \right)^{1/2}.
$$
Thus by \eqref{S_n:improve} we have that
\begin{align*}
\mathbb{P}\left\{\sup_{m>0} \sup_{\Delta \in \mathcal{D}_m} \frac{\left|\sum_{i=1}^n \Delta\left(U_i\right)h'(\widehat{\phi}_n\left(U_i\right)) E_i\right|}{\|\Delta\|_n \|o_h\|(m+1)^{1 / 2}} \leq 2 \gamma_o  \sigma (\log n)^{1 / 2}\right\} \geq 1 - 2n^{-\frac{4-\gamma_o^2}{2}},
\end{align*}
\noindent
Setting $\gamma = 2\gamma_o$ we get the result.

\subsection{Proof of  \cref{Dumbgen:improve}} \label{sec:Dumbgen:improve}
We divide the proof in 2 parts. \textbf{Part A}: $(\widehat{\phi}-\phi)\left(t_o\right) \geq \xi$ for some $t_o \in[a, b]$. \textbf{Part B}: $(\widehat{\phi}-\phi)\left(t_o\right) \geq \xi$ for some $t_o \in[a+\delta, b-\delta]$.

\textbf{Part A}

Suppose that $(\widehat{\phi}-\phi)\left(t_o\right) \geq \xi$ for some $t_o \in[a, b]$. Without loss of generality let $t_o \leq(a+b) / 2$. Define the linear function
$$
\widetilde{\phi}(\cdot):=\left\{\begin{array}{cc}
\phi\left(t_o\right) & \text { if } \alpha=1 \\
\phi\left(t_o\right)+\phi^{\prime}\left(t_o\right)\left(\cdot-t_o\right) & \text { if } \alpha>1,
\end{array}\right.
$$
Define the concave function $\Phi := \widehat{\phi}-\widetilde{\phi}$. Now, note that by \cref{Assumption_2:d-concave} we have 
\begin{equation}\label{dumbgen:Phi_Lipsh}
|\widetilde{\phi}-\phi| \leq L |t-t_o|^{\alpha}/\alpha,
\end{equation}
Now, since by assumption $\sup_{t\in [a,d]} \widehat{\phi} -\phi \geq \xi$ we have 
\begin{equation}\label{dumbgen:Phi}
\Phi(t_o) = \widehat{\phi}(t_o)-\phi (t_o) \geq \xi.
\end{equation}

Now let $0<\delta \leq(b-a) / 8$.
\begin{itemize}
\item \textbf{Step 1.A}. Since $\Phi$ is concave, it follows that if $\Phi\left(t_o+\delta\right) \geq \xi / 2$ then, joint with \eqref{dumbgen:Phi}, $\Phi \geq \xi / 2$ on $\left[t_o, t_o+\delta\right]$.
    \item \textbf{Step 2.A}. Now assume $\Phi\left(t_o+\delta\right)<\xi / 2$. By concavity of $\Phi$ we have 
$$
\Phi(t_o) \underset{\Phi\text{ concave}}{\underbrace{\leq}} \Phi(t_o+\delta) - \delta \Phi '(t_o+\delta) \Rightarrow \Phi '(t_o+\delta) \leq  \delta^{-1}(\underset{< \xi/2}{\underbrace{\Phi(t_o+\delta)}} - \underset{\geq \xi}{\underbrace{\Phi(t_o)}}) < -\delta^{-1}\xi/2.
$$
Consequently, for $t \geq t_o+3 \delta$
\begin{align*}
\Phi(t) &\underset{\Phi\text{ concave}}{\underbrace{\leq}} \underset{< \xi/2}{\underbrace{\Phi(t_o+\delta)}} + \underset{\geq 2\delta}{\underbrace{(t-t_o-\delta)}} \underset{< -\delta^{-1} \xi / 2}{\underbrace{\Phi '(t_o+\delta)}}\\
& < \xi / 2-2\delta\delta^{-1}(\xi / 2) =-\xi / 2.
\end{align*}

\item \textbf{Step 1.A + 2.A}: $\Phi \geq \xi / 2$ or $\Phi \leq \xi / 2$ on some interval $J \subset\left[t_o, t_o+4 \delta\right]$ with length $\delta$. Using this fact and \eqref{dumbgen:Phi_Lipsh}, we have that $t \in J$
$$
\widehat{\phi} - \phi = \Phi + \widetilde{\phi} - \phi \quad \text{ or } \quad \phi - \widehat{\phi} = \phi- \widetilde{\phi} -\Phi,  
$$
is greater or equal than
$$
\xi/2 - L (t-t_o)^{\alpha}/\alpha \geq \xi/2 -L (4\delta)^{\alpha}/\alpha ,
$$
which is greater or equal than $\xi/4$ provided that $\delta \leq \frac{1}{4}\left( \frac{\alpha \xi}{4L}\right)^{1/\alpha}$.

\textbf{Part B}

Now suppose that $(\phi-\widehat{\phi})\left(t_o\right) \geq \xi$ for some $\xi>0$ and $t_o \in[a+\delta, b-\delta]$, where $0<\delta \leq(b-a) / 2$. By \cref{Assumption_2:d-concave} and concavity of both $\phi,\widehat{\phi}$ there exist numbers $\gamma, \widehat{\gamma}$ such that
$$
\phi(t) \geq \phi\left(t_o\right)+\gamma\left(t-t_o\right)-L\left|t-t_o\right|^{\alpha}/\alpha,
$$
and
$$
\widehat{\phi}(t) \leq \widehat{\phi}\left(t_o\right)+\widehat{\gamma}\left(t-t_o\right).
$$
Thus
$$
(\phi-\widehat{\phi})(t) \geq \xi+(\gamma-\widehat{\gamma})\left(t-t_o\right)-L\left|t-t_o\right|^{\alpha}/\alpha \geq \xi-L \delta^{\alpha},
$$
for all $t$ in the interval $\left[t_o, t_o+\delta\right]$ or $\left[t_o-\delta, t_o\right]$, depending on the sign of $\gamma-\widehat{\gamma}$. Moreover, $\xi-L \delta^\alpha / \alpha \geq \xi / 4$, provided that $\delta \leq \left( \frac{3\alpha \xi}{4L}\right)^{1/\alpha}$.
\end{itemize}

\newpage

\section{Remarks}\label{app:remarks}

\subsection{Remark on \cref{strong-concavity-rev}}\label{sec:remark_on_smoothness_assumption}

In this section, we discuss sufficient conditions under which \cref{strong-concavity-rev} holds. Under twice differentiability of $\psi_i$, the strong concavity of $\operatorname{rev}_i\left(\cdot \mid \mathbf{p}_{-i}\right)$ uniformly over $\mathbf{p}_{-i}$, is guaranteed if there exists a value $\mu_i > 0$, such that $-\partial^2_{p^2}  \operatorname{rev}_i(p \mid \mathbf{p}_{-i})\geq \mu_i$ for every $p \in \mathcal{P}_i$, uniformly in $\mathbf{p}_{-i}\in \mathcal{P}_{-i}$. Uniformity in $\mathbf{p}_{-i}\in \mathcal{P}_{-i}$ means that $\mu_i$ does not depends on $\mathbf{p}_{-i}$. Here we give some examples where this condition is satisfied.

\begin{enumerate}[leftmargin=*, label=(\arabic*)]
\item {\bf Linear demand models \citep{li2024lego}}. In this case $\operatorname{rev}_i(p \mid \mathbf{p}_{-i}) = p(\alpha_i -\beta_i p +\boldsymbol{\gamma}_i^{\top}\mathbf{p}_{-i})$, and $-\partial^2_{p^2}\operatorname{rev}_i(p \mid \mathbf{p}_{-i}) = 2\beta_i>0$.
\item {\bf Concave demand models}. If $\psi_i$ is such that $\psi_i'\geq B_{\psi_i'}$ for some $B_{\psi_i'}>0$ as in \cref{ass:parameter space} and is concave, which implies $\partial_{p}\operatorname{rev}_i(p \mid \mathbf{p}_{-i}) = \psi_{i}(-\beta_i p + \boldsymbol{\gamma}_i^{\top}\mathbf{p}_{-i}) -\beta_i p \psi_{i}'(-\beta_i p + \boldsymbol{\gamma}_i^{\top}\mathbf{p}_{-i})$, then 
\begin{align*}
-\partial_{p^2}^2\operatorname{rev}_i(p \mid \mathbf{p}_{-i}) &= 2\psi_{i}'(-\beta_i p + \boldsymbol{\gamma}_i^{\top}\mathbf{p}_{-i}) -\beta_i^2 p \psi_{i}''(-\beta_i p + \boldsymbol{\gamma}_i^{\top}\mathbf{p}_{-i}) \\
&\geq 2\psi_{i}'(-\beta_i p + \boldsymbol{\gamma}_i^{\top}\mathbf{p}_{-i}) \geq 2B_{\psi_i'}>0.
\end{align*}
Note that requiring concavity of $\psi_i$ is equivalent to requiring $1$-concavity of $\psi_i$, then, by the inclusion property of $s$-concave functions, any (positive and strictly increasing) $s_i$-concave $\psi_i$ with $s_i \geq 1$ satisfies \cref{strong-concavity-rev}.
\item {\bf s-concave demand functions}. Suppose that $\psi_i$ satisfies \cref{ass:parameter space} and \cref{ass:valuation_i}. Then, by \cref{prop:s-concavity-iff}, $\psi_i$ is $s_i$-concave for some $s_i>-1$, and by \cref{prop:alpha-concave} this is equivalent to $\psi_i'' \leq (1-s_i)(\psi_i')^2/\psi_i$. This implies that
\begin{align*}
&-\partial_{p^2}^2\operatorname{rev}_i(p \mid \mathbf{p}_{-i}) \\
&= 2\psi_{i}'(-\beta_i p + \boldsymbol{\gamma}_i^{\top}\mathbf{p}_{-i}) -\beta_i^2 p \psi_{i}''(-\beta_i p + \boldsymbol{\gamma}_i^{\top}\mathbf{p}_{-i}) \\
&\geq 2\psi_{i}'(-\beta_i p + \boldsymbol{\gamma}_i^{\top}\mathbf{p}_{-i}) -\beta_i^2 p (1-s_i)[\psi_{i}'(-\beta_i p + \boldsymbol{\gamma}_i^{\top}\mathbf{p}_{-i})]^2/\psi_{i}(-\beta_i p + \boldsymbol{\gamma}_i^{\top}\mathbf{p}_{-i})\\
&= \psi_{i}'(-\beta_i p + \boldsymbol{\gamma}_i^{\top}\mathbf{p}_{-i})\left\{2 -\beta_i^2 p (1-s_i)\psi_{i}'(-\beta_i p + \boldsymbol{\gamma}_i^{\top}\mathbf{p}_{-i})/\psi_{i}(-\beta_i p + \boldsymbol{\gamma}_i^{\top}\mathbf{p}_{-i})\right\},
\end{align*}
and since $\psi_i' \geq B_{\psi_i' }>0$ a sufficient condition is that 
$$
2 -\beta_i^2 p (1-s_i)\psi_{i}'(-\beta_i p + \boldsymbol{\gamma}_i^{\top}\mathbf{p}_{-i})/\psi_{i}(-\beta_i p + \boldsymbol{\gamma}_i^{\top}\mathbf{p}_{-i})\geq C_i,
$$
for some constant $C_i>0$. Note that since $\psi_i'/\psi_i$ is non-negative, for $s_i\geq 1$ the condition is trivially satisfied with $C_i=2$, indeed this condition coincides with the case studied before for concave demand functions. When $s_i \in (-1,1)$ the condition becomes 

$$
p\frac{d}{dp}\left\{ \log \circ \psi_{i}(-\beta_ip   + \boldsymbol{\gamma}_i^{\top}\mathbf{p}_{-i})\right\} = -\beta_i p \frac{\psi_{i}'(-\beta_ip   + \boldsymbol{\gamma}_i^{\top}\mathbf{p}_{-i})}{\psi_{i}(-\beta_ip   + \boldsymbol{\gamma}_i^{\top}\mathbf{p}_{-i})} \geq -K_i,
$$
where we set $K_i = \frac{2-C_i}{\beta_i(1-s_i)}$. Note that the left-hand side is always negative, then, a necessary condition such that the inequality holds is that the right-hand side is negative too, i.e., if $0<C_i\leq 2$, or $K_i>0$. For example, suppose that $\psi_i$ is $0$-concave (i.e. log-concave), that is $\psi_i = e ^{\phi_i}$ for some increasing concave function $\phi_i$, where $\phi_i$ needs to be increasing in order to guarantee that $\psi_i$ is increasing. The condition becomes
$$
\beta_i p \phi'(-\beta_i p+\boldsymbol{\gamma}_i^{\top}\mathbf{p}_{-i})=-p \frac{d}{dp} \log \psi_i(-\beta_i p+\boldsymbol{\gamma}_i^{\top}\mathbf{p}_{-i}) \leq K_i,
$$
and it's easy to see that any $\phi_i(x) =ax$ for $a>0$ sufficiently small works. Indeed the condition becomes $p\leq K_i/(\beta_i a)$ for all $p \in \mathcal{P}_i$, which holds as long as $K_i/(\beta_i a)\geq \overline{p}_i$, or $a\leq K_i/(\beta_i \overline{p}_i)$. Another example is $\psi_i(x) = -1/(x+b)$ for $b$ sufficiently large, indeed the condition becomes $\frac{p}{(-\beta_ip + \boldsymbol{\gamma_i^{\top}\mathbf{p}_{-i}}+b)^2}\leq K_i/\beta_i$ or $p\leq K_i^*$ for some $K_i^*(b)>0$, and $b$ must by such that $K_i^* \geq \overline{p}_i$.
\end{enumerate}

\subsection{\cref{remark:common_exploration} on the common exploration phase}\label{sec:remark_on_common_exploration}

To highlight the importance of a common exploration phase, we demonstrate that no seller has an incentive to extend or shorten the exploration phase. If firm $i$ prolongs its exploration phase, it risks \emph{incomplete learning} of its model parameters \citep{keskin2018incomplete}, leading to inaccurate parameter estimation and, consequently, a loss in revenue. Conversely, if firm $i$ shortens its exploration phase, it may achieve consistent estimation but at the cost of higher regret due to insufficient exploratory data. For a better illustration, consider a scenario where firm $i$ extends its exploration phase to $\tau'>\tau$ while all other firms use a phase of length $\tau$. When firm $i$ estimates its parameters using data $\{(\mathbf{p}^{(t)},y_i^{(t)})\}_{t\leq \tau'}$, the data from times $\tau+1$ to $\tau'$ are no longer iid. In this later period, all other firms have already started their exploration, causing the prices $\{\mathbf{p}_{-i}^{(t)}\}_{t=\tau+1,\dots,\tau'}$ to be dependent on the earlier data $\{\mathbf{p}_{-i}^{(t)}\}_{t\leq \tau}$. This dependency can result in inconsistent parameter estimation and a loss of efficiency for firm $i$. Conversely, if firm $i$ opts for a shorter exploration phase than $\tau$, it may achieve consistent estimation, but the reduced amount of exploratory data could lead to lower efficiency compared to firms that adhere to the full exploration phase.

\subsection{\cref{remark:Need of two different phases} on the need for two different phases for model estimation}\label{sec:remark_on_need_two_phases}

In principle, one could estimate $(\boldsymbol{\theta}_i, \psi_i)$ jointly using the full exploration phase. For instance, \cite{balabdaoui2019least} employ a profile least squares approach to achieve $L^2$ convergence for the joint LSE of $(\boldsymbol{\theta}_i, \psi_i)$ based on the data $\{(y_i^{(t)}, \mathbf{p}^{(t)})\}_{t \leq \tau}$. However, in our context, $L^2$ convergence alone does not suffice to establish an upper bound on the total expected regret. To elaborate, let $(\boldsymbol{\theta}_i^{JLS}, \psi_i^{JLS})$ denote the joint estimator proposed in \cite{balabdaoui2019least} based on the data $\{(y_i^{(t)}, \mathbf{p}^{(t)})\}_{t \leq \tau}$. Their result applied to our setting gives the following $L^2$ convergence rate:
\[
\textstyle  \left(\int_{\mathcal{P}} \left(\psi_i^{JLS}(\langle\mathbf{p},\boldsymbol{\theta}_i^{JLS}\rangle) - \psi_i(\langle\mathbf{p}, \boldsymbol{\theta}_i\rangle)\right)^2 d \mathscr{D}(\mathbf{p})\right)^{1/2} = O_p\left(\tau^{-1/3}\right).
\]
Here, the convergence is measured with respect to the measure $\mathscr{D}$, which governs the distribution of prices during the exploration phase. Consequently, the $L^2$ distance between $(\boldsymbol{\theta}_i, \psi_i)$ and $(\boldsymbol{\theta}_i^{JLS}, \psi_i^{JLS})$ can only be evaluated for prices distributed according to $\mathscr{D}$. However, during the exploitation phase ($t \geq \tau + 1$), the distribution of prices $\mathbf{p}^{(t)}$ may differ from $\mathscr{D}$, making it impossible to evaluate the performance of the estimators during this phase in an $L^2$ sense. A uniform convergence result, which does not depend on any specific price distribution, solves this issue. However, achieving uniform convergence of the form
\begin{equation}\label{eq:sup_conv}
\textstyle  \sup_{\mathbf{p} \in \mathcal{P}} \left|\psi_i^{JLS}(\langle\mathbf{p},\boldsymbol{\theta}_i^{JLS}\rangle) - \psi_i(\langle\mathbf{p}, \boldsymbol{\theta}_i\rangle)\right|
\end{equation}
is challenging when $(\boldsymbol{\theta}_i^{JLS}, \psi_i^{JLS})$ are estimated jointly. To overcome this difficulty, our approach is to first estimate $\boldsymbol{\theta}_i$ and then, conditional on this estimator, separately estimate $\psi_i$ using an independent dataset. This two-step procedure ensures conditional independence and yields consistent estimators, addressing the difficulties of the joint estimation strategy.

\newpage

\section{Simulations}\label{app:additional_experimets}

\subsection{Contraction constant varying regimes}\label{sec:contraction_varying_s_i}
We evaluate the performance of \cref{algo:semiparametric} in the sequential price competition model with $N \in \{2,4,6\}$ sellers. Each seller $i \in \mathcal{N}=\{1,2,\dots,N\}$ sets prices supported on $\mathcal{P}=[0,3]^{\times N}$. We examine how convergence behavior changes as the contraction constant $L_{\boldsymbol{\Gamma}}$ varies --specifically, when it is close to $0$, near $0.5$, or approaches $1$.

Recall that the contraction constant of the Best Response operator is
$$
L_{\boldsymbol{\Gamma}} = \sup _{i \in \mathcal{N}}\left\|g_i^{\prime}\right\|_{\infty}\frac{\left\|\gamma_i\right\|_1}{\beta_i}
$$
which simplifies to $L_{\boldsymbol{\Gamma}} = \sup _{i \in \mathcal{N}}\frac{\left\|\gamma_i\right\|_1}{\beta_i}$, when the link functions $\psi_i$ are log-concave (see \cref{remark:ENE}). In our experiments, we take 
$$
\psi_i(u)=\Phi\left(\frac{2}{i} u\right), \quad i=1,2,\dots,N,
$$
where $\Phi$ is the cumulative distribution function of a standard Gaussian random variable. Since $\Phi$ is log-concave, the simplification applies. We now identify the parameter regimes for $\beta_i$ that yield $L_{\boldsymbol{\Gamma}} \approx 0$ and $L_{\boldsymbol{\Gamma}} \approx 1$.

\paragraph{Case 1: $L_{\boldsymbol{\Gamma}} \downarrow 0$.} In the extreme case $L_{\boldsymbol{\Gamma}} \downarrow 0$, we must have $\beta_i \uparrow 1$ for all $i \in \mathcal{N}$, which forces $\boldsymbol{\gamma}_i \downarrow 0$ component-wise.

\paragraph{Case 2: $L_{\boldsymbol{\Gamma}} \uparrow 1$.}

To approach the opposite extreme, $L_{\boldsymbol{\Gamma}} \uparrow 1$, consider

$$
\sup_{i \in \mathcal{N}}\sup_{\beta_i>0, \boldsymbol{\gamma}_i\in \mathbb{R}^{n-1}}\frac{\left\|\boldsymbol{\gamma}_i\right\|_1 }{\beta_i} < 1, \quad \text{such that } \quad \beta_i^2 + \|\boldsymbol{\gamma}_i\|_2^2 = 1, \quad \forall i \in \mathcal{N}.
$$
Without loss of generality, fix $i=1$. By the Cauchy–Schwarz inequality,
\[
\|\boldsymbol{\gamma}_1\|_1 \leq \sqrt{N-1}\,\|\boldsymbol{\gamma}_1\|_2.
\]
So the largest possible $\|\boldsymbol{\gamma}_1\|_1$ occurs when entries of $\boldsymbol{\gamma}_1$ are equally distributed. In that case, the requirement becomes
\[
\sqrt{N-1}\,\|\boldsymbol{\gamma}_1\|_2 < \beta_1.
\]
Since $\|\boldsymbol{\gamma}_1\|_2 = \sqrt{1-\beta_1^2}$, this becomes $\sqrt{N-1}\,\sqrt{1-\beta_1^2} < \beta_1$. Squaring both sides gives
\[
(N-1)(1-\beta_1^2) < \beta_1^2
\quad \iff \quad
\beta_1^2 > \frac{N-1}{N}.
\]
Thus, $L_{\boldsymbol{\Gamma}} \uparrow 1$ as $\beta_1 \downarrow \sqrt{\frac{N-1}{N}}$. In particular, we have the following scheme:
\[
\begin{aligned}
N=2: &\quad \beta_1 \downarrow  \sqrt{\tfrac{1}{2}} \approx 0.707 \implies  L_{\boldsymbol{\Gamma}} \uparrow 1,\\
N=4: &\quad \beta_1 \downarrow  \sqrt{\tfrac{3}{4}} \approx 0.866 \implies  L_{\boldsymbol{\Gamma}} \uparrow 1,\\
N=6: &\quad \beta_1 \downarrow  \sqrt{\tfrac{5}{6}} \approx 0.913 \implies  L_{\boldsymbol{\Gamma}} \uparrow 1.
\end{aligned}
\]

A summary of the two extreme cases can be found in \cref{table:summary_contraction}.

\paragraph{Simulation design.}

\begin{table}[t]
\caption{Extreme cases for $\beta_i$ and corresponding behavior of $L_{\boldsymbol{\Gamma}}$.}
\label{table:summary_contraction}
\begin{center}
\begin{tabular}{c|c|c}
\toprule
\textbf{Case} & \textbf{Condition on $\beta_i$} & \textbf{Condition on $\boldsymbol{\gamma}_i$} \\
\midrule
$L_{\boldsymbol{\Gamma}} \downarrow 0$ & $\beta_i \uparrow 1$, $\forall i \in \mathcal{N}$ & $\|\boldsymbol{\gamma}_i\|_1 \downarrow 0$, $\forall i \in \mathcal{N}$ \\
\midrule
$L_{\boldsymbol{\Gamma}} \uparrow 1$ & $\beta_1 \downarrow \sqrt{\nicefrac{N-1}{N}}$ &
$\gamma_{1j}\uparrow \nicefrac{1}{\sqrt{N(N-1)}}$ for $j \in \mathcal{N} \setminus \{1\}$ \\ 
$N=2$ & $\beta_1 \downarrow \sqrt{\nicefrac{1}{2}}$ & $\gamma_{1,2}\uparrow \nicefrac{1}{\sqrt{2}}$ \\ 
$N=4$ & $\beta_1 \downarrow \sqrt{\nicefrac{3}{4}}$ & $\gamma_{1,j}\uparrow \nicefrac{1}{\sqrt{12}}$ for $j =2,3,4$ \\ 
$N=6$ & $\beta_1 \downarrow \sqrt{\nicefrac{5}{6}} $ & $\gamma_{1,j}\uparrow \nicefrac{1}{\sqrt{30}}$ for $j =2,3,4,5,6$ \\
\bottomrule
\end{tabular}
\end{center}
\end{table}

Given the discussion above, and the table in \cref{table:summary_contraction}, for each $N \in \{2,4,6\}$, we run simulations under three representative values of $\beta_i$, yielding a total of $9$ settings:
\begin{itemize}
    \item \textbf{N=2}. $\beta_i = 0.71, 0.84, 0.97$.
    \item \textbf{N=4}. $\beta_i = 0.87, 0.93, 0.99$.
    \item \textbf{N=6}. $\beta_i = 0.92, 0.955, 0.99$.
\end{itemize}

In all experiments, demand noise follows $\text{Unif}[-0.05,0.05]$. The common exploration phase has length $\frac{2}{3}T^{5/7}$ and the length $\kappa_i$ of the first part of the exploration phase is determined using \cref{eq:optimal_kappa} with $\mathscr{C}_i=1$, $\mathscr{B}_i=1$. During the first phase of the exploration phase, the values of $\mathbf{p}^{(t)}$ are samples from a multivariate Gaussian with mean 
$$
\mathbf{m} = \left(\frac{\bar{p}_i+\underline{p}_i}{2}\right)_{i \in \mathcal{N}},\quad \Sigma = \operatorname{diag}\left(\sqrt{\frac{\bar{p}_i-\underline{p}_i}{2}}\right)_{i \in \mathcal{N}}.
$$
For each horizon $T \in \{100, 400, 800, 1600, 3200\}$, we apply \cref{algo:semiparametric} and repeat the simulation $30$ times to obtain averages and $95\%$ confidence intervals.

The results are displayed in \cref{fig:experiment_L_variation}, and a log-log version to highlight the convergence rate in \cref{fig:experiment_L_variation_log_log}. As expected, convergence accelerates as the contraction constant approaches $1$, in line with the intuition highlighted in \eqref{eq:conv:informamal}.

\subsection{Different exploration phases across sellers}\label{sec:diff_explor_simulation}
In this experiment, we replicate the setup in \cref{sec:contraction_varying_s_i} but relax the common exploration phase assumption in order to test the robustness of our algorithm. An illustration of misaligned exploration phases is provided in \cref{fig:diff_exploration_phases}.

\begin{figure}[h]
\centering
\includegraphics[width=1\linewidth]{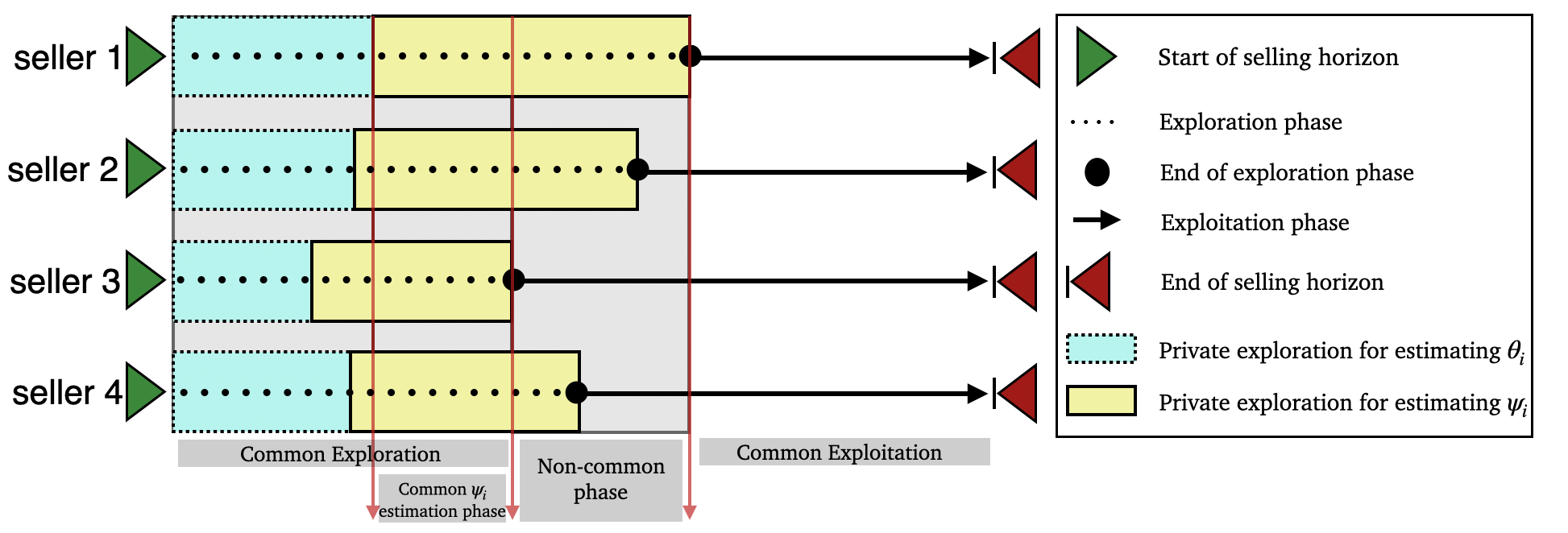}
    \caption{Illustration of our policy (\cref{algo:semiparametric}) with $N=4$ sellers in sequential price competition under nonlinear demands. For each seller $i \in\{1,2,3,4\}$, in their exploration phase (dotted line) of length $ \tau_i$, they offer randomized prices following their distribution $\mathscr{D}_i$. Within the exploration phase, each seller has a private phase for estimating $\boldsymbol{\theta}_i$ (blue box with dotted border), with length $\tau_i\kappa_i$ and a private phase for estimating $\psi_i$ (yellow box with continued border line), with length $\tau_i(1-\kappa_i)$. The $i$-th seller's price experiment ends at period $t= \tau_i$ (black circles). Subsequently, in their exploitation phase (represented by the continued black line), seller $i$ offers prices based on the estimators generated in the exploration phase.}
    \label{fig:diff_exploration_phases}
\end{figure}

We set the exploration length of each seller as
\[
\tau_i := \tau_{\text{base}} + u_i \cdot \tau_{\text{base}}, \quad u_i \sim \text{Unif}[0.25,0.75],
\]
where $\tau_{\text{base}} = \tfrac{2}{3}T^{5/7}$. The allocation parameter $\kappa_i$ for the first part of exploration is computed using \cref{eq:optimal_kappa}, with $\tau$ replaced by $\tau_i$ and fixed constants $\mathscr{C}_i=\mathscr{B}_i=1$.  
For this simulation, we set $\beta_i = 0.8$ for $N=2$, $\beta_i = 0.92$ for $N=4$, and $\beta_i = 97$ for $N=6$, which satisfy \cref{ass:exist_uniq_NE}.  

During the exploration phase, the prices $p_i^{(t)}$ are sampled independently across sellers from a Gaussian distribution $\mathcal{N}(m_i, \sigma_i)$ with
\[
m_i = \frac{\bar{p}_i + \underline{p}_i}{2}, 
\quad 
\sigma_i = \sqrt{\frac{\bar{p}_i - \underline{p}_i}{2}}, 
\quad i \in \mathcal{N}.
\]

For each horizon $T \in \{100, 400, 800, 1600, 3200\}$, we run \cref{algo:semiparametric} and repeat the simulation 30 times, reporting averages and $95\%$ confidence intervals.

The results are reported in \cref{fig:different_exploration}, together with a log-log plot in \cref{fig:experiment_L_variation_log_log}, which illustrate the convergence of prices to the Nash equilibrium and the decay of regret.  

As anticipated, the estimation of $\psi_i$ deteriorates, in line with the intuition of \cref{remark:common_exploration}. Indeed, as the horizon $T$ grows, the exploration lengths $\tau_i$ increase for all sellers, which enlarges the non-common exploration phase (see \cref{fig:different_exploration}). Consequently, a larger number of price vectors $\mathbf{p}^{(t)}$ are no longer i.i.d., making parameter estimation increasingly inconsistent. Nevertheless, despite the adverse effect on the estimation of $\psi_i$, both convergence to the Nash equilibrium and regret performance remain robust, with empirical regret rates matching -- or even surpassing -- the theoretical benchmarks of $-1/7$ for equilibrium convergence and $5/7$ for regret.

\begin{figure}[ht]
    \centering
    \includegraphics[width=1\linewidth]{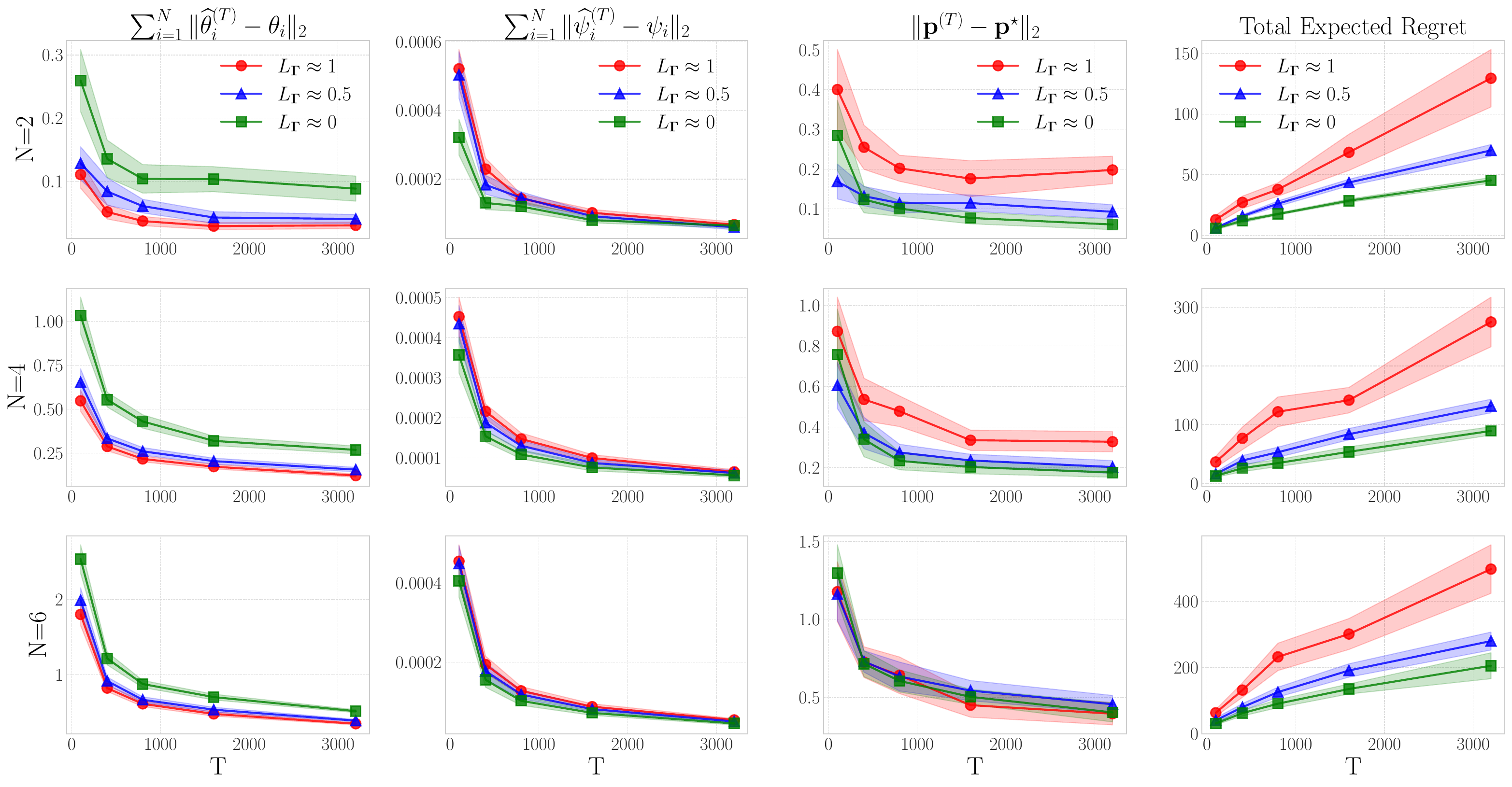}
    \caption{Performance of Algorithm \ref{algo:semiparametric} in sequential price competition with $N \in \{2,4,6\}$ sellers for different values of the contraction constant $L_{\boldsymbol{\Gamma}}$.}
    \label{fig:experiment_L_variation}
    \vspace{0.5cm}
    \centering    \includegraphics[width=0.7\linewidth]{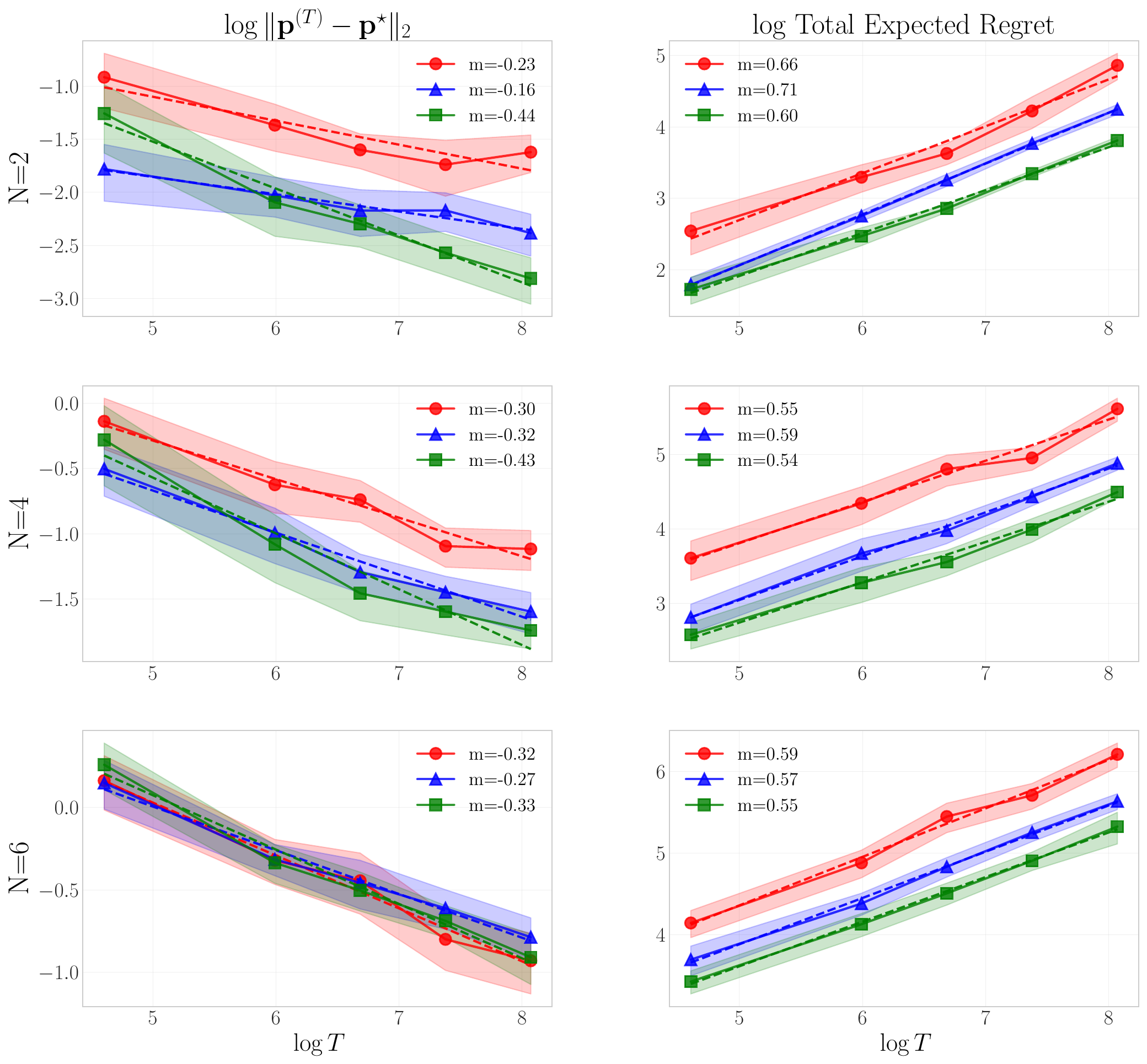}
    \caption{Log-log performance of Algorithm \ref{algo:semiparametric} in sequential price competition with $N \in \{2,4,6\}$ sellers for different values of the contraction constant $L_{\boldsymbol{\Gamma}}$. The slopes, indicated as $m$, of the convergence to NE and the regret are always smaller than the corresponding theoretical upper bounds ($-1/7$ and $5/7$).}
    \label{fig:experiment_L_variation_log_log}
\end{figure}

\begin{figure}[ht]
    \centering
    \includegraphics[width=1\textwidth]{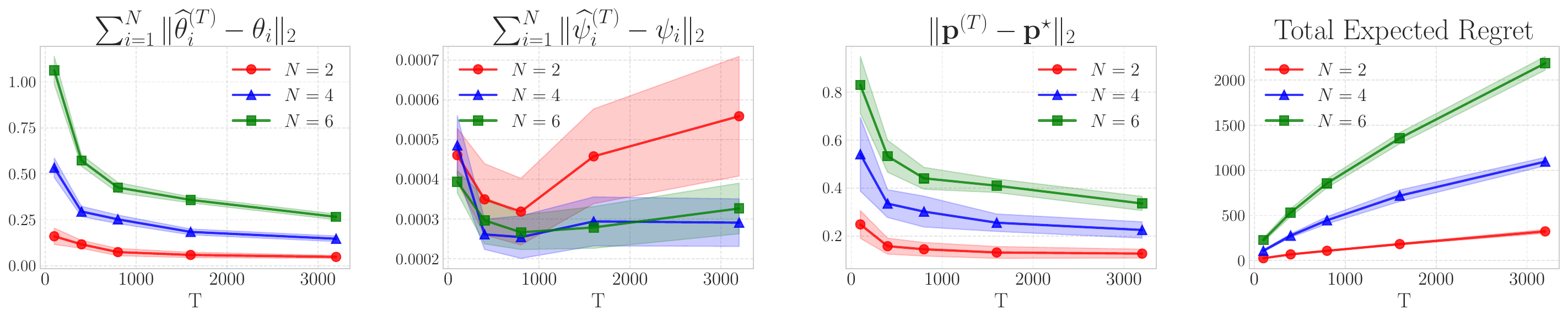}
    \vspace{0.5cm} 

    \includegraphics[width=0.8\textwidth]{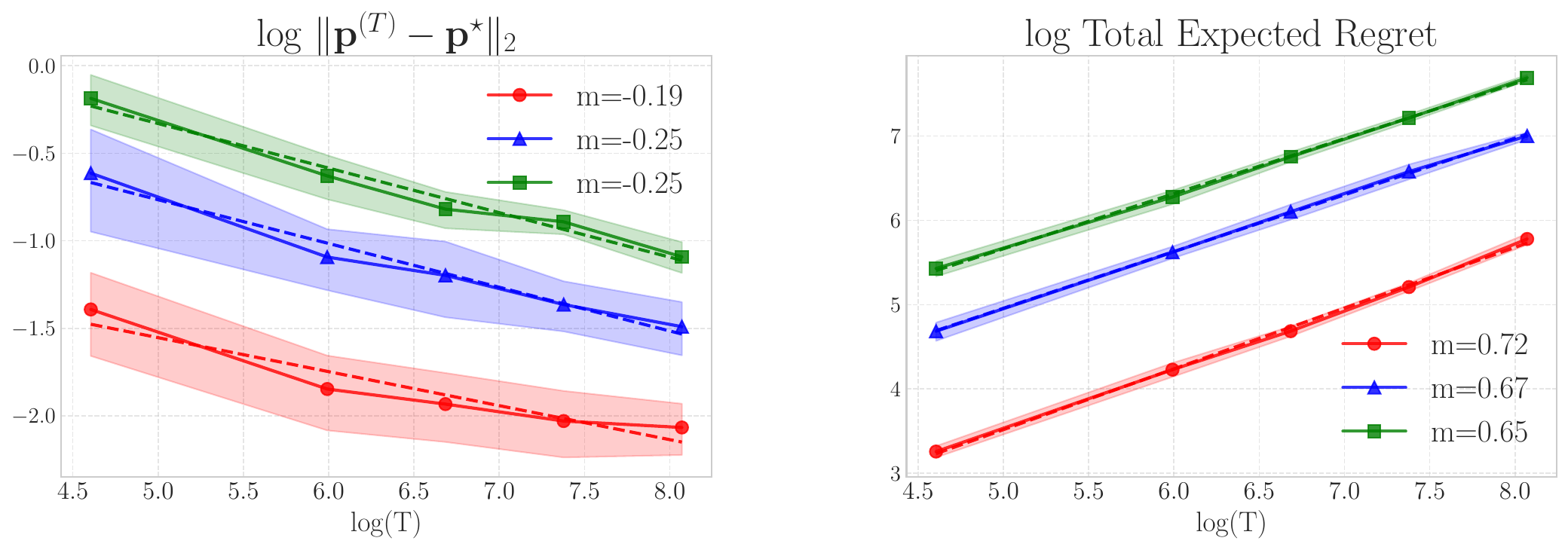}

    \caption{Performance in the different exploration phases.}
    \label{fig:different_exploration}
\end{figure}

\subsection{Robustness to misspecification of $s_i$.} \label{sec:diff_s_i_simulation}

In this experiment, we replicate the setup of \cref{sec:contraction_varying_s_i}, except that we fix $N = 4$ and set $\beta_i = 0.9$. We again take $\psi_i(u) = \Phi(2u/i)$ for $i = 1,2,\dots,N$, where $\Phi$ denotes the c.d.f. of a standard Gaussian random variable. Since $\Phi$ is log-concave, each $\psi_i$ is $0$-concave, that is, $s_i = 0$. We then run \cref{algo:semiparametric} under five different shape parameters $s_i' \in \{-0.2,-0.1,0,0.1,0.2\}$. As shown in \cref{fig:diff_s_i_misspecific}, this misspecification of $s_i$ has no visible effect on either the NE convergence rate or the regret convergence rate. The small variations in the estimated slopes reported in the legend of \cref{fig:diff_s_i_misspecific} are plausibly due to finite-sample noise ($T = 1600$).

\begin{figure}[h]
\centering
\includegraphics[width=1\linewidth]{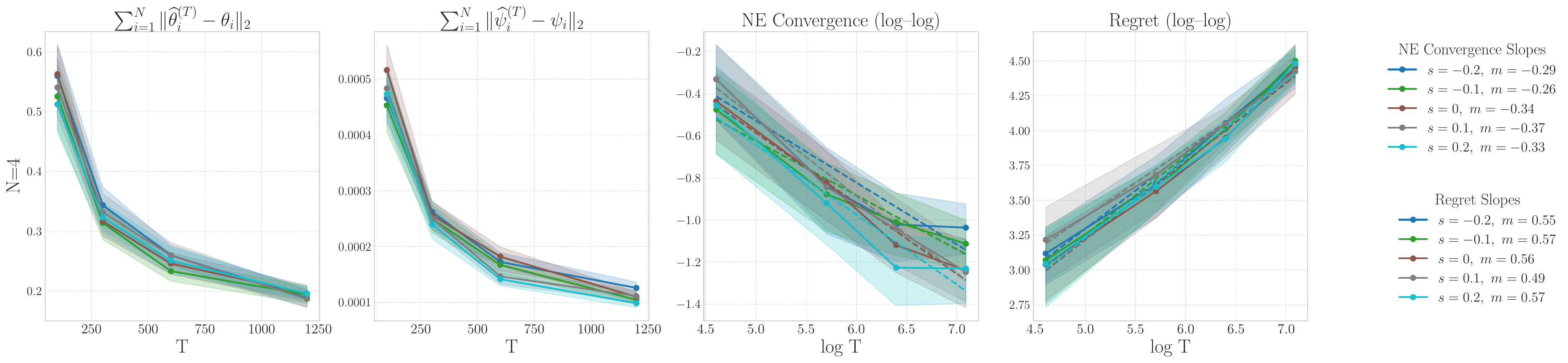}
    \caption{Log-log performance of Algorithm \ref{algo:semiparametric} in sequential price competition with $N = 4$ sellers for different values of misspecification of $s_i=0$, specifically $\{-0.2,-0.1,0,0.1,0.2\}$. The slopes, indicated as $m$, of the convergence to NE and the regret are close to each other. The small variations in the rates can be attributed to the finite sample experiment ($T=1600$).}
    \label{fig:diff_s_i_misspecific}
\end{figure}

\clearpage
\newpage

\section{Additional Figures}\label{app:figures}

\begin{figure}[h!]
\centering
\includegraphics[width=1\linewidth]{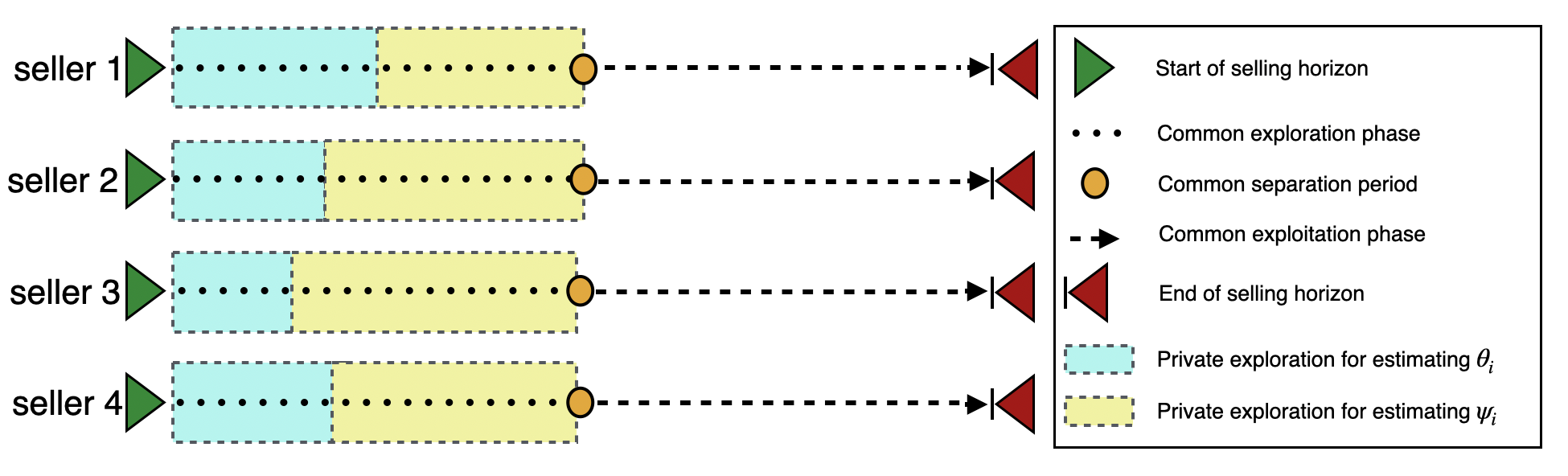}
    \caption{Illustration of our policy (\cref{algo:semiparametric}) with $N=4$ sellers in sequential price competition under nonlinear demands. For each seller $i \in\{1,2,3,4\}$, in their exploration phase (dotted line) of length $ \tau$, they offer randomized prices following their distribution $\mathscr{D}_i$. Within the exploration phase, each seller has a private phase for estimating $\boldsymbol{\theta}_i$ (blue box), with length $\tau\kappa_i$ and a private phase for estimating $\psi_i$ (yellow box), with length $\tau(1-\kappa_i)$. The seller's price experiment ends at period $t= \tau$ (orange circle). Subsequently, in their exploitation phase (dashed line), seller $i$ offers prices using the estimators generated in the exploration phase.}
    \label{fig:algorithm}
\end{figure}

\newpage

\section{Examples of $s$-concave non-decreasing functions}\label{app:example_s_concave}

The notion of $s$-concave functions generalizes concavity ($s=1$) and log-concavity (which holds for $s=0$). The class of log-concave functions has been extensively studied: see  \cite{bobkov2011concentration, dumbgen2009maximum, cule2010theoretical, borzadaran2011log, bagnoli2006log}. Since in this paper we consider a positive monotone $s$-concave function $\psi$, we first give some examples of monotone log-concave functions and generalize to $s$-concavity. Without loss of generality, we consider log-concave (and $s$-concave) CDFs $\psi$; indeed, if $\psi$ is monotone but its range is not contained in $[0,1]$, it is always possible to rescale $\psi$, transforming it to a CDF: in fact, if $\psi$ is $s$-concave, then $c \psi$ is $s$-concave for every $c>0$ because $\psi\left((1-\lambda) u_0+\lambda u_1\right) \geq M_s\left(\psi\left(u_0\right), \psi\left(u_1\right) ; \lambda\right)$ for all $u_0,u_1 \in \mathcal{U}$ if and only if $c \cdot \psi\left((1-\lambda) u_0+\lambda u_1\right) \geq M_s\left(c \cdot\psi\left(u_0\right), c \cdot\psi\left(u_1\right) ; \lambda\right)$ for all $u_0,u_1 \in \mathcal{U}$.

\subsection{log-concave CDF}
In \cite{bagnoli2006log} we find a large class of log-concave CDF. They prove that if a density function $f$ is log-concave and continuously differentiable, then the CDF $F$ is also log-concave. This gives a way to generate log-concave CDFs. \cref{fig:log-CDF-table}, by \cite{bagnoli2006log}, presents several widely used continuous univariate probability distributions whose densities are log-concave. Except for the Laplace distribution, the log-concavity of each density $f$ can be verified by checking that the second derivative of $\ln f(x)$ is non-positive across its support. For some families-such as the Weibull, power-function, beta, and gamma distributions-log-concavity holds only for specific parameter values. The table reports the parameter ranges under which these densities remain log-concave.

\begin{figure}[h!]
\centering
\includegraphics[width=1\linewidth]{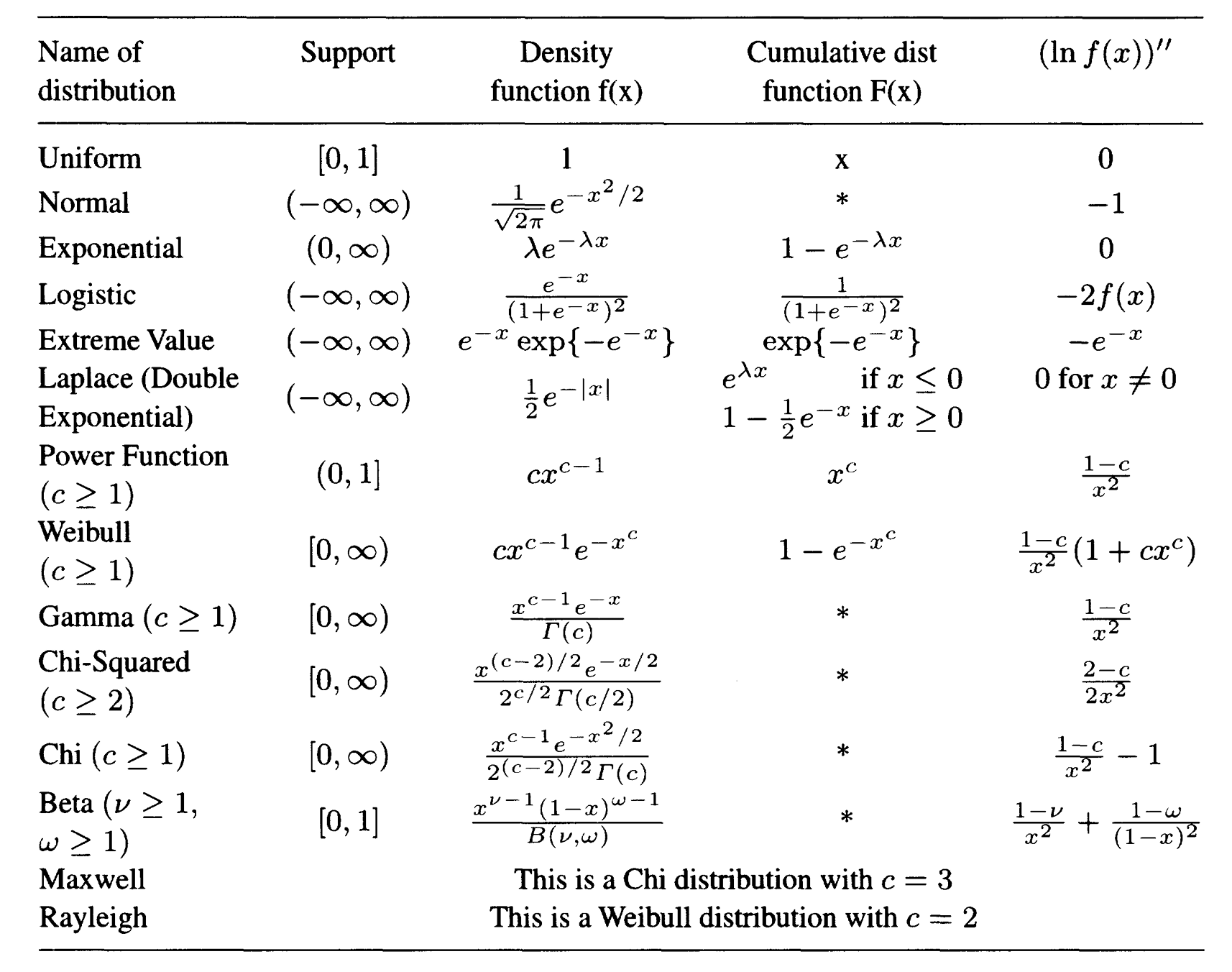}
    \caption{(by \cite{bagnoli2006log}) Distributions with log-concave density functions (distribution functions marked $*$ lack a closed-form representation).}
    \label{fig:log-CDF-table}
\end{figure}

\subsection{Non log-concave CDF}\label{sec:non-log-concave-cdf}

When the density $f$ is not log-concave, analyzing the behavior of the associated cumulative distribution function $F$ is more complicated. Distributions with log-convex densities exhibit a wide range of behaviors: some have log-concave CDFs, others have log-convex CDFs, and still others have CDFs that are neither log-concave nor log-convex.

\cref{fig:non-log-CDF-table}, by \cite{bagnoli2006log} summarizes, for each of these cases, whether the density and the CDF are log-concave or log-convex.

\begin{figure}[h!]
\centering
\includegraphics[width=0.4\linewidth]{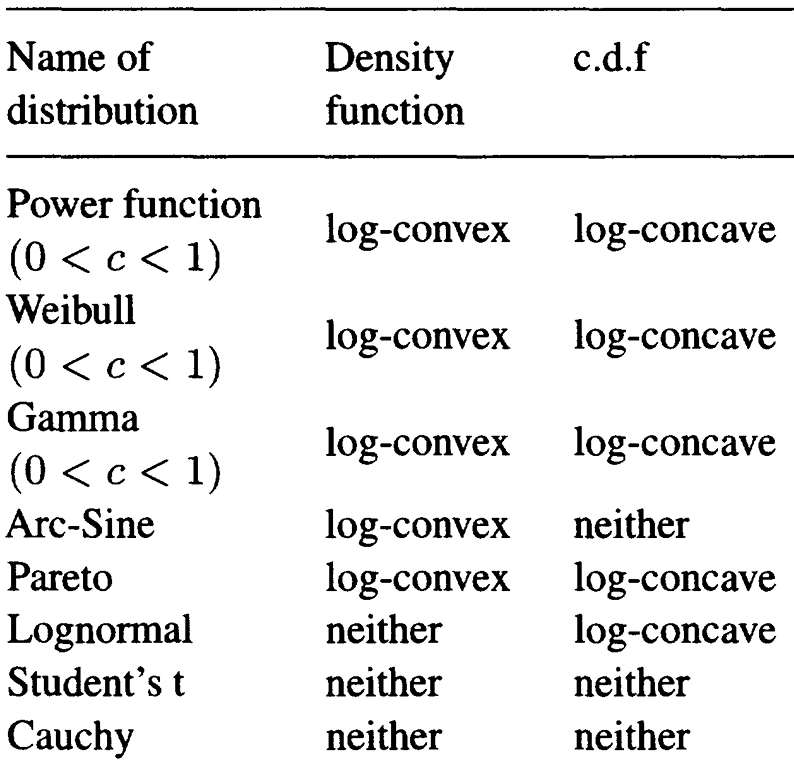}
    \caption{(by \cite{bagnoli2006log}) Properties of distributions without log-concave density.}
    \label{fig:non-log-CDF-table}
\end{figure}

From \cref{fig:non-log-CDF-table}, we note that the CDFs of the \textit{Pareto, Lognormal. Student's t, Cauchy} are not log-concave. In the next section, we prove that they are $s^*$-optimal-concave for some $s^* <0$.

\subsection{s-concave CDF that are not log-concave}
In this section, we present some properties of $s$-concave functions and give some examples. In this Section, we will widely use \cref{prop:alpha-concave} without explicit mention, which states that if $f$ is a positive function defined in an interval $(a,b)$ that is twice continuously differentiable, then $f$ is $s$-concave iff 
$$
f \cdot f'' + (s - 1) (f')^2 \leq 0 \quad\text{ in }(a,b).
$$

We start with the inclusion property.

\begin{proposition}\label{Inclusion_property}[Inclusion Property]
Let $f:(a,b)\rightarrow (0,\infty)$, a twice differentiable function. If $f$ is $\beta$-concave, then $f$ is $s$-concave for $s \leq \beta$.
\end{proposition}
\begin{proof}
If $f$ is $\beta$-concave, for $s \leq \beta$ it holds
$$
f(u)f''(u)+(s -1)(f'(u))^2 \leq f(u)f''(u)+(\beta -1)(f'(u))^2 \leq 0,
$$
i.e., $f$ is $s$-concave.
\end{proof}
\noindent
As a consequence, log-concavity implies $s$-concavity for every $s\leq 0$. An immediate consequence of the inclusion property is the following.

\begin{corollary}\label{cor:max_lambda}
Let $f:(a,b)\rightarrow (0,\infty)$, be a twice differentiable function. Suppose that $f$ is not $s_1$-concave, but is $s_0$-concave for some $s_0<s_1$. Then, there exists a $s^{\star}\in[s_0,s_1)$ such that $f$ is $s$-concave for every $s \leq s^{\star}$ and is not $s$-concave for every $s > s^{\star}$.
\end{corollary}

\begin{proof}
Let $s^{\star} = \sup \{s: f \cdot f'' + (s - 1) (f')^2 \leq 0 \}$. By the inclusion property, $f$ is $s$-concave for all $s\leq s^{\star}$. However, there not exists a $s>s^{\star}$ such that $f$ is $s$-concave, because it contradicts the definition of $s^{\star}$.
\end{proof}

\begin{definition}\label{def:optimal-concave}
From previous Corollary \ref{cor:max_lambda}, we say that a twice differentiable function $f:(a,b) \rightarrow (0,\infty)$ is \textit{$s^{\star}$-optimal-concave} if 
$$
s^{\star} = \sup \left\{s: \sup_{u \in (a,b)}[f(u) f''(u) + (s - 1) (f'(u))^2 ]\leq 0 \right\}.
$$
In this case, $s^{\star}$ is called \textit{optimal-concave} parameter.
\end{definition}
In the following, we prove under which condition $s$-concavity is closed under multiplication.
\begin{proposition}\label{prop:product-alpha-concavity}
Let $f,g:(a,b)\rightarrow (0,\infty)$ be twice differentiable functions 
\begin{itemize}
\item if $f$ and $g$ are $s$-concave with $s < 0$, such that $f'\cdot g'\geq 0$, then the product $f\cdot g$ is $s$-concave.
\item if $f$ and $g$ are $0$-concave then the product $f\cdot g$ is $0$-concave.
\end{itemize}
\end{proposition}
\begin{proof}
The second statement is the well-known result that the product of log-concave functions is log-concave; then, we only prove the first statement, which immediately follows from the equality
$$
h\cdot h'' + (s -1) (h')^2 = g^2 \cdot (f\cdot f''+ (s -1) (f')^2) + f^2 \cdot (g\cdot g''+ (s -1) (g')^2) + 2 s f \cdot g \cdot f' \cdot g'.
$$
\end{proof}
\noindent
In the next sections, we first show the existence of some well-known distributions whose density is $s$-concave for some $s<0$ and not log-concave (i.e., $0$-concave). Then we identify a class of distributions for which if the density $f$ is $s$-concave, then the CDF $F$ is $\mu(s)$-concave for a fixed transformation $\mu$ that will be specified later. 

\subsubsection{$s$-concave continuous densities}
From now on, we assume that the PDFs that we consider are continuous. As we showed in \cref{sec:non-log-concave-cdf}, many density functions are \textit{not log-concave}, such as Student's t, Cauchy, Pareto, and log-Normal. However, for all of them, we can show that the PDFs are $s$-concave for some $s<0$.

\begin{remark}\label{remark:lambda<0_for_densities}
Note that for PDF $f$ such that $\{u:f(u)>0\} = \mathbb{R}$, it is necessary to impose $s\leq0$. Indeed for $s>0$ the function $\phi(u) = d_{s} \circ f (u) = f^{s}(u)$, which is strictly positive on $\mathbb{R}$, can not be concave. To see this note that, being $f$ a density function, necessarily $f(u) \rightarrow 0^+$ as $|u|\rightarrow +\infty$, and consequently, as $|u|\rightarrow +\infty$ we have $\phi(u)\rightarrow 0$ which is not possible if $\phi$ is concave and strictly positive on $\mathbb{R}$. The same applies to densities defined on any unbounded interval. This implies that if a density defined in an unbounded interval is $\log$-concave (that is $0$-concave), then it is $0$-optimal-concave. However, when a density $f$ is compactly supported, there is no restriction on $s$.
\end{remark}

The proof of \cref{d_lambda:pdf} is deferred to \cref{Proof-of-d_lambda:pdf}.

\begin{proposition}\label{d_lambda:pdf}
\begin{enumerate}
\item The Student's t-distribution PDF is $\left(- \frac{1}{\nu +1}\right)$-optimal-concave where $\nu>0$ is the degree of freedom. 
\item The Cauchy PDF is $\left(-1/2\right)$-optimal-concave (independently of scaling and location parameters).
\item The Pareto PDF is $\left(-\frac{1}{\alpha +1}\right)$-optimal-concave, (independently of the location), where $\alpha>0$ is the scaling factor.
\item The log-normal PDF with parameters $(\mu, \sigma^2)$ is $\left( -\frac{\sigma^2}{4}\right)$-optimal-concave, (independently of $\mu$).
\end{enumerate}    
\end{proposition}

\subsubsection{$s$-concave CDFs and survival functions}\label{sec:CDF}
Let $F$ be a CDF and $\Bar{F}=1-F$ its survival function, and let $f = F'$. When $\{u:f(u)>0\}= \mathbb{R}$, for similar reasoning as in Remark \ref{remark:lambda<0_for_densities}, a necessary condition for having $\phi = d_{s} \circ F$ or $\phi = d_{s} \circ \Bar{F}$ concave, is that $s \leq 0$. In the following proposition, we find a necessary condition on $s$ for which $F$ and $\Bar{F}$ are $s$-concave when $\{u: f(u) >0 \}= (a,b)$ for $a,b \in \mathbb{R}$.

The proof of \cref{prop:lambda_concave_necessary_cond} is deferred to \cref{proof-prop:lambda_concave_necess_cond}.

\begin{proposition}\label{prop:lambda_concave_necessary_cond}
Let $f:(a,b)\mapsto (0,\infty)$ be twice continuously differentiable function, and let $F(u) = \int_a^u f(t) dt$ for all $x \in (a,b)$ and define $f^{(k)}(a) = \lim_{u \rightarrow a} f^{(k)}(u)$ and $f^{(k)}(b) = \lim_{u \rightarrow b} f^{(k)}(u)$ for $k=0,1$.

\begin{itemize}
    \item[1)] If $f(a)=0$, $f'(a)=0$ and $F$ is $s$-concave, then $s \leq 1/2$.
    \item[2)] If $f(b)=0$, $f'(b)=0$ and $\Bar{F}$ is $s$-concave, then $s \leq 1/2$.
\end{itemize}
\end{proposition}

\noindent
In the following proposition, we prove that, under certain conditions, when a density function is $s$-concave, then the CDF and the survival function is $\mu$-concave for some $\mu = \mu (s)$. The proof of \cref{prop:log_concave_preservation_CDF} is deferred to \cref{proof:prop:log_concave_preservation_CDF}.

\begin{proposition}\label{prop:log_concave_preservation_CDF}
Fix a function $f:(a,b)\mapsto (0,\infty)$ continuously differentiable, and let $F(u) = \int_a^u f(t) dt$ for all $x \in (a,b)$ and define $f(a) = \lim_{u \rightarrow a} f(u)$. Then:
\begin{itemize}
    \item[1)] If $f$ is $s$-concave on $(a,b)$ with $f(a)\neq 0$ and $s > -1$, then $F$ is $\mu$-concave for all $\mu\leq 1-\frac{1}{s +1}$.
    \item[2)] If $f$ is $s$-concave on $(a,b)$ with $f(a)= 0$ and $s \neq -1$, then $F$ is $\mu$-concave for all $\mu\leq 1-\frac{1}{s +1}$.
    \item[3)] If $f$ is monotone decreasing, then $F$ is $s$-concave for any $s <1$.
\end{itemize}
\end{proposition}

The proof of \cref{prop:log_concave_preservation_survival} is deferred to \cref{proof:prop:log_concave_preservation_survival}.

\begin{proposition}\label{prop:log_concave_preservation_survival}
Fix a function $f:(a,b)\mapsto [0,\infty)$ continuously differentiable, and let $F(u) = \int_a^u f(t) dt$ for all $x \in (a,b)$ and define $f(b) = \lim_{u \rightarrow b} f(u)$. Then:
\begin{itemize}
    \item[1)] If $f$ is $s$-concave on $(a,b)$ with $f(b) \neq 0$ and $s > -1$, then $\Bar{F}$ is $\mu$-concave for all $\mu\leq 1-\frac{1}{s +1}$.
    \item[2)] If $f$ is $s$-concave on $(a,b)$ with $f(b) = 0$ and $s \neq -1$, then $\Bar{F}$ is $\mu$-concave for all $\mu\leq 1-\frac{1}{s +1}$.
    \item[3)] If $f$ is monotone decreasing, then $\Bar{F}$ is $s$-concave for any $s <1$.
\end{itemize}
\end{proposition}
\noindent
The special case $s = 0$ was proved by \cite{bagnoli2006log}.

\subsubsection{$s$-concave CDFs that are not log-concave}
From \cref{fig:non-log-CDF-table} we already know that the CDFs of the \textit{Pareto, Lognormal. Student's t, Cauchy} are not log-concave. However, by our \cref{prop:log_concave_preservation_survival} we immediately get that they are $s^*$-optimal-concave for some $s^* <0$.

\begin{corollary}\label{d_lambda:cdf}
\begin{enumerate}
    \item The Student's t-distribution CDF and survival function are $\mu$-concave for any $\mu \leq - \frac{1}{\nu}$ where $\nu>0$ is the degree of freedom. 
    \item The Cauchy CDF and survival function are $\mu$-concave for any $\mu \leq 1$ (independently of scaling and location parameters).
    \item The Pareto CDF and survival function are $\mu$-concave for any $\mu \leq -\frac{1}{\alpha}$, (independently of the location), where $\alpha$ is the scaling factor.
    \item The log-normal CDF and survival function are with parameters $(\mu, \sigma^2)$ are $\mu$-concave for any $\mu \leq \frac{\sigma^2}{\sigma^2 - 4}$, (independently of $\mu$).
\end{enumerate}    
\end{corollary}

From \cref{prop:log_concave_preservation_CDF} (and \cref{prop:log_concave_preservation_survival}), if $f$ is $\mu^{\star}$-optimal-concave it does not necessarily means that $F$ (and $\Bar{F}$) is $s(\mu^*)=\left(1-\frac{1}{\mu^*+1}\right)$-optimal concave. Indeed, the optimal concave value can be larger than $s(\mu^*)$. However, for the class of distributions in \cref{d_lambda:cdf}, we know that $s^*$ has to be $<0$. 

In the next section, we construct a class of positive $s$-concave monotone functions that are not log-concave.

\subsubsection{Generate $s^*$-optimal-concave non-decreasing functions in $[0,1]$, for $s^*<0$.}
\begin{proposition}
Let $\vartheta : (0,1)\to\mathbb{R}$ be a twice differentiable non-decreasing function (i.e., $\vartheta'\geq 0$) and strictly convex on a set of positive measure (i.e., $\vartheta''>0$ on a set of positive measure). Assume that there exists $s^*<0$ such that 
\[
s^* = \sup_s\{s:\vartheta''(x) + s(\vartheta'(x))^2 \le 0 \quad \text{for all }x\in(0,1)\}.
\]
Define $F(x)=e^{\vartheta(x)}$. Then $F$ is $s^*$-concave, but $F$ is not log-concave (i.e., $F''(x)F(x)-(F'(x))^2>0$ for at least one $x \in (0,1)$).

\end{proposition}

\begin{proof}
We compute
\[
F'(x)=F(x)\vartheta'(x), 
\qquad 
F''(x)=F(x)(\vartheta'(x))^2 + F(x)\vartheta''(x).
\]
Hence
\begin{align*}
F(x)F''(x)+(s-1)(F'(x))^2
&= F^2(x)\big[(\vartheta')^2+\vartheta''\big]
  + (s-1)F^2(x)(\vartheta')^2  \\
&= F^2(x)\big[\vartheta'' + s(\vartheta')^2\big].
\end{align*}
Since $F^2(x)>0$, the inequality
\[
F F'' + (s-1)(F')^2 \le 0
\]
holds if and only if
\[
\vartheta''(x) + s(\vartheta'(x))^2 \le 0.
\]
Thus the assumed condition shows that $F$ is $s^*$-concave. On the other hand, log–concavity corresponds to the case $s=0$, i.e.
\[
\vartheta''(x) + 0\cdot(\vartheta')^2 \le 0 \quad\Longleftrightarrow\quad \vartheta''(x)\le 0.
\]
Since $\vartheta$ is strictly convex on a set of positive measure, we have $\vartheta''(x)>0$ for the $x$ in this set, so this condition fails. Hence $F$ is \emph{not} log-concave.
\end{proof}

\paragraph{Examples of admissible \(\vartheta\).}

We provide three explicit choices of strictly convex functions 
$\vartheta : (0,1)\to\mathbb{R}$ satisfying
\[
\vartheta''(x) + s^* (\vartheta'(x))^2 \le 0
\quad\text{for some } s^*<0,
\]
so that $F(x)=e^{\vartheta(x)}$ is $s^*$-concave but not log-concave.

\medskip

\noindent\textbf{1. Power function example.}
Fix $\alpha>1$ and set $\vartheta(x) = x^\alpha$. Then
\[
\vartheta'(x)=\alpha x^{\alpha-1},
\qquad
\vartheta''(x)=\alpha(\alpha-1)x^{\alpha-2},
\]
and
\[
\vartheta''(x) + s (\vartheta'(x))^2
= \alpha x^{\alpha-2} \Big[ (\alpha-1) + s \alpha x^\alpha \Big].
\]
Since \(x^\alpha\le 1\) on \((0,1)\), a sufficient (and sharp) condition is $s \le -\frac{\alpha-1}{\alpha}$. Hence $F(x)=\exp(x^\alpha)$ is $s^*$-concave for 
$s^* = -\tfrac{\alpha-1}{\alpha}$, but not log-concave because \(\vartheta''(x)>0\).

\medskip

\noindent\textbf{2. Exponential example.}
Fix $k>0$ and set $\vartheta(x) = e^{kx}$.
Then
\[
\vartheta'(x)= k e^{kx},
\qquad 
\vartheta''(x)= k^2 e^{kx},
\]
and
\[
\vartheta''(x) + s(\vartheta'(x))^2
= k^2 e^{kx} \big( 1 + s e^{kx} \big).
\]
Since $e^{kx}\le e^k$ on $(0,1)$, a sufficient and sharp condition is $s \le - e^{-k}$. Thus $F(x)=\exp\!\big(e^{kx}\big)$ is $s^*$-concave for $s^* = -e^{-k}$, but not log-concave since $\vartheta''(x)>0$.

\subsubsection{Proof of Proposition \ref{d_lambda:pdf}}\label{Proof-of-d_lambda:pdf}
\begin{proof}[Proof of a)]
We have that $f(x) = \frac{\Gamma\left(\frac{\nu+1}{2}\right)}{\sqrt{\pi \nu} \Gamma\left(\frac{\nu}{2}\right)}\left(1+\frac{x^2}{\nu}\right)^{-\frac{\nu+1}{2}}$ where $\nu >0$ is the degree of freedom. We need to find the values of $\nu$ such that $f \cdot f'' + (s - 1) (f')^2 \leq 0$ which reduces to 
\begin{align*}
&\left(1+\frac{x^2}{\nu}\right)^{-\frac{\nu+1}{2}}\left[\frac{4x^2}{\nu^2}\left(-\frac{\nu +1}{2} \right)\left(-\frac{\nu +1}{2}-1 \right)\left(1+\frac{x^2}{\nu}\right)^{-\frac{\nu+1}{2}-2} +\frac{2}{\nu}\left(-\frac{\nu +1}{2} \right)\left(1+\frac{x^2}{\nu}\right)^{-\frac{\nu+1}{2}-1}\right]\\
&\quad+(s -1) \left[\frac{2x}{\nu}\left(-\frac{\nu +1}{2} \right)\left(1+\frac{x^2}{\nu}\right)^{-\frac{\nu+1}{2}-1} \right]^2 \leq 0    
\end{align*}
iff
\begin{align*}
&\frac{4x^2}{\nu^2}\left(-\frac{\nu +1}{2} \right)\left(-\frac{\nu +1}{2}-1 \right)\left(1+\frac{x^2}{\nu}\right)^{-\frac{\nu+1}{2}-2} +\frac{2}{\nu}\left(-\frac{\nu +1}{2} \right)\left(1+\frac{x^2}{\nu}\right)^{-\frac{\nu+1}{2}-1}\\
&\quad+(s -1) \frac{4x^2}{\nu^2}\left(-\frac{\nu +1}{2} \right)^2\left(1+\frac{x^2}{\nu}\right)^{-\frac{\nu+1}{2}-2} \leq 0    
\end{align*}
iff
\begin{align*}
&\frac{4x^2}{\nu^2}\left(-\frac{\nu +1}{2} \right)\left(-\frac{\nu +1}{2}-1 \right) +\frac{2}{\nu}\left(-\frac{\nu +1}{2} \right)\left(1+\frac{x^2}{\nu}\right)+(s -1) \frac{4x^2}{\nu^2}\left(-\frac{\nu +1}{2} \right)^2 \leq 0    
\end{align*}
iff
\begin{align*}
&\frac{4x^2}{\nu^2}\left(-\frac{\nu +1}{2}-1 \right) +\frac{2}{\nu}\left(1+\frac{x^2}{\nu}\right)+(s -1) \frac{4x^2}{\nu^2}\left(-\frac{\nu +1}{2} \right) \geq 0
\end{align*}
iff
\begin{align*}
&4x^2\left(-\frac{\nu +1}{2}-1 \right) +2\nu\left(1+\frac{x^2}{\nu}\right)+(s -1)4x^2\left(-\frac{\nu +1}{2} \right) \geq 0
\end{align*}
iff
\begin{align*}
&x^2\left(-2(\nu +3) -(s-1)2(\nu+1) +2\nu\right) +2\nu\geq 0.
\end{align*}
As long as the coefficient of $x^2$ is positive the inequality is true for all $x$, i.e.
$$
-(\nu + 3) -(s-1)(\nu+1) +1 \geq 0 \quad  \Rightarrow \quad 
(s-1)(\nu+1) \leq -\nu - 2 \quad  \Rightarrow \quad s \leq -\frac{1}{\nu +1}.
$$
\end{proof}

\begin{proof}[Proof of b)]
The Cauchy has density $f(x) =(\pi \gamma)^{-1}\left[1+\left(\frac{x-x_0}{\gamma}\right)^2\right]^{-1}$ where $x_0 \in \mathbb{R}$ is the location and $\gamma>0$ the scale parameter. We need to find the values of $\nu$ such that $f \cdot f'' + (s - 1) (f')^2 \leq 0$. However, since this inequality has to hold for all $x$, we can assume $x_0 = 0$. The characterization translates to 

\begin{align*}
\left(1+\frac{x^2}{\gamma^2}\right)^{-1}\left[(-1)\frac{2}{\gamma^2} \left[ 1+ \frac{x^2}{\gamma^2}\right]^{-2}+\frac{8x^2}{\gamma^4} \left[ 1+ \frac{x^2}{\gamma^2}\right]^{-3}\right]+(s -1) \left( (-1)\frac{2x}{\gamma^2} \left[ 1+ \frac{x^2}{\gamma^2}\right]^{-2} \right)^2 \leq 0    
\end{align*}
iff
\begin{align*}
(-1)\frac{2}{\gamma^2} \left[ 1+ \frac{x^2}{\gamma^2}\right]^{-2}+\frac{8x^2}{\gamma^4} \left[ 1+ \frac{x^2}{\gamma^2}\right]^{-3}+(s -1) \frac{4x^2}{\gamma^4} \left[ 1+ \frac{x^2}{\gamma^2}\right]^{-3}  \leq 0    
\end{align*}
iff
\begin{align*}
(-1)\frac{2}{\gamma^2} \left[ 1+ \frac{x^2}{\gamma^2}\right]+\frac{8x^2}{\gamma^4}+(s -1) \frac{4x^2}{\gamma^4} \leq 0    
\end{align*}
iff
\begin{align*}
-\frac{2x^2}{\gamma^4}-\frac{2}{\gamma^2}+\frac{8x^2}{\gamma^4}+(s -1) \frac{4x^2}{\gamma^4} \leq 0    
\end{align*}
iff
\begin{align*}
x^2\left(-\frac{2}{\gamma^4}+\frac{8}{\gamma^4}+(s -1) \frac{4}{\gamma^4}\right) -\frac{2}{\gamma^2} \leq 0    
\end{align*}
the inequality is true for all $x$ as long as
$$
6+(s-1)4\leq 0 \Rightarrow s \leq -\frac{1}{2}.
$$
\end{proof}

\begin{proof}[Proof of c)]
$f(x) = \frac{\alpha x_{m}^\alpha}{x^{\alpha+1}}$, where $x_m>0$ is the scale parameter and $\alpha>0$ the shape. We need to find the values of $\nu$ such that $f \cdot f'' + (s - 1) (f')^2 \leq 0$ which reduces to 

\begin{align*}
x^{-(\alpha +1 )}(-1)(\alpha +1 )(-(\alpha +1 ) -1) x^{-(\alpha +1 ) -2 }+(s - 1)(\alpha +1 )^2 x^{-2(\alpha +1 ) -2 } \leq 0
\end{align*}
iff
\begin{align*}
(-1)(\alpha +1 )(-(\alpha +1 ) -1) +(s - 1)(\alpha +1 )^2  \leq 0
\end{align*}
iff
\begin{align*}
(\alpha +1 )(\alpha +2) +(s - 1)(\alpha +1 )^2  \leq 0 \Rightarrow s \leq -\frac{1}{\alpha + 1}.
\end{align*}

\end{proof}

\begin{proof}[Proof of d)]
$f(x) = \frac{1}{x \sigma \sqrt{2 \pi}} \exp \left(-\frac{(\ln x-\mu)^2}{2 \sigma^2}\right)$, where $x,\sigma>0$ and $\mu \in \mathbb{R}$. Since $\frac{1}{\sigma \sqrt{2 \pi}}>0$ the problem reduces to prove that $f(x) = \exp \left(-\frac{(\ln x-\mu)^2}{2 \sigma^2}\right)/x$  is $s$-concave. Note that

\begin{align*}
f'(x) &= -\frac{1}{x^2}\exp \left(-\frac{(\ln x-\mu)^2}{2 \sigma^2}\right) + \frac{1}{x}\exp \left(-\frac{(\ln x-\mu)^2}{2 \sigma^2}\right)\left[ -\frac{\ln x-\mu}{x\sigma^2}\right] \\
&= -\frac{1}{x^2}\exp \left(-\frac{(\ln x-\mu)^2}{2 \sigma^2}\right) \left[ 1+ \frac{\ln x-\mu}{\sigma^2} \right]\\
& = -\frac{1}{\sigma^2x^2}\exp \left(-\frac{u^2}{2 \sigma^2}\right) (\sigma^2 + u)\\
\end{align*}

where $u=\ln x -\mu$, and 

\begin{align*}
f''(x) &= -\frac{1}{x^2}\exp \left(-\frac{(\ln x-\mu)^2}{2 \sigma^2}\right) \left[ 1+ \frac{\ln x-\mu}{\sigma^2} \right]\left[ -\frac{\ln x-\mu}{x\sigma^2}\right]+\\
& \qquad -\exp \left(-\frac{(\ln x-\mu)^2}{2 \sigma^2}\right)\frac{\frac{1}{x\sigma^2}x^2-2x\left[ 1+ \frac{\ln x-\mu}{\sigma^2} \right]}{x^4}\\
&= \frac{1}{x^3}\exp \left(-\frac{(\ln x-\mu)^2}{2 \sigma^2}\right) \left[ 1+ \frac{\ln x-\mu}{\sigma^2} \right]\frac{\ln x-\mu}{\sigma^2}+\\
& \qquad -\exp \left(-\frac{(\ln x-\mu)^2}{2 \sigma^2}\right)\frac{\frac{1}{\sigma^2} -2\left[ 1+ \frac{\ln x-\mu}{\sigma^2} \right]}{x^3}\\
&= \frac{1}{x^3}\exp \left(-\frac{u^2}{2 \sigma^2}\right) \left[ 1+ \frac{u}{\sigma^2} \right]\frac{u}{\sigma^2}-\exp \left(-\frac{u^2}{2 \sigma^2}\right)\frac{\frac{1}{\sigma^2} -2\left[ 1+ \frac{u}{\sigma^2} \right]}{x^3}\\
&= \frac{1}{\sigma^4 x^3}\exp \left(-\frac{u^2}{2 \sigma^2}\right) \left[(\sigma^2+u)u -\sigma^2 +2\sigma^4 + 2\sigma^2u \right]\\
&= \frac{1}{\sigma^4 x^3}\exp \left(-\frac{u^2}{2 \sigma^2}\right) \left[u^2+3\sigma^2 u -\sigma^2(1-2\sigma^2) \right].
\end{align*}

We need $f \cdot f'' + (s - 1) (f')^2 \leq 0$, that is 

\begin{align*}
\frac{1}{\sigma^4 x^4}\exp \left(-\frac{u^2}{ \sigma^2}\right) \left[u^2+3\sigma^2 u -\sigma^2(1-2\sigma^2) \right] + (s -1)\frac{1}{\sigma^4x^4}\exp \left(-\frac{u^2}{ \sigma^2}\right) (\sigma^2 + u)^2 \leq 0
\end{align*}
iff
\begin{align*}
u^2+3\sigma^2 u -\sigma^2(1-2\sigma^2)+ (s -1)(\sigma^2 + u)^2 \leq 0
\end{align*}
iff
\begin{align*}
s u^2+(1 +2s)\sigma^2u -\sigma^2(1-2\sigma^2)+ (s -1)\sigma^4\leq 0
\end{align*}
iff
\begin{align*}
s u^2+(1 +2s)\sigma^2u +s\sigma^4+\sigma^4-\sigma^2\leq 0.
\end{align*}

The determinant $\Delta = (1 +2s)^2\sigma^4-4s(s\sigma^4+\sigma^4-\sigma^2)=\sigma^2(\sigma^2+4s)$. Then we have $s$-concavity for all $s \leq -\frac{\sigma^2}{4}$.

\end{proof}

\subsubsection{Proof of Proposition \ref{prop:lambda_concave_necessary_cond}}\label{proof-prop:lambda_concave_necess_cond}
\begin{proof}

We have that $F \cdot f'+(s-1)f^2 \leq 0$, which implies $f'(u) \leq (1-s) \frac{f^2(u)}{F(u)}$ for all $u \in (a,b)$. By sign conservation theorem this implies that $f'(a) \leq (1-s) \lim_{u \rightarrow a}\frac{f^2(u)}{F(u)}=(1-s) \lim_{u \rightarrow a}\frac{2f(u)f'(u)}{f(u)} = (1-s) \lim_{u \rightarrow a}2f'(u) = (1-s)2f'(a)$. Since $f'(a) \neq 0$ then $f'(a)>0$ and then $1\leq (1-s)2$ which implies $s \leq \frac{1}{2}$. Similar proof hold for $\Bar{F}$.

\end{proof}

\subsubsection{Proof of Proposition \ref{prop:log_concave_preservation_CDF}}\label{proof:prop:log_concave_preservation_CDF}
\begin{proof}
We need to prove that $F \cdot f'+(\mu-1)f^2 \leq 0$. Using that $f^{s-1}\cdot f'$ is non-increasing (because $(d_s \circ f)''\leq 0$ i.e. $(d_s \circ f)' =f^{s-1}\cdot f$ is non-increasing) we have
\begin{align*}
\frac{f'(u)}{f(u)}F(u) &= \frac{f^{s-1}(u)}{f(u)}\frac{f'(u)}{f^{s-1}(u)}\int_a^{u}f(t) dt \leq \frac{1}{f(u)f^{s-1}(u)}\int_a^{u} f^{s-1}(t)f'(t)f(t) dt \\
&= \frac{1}{f(u)f^{s-1}(u)} \left[ \frac{f^{s+1}(u)-f^{s+1}(a)}{s +1}\right]
\end{align*}

If $f(a) \neq 0$ then, being $s>-1$ we have 
$$
\frac{f'(u)}{f(u)}F(u) \leq \frac{1}{f(u)f^{s-1}(u)} \left[ \frac{f^{s+1}(u)-f^{s+1}(a)}{s +1}\right] \leq \frac{f(u)}{s +1}
$$
which implies $0 \geq F \cdot f'-\frac{1}{s+1}f^2 \geq F \cdot f'+(\mu-1)f^2$, which proves $1)$. For $2)$, being $f(a)=0$, note that $\frac{f'(u)}{f(u)}F(u) \leq \frac{f(u)}{s +1}$ holds with the only assumption that $s \neq -1$. For point 3), notice that since $F^{s -1}$ is monotone decreasing for $s <1$, then $F^{s -1}\cdot f'$ is monotone decreasing, but $F^{s -1}(u) f'(u) = \frac{d}{du}\left(d_{s} (F(u)) \right)$, which implies that $F$ is $s$-concave.
\end{proof}

\subsubsection{Proof of Proposition \ref{prop:log_concave_preservation_survival}}\label{proof:prop:log_concave_preservation_survival}
\begin{proof}
We need need to prove that $-\Bar{F}f'+(\mu-1)f^2\leq 0$. Using that $f^{s-1}\cdot f'$ is non-increasing we have
\begin{align*}
\frac{f'(u)}{f(u)}\Bar{F}(u) &= \frac{f^{s-1}(u)}{f(u)}\frac{f'(u)}{f^{s-1}(u)}\int_u^{b}f(t) dt \geq \frac{1}{f(u)f^{s-1}(u)}\int_u^{b} f^{s-1}(t)f'(t)f(t) dt \\
&= \frac{1}{f(u)f^{s-1}(u)} \left[ \frac{f^{s+1}(b)-f^{s+1}(u)}{s +1}\right]
\end{align*}

If $f(b) \neq 0$ the, since $s > -1$, we have 
$$
\frac{f'(u)}{f(u)}\Bar{F}(u) \geq \frac{1}{f(u)f^{s-1}(u)} \left[ \frac{f^{s+1}(b)-f^{s+1}(u)}{s +1}\right] \geq -\frac{f(u)}{s +1}
$$ 
which implies $0 \geq -\Bar{F}f'-\frac{1}{s+1}f^2 \geq -\Bar{F}f'+(\mu-1)f^2$, which proves $1)$. For $2)$, being $f(b)=0$, we have that $\frac{f'(u)}{f(u)}\Bar{F}(u) \geq -\frac{f(u)}{s +1}$ holds with the only assumption that $s \neq -1$. For point 3), notice that since $-\Bar{F}^{s -1}$ is monotone decreasing for $s <1$, then $-\Bar{F}^{s -1}\cdot f'$ is monotone decreasing, but $-\Bar{F}^{s -1}(u) f'(u) = \frac{d}{du}\left(d_{s} (\Bar{F}(u)) \right)$, which implies that $F$ is $s$-concave for any $s <1$.
\end{proof}

\end{document}